\newcommand{\han}[1]{{\textcolor{cyan}{[Han: #1]}}}
\def\GEC{\mathrm{GEC}}
\def\mbo{\mathbf{1}}
\def\reg{\mathrm{Reg}}
\def\psr{\mathrm{PSR}}
\def\rank{\mathrm{rank}}
\def\nb{N_{\mathrm{batch}}}
\def\dummy{\mathrm{dummy}}
\definecolor{red1}{HTML}{f47983}
\definecolor{blue1}{HTML}{3eede7}
\definecolor{yellow1}{HTML}{f5dd6f}
\title{\LARGE GEC: A Unified Framework for Interactive Decision Making in MDP, POMDP, and Beyond}
\author{Han Zhong\thanks{Equal contribution, random order.}~\thanks{Peking University. Email: \texttt{hanzhong@stu.pku.edu.cn}} \qquad Wei Xiong$^*$\thanks{The Hong Kong University of Science and Technology. Email: \texttt{wxiongae@connect.ust.hk}}  \qquad Sirui Zheng\thanks{Northwestern University. Email: \texttt{siruizheng2025@u.northwestern.edu}}  \qquad Liwei Wang\thanks{Peking University. Email: \texttt{wanglw@cis.pku.edu.cn}} \\ Zhaoran Wang\thanks{Northwestern University. Email: \texttt{zhaoranwang@gmail.com}} \qquad  Zhuoran Yang\thanks{Yale University. Email: \texttt{zhuoran.yang@yale.edu}} \qquad Tong Zhang\thanks{The Hong Kong University of Science and Technology. Email: \texttt{tongzhang@tongzhang-ml.org}} }
\date{}
\begin{document}
\maketitle


\begin{abstract}
     We study sample efficient reinforcement learning (RL) under the general framework of interactive decision making, which includes Markov decision process (MDP), partially observable Markov decision process (POMDP), and predictive state representation (PSR) as special cases. Toward finding the minimum assumption that empowers sample efficient learning, we propose a novel complexity measure, generalized eluder coefficient (GEC), which characterizes 
    the fundamental tradeoff between exploration and exploitation in online interactive decision making in the context of function approximation. 
    In specific, GEC captures the hardness of exploration by comparing  the  error of predicting the performance of the updated policy with the in-sample training error evaluated on the historical data. We show that RL problems with low GEC form a remarkably rich class, which subsumes low Bellman eluder dimension problems, bilinear class, low witness rank problems, PO-bilinear class, and generalized regular PSR, where generalized regular PSR, a new tractable PSR class identified by us, includes nearly all known tractable partially observable RL models.  
    Furthermore, in terms of algorithm design, we propose a generic posterior sampling algorithm, which can be implemented in both model-free and model-based fashion, under both fully observable and partially observable settings. 
    The proposed algorithm   modifies the  standard posterior sampling algorithm in two aspects: (i) we use an optimistic prior distribution that biases towards hypotheses with higher values  and (ii) a  loglikelihood function is set to be the empirical loss evaluated on the historical data, where the choice of loss function supports both model-free and model-based learning. We prove that the proposed generic posterior sampling algorithm is sample efficient by establishing  a sublinear regret upper bound in terms of GEC.  In summary, we provide a new and unified understanding of both fully observable and partially observable RL.
\end{abstract}

\newpage
{
  \hypersetup{linkcolor=black,anchorcolor=blue,
citecolor=blue,filecolor=blue,urlcolor=blue}
  \tableofcontents
}
\newpage

\section{Introduction}

In a single-agent interactive decision making problem,
an agent interacts with an environment by executing an action at each step, and collects an immediate reward and receives an observation emitted from the environment. 
The goal of the agent is to find the optimal policy that yields the highest cumulative rewards in expectation, where the policy specifies the  action  taken at each step  based on the past observations at hand. 
Such a general description of interactive decision making includes various notable reinforcement learning \citep{sutton2018reinforcement} models   such as  the Markov decision process (MDP), partially observable Markov decision process (POMDP), and predictive state representations (PSR) \citep{littman2001predictive} as special cases. 

In the online setting, without the knowledge of the reward and observation models, we aim to learn an approximately optimal policy in a trial-and-error fashion --- iteratively executing the current policy to collect data and improving the current 
based on the historical data. 
The sample efficiency of the reinforcement learning algorithm is characterized by the \emph{regret}, which is defined as the cumulative suboptimality of the policies executed by the algorithm. 
Due to the unknown model, 
designing a low-regret algorithm 
involves a fundamental tension between exploration and exploitation \citep{sutton2018reinforcement}. 
In particular, 
to minimize the regret, we need to execute  the \emph{exploitative policy}, i.e., the policy that  is most likely to be optimal given the historical data. 
The performance of such an exploitation policy 
hinges on the estimation accuracy of model or value function  (depending on whether the algorithm is \emph{model-based} or \emph{model-free}) based on the historical data. 
To achieve high  estimation accuracy, however, the data collected should have good coverage over the observation space, which necessitates the usage of some \emph{explorative policy} in the algorithm to collect a more diverse dataset. 
The tension between deploying either an explorative or exploitative policy embodies the exploration-exploitation tradeoff.

To gracefully strike a balance between the exploration-exploitation tradeoff, 
most of existing sample efficient RL algorithms are based on the (i) principle of \emph{optimism in the face of uncertainty} \citep{auer2002finite, auer2002using, jaksch2010near} or (ii) \emph{posterior sampling} \citep{thompson1933likelihood, strens2000bayesian, russo2014learning}. 
In particular, algorithms based on the optimism principle 
constructs confidence sets of the model or value function, and executes the policy by planning according to the most optimistic model or value function  in the confidence set. 
By constructing optimistic estimates, the estimation uncertainty that depends on the quality of the data is incorporated into planning, which adjusts the learned policy to also explore areas  with high uncertainty. 
Meanwhile, algorithms based on the idea of posterior sampling first construct a posterior distribution over the model or value function  
and then update the policy by planning with respect to a random model or value function sampled from the posterior. 
Intuitively, exploration is attributed to the  randomness of the posterior distribution. 

Although the sample efficiency of RL algorithms in the tabular setting of MDP is relatively well-understood \citep{auer2008near, azar2017minimax,jin2018q, agrawal2017optimistic}, 
such a problem remains elusive in the contexts of (i) the function approximation setting where the state space is very large, or (ii) the partially observable setting such as POMDP and PSR. 
In particular, under the function approximation setting, the model or value function is estimated within a certain function class. 
To characterize the estimation uncertainty that is tailored to the employed function class, and  to quantify how the regret is connected to such estimation error, 
various notions of complexity measures and structural assumptions have been introduced in the literature. 
For example,  linear MDP structure \citep{yang2019sample, jin2020provably}, linear mixture MDP \citep{ayoub2020model,modi2020sample,cai2020provably,zhou2021nearly}, Bellman completeness \citep{munos2003error,zanette2020learning},   Bellman rank \citep{jiang@2017},   eluder dimension \citep{osband2014model, wang2020reinforcement},  witness rank \citep{sun2019model}, Bellman eluder dimension \citep{jin2021bellman} and bilinear class \citep{du2021bilinear}. 
Moreover, these structural assumptions mainly focus on the MDP model and the  corresponding algorithms often leverage these particular  structures to achieve sample efficiency.
Furthermore, under the partially observable setting, \cite{krishnamurthy2016pac,jin2020sample} prove that, even when the observation space is finite and there are two latent states, there exists a POMDP instance for which any learning algorithm requires a number of samples that is exponential in horizon length.
Such a depressing result necessitates additional structural assumptions for achieving sample efficient learning. 
To this end, a line of recent works proposes sample efficient RL algorithm under various POMDP and PSR settings \citep{jin2020sample, liu2022partially, kwon2021rl, du2019provably, efroni2022provable, cai2022reinforcement, wang2022embed, zhan2022pac, uehara2022provably}. These algorithms mainly implement the optimism principle. 

In this work, we aim to establish a unified framework for online interactive decision making in the context of function approximation. 
In particular, we aim to address the following two questions: 
\begin{tcolorbox}
First, can we propose a general complexity measure that captures the exploration-exploitation tradeoff in most of the existing models studied in the literature? 
Second, in terms of algorithm design, can we establish a general-purpose sample efficient algorithm and establish regret bounds for such an algorithm in terms of the proposed complexity measure? 
\end{tcolorbox}

We provide affirmative answers to both questions. In particular, we propose a generic  complexity  measure, dubbed as 
generalized eluder coefficient (GEC), which characterizes the foundational challenge of exploration-exploitation tradeoff by connecting (i) the training error of minimizing a loss function $\ell$ over a hypothesis class $\mathcal{H}$ on the historical data and (ii) the prediction error of evaluating the updated policy. 
Here, depending on the  choice of $\ell$ and $\cH$, the algorithm can be either \emph{model-based} or \emph{model-free}. 
The rationale of GEC is that if the historical data is well-explored, then the out-of-sample prediction error of the newly learned policy is small, and therefore  the relative ratio between the  training error and prediction error characterizes how challenging it is to explore the environment.
Moreover, in online setting, the goal is to minimize the regret. As a result, we only need to focus on the prediction error of the updated policy, which incurs suboptimality in the next rounds. 
Such a reasoning is motivated from the eluder dimension \citep{osband2014model}, thus giving the name of GEC. 
Besides, based on the optimism principle, the prediction error of the updated policy serves as an upper bound of its incurred regret. Therefore, GEC can be used to establish a regret upper bound for the algorithms that implement the optimism principle. 
Then we show that GEC can capture a majority of known tractable interactive decision making. In specific, we prove that low Bellman eluder dimension problems \citep{jin2021bellman}, bilinear class \citep{du2021bilinear}, problems with low witness rank \citep{sun2019model}, PO-bilinear class \citep{uehara2022provably}, and \emph{generalized regular PSR} have a low GEC. Here generalized regular PSR is a rich subclass of PSR identified by us and subsumes weakly revealing POMDPs \citep{jin2020sample,liu2022partially}, latent MDP with full-rank test condition\footnote{Since solving latent MDPs requires exponential samples in the worst case \citep[][Theorem 3.1]{kwon2021rl}, we adopt a full-rank test assumption \citep[][Condition 1]{kwon2021rl} to  make latent MDPs tractable. } \citep{kwon2021rl}, decodable POMDP \citep{du2019provably,efroni2022provable}, low-rank POMDP \citep{wang2022embed}, and regular PSR \citep{zhan2022pac}. We visualize and compare previous complexity measures and GEC in Figure~\ref{fig:venn}.

\begin{figure}
    \centering
    \includegraphics[width=0.8\textwidth]{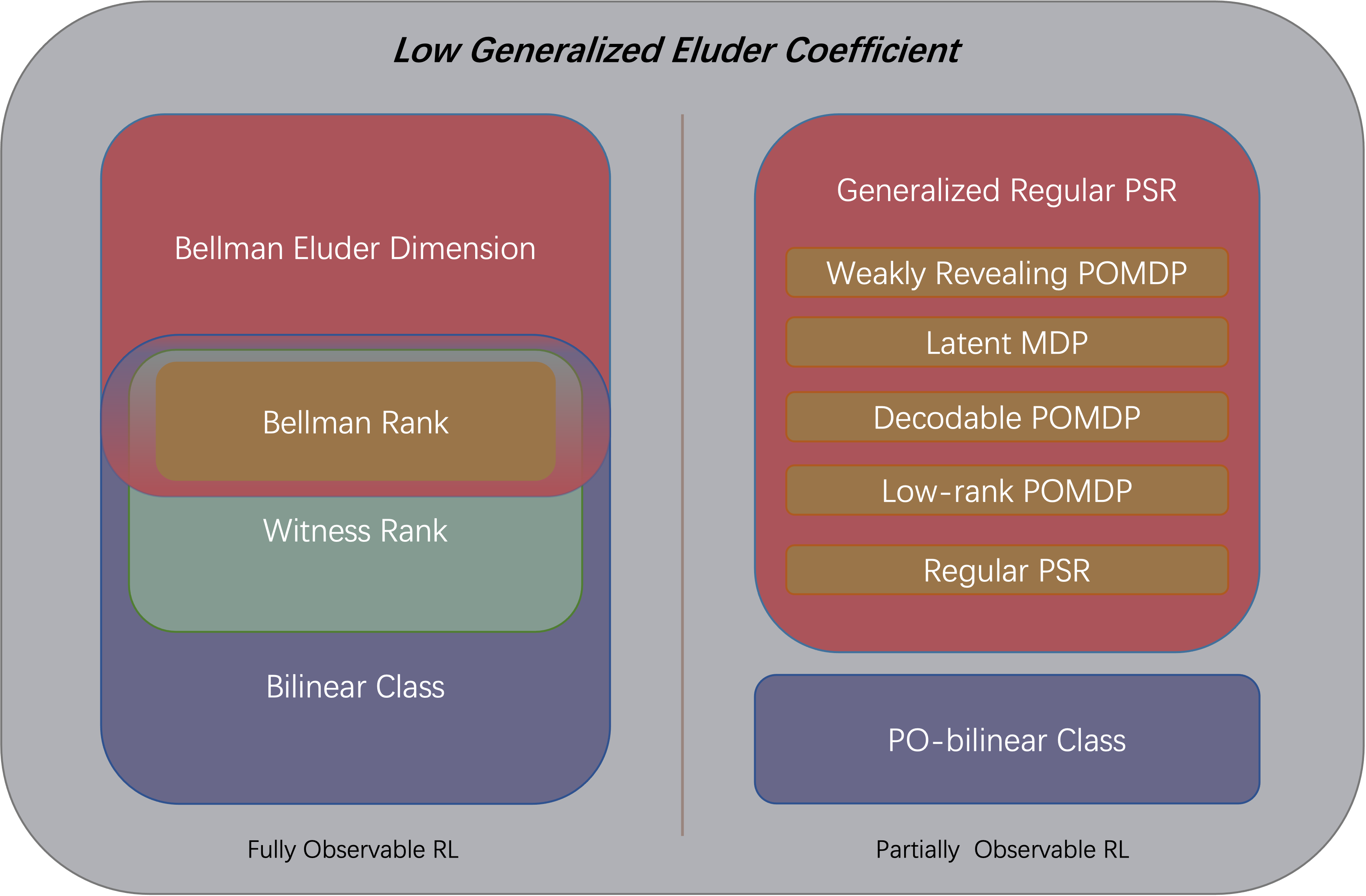}
    \caption{A summary of interactive decision making that can be captured by GEC. 
    Here generalized regular PSR and PO-bilinear class are disjoint because in the generalized regular PSR we can find the globally optimal policy while in PO-bilinear class the goal is to find an optimal $M$-memory policy where $M \in \NN^+$.} 
    \label{fig:venn}
\end{figure}

Furthermore, to complement optimistic algorithms, 
we propose a generic posterior sampling algorithm for interactive decision making,
which can be applied to both model-free and model-based  RL with full and partial observations. 
Posterior sampling algorithms seem more agreeable to computationally tractable implementations via  first-order sampling methods \citep{welling2011bayesian} or ensemble methods \citep{osband2016deep,lu2017ensemble,chua2018deep,nagabandi2020deep}.
In particular, the proposed  algorithm maintains an \emph{optimistic posterior distribution} over the hypotheses where the loglikelihood function is given by the loss function evaluated on historical data. 
Such a posterior is optimistic in the sense that the prior distribution is biased towards hypotheses with higher values, which is inspired by the feel-good Thompson sampling algorithm proposed by \cite{zhang2022feel}.  
On MDP, POMDP, and PSR, 
we prove that 
optimistic posterior sampling achieves a sublinear regret and thus is provably sample efficient.
In particular, when the hypothesis class $\cH$ is finite, the regret is $  \cO( \sqrt{d_{\mathrm{GEC} }\cdot H  T \cdot \log | \cH |  })$, where    $d_{\mathrm{GEC} }$ is GEC, $H$ is the horizon length, and $T$ is the total number of episodes. 
As we will show in \S\ref{sec:result:full} and \S\ref{sec:result:partial}, when specialized to concrete examples of interactive  decision making, such a generic regret bound involving GEC matches those established in the literature using problem-specific quantities such as the size of (latent) state space or the rank of the transition model. 
Therefore, we show that GEC serves as  a unified   complexity measure  for theoretically understanding the exploration-exploitation tradeoff in interactive decision making, and optimistic  posterior sampling  is a generic algorithm that is provably sample efficient.

\subsection{Related Works} 

\paragraph{Learning Fully Observable Reinforcement Learning.} 
One core problem in fully observable RL is identifying the minimal structural assumption under which we can learn the MDP efficiently. The tabular MDPs with finite states and actions have been well-studied by a series of works. See, e.g., \citet{auer2008near, azar2017minimax,dann2017unifying,jin2018q, agrawal2017optimistic,zanette2019tighter,zhang2021isreinforcement,menard2021ucb,li2021breaking,wu2022nearly,zhang2022horizon} and the references therein.  
However, the minimax lower bound depends on the number of states, suggesting that MDPs with large state space cannot be handled efficiently without further structural assumptions. Beyond tabular MDPs, many studies focus on RL with function approximation, which allows an enormous number of states. Among them, a line of works considers  RL with linear function approximation (e.g., \citet{wang2019optimism, yang2019sample, cai2020provably, jin2020provably, zanette2020frequentist, ayoub2020model, modi2020sample, zhou2021nearly}), which are arguably the simplest setting of function approximation. For general function approximation, \citet{russo2013eluder} propose  the notion of eluder dimension to characterize the complexity of a general function class in the literature of bandit. The eluder dimension is later leveraged to RL by \citet{wang2020reinforcement}, which includes the linear MDP \citep{jin2020provably} as a special example. Another line of works proposes  two low-rank structures, namely, the Bellman rank for model-free RL \citep{jiang@2017}, and the witness rank for model-based RL \citep{sun2019model}. \citet{zanette2020learning} consider  a different type of framework based on Bellman completeness, which assumes that the class used for approximating the optimal Q-functions is closed in terms of the Bellman operator and improves the result for linear MDP by establishing optimism only at the initial state. 
Following these works, \citet{du2021bilinear} propose  the bilinear class, which contains more MDP models with low-rank structures \citep{azar2017minimax,sun2019model,jin2020provably,modi2020sample,cai2020provably,zhou2021nearly}.  \citet{du2021bilinear} also establish an algorithm that solves bilinear class with sample efficiency.  
Parallel to the bilinear class, \cite{jin2021bellman} proposes the Bellman-eluder (BE) dimension, which unifies the Bellman rank \citep{jiang@2017} and low eluder dimension \citep{wang2020reinforcement}. They also design sample efficient algorithms for decision making problems with bounded BE dimension. However, it is known that neither BE nor bilinear class captures each other. Our work is also closely related to the online decoupling coefficient introduced in \citet{dann2021provably}, which consider the Q-type model-free RL. We present a detailed comparison with \citet{dann2021provably} when reviewing the related works on posterior sampling in the sequel. Our work can tackle all these tractable fully observable RL problems.

Recently, \citet{foster2021statistical} propose  the decision estimation coefficient (DEC) to unify the complexity measures in interactive decision making. \citet{foster2021statistical} show  that a rich class of tractable fully observable RL problems has bounded DEC. They also propose a generic model-based algorithm to solve RL problems with low DEC. However, their algorithm requires solving a complicated minimax optimization problem. In comparison, our posterior sampling algorithm is relatively clean and easy to implement in both model-free and model-based fashion. More importantly, our work can also tackle the more challenging partially observable RL problems.

\paragraph{Learning Partially Observable Reinforcement Learning.}
Our work is closely related to the study of sample efficiency in RL with partial observations. In partially observable RL, observations are no longer Markovian, which results in that the optimal policy in general depends on the full history. As shown in \citet{krishnamurthy2016pac,jin2020sample}, learning such a history-dependent policy requires exponential samples in the worst case. But this worst-case hardness result does not rule out the possibility of sample efficient learning in a rich subclass of partially observable RL. Some works on POMDP \citep{guo2016pac,azizzadenesheli2016reinforcement,xiong2021sublinear} and PSR \citep{boots2011closing,hefny2015supervised,jiang2018completing,zhang2021reinforcement} obtain polynomial sample complexity results with some exploratory data or reachability assumption. 
But these works do not address the  exploration-exploitation tradeoff, which is a central challenge in online RL.  
For the more general partially observable RL where the learner needs to tackle the challenges from partial observations and exploration, a line of recent works identifies several tractable subclasses of partially observable RL, including weakly revealing POMDPs \citep{jin2020sample,liu2022partially}, latent MDP with sufficient tests \citep{kwon2021rl}, decodable POMDPs \citep{du2019provably,efroni2022provable}, low-rank POMDPs \citep{wang2022embed}, regular PSR \citep{zhan2022pac}, and design  sample efficient algorithms for those settings. The very recent work of \citet{uehara2022provably} proposes the PO-bilinear class and shows that the definition of the PO-bilinear class captures a rich class of tractable RL problems with partial observations, including weakly revealing POMDPs, decodable POMDPs, and observable POMDPs with low-rank transitions \citep{uehara2022provably}. Our work provides a unified theoretical and algorithmic framework of interactive decision making that can be applied to all of  these known tractable partially observable RL problems.

In terms of computation, \citet{papadimitriou1987complexity} show  that planning (finding the optimal policy with the known model) in POMDP is PSPACE-complete. Recently, \citet{golowich2022learning,golowich2022planning} study the observable POMDP (which is equivalent to the weakly revealing POMDP up to polynomial factors) and designs algorithm that can find the near-optimal policy with quasi-polynomial time and quasi-polynomial samples. \citet{jin2020sample} and \citet{uehara2022computationally} propose both computationally and sample efficient algorithms for POMDPs with deterministic latent transitions. Computational efficiency for the partially observable models studied in these works is beyond the scope of this paper, and we leave this as future work.

\paragraph{Posterior Sampling in Reinforcement Learning.} Our work is also closely related to a line of work that considers posterior sampling in RL. Most existing works such as \citet{russo2014learning} study the Bayesian regret, which is weaker than the frequentist/worst-case regret.  
 To minimize the frequentist regret, \citet{osband2016generalization} and \citet{russo2019worst} consider the randomized least-squares value iteration algorithm (RLSVI) in tabular MDP and is extended to the linear setting by \citet{zanette2020frequentist}. For general function approximation, \citet{zhang2022feel} first show that it is necessary to employ a posterior with optimistic modifications to achieve the optimal frequentist regret bound in contextual bandit and certain RL problems with general function approximation. After this, \citet{dann2021provably} propose  the conditional posterior sampling algorithm to deal with Q-type model-free RL problems. \citet{agarwal2022model} propose  a general posterior sampling method for model-based RL using the loglikelihood function. \citet{agarwal2022non} generalize the V-type results to infinite actions and designs a novel two-timescale exploration strategy, which serves a similar role of one-step pure exploration for the finite-action case, to solve the problem. Among them, our work is mostly related to \citet{dann2021provably} because both our work and \citet{dann2021provably} are based on the eluder dimension technique \citep{russo2013eluder}, but the analysis in \citet{zhang2022feel,agarwal2022model,agarwal2022non} are built on the ``decoupling technique'' developed by \citet{zhang2022feel}. Compared with \citet{dann2021provably}, the most appealing feature of our work is  universality, as our work not only can deal with the Q-type problems in MDP, but also various V-type problems in MDP and even partially observable RL. 

About one month before our initial submission, several related independent works study fully observable RL \citep{chen2022general,chen2022unified} and partially observable RL \citep{chen2022partially,liu2022optimistic}. We will provide a detailed comparison with them in Appendix~\ref{appendix:concurrent}.


\subsection{Notations}
We use $\Delta_\cX$ to denote the space of all distributions over the set $\cX$. For some $n \in \NN^+$, we use the convention that $[n] = \{1, \cdots, n\}$. We also let $x_{1:n} = \{x_1, \cdots, x_n\}$. Given a matrix $\Mb \in \RR^{m \times n}$, we use $\sigma_k(\Mb)$ to denote the $k$-th largest singular value, $\Mb^\dagger$ to denote its Moore-Penrose inverse, and $\|\Mb\|_p = \max_{\|\bfx\|_p = 1} \|\Mb \bfx\|_p$ to denote its matrix $p$-norm. For any distribution $p \in \Delta_{[n]}$ and sequence $x_{1:n}$, we denote $\sum_{i \sim p} x_i = \sum_{i = 1}^n p(i) x_i$. For two distributions $P, Q \in \Delta_\cX$, we define the total variation distance, KL divergence, and Hellinger divergence by $\TV(P, Q) = \frac{1}{2} \cdot \EE_{x \sim P}[|dQ(x)/dP(x) - 1|]$, $\KL(P \| Q) = \EE_{x \sim P} [\log dP(x)/dQ(x)]$, and $D_H^2(P, Q) = \frac{1}{2} \cdot \EE_{x \sim P}[(\sqrt{dQ(x)/d P(x)} - 1)^2]$, respectively. For two non-negative sequences $\{a_n\}_{n \ge 0}$ and $\{b_n\}_{n \ge 0}$, if $\limsup a_n/b_n < \infty$, we write $a_n = \cO(b_n)$ or $a_n \lesssim b_n$. We also use $\tilde{\cO}$ to further omit logarithmic factors. To improve the readability, we provide a summary of notations in Appendix~\ref{appendix:notation}.

\section{Preliminaries}

\subsection{Interactive Decision Making}

An episodic interactive decision making problem is specified by a tuple $(\mathcal{O}, \cA, H, \PP, R)$, where $\cO$ and $\cA$ denote the space of observation and action respectively; $H$ is the length of each episode, 
$\PP = \{\PP_h\}_{h \in [H]}$ is the transition kernel, where $\PP_h(o_{h+1} \mid \tau_h)$ denotes the probability of generating the observation $o_{h+1}$ given the history $\tau_h = \{o_{1:h}, a_{1:h}\} = \{o_1, a_1, \cdots, o_h, a_h\}$; $R = \{R_h\}_{h \in [H]}$ is the reward function with $R_h(o, a)$ equal to the known deterministic reward\footnote{Our following results can be extended to the unknown stochastic reward case since learning transition kernel is more challenging than learning the rewards.} after taking action $a$ at observation $o$ and step $h$.  Furthermore, we assume the reward is non-negative and $\sum_{h = 1}^H R_h \le 1$. 

We assume each episode starts from an observation $o_1$, which is sampled from a fixed distribution. At each step $h \in [H]$, the agent observes the history $\tau_{h-1}$ and the current observation $o_h$, takes an action $a_h$, receives a reward $r_h = R_h(o_h, a_h)$. Then the environment generates a new observation $o_{h+1} \sim \PP_h(\cdot \mid \tau_h)$. The episode ends after observing the dummy observation at step $H+1$, i.e., $o_{H+1} = o_\dummy$.

A general (history-dependent) policy $\pi = \{\pi_h\}_{h \in [H]}$ consists of $H$ functions, where $\pi_h: (\cO \times \cA)^{h-1} \times \cO \rightarrow \Delta_{\cA}$ is a mapping from an observation-action sequence to a distribution over actions. Given $(\tau_h, \pi)$, the probability of observing $\tau_h$ when executing $\pi$ is specified by
\# \label{eq:def:prob1}
\PP^{\pi}(\tau_h) := \PP(\tau_h) \times \pi(\tau_h),
\#
where $\PP(\tau_h)$ and $\pi(\tau_h)$ are defined by
\# \label{eq:def:prob2}
\PP(\tau_h) := \prod_{h'=1}^{h} \PP(o_{h'} \mid \tau_{h' - 1}), \qquad \pi(\tau_h) := \prod_{h' = 1}^h \pi_{h'}(a_{h'} \mid \tau_{h' - 1}, o_{h'}).
\#
For any policy $\pi$, its value function $V^{\pi}$ is defined by 
\$
V^\pi := \EE_{\pi} \Big[\sum_{h = 1}^H r_h \Big],
\$
where the expectation is taken with respect to the randomness incurred by the transition kernel $\PP$ and policy $\pi$. Our goal is to find the optimal policy $\pi^*$ which maximizes the value function, i.e., $\pi^* = \argmax_{\pi \in \Pi} V^{\pi}$, where $\Pi$ is the history-dependent policy class. We also denote $V^* = V^{\pi^*}$. 

\paragraph{Learning Objective} We consider the regret\footnote{Our definition is slightly different from the conventional regret since we allow the learner predicts policy $\pi$ and executes the exploration policy $\pi_{\exp}$.} minimization problem. Suppose the learner predicts policy $\pi^t$ in the $t$-th episode for $t \in [T]$, then the regret for $T$ episodes is defined as
    \$
    \reg(T) = \sum_{t= 1}^T  (V^* - V^{\pi^t} ) .
    \$
We also consider the sample complexity, i.e., the number of episodes to learn an $\varepsilon$-optimal policy $\pi$ satisfying $V^{\pi} \ge V^* - \varepsilon$. If $\reg(T)$ of an algorithm is sublinear, i.e., $\reg(T) = o(T)$,  then we can find an $\varepsilon$-optimal policy within $T$ episodes, where $T$ is a sufficiently large number  satisfying  $\reg(T)/T \le \varepsilon$. 

\subsection{Concrete Models: MDP, POMDP, and PSR} 

In the following, we present the details of three   classes of models ---  Markov decision process (MDP), partially observable Markov decision process (POMDP), and Predictive State Representation (PSR), which are important special cases of the interactive decision making framework that have been extensively studied in the literature. 

\begin{example}[MDP] \label{example:mdp}
    A Markov decision process (MDP) is specified by a tuple $(\cS, \cA, H, \PP, R)$. Here the state space $\cS$ can be regarded as the observation space $\cO$ in the interactive decision making formulation. Moreover, the transition kernel for MDPs satisfies the Markov  property, i.e., $\PP_h(x_{h+1} \mid x_{1:h}, a_{1:h}) = \PP_h(x_{h+1} \mid x_{h}, a_{h})$ for any $h \in [H]$ and $(x_{1:h+1}, a_{1:h}) \in \cS^{h+1} \times \cA^h$. A Markovian policy $\pi = \{\pi_h: \cS \rightarrow \Delta_\cA\}_{h \in [H]}$ maps each state to a distribution over actions. Given a Markovian  policy $\pi$, its Q-function and value function at step $h$ are defined as 
    \$
    Q_h^\pi(x, a) = \EE_{\pi} \Big[ \sum_{h'=h}^H r_{h'} \,\Big|\, x_h = x, a_h = a\Big], \quad V_h^\pi(x) = \EE_{\pi} \Big[ \sum_{h'=h}^H r_{h'} \,\Big|\, x_h = x\Big].
    \$
    We also use $\pi^* = \{\pi_h^*\}_{h \in [H]}$, $V^* = \{V_h^*\}_{h \in [H]}$, and $Q^* = \{Q_h^*\}_{h \in [H]}$ to denote the optimal (Markovian) policy, optimal value function and optimal Q-function, respectively. We remark that here the existence of an optimal Markovian policy is because of the Markov  property of the transition kernel. For general interactive decision making problems, the optimal policy is history-dependent. It is well known that $(Q^*, V^*)$ satisfies the Bellman equation for any $(h, x, a) \in [H] \times \cS \times \cA$:
    \# \label{eq:Bellman:equation}
    Q_h^*(x,a) = (\cT_h V_{h+1}^*)(x, a) := r_h(x, a) + \EE_{x' \sim \PP_h(\cdot \mid x, a)} 
    V_{h+1}^*(x'), \quad V_h^*(x) = \max_{a \in \cA} Q_h^*(x, a). 
    \#
    Here $\cT_h$ is referred to as the \emph{Bellman operator} at step $h$.
\end{example}

\begin{example}[POMDP] \label{example:pomdp}
    A partially observable Markov decision process (POMDP) is specified by a tuple 
\$
(\cS, \cO, \cA, H, \mu_1, \TT = \{\TT_{h, a}\}_{(h, a) \in [H] \times \cA}, \OO = \{\OO_h\}_{h \in [H]}, R = \{R_h\}_{h \in [H]}),
\$
where $\cS$ with cardinality $S = |\cS|$, $\cO$ with cardinality $O = |\cO|$, and $\cA$ with cardinality $A = |\cA|$ are the state, observation, and action spaces, respectively. Here $H$ is the length of each episode, $\mu_1 \in \Delta_\cS$ is the distribution of the initial state, $\TT_{h, a}(\cdot \mid \cdot)$ is the transition kernel of action $a$ at step $h$, and $\OO_h(\cdot \mid \cdot)$ is the emission kernel at step $h$, and $R$ is the known deterministic reward function. 
\end{example}

\begin{example}[PSR]
We introduce predictive state representations (PSRs) \citep{littman2001predictive,singh2012predictive}, a more expressive class than POMDPs.

PSRs use the concept of \emph{test}, which is a sequence of future observations and actions. Specifically, for some test $t_h = (o_{h: h + W - 1}, a_{h: h + W - 2})$ with length $W$, we define the probability of test $t_h$ being successful conditioned on the reachable history $\tau_{h-1}$ as the probability of observing $o_{h: h + W - 1}$ by executing $a_{h: h + W -2}$ conditioned on the history $\tau_{h-1}$, that is, $\PP(t_h \mid \tau_{h-1}) = \PP(o_{h: h + W - 1} \mid \tau_{h-1}, \mathrm{do}(a_{h: h + W - 2}))$. When $\PP^{\pi}(\tau_{h-1}) = 0$ for any $\pi$, we let $\PP(t_h \mid \tau_{h-1}) = 0$. Then we define the one-step system dynamics matrix $\DD_h$ with tests as rows and histories as columns and the  $(t_{h+1}, \tau_h)$-th entry of $\DD_h$ is  equal to $\PP(t_{h+1} \mid \tau_{h})$. 
In other words,  
the number of rows of $\DD_h$ is equal to the number of all possible tests starting from $o_{h+1}$ and the 
number of columns is equal to the number of all possible histories before observing $o_{h+1}$. 
The possibly infinite-dimensional matrix $\DD_h$ encodes all probabilities of the form $\PP( t_{h+1} \mid \tau_{h})$, which summarizes the dynamics of PSR.  
To tractably learn a PSR, it is often assumed that $\DD_h$ is low-rank, which further leads to the definition of PSR rank, which is presented as follows.

\begin{definition}[PSR Rank] \label{def:psr:rank}
    For a PSR, its rank $d_{\psr}$ is defined as $d_{\psr} = \max_{h \in [H]} d_{\psr, h}$, where $d_{\psr, h} = \rank(\DD_h)$. 
\end{definition}

\begin{remark}
    All POMDPs defined in Example \ref{example:pomdp} are PSRs with rank at most $|\cS|$. See e.g., Theorem 2 in \citet{singh2012predictive} for the proof.
\end{remark}

\noindent{\bf Core Test Sets and Predictive States.} \quad
For a low-rank PSR, $\DD_h$ is a low-rank and infinite-dimensional matrix for any $h \in [H]$, and thus $\DD_h$ admits a more compact representation, which is given by the core tests and  predictive states introduced as follows.  
For any $h \in [H]$, we say a set $\mathcal{U}_h \in \cup_{W \in \mathbb{N}^+} \cO^W \times \cA^{W-1}$ is a \emph{core test set} if for any possible test $t_h = (o_{h:h+W-1}, a_{h: h+W-2})$ and any history $\tau_{h-1}$, there exists $\bfm(t_h) \in \RR^{|\cU_h|}$ such that
\# \label{eq:linear:weight}
\PP(t_h \mid \tau_{h - 1} )= \la \bfm(t_h), [\PP(t \mid \tau_{h-1})]_{t \in \cU_h} \ra. 
\#
Here, $\bfm $ maps each any  test  to a vector in $\RR^{ | \cU_h | }$. 
We refer to $\bar{\bfq}(\tau_{h-1}) := [\PP(t \mid \tau_{h-1})]_{t \in \cU_h} \in \RR^{| \cU_h | } $ as the \emph{predictive state} at step $h \ge 2$. 
For $h = 1$, we use $\bfq_0 : = [\PP(t)]_{t \in \cU_1}$. Besides, let us define a matrix  
\# \label{eq:def:bar:D}
\bar{\DD}_h:= [\PP(t \mid \tau_h)]_{t \in \cU_{h+1}, \tau_h \in (\cO \times \cA)^h} \in \RR^{|\cU_{h+1}| \times  |\cO|^h |\cA|^h},
\#
which is a row-wise sub-matrix of $\DD_h$ and its rows are indexed by tests in $\cU_{h+1}$.
By \eqref{eq:linear:weight}, we know any row in $\DD_h$ can be expressed by a linear combination of rows of $\bar{\DD}_h$, which further implies that $\rank(\bar{\DD}_h) = \rank(\DD_h) = d_{\psr, h}$.
We also define $\cU_{A, h}$ as the action sequences in $\cU_h$, i.e., $\cU_{A,h} = \{a_{h: h + W - 2} \mid  (o_{h: h+W-1}, a_{h : h+W-2}) \in \cU_h \text{ for some } o_{h: h+W-1} \in \cO^W\}$. For ease of presentation, we denote $U_A = \max_{h \in [H]} |\cU_{A,h}|$ and $\cU_{H+1} = o_\dummy$. Throughout this paper, we assume the core test sets $\{\cU_{h}\}_{h \in [H]}$ are known to the learner.

\vspace{4pt}
\noindent{\bf Observable Operator Representation.} \quad
It is known that any PSR permits the observable operators representation \citep{jaeger2000observable}. Formally, given a PSR with a core test set $\{\cU_h\}_{h \in [H]}$, there exists a set of matrices $\{\Mb_h(o, a) \in \RR^{|\cU_{h+1}| \times |\cU_h|}, \bfq_0 \in \RR^{|\cU_1|}\}_{o \in \cO, a \in \cA, h \in [H]}$ that can characterize its dynamics. For any history $\tau_h$, it holds that
\$
[\PP(\tau_h, t) ]_{t \in \cU_{h+1}} &:= [\PP(t \mid \tau_h) \PP(\tau_h)]_{t \in \cU_{h+1}} = \Mb_{h}(o_h, a_h)  \cdots \Mb_1(o_1, a_1) \bfq_0.
\$
Here $[\PP(\tau_h, t) ]_{t \in \cU_{h+1}}$ is regarded as a vector in $\RR^{|\cU_{h+1}|}$ with each entry indexed by $t \in \cU_{h+1}$, and $\Mb_{h}$ maps each observation-action pair to a matrix in $\RR^{|\cU_{h+1}| \times |\cU_h|}$. 
In particular, for any $\tau_H$, we have
\# \label{eq:prob:traj}
\PP(\tau_H) &= \Mb_{H}(o_H, a_H) \Mb_{H - 1}(o_{H-1}, a_{H-1}) \cdots \Mb_1(o_1, a_1) \bfq_0.
\#
Here we remark that $\Mb_H \in \RR^{1 \times |\cU_H|}$ since we choose $|\cU_{H+1}| = 1$. For any full-length test $t_h = (o_{h:H+1}, a_{h:H})$, the linear weight $\bfm(t_h)$ in \eqref{eq:linear:weight} is given by 
\# \label{eq:def:m}
\bfm(o_{h:H}, a_{h:H}) = \Mb_{H}(o_H, a_H) \Mb_{H - 1}(o_{H-1}, a_{H-1}) \cdots \Mb_h(o_{h}, a_{h}),
\#
where we omit the dummy observation $o_{H+1} = o_\dummy$.
To simplify notations, we use the shorthand $\Mb_{h:1}(o_{1:h}, a_{1:h}) = \Mb_{h}(o_h, a_h)  \cdots \Mb_1(o_1, a_1)$ and $\bfq(\tau_h) = \Mb_{h:1}(\tau_h) \bfq_0$ for any $h \in [H]$.
\end{example}

For the convenience of readers, we summarize the notations in Appendix~\ref{appendix:notation}.

\section{Complexity Measure --- Generalized Eluder Coefficient} \label{sec:GEC}

In this section, we propose a general complexity measure --- generalized eluder coefficient (GEC). Then we demonstrate the universality of our complexity measure by providing several examples. Similar to \citet{du2021bilinear}, we assume that we have access to a hypothesis class $\cH = \cH_1\times \cdots \times \cH_H$, which can be either \emph{model-based} or \emph{value-based}. The elements of $\cH$ may  vary depending on the RL problems, and we present two examples as follows.

\begin{example}[Model-based Hypothesis] \label{example:hyp:model}
    An example of a model-based hypothesis class is when $\cH$ is a set of models (transition kernel $\PP$ and reward function $R$). In this case, for any $f = (\PP_f, R_f) \in \cH$, we denote $\pi_f = \{\pi_{h,f}\}_{h \in [H]}$ and $V_f = \{V_{h,f}\}_{h \in [H]}$ as the optimal policy and optimal value functions
    corresponding to the model $f$, respectively. To facilitate the following analysis, we denote the real model by $f^*$. For MDPs, we let $Q_f = \{Q_{h,f}\}_{h \in [H]}$ be the optimal Q-functions
    with respect to the model $f$. 
\end{example}

\begin{example}[Value-based Hypothesis for MDP] \label{example:hyp:value:1}
    A value-based hypothesis for MDPs is a set of Q-function, that is,
    $\cH = \{\cH_h\}_{h \in [H]}$, where $\cH_h = \{Q_h: \cS \times \cA \rightarrow \mathbb{R}\}$. For any $f = \{Q_h\}_{h \in [H]}$, let $Q_f = \{Q_{h, f} = Q_h\}_{h \in [H]}$, $V_f = \{V_{h, f}(\cdot) = \max_{a \in \cA} Q_{h, f}(\cdot, a)\}_{h \in [H]}$, and $\pi_f = \{\pi_{h,f}(\cdot) = \argmax_{a \in \cA}Q_h(\cdot, a)\}_{h \in [H]}$. We also denote $f^* = Q^*$, where $Q^*$ is the optimal Q-function.
\end{example}

Throughout this paper, we assume that $\cH$ contains $f^*$ (cf. Example \ref{example:hyp:model} or Example \ref{example:hyp:value:1}), which is standard in the literature \citep[e.g.,][]{du2021bilinear, jin2021bellman}.
\begin{assumption}[Realizability] \label{assu:realizability} We assume $f^* \in \cH$.
\end{assumption}

Now we introduce a new complexity measure -- generalized eluder coefficient (GEC).

\begin{definition}[Generalized Eluder Coefficient] \label{def:gec}
    Given a hypothesis class $\cH$, a discrepancy function $\ell = \{\ell_f\}_{f \in \cH}$, an exploration policy class $\Pi_{\mathrm{exp}}$, and $\epsilon > 0$, the generalized eluder coefficient $\mathrm{GEC}(\cH, \ell, \Pi_{\mathrm{exp}}, \epsilon)$ is the smallest  $d$ ($d\geq 0$) such that for any sequence of hypotheses and exploration policies $\{f^t,  \{{\pi}_{\mathrm{exp}}(f^t, h)\}_{h \in [H]}\}_{t \in [T]}$:
    \$
    \sum_{t = 1}^T \underbrace{V_{f^t} - V^{\pi_{f^t}}}_{\displaystyle \text{\textcolor{red1}{prediction error}}} \le \bigg [ d \sum_{h=1}^H \sum_{t=1}^T \underbrace{\Big( \sum_{s=1}^{t-1} \EE_{{\pi}_{\mathrm{exp}}(f^s,h)} \ell_{f^s}(f^t, \xi_h) \Big)}_{\displaystyle \text{\textcolor{cyan}{training error}}} \bigg]^{1/2} + \underbrace{ 2\sqrt{dHT} + \epsilon H T}_{ \displaystyle \text{\textcolor{violet}{burn-in cost}}}.
    \$
    Here $\pi_{\exp}(f, h)$ is the exploration policy determined  by the hypothesis $f$ and the  timestep $h$, and the expectation $\EE_{\pi_{\exp}(f^s, h)}$ is taken with respect to $\xi_h$, which are samples induced by $\pi_{\exp}(f^s, h)$. 
    That is, $\xi_h$ is a subset of the trajectory generated by $\pi_{\exp}(f^s, h)$ and its specific meaning may vary according to the choice of the loss function $\ell$. 
    Moreover, the choise of the loss function $\ell$ is problem-specific. 
    For generality, we allow $\ell$ to also depend on $f^s$. 
    Examples of the exploration policies $ \pi_{\exp}(f^s, h)$ and loss function $\ell_{f^s}$ will be presented in the sequel. 
    Furthermore, here $2\sqrt{dHT} + \epsilon H T$ is just a worst-case choice of burn-in cost and we will specify this term for the specific problems. We also remark that, by choosing a small $\epsilon$ such as $1/(HT)$, the \textcolor{violet}{burn-in cost} is usually a non-dominating term in the final result. When we choose $\epsilon = 0$, we denote $\GEC(\cH, \ell, \Pi_{\exp}) = \GEC(\cH, \ell, \Pi_{\exp}, 0)$.
\end{definition}
The main intuition behind the GEC is that on average, if hypotheses have small \textcolor{cyan}{in-sample training error} on the dataset collected so far and if the dataset is well-explored, then the \textcolor{red1}{out-of-sample prediction error} on the next trajectory generated   is also small on the long run. 
As a result, intuitively, how well we can use the in-sample training error to predict the performance of the updated policy reflects the hardness of exploring the environment. 
Furthermore, such an  observation  motivates us to design algorithms based on posterior sampling or the optimism principle. 
In particular, in posterior sampling algorithms, we assign a larger probability to a hypothesis if its training error is rather small, and in  optimism-based algorithms (e.g., UCB), we construct an explicit confidence set  of   $f^*$  by focusing the set of hypotheses with   small in-sample training errors. 

\paragraph{Optimism + Low GEC $\approx$ Sample Efficiency.} Now we informally explain why \emph{low GEC} may empower the sample efficient algorithms. For simplicity, we focus on the Q-type problems in MDP, including tabular MDP \citep{azar2017minimax,jin2018q}, linear MDP \citep{yang2019sample,jin2020provably}, Q-type Bellman rank \citep{jiang@2017}, and Q-type Bellman eluder dimension \citep{jin2021bellman}. For such a type of problems, we choose $\cH$ as in Example~\ref{example:hyp:value:1} and $\EE_{\pi_{\exp}(f^s,h)} \ell_{f_s}(f^t, \xi_h) = \EE_{\pi_{f^s}}[\cE_h(f^t, x_h, a_h)^2]$, where $\cE_h(f^t, x_h, a_h) = Q_{h,f^t}(x_h,a_h) - (\cT_h V_{h+1,f^t})(x_h,a_h)$. If we use the GOLF algorithm in \citet{jin2021bellman} to solve these Q-type problems, we can obtain a sequence of hypotheses and policies $\{f^t, \pi^t = \pi_{f^t}\}_{t \in [T]}$ satisfying two properties: 
\begin{itemize}
    \item [(i)] optimism \citep[][Lemma 40]{jin2021bellman}: $V^* \le V_{f^t}$ for any $t \in [T]$ with high probability; 
    \item [(ii)] small in-sample training error \citep[][Lemma 39]{jin2021bellman}: $\sum_{s=1}^{t-1} \EE_{\pi_{f^s}}[\cE_h(f^t, x_h, a_h)^2] \le \beta$ for any $(t, h) \in [T] \times [H]$, where $\beta$ only depends on $T$ logarithmically. 
\end{itemize} 
Based on these two properties, we directly establish an upper bound on $\reg(T)$ in terms of GEC: 
\$
\reg(T) &  = \sum_{t=1}^T V^* - V^{\pi_{f^t}} \le \sum_{t=1}^T \underbrace{V_{f^t} - V^{\pi_{f^t}}}_{ \displaystyle \text{\textcolor{red1}{prediction error}}}  \notag \\
 & \lesssim \biggl ( d \sum_{h=1}^H \sum_{t=1}^T \Big( \underbrace{ \sum_{s=1}^{t-1}  \EE_{\pi_{f^s}}[\cE_h(f^t, x_h, a_h)^2] }_{ \displaystyle \text{\textcolor{cyan}{training error }} {\color{cyan}\le \beta}} \Big)   \biggr )^{1/2} \le \sqrt{dHT\beta},
\$
where 
we let $d$ denote $\mathrm{GEC}(\cH, \ell, \Pi_{\mathrm{exp}}, \epsilon)$. 
Moreover, here 
the first inequality is implied by (i) optimism, the second inequality uses the definition of GEC (here we ignore burn-in terms in Definition~\ref{def:gec} because we can choose a small $\epsilon$ such as $1/(HT)$), and the last inequality uses (ii) small in-sample training error. With such a simple analysis, we obtain a $\sqrt{T}$-regret for Q-type problems in MDP as desired. But there are two issues to be addressed: First, if $d = \Omega(\sqrt{T})$, our regret bound becomes vacuous. To address this issue, we need to show that our new complexity measure can capture these Q-type problems. Equivalently, we need to show that all these problems have a low GEC. More generally, we need to answer the following question:
\begin{center} 
    {\bf Q1}: Can GEC capture a majority of known tractable interactive decision making?
\end{center}
The second issue is that the analysis for algorithms such as GOLF crucially relies on optimism ($V^* \le V_{f^t}$ for any $t \in [T]$ with high probability). But this condition may not hold for posterior sampling algorithms, which poses an additional challenge for establishing a generic posterior sampling algorithmic framework. Due to this challenge, the theoretical analysis for posterior sampling algorithms is rather less explored in the literature, despite of its computational superiority.  Hence, the following question also needs to be answered:
\begin{center}
    {\bf Q2}: Can we design a generic posterior sampling algorithm for interactive decision making with low GEC, and provide theoretical guarantees?
\end{center}

We provide the affirmative answers to {\bf Q1} and {\bf Q2} in this section (Sections~\ref{sec:relation:mdp} and \ref{sec:relation:pomdp}) and the following section (Section~\ref{sec:alg}), respectively. To show the generality of our complexity measure, GEC, we first present some examples of discrepancy function $\ell$, and exploration policy class $\Pi_{\exp}$.

\paragraph{The Choice of Discrepancy Function $\ell$.}

The choice of $\ell$ depends on the specific structure of the interactive decision making problem. We first discuss the choice of $\ell$ in Q-type and V-type problems in MDP, including but not limited to linear MDP \citep{yang2019sample,jin2020provably}, Q/V-type Bellman rank \citep{jiang@2017,jin2021bellman}, Q/V-type Bellman eluder dimension \citep{jin2021bellman}, and Q/V-type problems in bilinear class \citep{du2021bilinear}. Before continuing, for any $f \in \cH$ and state-action pair $(x,a)$, we define the Bellman residual as 
\# \label{eq:def:residual}
\cE_h(f,x,a) = Q_{h,f}(x,a) - (\cT_h V_{h+1,f})(x,a).
\#

\begin{example}[Q/V-type Problems in MDP] \label{example:ell:q}
    For Q/V-type problems in MDP, we choose $\xi_h = (x_h, a_h)$ and define  $$\ell_{f^s}(f^t, \xi_h) = \bigl (\cE_h(f^t, x_h, a_h) \bigr ) ^2, \quad \forall f^s \in \cH .$$ 
    
\end{example}
 While Q-type and V-type problems use the same discrepancy function, they are different in how we sample the action $a_h$ with different exploration policies, which will be introduced shortly.

\begin{example}[Model-based Problems in MDP] \label{example:ell:model}
    For model-based problems in MDP, we choose $\cH$ as in Example~\ref{example:hyp:model}, $\xi_h = (x_h, a_h)$, and define 
    $$\ell_{f^s}(f^t, \xi_h) = D_H^2 \big(\PP_{h, f^t}(\cdot \mid x_h, a_h), \PP_{h, f^*}(\cdot \mid x_h, a_h) \big), \quad \forall f^s \in \cH . $$
\end{example}

In above examples, $\ell_{f^s}(f^t, \xi_h)$ does not depend on $f^s$. But in some examples such as linear mixture MDP \citep{ayoub2020model,cai2020provably,zhou2021nearly}, $\ell_{f^s}(f^t, \xi_h)$ depends on both $f^t$ and $f^s$. See Appendix~\ref{appendix:completeness} for details.

\begin{example}[PSR] \label{example:ell:psr}
    For PSR, we choose the model-based $\cH$ as in Example~\ref{example:hyp:model}, $\xi_h = \tau_H$, which is the full history, and $\ell_{f^s}(f^t, \xi_h) = \frac{1}{2} (\sqrt{\PP_{f^t}(\tau_H)/\PP_{f^*}(\tau_H)} - 1)^2$. Then we have for any policy $\pi$,   
    $$\EE_{\pi}  \bigl [ \ell_{f^s}(f^t, \xi_h)\bigr ]  =  D_H^2 \big(\PP^{\pi}_{f^t}(\tau_H), \PP^{\pi}_{f^*}(\tau_H) \big),$$
    where $\PP_{f^t}^{\pi}(\tau_H)$ and $\PP_{f^*}^{\pi}(\tau_H)$ are specified in \eqref{eq:def:prob1}.
\end{example}

\paragraph{The Choice of Exploration Policy class $\Pi_{\exp}$.} The choice of exploration policy class also depends on the structure of  interactive decision making, and we summarize several typical choices as follows.

\begin{example}[Q-type Problems in MDP] \label{example:policy:q}
    For Q-type problems in MDP, we choose $\pi_{\exp}(f, h) = \pi_f$ for any $h \in [H]$.
\end{example}
For Q-type problems, the distribution of $a_h$ matches the historical one, i.e., $a_h \sim \pi_{f}$. On the contrary, for V-type problems, the action is greedy in the function evaluated in the future, which may not match the historical one. The latter requires a sufficient exploration of the whole action space to deal with the mismatch between the historical action and future choice required for the evaluation.

\begin{example}[V-type Problems in MDP] \label{example:policy:v}
    For V-type problems in MDP, we choose $\pi_{\exp}(f, h) = \pi_f \circ_{h} \operatorname{Unif}(\mathcal{A})$ for any $h \in [H]$, which means executing $\pi_f$ for the first $h-1$ steps, taking an action which is uniformly sampled from $\cA$ as step $h$, and behaving arbitrarily afterwards. This allows   sufficient exploration over the action space to deal with the mismatch between the historical actions and the actions used to evaluate $\ell_{f'}(f, \xi_h)$. 
\end{example}

\begin{example}[PSR] \label{example:policy:psr}
    For PSRs, we choose $\pi_{\exp}(f, h) = \pi_{f} \circ_{h} \operatorname{Unif}(\mathcal{A}) \circ_{h+1} \operatorname{Unif}\left(\mathcal{U}_{A, h+1}\right)$ for any $0 \le h \le H - 1$. Here $\pi_{\exp}(f, h)$ means executing $\pi_f$ for the first $h-1$ steps, taking an action which is uniformly sampled from $\cA$ at step $h$, taking an action sequence which is uniformly sampled from $\cU_{A, h+1}$, and behaving arbitrarily afterwards.
\end{example}

Finally, before showing that GEC generalizes various complexity measures studied in the literature, we conclude this subsection with a discussion on  the online decoupling coefficient introduced in \citet{dann2021provably}, which empowers provably sample-efficient and model-free  posterior sampling on Q-type MDP models.

    \begin{definition}[Online Decoupling Coefficient Presented in \citet{dann2021provably}]
        Given an MDP $\cM$, a value-based function class $\cF$ (cf. Example~\ref{example:hyp:value:1}), and $\mu > 0$, decoupling coefficient $\dca (\cF, \cM, T, \mu)$ is the smallest number $K$ such that for any sequence $\{f^t\}_{t=1}^T$,
        \$
            \sum_{h=1}^{H} \sum_{t=1}^{T}\left[\mathbb{E}_{\pi_{f^{t}}}\left[\mathcal{E}_{h}\left(f^{t} , x_{h}, a_{h}\right)\right]\right] \leq \mu \sum_{h=1}^{H} \sum_{t=1}^{T}\bigg[\sum_{s=1}^{t-1} \mathbb{E}_{\pi_{f^{s}}}\left[\mathcal{E}_{h}\left(f^{t} , x_{h}, a_{h}\right)^{2}\right]\bigg]+\frac{K}{4 \mu}.
        \$
    \end{definition}

    In comparison, our definition does not introduce an extra tuning parameter $\mu$ and allows more general discrepancy functions beyond the (Q-type) squared Bellman residuals, probability measures (beyond those induced by greedy policies), and trajectory collection policy. These extensions allow us to capture much more general interactive decision making problems.

\subsection{Relationship with Known Tractable Interactive Decision Making with Full Observations} \label{sec:relation:mdp}

In this subsection, we prove that  our new complexity measure --- generalized eluder coefficient ---  captures a majority of known complexity measures  studied for  tractable  interactive decision making in the fully observable setting. 

For fully interactive observable decision making, there are two generic  complexity measures in  parallel --- Bellman eluder (BE) dimension \citep{jin2021bellman} and bilinear class \citep{du2021bilinear}. As shown in \citet{jin2021bellman,du2021bilinear}, RL with low BE dimension and/or bilinear class contain a rich class of tractable RL problems as special cases, including but not limited to tabular MDPs \citep{azar2017minimax,jin2018q}, linear MDPs \citep{yang2019sample,jin2020provably}, generalized linear MDPs \citep{wang2019optimism}, linear mixture MDPs \citep{ayoub2020model,modi2020sample,cai2020provably,zhou2021nearly}, kernelized nonlinear regulator \citep{kakade2020information}, RL with Bellman completeness \citep{munos2003error,zanette2020learning}, RL with bounded eluder dimension \citep{osband2014model, wang2020reinforcement}, RL with low Bellman rank \citep{jiang@2017}, and RL with low witness rank \citep{sun2019model}. 

Based on this, we show our new complexity measure, GEC,  can naturally capture all these known tractable RL problems by proving our new complexity measure can capture both BE dimension and bilinear class, that is, establishing upper bounds of GEC in terms of BE dimension or the information gain of bilinear class.
Furthermore, to better understand the model-based RL, we also show our complexity measure can capture the model-based framework witness rank \citep{sun2019model} with a Hellinger-distance-based loss function, which is different from the extensions made in bilinear class. 

\subsubsection{Bellman Eluder Dimension} 
The definition of BE dimension relies on the notion of the distributional eluder (DE) dimension.

\begin{definition}[$\epsilon$-independence between distributions] Let $\cG$ be a function class defined on $\cX$, and $\nu, \mu_1,\cdots,\mu_n$ be probability measures over $\cX$. We say $\nu$ is $\epsilon$-independent of $\{\mu_1,\mu_2,\cdots, \mu_n\}$ with respect to $\cG$ if there exists $g \in \cG$ such that $\sqrt{\sum_{i=1}^n (\EE_{\mu_i} [g])^2} \leq \epsilon$ but $|\EE_\nu [g]| > \epsilon$.
\end{definition}

\begin{definition}[Distributional Eluder (DE) dimension] Let $\cG$ be a function class defined on $\cX$, and $\Pi$ be a family of probability measures over $\cX$. The distributional eluder dimension $\de(\cG, \Pi, \epsilon)$ is the length of the longest sequence $\{\rho_1,\cdots,\rho_n\} \subset \Pi$ such that there exists $\epsilon' \geq \epsilon$ with $\rho_i$  being  $\epsilon'$-independent of $\{\rho_1,\cdots,\rho_{i-1}\}$ for all $i \in [n]$.
\end{definition}

Choosing $\cH$ as the value-based hypothesis class in Example \ref{example:hyp:value:1} and recalling the definition of the Bellman operator $\cT_h$ in \eqref{eq:Bellman:equation}, we define BE dimension as follows.

\begin{definition}[Q-type BE dimension] Let $(I-\cT_h) \cH := \{(x,a) \to (f_h - \cT_h f_{h+1})(x,a): f \in \cH\}$  be the set of Bellman residuals induced by $\cH$ at step $h$, and $\Pi = \{\Pi_h\}_{h=1}^H$ be a collection of $H$ probability measure families over $\cS \times \cA$. The Q-type $\epsilon$-Bellman eluder dimension of $\cH$ with respect to $\Pi$ is defined as
    $$\bde(\cH, \Pi, \epsilon) := \max_{h \in [H]} \bigl \{  \de\left((I-\cT_h)\cH, \Pi_h, \epsilon\right) \big\} .$$
\end{definition}

\begin{definition}[V-type BE dimension] Let $(I-\cT_h) V_{\cH} := \{x \to (f_h - \cT_h f_{h+1})(x, \pi_{f_h}(x)): f \in \cH\}$, and $\Pi = \{\Pi_h\}_{h=1}^H$ be a collection of $H$ probability measure families over $\cS$. The V-type $\epsilon$-Bellman eluder of $\cH$ with respect to $\Pi$ is defined as
    $$\mathrm{dim_{VBE}}(\cH, \Pi, \epsilon) := \max_{h \in [H]} \bigl \{ \de\left((I-\cT_h)V_{\cH}, \Pi_h, \epsilon\right) \bigr \}.$$
\end{definition}

The following lemma shows that problems with low BE dimension also have a low generalized eluder coefficient.  

\begin{lemma}[BE Dimension $\subset$ GEC] \label{lem:reduction_bed_dc} Suppose that a problem has a Q-type BE dimension of $d_Q = \bde(\cH, \cD_\cH, \epsilon)$ or a V-type BE dimension of $d_V = \mathrm{dim_{VBE}}(\cH, \cD_\cH, \epsilon)$ with $\cD_\cH$ as the distributions induced by following some $f \in \cH$ greedily. Then, we have
     \$
            \sum_{t=1}^T V_{f^t} - V^{\pi_{f^t}} & \leq \bigg(d_{\GEC}  \sum_{t=1}^T \sum_{h=1}^H \sum_{s=1}^{t-1} \EE_{\pi_{\exp}({f^s},h)} \Big[ (\cE_h(f^t, x_h, a_h))^2\Big] \bigg) ^{1/2} + \min \{d_{\GEC}, HT\} + HT \epsilon,            
    \$
    where we have $\pi_{\exp}({f^s},h) = \pi_{f^s}$  and $d_{\GEC} \leq 2 d_QH \cdot \log (T)$ for Q-type BE dimension and  $\pi_{\exp}({f^s},h) = \pi_{f^s} \circ_{h} \mathrm{Unif}(\cA)$ and $d_{\GEC} \leq 2 d_VAH \cdot \log (T)$ for V-type BE dimension. 
    This further implies that
    \$
      & \GEC(\cH, |\EE_{x_{h+1} | x_h, a_h} l_{f'}(f, \zeta_h)|^2, \Pi_{\exp}, \epsilon) \leq 2 d_Q H  \cdot \log(T), \quad \text{(Q-type)},\\
      & \GEC(\cH, |\EE_{x_{h+1} | x_h, a_h} l_{f'}(f, \zeta_h)|^2, \Pi_{\exp}, \epsilon) \leq 2 d_V AH  \cdot \log(T) , \quad \text{(V-type)}.
    \$
    where $\zeta_h = (x_h, a_h, r_h, x_{h+1})$ and $l_{f'}(f, \zeta_h)$ is the Bellman error defined as
    \# \label{eq:def:loss:BE}
   l_{f'}(f, \zeta_h) =  Q_{h, f}(x_h, a_h) - r_h - V_{h+1, f} 
   (x_{h+1})  .
   \#
\end{lemma}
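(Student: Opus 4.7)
The plan is to combine the MDP value decomposition with a potential/pigeonhole argument for the distributional eluder (DE) dimension.

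\textbf{Step 1 (Value decomposition).} The standard simulation lemma gives
\[
V_{f^t} - V^{\pi_{f^t}} = \sum_{h=1}^H \EE_{\pi_{f^t}}\bigl[\cE_h(f^t, x_h, a_h)\bigr],
\]
because the trajectory $\{(x_h, a_h)\}_h$ is drawn by rolling out $\pi_{f^t}$ against the true MDP and $\cE_h(f^t, \cdot, \cdot)$ is precisely the Bellman residual of $f^t$ against the true transition kernel. Since $\pi_{f^t}$ is greedy with respect to $f^t$, the integrand can be written as $\cE_h(f^t, x_h, \pi_{f^t, h}(x_h))$, so I can read it either as a function on $\cS \times \cA$ (Q-type) or on $\cS$ alone (V-type).

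\textbf{Step 2 (DE pigeonhole).} I then invoke the standard eluder-potential lemma (e.g.\ Lemma~41 of \citet{jin2021bellman}): for any class $\cG$ and probability family $\Pi_h$, any sequences $g_t \in \cG$, $\mu_t \in \Pi_h$ satisfy
\[
\sum_{t=1}^T \bigl|\EE_{\mu_t}[g_t]\bigr| \;\lesssim\; \sqrt{d \cdot T \log T \cdot \sum_{t=1}^T \sum_{s=1}^{t-1} \bigl(\EE_{\mu_s}[g_t]\bigr)^2} + \min\{d, T\} + T\epsilon,
\]
with $d = \de(\cG, \Pi_h, \epsilon)$. For the Q-type case I take $\cG = (I-\cT_h)\cH$ and $\mu_t$ the marginal of $\pi_{f^t}$ on $(x_h, a_h)$; for the V-type case I take $\cG = (I-\cT_h) V_\cH$ and $\mu_t$ the marginal of $\pi_{f^t}$ on $x_h$. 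Both sets of marginals lie in $\cD_\cH$, so $d \le d_Q$ or $d \le d_V$ respectively.

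\textbf{Step 3 (Jensen and the uniform-action bridge).} Jensen upgrades $(\EE_{\mu_s}[g_t])^2$ to $\EE_{\mu_s}[g_t^2]$. In the Q-type case this equals the training loss $\EE_{\pi_{f^s}}[\cE_h(f^t, x_h, a_h)^2]$ in the statement, since $\EE_{x_{h+1}\mid x_h, a_h}[l_{f^s}(f^t, \zeta_h)] = \cE_h(f^t, x_h, a_h)$. In the V-type case the integrand is evaluated at the greedy action of $f^t$, which I convert to the uniform-action form demanded by $\pi_{\exp}(f^s, h) = \pi_{f^s}\circ_h \mathrm{Unif}(\cA)$ via the non-negativity bound
\[
\EE_{\pi_{f^s}}\bigl[\cE_h(f^t, x_h, \pi_{f^t, h}(x_h))^2\bigr] \;\le\; A \cdot \EE_{\pi_{f^s}\circ_h \mathrm{Unif}(\cA)}\bigl[\cE_h(f^t, x_h, a_h)^2\bigr];
\]
this is exactly where the extra factor of $A$ appears. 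Summing the per-$h$ bound over $h \in [H]$ and applying Cauchy--Schwarz to the $H$ square roots absorbs an $\sqrt{H}$ into the leading constant, producing $d_{\GEC} \le 2 d_Q H \log T$ (Q-type) and $d_{\GEC} \le 2 d_V A H \log T$ (V-type). The additive $H \min\{d, T\} + HT\epsilon$ collapses into the burn-in $\min\{d_{\GEC}, HT\} + HT\epsilon$ claimed in the lemma.

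\textbf{Main obstacle.} Otherwise the argument is a routine reassembly of known tools, but the V-type reduction needs care: the BE dimension is naturally phrased for functions of $x$ evaluated under \emph{greedy} rollouts, whereas GEC measures in-sample error under the \emph{uniform-action} exploration policy. Inserting the factor $A$ in exactly the right place---inside $d_{\GEC}$ rather than inside the square-rooted training error---is what keeps the resulting rate tight and aligned with the statement; this is the one place where a naive application of the DE machinery would lose an extra $\sqrt{A}$ and break the claimed scaling.
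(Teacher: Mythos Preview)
Your Steps~1, 3, and the Cauchy--Schwarz aggregation over $h$ are all fine and match the paper. The gap is in Step~2.

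The bound you attribute to Lemma~41 of \citet{jin2021bellman} is not what that lemma says. Jin et al.'s pigeonhole lemma requires a \emph{uniform} constraint $\sum_{s<t}(\EE_{\mu_s}[g_t])^2 \le \beta$ for all $t$, and under that hypothesis yields $\sum_t|\EE_{\mu_t}[g_t]| \lesssim \sqrt{d\beta T}$. It does \emph{not} give a bound in terms of the aggregate $\sum_{t}\sum_{s<t}(\EE_{\mu_s}[g_t])^2$. If you try to recover such a bound by taking $\beta=\max_t\sum_{s<t}(\cdot) \le \sum_t\sum_{s<t}(\cdot)$, you pick up exactly the extra factor of $T$ that appears under your square root---and that extra $T$ propagates to $d_{\GEC}=O(d_Q H T\log T)$ instead of the claimed $O(d_Q H\log T)$, destroying the lemma.

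The correct per-$h$ inequality you need is
\[
\sum_{t=1}^T |\EE_{\mu_t}[g_t]| \;\le\; \Bigl(d(1+\log T)\sum_{t=1}^T\sum_{s=1}^{t-1}(\EE_{\mu_s}[g_t])^2\Bigr)^{1/2} + \min\{d,T\} + T\epsilon,
\]
with \emph{no} $T$ multiplying the double sum. This is not the standard eluder-potential lemma; it is the bucketing argument of \citet{dann2021provably}, which the paper reproduces from scratch: one distributes the indices $t$ into buckets according to how many earlier buckets already witness $\sum_{s\in B,\,s<t}(\epsilon^{st})^2 \ge (\epsilon^{tt})^2$, observes that each bucket has size at most $d$ by the DE-dimension definition, and then applies the elementary inequality $\sum_i x_i \le \sqrt{(1+\log n)\sum_i i x_i^2}$ to the bucket totals. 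That is the actual technical content here, and it is precisely what allows the GEC framework to accommodate posterior-sampling algorithms (which do not produce a uniform $\beta$). Your ``main obstacle'' paragraph misidentifies the difficulty: the $A$-factor bookkeeping in the V-type case is routine; the non-routine step is replacing the uniform-$\beta$ eluder lemma by this anytime version.
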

\begin{proof}
    See Appendix \ref{appendix:BE} for a detailed proof.
\end{proof}

In Lemma~\ref{lem:reduction_bed_dc}, it holds that $|\EE_{x_{h+1} | x_h, a_h} l_{f'}(f, \zeta_h)|^2 = (\cE_h(f, x_h, a_h))^2 = \ell_{f'}(f, (x_h, a_h))$,  where $\ell$ is the  discrepancy function defined in Example~\ref{example:ell:q}. Here we introduce the notion of Bellman error $l_{f'}(f, \zeta_h)$ is to facilitate our following discussion of algorithm design.

\begin{remark}
\citet{jin2021bellman} also consider $\Pi=\cD_\Delta$, the collections of probability measures that put measure $1$ on a single state-action pair. The reduction to this measure family follows from the same line of the above lemma but we suffer from an additional $H\sqrt{T \log T}$ burn-in cost when converting $\sum_{t=1}^T V_{f^t} - V^{\pi_{f^t}}$ to $\sum_{t=1}^T \sum_{h=1}^H (Q_{h,f^t}-\cT_h V_{h+1,f^t})(x_h^t,a_h^t)$ by the Azuma-Hoeffding inequality \citep{azuma1967weighted} where $(x_h^t,a_h^t)$ is actually experienced state-action pair in the history.
\end{remark}

\subsubsection{Bilinear Class}
With a slight abuse of notation, we introduce the bilinear class in \citet{du2021bilinear}.
\begin{definition}[Bilinear Class] \label{def:bilinear}
    Given an MDP, a hypothesis class $\cH$, and a discrepancy function $l = \{l_f:\cH \times (\cS \times \cA \times \RR \times \cS) \times \cH \to \RR\}_{f \in \cH}$, we say the RL problem is in a bilinear class if there exist functions $W_h:\cH \to \cV$ and $X_h:\cH \to \cV$ for some Hilbert space $\cV$, such that for all $f \in \cH$ and $h \in [H]$, we have
    \begin{equation}
        \begin{aligned}
            \left|\mathbb{E}_{\pi_{f}}\left[Q_{h,f}\left(x_{h}, a_{h}\right)-r\left(x_{h}, a_{h}\right)-V_{h+1,f}\left(x_{h+1}\right)\right]\right| \leq\left|\left\langle W_{h}(f)-W_{h}\left(f^{*}\right), X_{h}(f)\right\rangle\right|, \\
            \left|\EE_{x_h \sim \pi_f, a_h \sim \tilde{\pi}}\left[l_f\left(g, \zeta_h\right)\right]\right|=\left|\left\langle W_{h}(g)-W_{h}\left(f^{*}\right), X_{h}(f)\right\rangle\right|, 
        \end{aligned}
    \end{equation}
    where $\tilde{\pi}$ is either $\pi_f$ (Q-type) or $\pi_g$ (V-type). 
    Moreover, it is required that $\sup_{f\in \cH, h \in [H]} \norm{W_h(f)}_2 \leq 1$ and $\sup_{f\in \cH, h \in [H]} \norm{X_h(f)}_2 \leq 1$.  We also define $\cX_h = \{X_h(f): f \in \cH\}$ and $\cX = \{\cX_h: h \in [H]\}$. The complexity of a bilinear class is characterized by the information gain. 
\end{definition}

\begin{definition}[Information Gain] \label{def:info:gain}
Let $\cX \subset \cV$ where $\cV$ is a Hilbert space. For $\epsilon > 0$ and integer $T > 0$, the maximum information gain $\gamma_T(\epsilon, \cX)$ is defined as
    \begin{align*}
        \gamma_T(\epsilon, \cX) = \max_{x_1,\cdots,x_{T} \in \cX } \log \det\biggl(I+\frac{1}{\epsilon} \sum_{t=1}^{T} x_tx_t^\top\biggr).
    \end{align*}
    For $\cX = \{\cX_h: h \in [H]\}$, we use the notation $\gamma_T(\epsilon, \cX) = \sum_{h \in [H]} \gamma_T(\epsilon, \cX_h)$.
\end{definition}

\begin{lemma}[Bilinear Class $\subset$ GEC] \label{lemma:relationship:bilinear}
    For a bilinear class with hypothesis class $\cH$, discrepancy function $l$, and information gain $\gamma_T(\epsilon,\cX)$, we have
    $$
        \begin{aligned}
             & \sum_{t=1}^T V_{f^t} - V^{\pi_{f^t}} \leq \Big[d_\GEC \sum_{t=1}^T \sum_{h=1}^H \sum_{s=1}^{t-1} \EE_{{\pi}_{\exp}(f^s, h)} |\EE_{x_{h+1}|x_h,a_h} l_{f'}(f,\zeta_h)|^2\Big]^{1/2} \notag + 2 \min\left\{d_\GEC, HT\right\} + HT\epsilon,       
        \end{aligned}
    $$
    where we have $\pi_{\exp}({f^s},h) = \pi_{f^s}$ and $d_\GEC \leq 2 \gamma_T(\epsilon, \cX)$ for Q-type problems and $\pi_{\exp}({f^s},h) = \pi_{f^s} \circ_{h} \mathrm{Unif}(\cA)$ and $d_\GEC \leq 2A \cdot \gamma_T (\epsilon,\cX)$ for V-type problems. In other words, for $\zeta_h = (x_h, a_h, r_h, x_{h+1})$, we have
    \$
      & \GEC(\cH, |\EE_{x_{h+1}|x_h,a_h} l_{f'}(f,\zeta_h)|^2, \Pi_{\exp}, \epsilon) \leq 2\gamma_T(\epsilon,\cX),  \quad \text{(Q-type)},\\
      & \GEC(\cH, |\EE_{x_{h+1}|x_h,a_h} l_{f'}(f,\zeta_h)|^2, \Pi_{\exp}, \epsilon) \leq 2A \cdot \gamma_T(\epsilon,\cX), \quad \text{(V-type)}.
    \$
\end{lemma}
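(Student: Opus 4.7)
The plan is to peel the regret into per-step Bellman residuals, use both identities of Definition~\ref{def:bilinear} to lift those residuals into a finite-dimensional inner-product form, and then close with a standard elliptical-potential argument whose log-det bound is exactly $\gamma_T(\epsilon,\cX)$.

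First I would apply the simulation/performance-difference lemma to write $V_{f^t}-V^{\pi_{f^t}}=\sum_{h=1}^H \EE_{\pi_{f^t}}[\cE_h(f^t,x_h,a_h)]$, and invoke the first inequality of Definition~\ref{def:bilinear} to get $V_{f^t}-V^{\pi_{f^t}}\le \sum_{h=1}^H |\langle w^t_h, X_h(f^t)\rangle|$, where $w^t_h:=W_h(f^t)-W_h(f^*)$. This reduces the left-hand side of the target bound to $\sum_{t,h} |\langle w^t_h, X_h(f^t)\rangle|$, a purely geometric quantity in the Hilbert space $\cV$.

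Next I would run the elliptical-potential argument. For each $h$ set $\Sigma_{t,h}=\epsilon I+\sum_{s<t} X_h(f^s)X_h(f^s)^\top$ and use $|\langle w^t_h,X_h(f^t)\rangle|\le \|w^t_h\|_{\Sigma_{t,h}}\|X_h(f^t)\|_{\Sigma_{t,h}^{-1}}$. A truncation into the regimes $\|X_h(f^t)\|_{\Sigma_{t,h}^{-1}}\le 1$ versus $>1$ gives, on the one hand, $\sum_t \min\{1,\|X_h(f^t)\|^2_{\Sigma_{t,h}^{-1}}\}\le 2\gamma_T(\epsilon,\cX_h)$ by the log-det identity of Definition~\ref{def:info:gain}, and, on the other, a count of ``large-potential'' rounds bounded by the same information gain, each contributing at most $|\langle w^t_h,X_h(f^t)\rangle|\le 2$; together they produce the $2\min\{d_{\mathrm{GEC}},HT\}$ burn-in term. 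A second Cauchy--Schwarz across $t$ and $h$, combined with the expansion $\|w^t_h\|^2_{\Sigma_{t,h}}=\epsilon\|w^t_h\|^2_2+\sum_{s<t}\langle w^t_h,X_h(f^s)\rangle^2$ and the norm bound $\|w^t_h\|_2\le 2$, yields
\[
\sum_{t,h} |\langle w^t_h, X_h(f^t)\rangle|\;\le\; \Big[2\gamma_T(\epsilon,\cX)\sum_{t,h}\sum_{s<t}\langle w^t_h,X_h(f^s)\rangle^2\Big]^{1/2}+2\min\{d_{\mathrm{GEC}},HT\}+HT\epsilon.
\]

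Finally I would convert the cross terms into the training-error form of GEC via the second identity of Definition~\ref{def:bilinear}, which states $\langle w^t_h,X_h(f^s)\rangle = \EE_{x_h\sim \pi_{f^s},\,a_h\sim \tilde\pi}\,l_{f^s}(f^t,\zeta_h)$. For Q-type problems $\tilde\pi=\pi_{f^s}$ coincides with $\pi_{\exp}(f^s,h)=\pi_{f^s}$, so Jensen's inequality applied to the inner $\EE_{x_{h+1}|x_h,a_h}$ immediately identifies $\langle w^t_h,X_h(f^s)\rangle^2$ with the training error appearing in the statement, giving $d_{\mathrm{GEC}}\le 2\gamma_T(\epsilon,\cX)$. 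For V-type $\tilde\pi=\pi_{f^t}$ while $\pi_{\exp}(f^s,h)=\pi_{f^s}\circ_h \mathrm{Unif}(\cA)$; here I would close the action-distribution mismatch with a Radon--Nikodym/change-of-measure step, $(\EE_{a\sim\pi_{f^t}}[Z])^2\le \EE_{a\sim\pi_{f^t}}[Z^2]\le A\cdot \EE_{a\sim\mathrm{Unif}}[Z^2]$, using $\|\pi_{f^t}/\mathrm{Unif}\|_\infty\le A$. This absorbs one factor of $A$ into $d_{\mathrm{GEC}}$ and delivers $d_{\mathrm{GEC}}\le 2A\,\gamma_T(\epsilon,\cX)$.

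The main obstacle I anticipate is producing the burn-in in the tighter form $2\min\{d_{\mathrm{GEC}},HT\}+HT\epsilon$ rather than $2\sqrt{d_{\mathrm{GEC}}HT}+\epsilon HT$; this forces the truncated elliptical split described above rather than a single naive Cauchy--Schwarz, and requires the regularization level to be chosen as $\lambda=\epsilon$ consistently with Definition~\ref{def:info:gain}. The V-type importance-sampling step is conceptually clean, but care is needed to ensure the factor picked up is exactly $A$ and not $A^2$, which is why the Jensen-then-Radon--Nikodym order is preferable to a direct Cauchy--Schwarz on the likelihood ratio.
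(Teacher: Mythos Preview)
Your proposal is correct and follows essentially the same approach as the paper's proof: performance-difference lemma, the first bilinear inequality, the truncated elliptical-potential split with $\Sigma_{t,h}=\epsilon I+\sum_{s<t}X_h(f^s)X_h(f^s)^\top$, Cauchy--Schwarz twice, the second bilinear identity, and then Jensen (Q-type) or Jensen followed by importance sampling (V-type). The paper handles the $\sqrt{\epsilon}$ remainder from expanding $\|w^t_h\|_{\Sigma_{t,h}}$ via an explicit AM--GM step to merge it into the $2\min\{d_{\mathrm{GEC}},HT\}+HT\epsilon$ burn-in, which is exactly the obstacle and resolution you anticipated.
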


\begin{proof}
    See Appendix \ref{appendix:bilinear} for a detailed proof.
\end{proof}

\subsubsection{Witness Rank} 
We introduce the witness rank as an example of model-based RL with function approximation. \citet{sun2019model} study  the case where $\cH_h$ is the hypothesis class containing $\PP_h$ (we assume that the reward function is known for simplicity). For  witness rank, we adopt  a discriminator class $\cV = \{\cV_h: \cS \times \cA \times \cS \to [0,1]\}_{h \in [H]}$. The original version of witness rank  in \citet{sun2019model} is a V-type one and we also consider the Q-type witness rank. 

\begin{definition}[Q-type/V-type Witness Rank] \label{def:witness}
    We say an MDP has witness rank $d$ if given two models $f, g \in \cH$, there exists $X_h: \cH \to \RR^d$ and $W_h:\cH \to \RR^d$ such that 
$$
\begin{aligned}
     &\max_{v \in \cV_h} \EE_{x_h \sim \pi_f, a_h \sim \tilde{\pi}}[\EE_{x' \sim \PP_{h, g}(\cdot|x_h,a_h)} v(x_h,a_h, x') - \EE_{x' \sim \PP_{h, f^*}(\cdot|x_h,a_h)} v(x_h,a_h, x')] \geq \inner{W_h(g)}{X_h(f)}\\
     & \kappa_{\mathrm{wit}} \cdot \EE_{x_h \sim \pi_f, a_h \sim \tilde{\pi}}[\EE_{x' \sim \PP_{h, g}(\cdot|x_h,a_h)} V_{h+1,g}(x') - \EE_{x' \sim \PP_{h, f^*}(\cdot|x_h,a_h)} V_{h+1, g}(x')] \leq \inner{W_h(g)}{X_h(f)},
\end{aligned}
$$
where $\tilde{\pi}$ is either $\pi_f$ (Q-type) or $\pi_g$ (V-type), and $\kappa_{\mathrm{wit}} \in (0, 1]$ is a constant. Moreover, we assume that $\sup_{f\in \cH, h \in [H]} \norm{W_h(f)}_2 \leq 1$ and $\sup_{f\in \cH, h \in [H]} \norm{X_h(f)}_2 \leq 1$.
\end{definition}

Then we have the following lemma, which shows that both Q-type and V-type witness rank can be captured by $\GEC(\cH, \ell, \Pi_{\exp}, \epsilon)$, where we choose the model-based hypothesis class $\cH$ as in Example~\ref{example:hyp:model}, Hellinger-distance-based discrepancy function as in  Example~\ref{example:ell:model}, and exploration policy class $\Pi_{\exp}$ as in Example~\ref{example:policy:q} (Q-type) or Example~\ref{example:policy:v} (V-type).

\begin{lemma}[Witness Rank $\subset \GEC$] \label{lem:witness_rank} For a RL problem with a Q-type witness rank $d_Q$ or a V-type one $d_V$, we have
        $$
        \begin{aligned}
             \sum_{t=1}^T V_{f^t} - V^{\pi_{f^t}}
             &\leq \bigg(d_{\GEC} \sum_{t=1}^T \sum_{h=1}^H \sum_{s=1}^{t-1} \EE_{{\pi}_{\exp}(f^s, h)} \Big[   D_H^2 \big(\PP_{h,f^t}(\cdot \mid x_h,a_h), \PP_{h,f^*}(\cdot \mid x_h,a_h) \big)  \Big] \bigg)^{1/2} \\
             &\qquad + 2 \min\left\{d_{\GEC}, HT\right\} + HT\epsilon,
        \end{aligned}
    $$
where we have $\pi_{\exp}(f^s,h) = \pi_{f^s}$ and $d_{\GEC} \leq 4d_QH \cdot \log(1+\frac{T}{\epsilon \kappa_{\mathrm{wit}}^2})/\kappa_{\mathrm{wit}}^2$ for Q-type witness rank, and we have $\pi_{\exp(f^s,h)} = \pi_{f^s} \circ_{h} \operatorname{Unif}(\mathcal{A})$ and $d_{\GEC} \leq 4d_VAH \cdot \log(1+\frac{T}{\epsilon \kappa_{\mathrm{wit}}^2})/\kappa_{\mathrm{wit}}^2$ for V-type witness rank. In other words, we have
\$
& \GEC(\cH, \ell, \Pi_{\exp}, \epsilon) \le 4d_QH \cdot \log\Big (1+\frac{T}{\epsilon \kappa_{\mathrm{wit}}^2} \Big )/\kappa_{\mathrm{wit}}^2 \quad \text{(Q-type)}, \\
&  \GEC(\cH, \ell, \Pi_{\exp}, \epsilon) \le 4d_VAH \cdot \log \Big (1+\frac{T}{\epsilon \kappa_{\mathrm{wit}}^2} \Big)/\kappa_{\mathrm{wit}}^2 \quad \text{(V-type)}.
\$
\end{lemma}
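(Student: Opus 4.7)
The plan is to reduce this lemma to the same elliptical-potential machinery that powers the bilinear-class reduction in Lemma~\ref{lemma:relationship:bilinear}, by playing the role of the bilinear pair $\langle W_h(f)-W_h(f^*), X_h(f)\rangle$ with the witness-rank inner product $\langle W_h(g), X_h(f)\rangle$. The starting point is the simulation (performance-difference) lemma,
\[
V_{f^t} - V^{\pi_{f^t}} \;=\; \sum_{h=1}^{H}\EE_{\pi_{f^t}}\!\bigl[\, \EE_{x'\sim \PP_{h,f^t}(\cdot\mid x_h,a_h)} V_{h+1,f^t}(x') - \EE_{x'\sim \PP_{h,f^*}(\cdot\mid x_h,a_h)} V_{h+1,f^t}(x') \,\bigr],
\]
to which I apply the lower-bound half of Definition~\ref{def:witness} with $g = f = f^t$. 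Since $\tilde\pi = \pi_{f^t}$ in either the Q- or V-type variant when $f = g$, this yields the on-diagonal bound $V_{f^t} - V^{\pi_{f^t}} \leq \kappa_{\mathrm{wit}}^{-1}\sum_{h=1}^{H} \langle W_h(f^t), X_h(f^t)\rangle$.

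Next I bound the off-diagonal inner products $\langle W_h(f^t), X_h(f^s)\rangle$ for $s<t$ by training-error terms. Applying the upper-bound half of Definition~\ref{def:witness} with $g = f^t$ and $f = f^s$, and using that discriminators $v \in [0,1]$, I obtain
\[
\langle W_h(f^t), X_h(f^s)\rangle \;\leq\; \EE_{x_h\sim\pi_{f^s},\,a_h\sim\tilde\pi}\bigl[\, \TV\bigl(\PP_{h,f^t}(\cdot\mid x_h,a_h),\,\PP_{h,f^*}(\cdot\mid x_h,a_h)\bigr)\,\bigr].
\]
For Q-type ($\tilde\pi=\pi_{f^s}$), Cauchy-Schwarz combined with $\TV^2 \leq 2 D_H^2$ gives $\langle W_h(f^t), X_h(f^s)\rangle^2 \leq 2\,\EE_{\pi_{f^s}}[D_H^2(\PP_{h,f^t},\PP_{h,f^*})]$. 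For V-type ($\tilde\pi=\pi_{f^t}$), an importance-weighted Cauchy-Schwarz switching from $\pi_{f^t}$ to $\mathrm{Unif}(\cA)$ contributes an extra factor of $A$ under the square, producing $\langle W_h(f^t), X_h(f^s)\rangle^2 \leq 2A\,\EE_{\pi_{f^s}\circ_h\mathrm{Unif}(\cA)}[D_H^2(\PP_{h,f^t},\PP_{h,f^*})]$. In both cases the right-hand side is (a universal constant times) the training-error term that appears inside $\GEC$.

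The final step is the standard elliptical potential argument. Fix $h$, set $x_t := X_h(f^t)$, $w_t := W_h(f^t)$, and $\Sigma_{t,h} := \lambda I + \sum_{s<t} x_s x_s^{\top}$ for a regularizer $\lambda>0$ to be tuned. Using $\|x_t\|, \|w_t\|\leq 1$ and $\langle w_t, x_t\rangle \leq \min\bigl(1, \|w_t\|_{\Sigma_{t,h}}\|x_t\|_{\Sigma_{t,h}^{-1}}\bigr)$, Cauchy-Schwarz gives
\[
\sum_{t=1}^{T}\langle w_t, x_t\rangle \;\leq\; \sqrt{\sum_{t=1}^{T}\|w_t\|_{\Sigma_{t,h}}^2}\cdot\sqrt{\sum_{t=1}^{T}\min\!\bigl(1,\|x_t\|_{\Sigma_{t,h}^{-1}}^2\bigr)}.
\]
The second factor is controlled by the elliptical potential lemma, $\sum_t \min(1,\|x_t\|_{\Sigma_{t,h}^{-1}}^2) \leq 2 d\log(1+T/(\lambda d))$, while the first is $\leq \sqrt{T\lambda + 2C\sum_t\sum_{s<t}\EE_{\pi_{\exp}(f^s,h)}[D_H^2]}$ by Step~2, with $C=1$ (Q-type) or $C=A$ (V-type). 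Summing over $h$, dividing by $\kappa_{\mathrm{wit}}$ as in Step~1, and tuning $\lambda$ so that the $T\lambda$ piece is absorbed into the $HT\epsilon$ burn-in term of Definition~\ref{def:gec} gives the stated values $d_{\GEC} \leq 4 d_Q H\log(1+T/(\epsilon\kappa_{\mathrm{wit}}^2))/\kappa_{\mathrm{wit}}^2$ and $4 d_V A H\log(1+T/(\epsilon\kappa_{\mathrm{wit}}^2))/\kappa_{\mathrm{wit}}^2$; the $2\min\{d_{\GEC},HT\}$ term in the conclusion comes from truncating $\langle w_t, x_t\rangle$ at $1$.

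The genuinely delicate point, and the one I would verify first, is the V-type change-of-measure in Step~2: one must perform Cauchy-Schwarz \emph{before} squaring so that the importance weight contributes $\sqrt{A}$ and squaring yields $A$, not $A^2$; otherwise the final GEC inflates to $d_V A^2 H$ and the claimed bound fails. Everything else is a direct transposition of the bilinear-class argument, with the Hellinger-based squared discrepancy in Example~\ref{example:ell:model} playing the role of the squared Bellman residual.
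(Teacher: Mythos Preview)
Your proposal is correct and matches the paper's proof essentially step for step: performance-difference plus the second witness-rank inequality for the on-diagonal bound, the first witness-rank inequality with $\TV^2\le 2D_H^2$ for the off-diagonal squares, and then the elliptical-potential/Cauchy--Schwarz argument from the bilinear-class reduction. Two minor points: your displayed Cauchy--Schwarz inequality with $\min(1,\|x_t\|_{\Sigma^{-1}}^2)$ inside the square root does not literally follow without first splitting on the indicator $\{\|x_t\|_{\Sigma^{-1}}\le 1\}$, which is exactly what the paper does and what isolates the burn-in term; and for the V-type step the paper simply applies Jensen to push the square inside the expectation and then importance-samples directly on the nonnegative integrand, which also yields a single factor of $A$ without needing your ``Cauchy--Schwarz before squaring'' maneuver.
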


\begin{proof}
    See Appendix \ref{appendix:witness:rank} for a detailed proof.
\end{proof}

\subsection{Relationship with Known Tractable Interactive Decision Making with Partial Observations}\label{sec:relation:pomdp}

We first identify a new class of PSRs --- generalized regular PSR --- that permits  low GEC. Then we show generalized regular PSR is a rich class of models in the sense that it includes several known tractable RL problems with partial observations, such as weakly revealing POMDPs \citep{jin2020sample,liu2022partially}, latent MDPs \citep{kwon2021rl}, decodable POMDPs \citep{du2019provably,efroni2022provable}, low-rank POMDPs \citep{wang2022embed}, and regular PSR \citep{zhan2022pac} as special cases. Hence, our complexity measure GEC can naturally capture all these models. Finally, we prove that our complexity measure GEC can also capture the recent PO-bilinear class model \citep{uehara2022provably}.

\subsubsection{Generalized Regular PSRs}

The recent work of \citet{zhan2022pac} shows that regular PSR (cf. Definition \ref{def:regular:psr}) is a tractable subclass of PSR. However, regular PSR cannot incorporate some important known tractable POMDPs, such as decodable POMDPs (cf. Definition~\ref{def:decodable}), as special cases. To include these known tractable POMDPs, we propose the following \emph{generalized regular PSR} model.

\begin{definition}[Generalized Regular PSR] \label{def:general:regular:psr}
    We say a PSR is $\alpha$-generalized regular if the following two conditions hold:
    \begin{itemize}
        \item[1.] For any $h \in [H]$ and $\bfx \in \RR^{|\cU_h|}$, it holds that
              \$
              \max_{\pi} \sum_{\tau_{h:H}} | \bfm(o_{h: H}, a_{h:H}) \bfx | \cdot  \pi(o_{h: H}, a_{h:H}) \le \frac{\| \bfx\|_1}{\alpha},
              \$
              where $\tau_{h:H} = (o_{h:H}, a_{h:H}) \in (\cO \times \cA)^{H - h + 1}$.
        \item[2.]  For any $h \in [H - 1]$ and $\bfx \in \RR^{|\cU_h|}$, it holds that
              \$
              \max_{\pi} \sum_{(o_h, a_h) \in \cO \times  \cA }  \| \Mb_h(o_h, a_h) \bfx \|_1 \cdot  \pi(o_h, a_h) \le \frac{|\cU_{A, h + 1}|}{\alpha} \|\bfx\|_1.
              \$
    \end{itemize}
    For the non-tabular case, we replace summation with integral.
\end{definition}

Intuitively, regularity conditions in Definition~\ref{def:regular:psr} control the propagated estimation errors. Specifically, we choose $\bfx = \bfx_1 - \bfx_2$ where $\bfx_1, \bfx_2 \in \RR^{|\cU_h|}$, and the above regularity conditions guarantee that the propagated errors 
$$ \max_{\pi} \sum_{\tau_{h:H}} | \bfm(o_{h: H}, a_{h:H}) (\bfx_1 - \bfx_2) | \cdot \pi(o_{h: H}, a_{h:H})\quad \text{and}\quad  \max_{\pi} \sum_{(o_h, a_h) \in \cO \times   \cA }  \| \Mb_h(o_h, a_h) (\bfx_1 - \bfx_2) \|_1 \cdot \pi(o_h, a_h)$$ are controlled by the parameter $\alpha$.
Similar properties are observed by previous work on weakly revealing POMDP \citep{liu2022partially} and regular PSR \citep{zhan2022pac}. But neither shows that any PSR satisfying these two conditions can be efficiently solved. Furthermore, we will show that generalized regular PSR  subsumes not only weakly revealing POMDP and regular PSR,  but also other known tractable partially observable interactive decision making problems as special cases. See Section~\ref{sec:example:general:psr} for details.

In the following lemma, we prove that each generalized regular PSR has a low GEC.

\begin{lemma}[Generalized Regular PSR $\subset$ PSR with Low GEC] \label{lemma:general:regular:psr}
    For any $\alpha$-generalized regular PSR, it holds for any $\{f^t, \pi^t = \pi_{f^t}\}_{t \in [T]}$ that
    \$
    & \sum_{t = 1}^T V_{f^t} - V^{\pi^t}  \le \bigg[d_{\GEC}  \sum_{t = 1}^{T} \sum_{h = 0}^{H - 1}\sum_{s = 1}^{t - 1} D_H^2 \Big( \PP_{f^t}^{\pi_{\exp}(f^s, h)}, \PP_{f^*}^{\pi_{\exp}(f^s, h)} \Big)\bigg]^{1/2} + \sqrt{d_{\GEC} H T },
    \$
    where $d_{\GEC} = \cO\Big(\frac{d_{\psr}  A^3 U_A^4 H  \cdot \iota}{\alpha^4} \Big)$ and $\iota = 2 \log\Big(1 + \frac{4 d_{\psr} A^2U_A^2  \delta_{\psr}^2 T }{ \alpha^4}\Big)$ with $\delta_{\psr}$ defined as 
    \# \label{eq:def:delta}
    \delta_{\psr} = \max_{h \in [H]} \min \Big\{ \|\KK_h\|_1 \cdot \|\VV_h\|_1 \,\big|\, \bar{\DD}_h = \KK_h \VV_h, \KK_h \in 
   \RR^{|\cU_{h+1}| \times d_{\psr,h}}, \VV_h \in \RR^{ 
   d_{\psr, h} \times |\cO|^{h}|\cA|^h } \Big\},
   \#
    where $\bar{\DD}_h$ is defined in \eqref{eq:def:bar:D}. In other words, we have
    \$
    \GEC(\cH, \ell, \Pi_{\exp}) \le \cO\Big(\frac{d_{\psr} A^3 U_A^4 H \cdot \iota}{\alpha^4} \Big),
    \$
    where $\cH, \ell$, and $\Pi_{\exp}$ are specified as in Examples~\ref{example:hyp:model}, \ref{example:ell:psr}, and \ref{example:policy:psr}, respectively.
\end{lemma}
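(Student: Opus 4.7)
\medskip

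\noindent\textbf{Proof proposal for Lemma~\ref{lemma:general:regular:psr}.}

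The plan is to convert the sum of value gaps into a sum of one-step predictive-state errors which, after exploiting the low-rank factorization of $\bar{\DD}_h$, admits an elliptic-potential style bound of the eluder type. I will proceed in four stages. First, since rewards lie in $[0,1]$ and $\sum_h R_h \le 1$, the prediction error satisfies
\[
V_{f^t} - V^{\pi_{f^t}} \;\le\; \sum_{\tau_H}\bigl|\PP_{f^t}^{\pi_{f^t}}(\tau_H) - \PP_{f^*}^{\pi_{f^t}}(\tau_H)\bigr| \;=\; 2\,\TV\bigl(\PP_{f^t}^{\pi_{f^t}},\PP_{f^*}^{\pi_{f^t}}\bigr),
\]
so it suffices to bound a sum of trajectory TV distances and then appeal to $\TV\le\sqrt{2}\,D_H$ to introduce the Hellinger loss $\ell$ defined in Example~\ref{example:ell:psr}.

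Second, I would expand the trajectory distribution using the observable-operator representation \eqref{eq:prob:traj}: writing $\Delta\bfq(\tau_h) := \bfq_{f^t}(\tau_h) - \bfq_{f^*}(\tau_h)$ for predictive states on tests in $\cU_{h+1}$, a telescoping identity over $h$ gives
\[
\PP_{f^t}(\tau_H) - \PP_{f^*}(\tau_H) \;=\; \sum_{h=1}^{H} \bfm_{f^t}(o_{h+1:H},a_{h+1:H})\,\bigl[\Mb_{h,f^t}(o_h,a_h)\,\bfq_{f^*}(\tau_{h-1}) - \bfq_{f^*}(\tau_h)\bigr].
\]
Summing against $\pi_{f^t}(\tau_H)$ and applying Condition~1 of Definition~\ref{def:general:regular:psr} to the leading row vector $\bfm_{f^t}$ controls the future propagation by $1/\alpha$, reducing the task to bounding, for each $h$, a one-step residual $\rho_h^t := \EE_{\pi_{f^t}}\|\Mb_{h,f^t}(o_h,a_h)\,\bfq_{f^*}(\tau_{h-1}) - \bfq_{f^*}(\tau_h)\|_1$.

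Third, I convert $\rho_h^t$ into a quantity tied to the exploration policy $\pi_{\exp}(f^s,h) = \pi_{f^s}\circ_h\mathrm{Unif}(\cA)\circ_{h+1}\mathrm{Unif}(\cU_{A,h+1})$: change of measure at steps $h$ and $h{+}1$ incurs factors $A$ and $U_A$ respectively, and the $\ell_1$ norm on predictive states can be rewritten as an expectation of test probabilities indexed by $\cU_{h+1}$. Using the factorization $\bar{\DD}_h = \KK_h\VV_h$ with $\|\KK_h\|_1\|\VV_h\|_1\le\delta_\psr$, the error $\|\Mb_{h,f^t}\bfq_{f^*}(\tau_{h-1}) - \bfq_{f^*}(\tau_h)\|_1$ can be embedded into a $d_{\psr,h}$-dimensional feature space, producing a bilinear form $\langle W_h(f^t), X_h(f^s)\rangle$ where the $X_h$ features arise from the exploration distribution under $\pi_{f^s}$ and the $W_h$ factors encode the one-step prediction error of $f^t$.

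Fourth, with the bilinear structure established, I apply the elliptic potential lemma (as in Definition~\ref{def:info:gain} and the bilinear-class arguments of Lemma~\ref{lemma:relationship:bilinear}) to the $d_{\psr,h}$-dimensional feature sequence: this yields $\sum_t \rho_h^t \le \bigl(d_{\psr}\log(1+T\delta_\psr^2/(\alpha^4)) \cdot \sum_t\sum_{s<t} \EE_{\pi_{\exp}(f^s,h)}[\ell_{f^s}(f^t,\tau_H)]\bigr)^{1/2}$ up to the claimed polynomial factors in $A$, $U_A$, $1/\alpha$, and after summing over $h\in[H]$ and using Cauchy–Schwarz across $(t,h)$, the GEC bound follows; the burn-in $\sqrt{d_\GEC HT}$ accounts for the low-order terms $2\min\{d_\GEC,HT\}$ produced by the potential lemma. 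The main obstacle is the third step: correctly routing the $\ell_1$-on-predictive-state error through the rank-$d_\psr$ factorization while keeping the change-of-measure costs confined to $A \cdot U_A^2$ per step (hence $A^3 U_A^4$ after the two square-roots and the outer Cauchy–Schwarz). The first two steps are essentially mechanical given Condition~1, and the fourth step is a standard potential argument once the bilinear form is in place.
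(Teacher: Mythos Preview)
Your Stages~1 and~2 are essentially the paper's Step~1 (Lemma~\ref{lemma:decomp:test}): TV bound, telescoping on the observable operators, and Condition~1 to kill the future propagation. The one-step residual you arrive at, $\bigl(\Mb_{h,f^t}(o_h,a_h)-\Mb_h(o_h,a_h)\bigr)\bfq(\tau_{h-1})$, is exactly what the paper works with.

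The genuine gap is in Stage~3. You propose to embed the residual into a single bilinear form $\langle W_h(f^t),X_h(f^s)\rangle$ and then invoke the bilinear-class elliptic potential (Lemma~\ref{lemma:relationship:bilinear}). This does not work as stated, for two linked reasons. First, the feature $x$ on the ``history'' side is not one vector per past hypothesis $f^s$; it is $x_{\tau_{h-1}}:=\KK_{h-1}^\dagger\bar{\bfq}(\tau_{h-1})$, indexed by \emph{histories}, and each past episode $s$ contributes a \emph{distribution} $\PP^{\pi^s}$ over such features. Second, the training-error side coming from the Hellinger loss via Lemma~\ref{lemma:decomp:train} has the form
\[
\sum_{s<t}\;\EE_{\tau_{h-1}\sim\PP^{\pi^s}}\Bigl(\sum_{o,a}\bigl\|(\Mb_h^{t}-\Mb_h)\bar{\bfq}(\tau_{h-1})\bigr\|_1\Bigr)^{\!2}
\;=\;\sum_{s<t}\;\EE_{\tau}\bigl[(\cdot)^2\bigr],
\]
i.e.\ the square sits \emph{inside} the expectation over histories. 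The standard bilinear argument needs one feature $X_h(f^s)$ per $s$ and a constraint on $\sum_{s<t}\langle W,X_h(f^s)\rangle^2$; you cannot pass from $\EE_\tau[(\cdot)^2]$ to $(\EE_\tau[\cdot])^2$ in the direction you need. The paper handles precisely this by a new $\ell_2$ eluder lemma (Lemma~\ref{lemma:l2:eluder}/\ref{lemma:eluder:sum}) that allows, for each $s$, a distribution $p_s$ over feature vectors $x_{s,i}$ and, for each $t$, a family $\{w_{t,j}\}$ of weight vectors (here indexed by $(u,o,a)\in\cU_{h+1}\times\cO\times\cA$), with the constraint $\sum_{s<t}\sum_{i\sim p_s}(\sum_j|w_{t,j}^\top x_{s,i}|)^2\le\gamma_t$. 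The proof of that lemma is an elliptic-potential argument, so your instinct is right, but the version in Lemma~\ref{lemma:relationship:bilinear} is not strong enough.

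Two smaller points. You never invoke Condition~2 of Definition~\ref{def:general:regular:psr}; the paper needs it in two places: to bound $\sum_{u,o,a}\|w_{t,u,o,a}\|_2$ (Lemma~\ref{lemma:120}) and to control term~(ii) in the training-error decomposition. And your factor accounting (``$A\cdot U_A^2$ per step, hence $A^3U_A^4$ after two square roots'') is not how the constants arise: the $1/\alpha^4$ comes from one $1/\alpha$ in Step~1 and one $1/\alpha$ in Step~2 (via Condition~2), each squared by the eluder bridging; the $A^3U_A^4$ comes entirely from the importance-sampling and Hellinger inequalities inside Lemma~\ref{lemma:decomp:train}.
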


\begin{proof}
    See Appendix \ref{appendix:general:psr} for a detailed proof.
\end{proof}

\subsubsection{Examples of Generalized Regular PSRs} \label{sec:example:general:psr}

Now we present several known tractable partially observable RL problems are special cases of generalized regular PSRs. 

\paragraph{Weakly Revealing POMDPs.}

To identify a class of tractable  POMDPs, \citet{jin2020sample,liu2022partially} impose the following weakly revealing conditions. For the undercomplete setting where $S \le O$, the weakly revealing condition states that the $S$-th singular value of emission function $\OO_h$ is lower bounded.

\begin{definition}[Weakly Revealing POMDP \citep{jin2020sample}] \label{def:weak:reveal:1}
    We say a POMDP is an $\alpha$-weakly revealing POMDP if there exists $\alpha > 0$, such that $\min_{h \in [H]} \sigma_S(\OO_h) \ge \alpha$.
\end{definition}

For the overcomplete case ($S \ge O$), we define the $m$-step emission matrices $\MM = \{ \MM_h \in \RR^{A^{m-1}O^m \times S}\}_{h \in [H - m + 1]}$, where the  entries of $\MM_h$ are defined by letting 
$$[\MM_h]_{(\bfa, \bfo), s} = \PP(o_{h: h + m - 1} = \bfo \mid s_h = s, a_{h:h+m-2} = \bfa)$$ for any $(\bfa, \bfo, s) \in \cA^{m-1} \times \cO^{m} \times \cS$. Similar to the undercomplete case, we assume the $S$-th singular value of the $m$-step emission matrix $\MM_h$ is lower bounded.

\begin{definition}[Multi-step Weakly Revealing POMDPs \citep{liu2022partially}] \label{def:weak:reveal:2}
    We say a POMDP is an $m$-step weakly-revealing POMDP if there exists $\alpha > 0$, such that $\min_{h \in [H -m + 1]} \sigma_S(\MM_h) \ge \alpha$.
\end{definition}

For the (multi-step) weakly revealing POMDP defined in Definitions \ref{def:weak:reveal:1} and \ref{def:weak:reveal:2}, we have the following lemma which states that weakly revealing POMDP is a special class of generalized PSR.

\begin{lemma}[Weakly Revealing POMDP $\subset$ Generalized Regular PSR] \label{lemma:weak:reveal}
    Any ($m$-step) $\alpha$-weakly revealing POMDP is an $(\alpha/\sqrt{S})$-generalized regular PSR.
\end{lemma}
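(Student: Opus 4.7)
The plan is to exhibit an observable-operator representation of the $m$-step $\alpha$-weakly revealing POMDP through the Moore--Penrose pseudoinverse of the $m$-step emission matrix $\MM_h$, and then verify each of the two conditions in Definition~\ref{def:general:regular:psr} using only the spectral lower bound $\sigma_S(\MM_h) \ge \alpha$. The undercomplete case of Definition~\ref{def:weak:reveal:1} is subsumed by taking $m = 1$, so that $\MM_h = \OO_h$.

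First, I choose the core test set $\cU_h$ to consist of all $m$-step observation-action sequences starting at step $h$, so that $|\cU_h| = O^m A^{m-1}$ and $|\cU_{A,h}| = A^{m-1}$. With this choice, the predictive state factorizes as $\bar{\bfq}(\tau_{h-1}) = \MM_h \mathbf{b}_h(\tau_{h-1})$, where $\mathbf{b}_h(\tau_{h-1}) \in \Delta_\cS$ denotes the POMDP belief state over latent states. The natural observable operators then take the form
\$
\Mb_h(o_h, a_h) = \MM_{h+1} B_{h, o_h, a_h} \MM_h^{\dagger}, \qquad \bfm(o_{h:H}, a_{h:H}) = \mbo^\top B_{H, o_H, a_H} \cdots B_{h, o_h, a_h} \MM_h^{\dagger},
\$
where $B_{h, o, a} \in \RR^{S \times S}$ is the latent-state transition-emission operator with entries $[B_{h, o, a}]_{s', s} = \OO_h(o \mid s)\TT_{h, a}(s' \mid s)$. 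The weakly revealing assumption supplies the key inequality $\|\MM_h^{\dagger}\|_2 \le 1/\alpha$.

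To verify Condition 1, I set $\bfy = \MM_h^\dagger \bfx \in \RR^S$, which satisfies $\|\bfy\|_1 \le \sqrt{S}\|\bfy\|_2 \le (\sqrt{S}/\alpha)\|\bfx\|_1$ by the singular-value bound and the elementary inequality $\|\bfx\|_2 \le \|\bfx\|_1$. Writing $\bfy = \sum_s y_s \mathbf{e}_s$ and exploiting the non-negativity of the operators $B_{h, o, a}$, for any policy $\pi$ I obtain
\$
\sum_{\tau_{h:H}} |\bfm(o_{h:H}, a_{h:H}) \bfx| \cdot \pi(\tau_{h:H}) \le \sum_{s} |y_s| \sum_{\tau_{h:H}} \mbo^\top B_{H,o_H,a_H} \cdots B_{h,o_h,a_h} \mathbf{e}_s \cdot \pi(\tau_{h:H}) = \|\bfy\|_1,
\$
since the inner sum is the total probability under $\pi$ of trajectories starting from latent state $s$, which equals $1$. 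This gives Condition 1 with regularity parameter $\alpha/\sqrt{S}$. Condition 2 follows by an analogous substitution: $\|\Mb_h(o_h, a_h)\bfx\|_1 \le \|\MM_{h+1}\|_1 \cdot \|B_{h, o_h, a_h} \bfy\|_1$, and the policy-weighted sum telescopes through the identity $\sum_{o, a, s'}[B_{h, o, a}]_{s', s}\pi(a \mid o) = 1$ to give $\|\MM_{h+1}\|_1 \|\bfy\|_1$. Since each column of $\MM_{h+1}$ is a concatenation of $A^{m-1}$ probability distributions over $m$-step observations (one per action sequence), $\|\MM_{h+1}\|_1 = A^{m-1} = |\cU_{A, h+1}|$, which supplies exactly the prefactor demanded by Condition 2.

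The main technical subtlety is verifying that the pseudoinverse-based operators genuinely reproduce the PSR dynamics: one must check that $\Mb_h(o_h, a_h)$ correctly maps predictive states forward and that $\bfm(t_h) \bar{\bfq}(\tau_{h-1}) = \PP(t_h \mid \tau_{h-1})$ holds on the predictive-state subspace. This is legitimate because all predictive states lie in the column space of $\MM_h$, on which $\MM_h^{\dagger}\MM_h$ acts as the identity. Once this is in place, the two regularity bounds are immediate consequences of $\sigma_S(\MM_h) \ge \alpha$ combined with the norm conversion $\|\cdot\|_1 \le \sqrt{S}\|\cdot\|_2$; the latter is precisely where the factor $\sqrt{S}$ in the claimed $(\alpha/\sqrt{S})$-regularity comes from.
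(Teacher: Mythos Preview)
Your argument is correct and is essentially the paper's proof: both use the observable-operator representation $\Mb_h(o,a)=\MM_{h+1}\TT_{h,a}\operatorname{diag}(\OO_h(o\mid\cdot))\MM_h^{\dagger}$ (your $B_{h,o,a}$ is exactly $\TT_{h,a}\operatorname{diag}(\OO_h(o\mid\cdot))$), expand $\MM_h^{\dagger}\bfx$ over the latent-state basis, use that trajectory probabilities from a fixed latent state sum to $1$ (resp.\ to $|\cU_{A,h+1}|$), and extract the factor $\sqrt{S}/\alpha$ from $\|\MM_h^{\dagger}\bfx\|_1\le\sqrt{S}\,\|\MM_h^{\dagger}\|_2\,\|\bfx\|_1$. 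The only omission is a boundary technicality: for $h>H-m$ the $m$-step emission $\MM_h$ is not defined and the weakly-revealing assumption is silent there, so the paper truncates the core tests to $\cU_h=(\cO\times\cA)^{\min\{m-1,\,H-h\}}\times\cO$ and takes $\Mb_h(o,a)$ to be a simple shift/selection matrix in that range, for which both regularity conditions hold trivially with parameter $1$.
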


\begin{proof}
    See Appendix~\ref{appendix:weakly:revealing} for a detailed proof.
\end{proof}

\paragraph{Latent MDPs.} To identify a class of tractable latent MDPs, \cite{kwon2021rl} impose a full-rank test condition. \cite{zhan2022pac} show that a latent MDP with a full-rank test set can be viewed as a weakly revealing POMDP. In the following, we show that a latent MDP with a full-rank test set is also a generalized regular PSR. 
First, we introduce the definition of the  latent MDP.
\begin{definition}[Latent MDP \citep{kwon2021rl}]\label{def:lt}
Let $\cM=\{\cM_m\}_{m=1}^M$ be a set of MDPs with a joint state space $\cS$, a joint action space $\cA $ and a joint reward $R=\{R_h\}_{h=1}^H$ in a finite horizon $H$. We define $S=\vert\cS\vert$ and $A=\vert\cA\vert$. Each MDP $\cM_m\in\cM$ is a tuple $(\cS,\cA,\TT_m, R,\nu_m,\omega_m)$ where $\TT_m=\{\TT_{h,m}\}_{h=1}^H$, $\TT_{h,m}:\cS\times\cA\times\cS \to[0,1]$ is a transition probability measure that maps a state-action pair and a next state to a probability, and $\nu_m$ is an initial state distribution. Let $\omega_1,\cdots,\omega_M$ be the mixing weights of the latent MDP such that at the start of every episode, one MDP $\cM_m\in\cM$ is randomly chosen with probability $\omega_m$.
\end{definition}
Theorem 3.1 of \cite{kwon2021rl} shows that extra assumptions are needed to enable efficient exploration in a latent MDP. We introduce the assumptions on a latent MDP that enable efficient exploration as follows.

\begin{assumption}[$\alpha$-Full-rank  Test Set \citep{kwon2021rl}] \label{asp:lmgr}
    For a sequence of states and actions $t_{h}=(a_{h:h+W-1},s_{h+1:h+W})$, we define $\PP_{h,m}(t_{h}\mid s_h)=\PP_{h,m}(s_{h+1:h+W}\mid s_h,a_{h:h+W-1})$. For any $h\in[H]$ and a test set $\cU_{h}$, we define the matrix $L_h(s,\cU_{h})\in \RR^{\vert \cU_{h}\vert\times M}$ by $[L_h(s,\cU_{h})]_{i,j}=\PP_j(t_{h,i} \mid s)$. We say a test set $\{\cU_{h}\}_{h=1}^{H}$ is   $\alpha$-full-rank   if $\sigma_M(L_h(s,\cU_{h}))\geq \alpha$ for all $s\in\cS$ and $h\in[H]$ with some $\alpha>0$. We assume that $\cM$ has an $\alpha$-full-rank test set $\{\cU_{h}\}_{h=1}^{H}$.
\end{assumption}
Here we do not require that the tests in the test set have the same length, which is more general than the case studied in \cite{zhan2022pac}. The following lemma shows that latent MDPs that satisfy Assumption \ref{asp:lmgr} are also generalized regular PSRs.

\begin{lemma}[Latent MDP $\subset$ Generalized Regular PSR] \label{lemma:latent:mdp}
    A latent MDP $\cM$ is an $(\alpha/\sqrt{SM})$-generalized regular PSR if it satisfies Assumption \ref{asp:lmgr}.

\end{lemma}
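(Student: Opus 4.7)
The plan is to view each latent MDP as a POMDP with an extended latent state $(s,m) \in \cS \times [M]$, so that the effective hidden-state space has size $SM$. Under this reduction the PSR rank is at most $SM$, the emission is deterministic ($o_h = s_h$), and the index $m$ is carried forward unchanged by the dynamics while $s_h$ evolves according to $\TT_{h,m}$. The goal is to verify the two regularity conditions of Definition~\ref{def:general:regular:psr} with parameter $\alpha/\sqrt{SM}$, by leveraging the $\alpha$-full-rank test assumption on $L_h(s,\cU_h)$ in place of the weakly revealing assumption used in Lemma~\ref{lemma:weak:reveal}.

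First, I would set up the observable-operator representation for the latent MDP. For any belief $b$ over $(s_h,m)$, the predictive state obtained by stacking the test probabilities $\PP(t\mid \tau_{h-1})$ for $t \in \cU_h$ factors, state-by-state, through Assumption~\ref{asp:lmgr}'s matrix: conditional on $s_h=s$, the map from the $M$-dimensional conditional belief over $m$ to the test-probability vector is exactly $L_h(s,\cU_h) \in \RR^{|\cU_h|\times M}$. Because $\sigma_M(L_h(s,\cU_h))\ge \alpha$ uniformly in $s$, this map admits a left-inverse of spectral norm at most $1/\alpha$, which in turn shows that the $|\cU_h|\times SM$ block matrix $[L_h(s,\cU_h)]_{s\in\cS}$ encodes the belief $b(s_h,m)$ with loss no worse than a factor $1/\alpha$ (in $\ell_2$). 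Converting to $\ell_1$ for the $SM$-dimensional belief incurs at most $\sqrt{SM}$, which is precisely where the parameter $\alpha/\sqrt{SM}$ in the conclusion comes from.

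Second, I would verify the two conditions of Definition~\ref{def:general:regular:psr} in direct parallel with the proof of Lemma~\ref{lemma:weak:reveal}. For any $\bfx \in \RR^{|\cU_h|}$, I write $\bfm(o_{h:H},a_{h:H})\,\bfx = \langle \Mb_{H:h}(o_{h:H},a_{h:H})^{\top}\mathbf{1},\,\bfx\rangle$ and, using the observable-operator identity together with the state-wise invertibility of $L_h(s,\cU_h)$, re-express $\bfx$ as $L_h(s_h,\cU_h)\,\bfy(s_h) + \bfx^{\perp}$ on each fiber $s_h$, where the $\bfy(s_h)$ live in an $M$-dimensional space with $\|\bfy(s_h)\|_2 \le \|\bfx\|_2/\alpha$. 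Substituting this decomposition, the sums $\sum_{\tau_{h:H}}|\bfm(\cdot)\bfx|\pi(\cdot)$ and $\sum_{o_h,a_h}\|\Mb_h(o_h,a_h)\bfx\|_1\pi(o_h,a_h)$ reduce to sums of honest probability masses (each trajectory's contribution is bounded by a conditional probability), giving Condition 1 with constant $1/(\alpha/\sqrt{SM})$ and Condition 2 with the corresponding $|\cU_{A,h+1}|/(\alpha/\sqrt{SM})$ factor after one further $\sqrt{SM}$ conversion to $\ell_1$.

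The main obstacle I anticipate is the mismatch between how Assumption~\ref{asp:lmgr} parametrizes the full-rank test matrix (one matrix $L_h(s,\cU_h)$ per state $s$, of size $|\cU_h|\times M$) and how the PSR conditions are formulated (a single $|\cU_h|\times SM$ global observable matrix). Handling this requires pooling the per-state inverses while tracking a clean $\sqrt{S}\cdot\sqrt{M}$ worst-case factor when passing between $\ell_1$ and $\ell_2$ norms on the joint belief, and ensuring that the test-length index $W$ does not appear in the final constant. Once this bookkeeping is done, condition 2 follows from applying condition 1 to the one-step-propagated predictive state and invoking, where useful, the argument already established in Lemma~\ref{lemma:weak:reveal} as a black box.
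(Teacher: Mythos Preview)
Your reduction of the latent MDP to a POMDP with hidden state $(s,m)\in\cS\times[M]$ and observation $o_h=s_h$ is exactly right, and your instinct to mirror the proof of Lemma~\ref{lemma:weak:reveal} is also correct. However, the proposal has a genuine gap at precisely the point you flag as ``the main obstacle.''

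The problem is that the tests supplied by Assumption~\ref{asp:lmgr} are \emph{conditioned} on $s_h$: each $L_h(s,\cU_h)$ is an $|\cU_h|\times M$ matrix with $\sigma_M\ge\alpha$, but the hidden state here is $(s,m)$, of cardinality $SM$. Concatenating these per-state matrices horizontally yields a single $|\cU_h|\times SM$ matrix whose rank is at most $|\cU_h|$, and nothing in Assumption~\ref{asp:lmgr} forces $|\cU_h|\ge SM$. Consequently that matrix need not admit any left-inverse, so your plan to ``pool the per-state inverses'' cannot succeed as stated: there is no global pseudoinverse whose operator norm you can bound. Your fiberwise decomposition $\bfx = L_h(s_h,\cU_h)\bfy(s_h)+\bfx^\perp$ does not help either, because the decomposition varies with $s_h$ and there is no reason for $\bfm(\tau_{h:H})\bfx^\perp(s_h)$ to vanish.

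The paper resolves this with one clean trick you are missing: it \emph{augments} the test set by prepending the current observation, setting $\bar\cU_{h}=\{(s,t):s\in\cS,\ t\in\cU_{h}\}$. Since $o_h=s_h$ deterministically, the resulting emission-type matrix $\MM_h\in\RR^{|\bar\cU_h|\times SM}$ has entries $[\MM_h]_{(s',t),(s,m)}=\mathbf{1}\{s'=s\}\,\PP_{h,m}(t\mid s)$, which is block-diagonal:
\[
\MM_h=\mathrm{diag}\bigl(L_h(s^1,\cU_h),\ldots,L_h(s^S,\cU_h)\bigr).
\]
Now $\sigma_{SM}(\MM_h)\ge\alpha$ follows immediately from the per-state bound, and $\|\MM_h^\dagger\|_1\le\sqrt{SM}\,\|\MM_h^\dagger\|_2\le\sqrt{SM}/\alpha$. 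With $\MM_h$ in hand, the verification of Conditions~1 and~2 in Definition~\ref{def:general:regular:psr} is literally the proof of Lemma~\ref{lemma:weak:reveal} with $S$ replaced by $|\bar\cS|=SM$; no fiberwise bookkeeping is needed. The augmentation is what converts the per-state full-rank hypothesis into a single global singular-value bound, and it is the missing ingredient in your outline.
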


\begin{proof}
    See Appendix~\ref{appendix:latent:mdp} for a detailed proof.
\end{proof}

\paragraph{Decodable POMDPs.}

\citet{efroni2022provable} propose the $m$-step decodable POMDP, where a suffix of the most recent length-$m$ history contains sufficient information to decode the hidden state. When $m = 1$, the $1$-step decodable POMDP is also known as the block MDP \citep{du2019provably}. For simplicity, we denote $m(h) = \min\{h - m + 1, 1\}$.
\begin{definition}[Decodable POMDP \citep{efroni2022provable}] \label{def:decodable}
    We say a POMDP is $m$-step decodable if there exists an unknown decoder $\phi^* = \{\phi_h^*\}_{h \in [H]}$ such that for any reachable trajectory $\tau = (s_{1:H}, o_{1:H}, a_{1:H})$, we have $s_h = \phi_h^*(z_h)$ for all $h \in [H]$, where $z_h = (o_{m(h):h-1},a_{m(h):h-1}, o_h)$.
\end{definition}

\begin{lemma}[Decodable POMDP $\subset$ Generalized Regular PSR] \label{lemma:decodable}
    Any ($m$-step) decodable POMDP is a $1$-generalized regular PSR.
\end{lemma}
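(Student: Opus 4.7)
The plan is to leverage the defining property of an $m$-step decodable POMDP---namely $s_h = \phi^*(z_h)$ with $z_h = (o_{m(h):h-1}, a_{m(h):h-1}, o_h)$---in order to exhibit a PSR representation in which the observable operators $\bfm(\cdot)$ and $\Mb_h(\cdot, \cdot)$ have only non-negative entries that admit a direct probability interpretation. Once such a ``stochastic'' representation is in hand, both regularity conditions of Definition~\ref{def:general:regular:psr} collapse to elementary total-probability identities and automatically hold with $\alpha = 1$, which is exactly the claim.

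Concretely, I would first choose the core test set $\cU_h$ so that the predictive state $\bar{\bfq}(\tau_{h-1})$ equals, up to an invertible linear reparametrization, the honest belief vector $[\PP(s_h = s \mid \tau_{h-1})]_{s \in \cS}$. Decodability makes this belief explicit: for $h \ge 2$ it equals $\TT_{h-1, a_{h-1}}(\cdot \mid \phi^*(z_{h-1}))$, and for $h = 1$ it equals $\mu_1$, so any sufficiently rich family of short tests will do---for instance single-step observation tests when emissions separate the reachable states, or slightly longer tests constructed via the decoder so that the linear reconstruction in \eqref{eq:linear:weight} holds. In these coordinates $[\bfm(o_{h:H}, a_{h:H})]_s$ is the probability of generating $o_{h:H}$ starting from $s_h = s$ under action sequence $a_{h:H}$, and $[\Mb_h(o_h, a_h)]_{s', s}$ is the joint probability of emitting $o_h$ from $s$ and transitioning to $s'$ under action $a_h$.

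With $\bfm$ and $\Mb$ in stochastic form, condition~$1$ is immediate: for each fixed $s$, the quantity $\sum_{\tau_{h:H}} [\bfm(\tau_{h:H})]_s \cdot \pi(\tau_{h:H})$ is the total $\pi$-probability of any length-$(H-h+1)$ continuation from state $s$, which equals $1$; splitting $\bfx = \bfx^+ - \bfx^-$ and using $|\bfm \bfx| \le \bfm(\bfx^+ + \bfx^-)$ yields
\[
\max_\pi \sum_{\tau_{h:H}} \bigl | \bfm(o_{h:H}, a_{h:H}) \bfx \bigr | \cdot \pi(o_{h:H}, a_{h:H}) \le \|\bfx^+\|_1 + \|\bfx^-\|_1 = \|\bfx\|_1.
\]
Condition~$2$ is established by the same idea applied one step at a time: $\sum_{o_h, a_h} \sum_{s'} [\Mb_h(o_h, a_h)]_{s', s} \pi(o_h, a_h) = 1$ for each $s$, hence $\max_\pi \sum_{o_h, a_h} \|\Mb_h(o_h, a_h) \bfx\|_1 \pi(o_h, a_h) \le \|\bfx\|_1 \le |\cU_{A, h+1}| \|\bfx\|_1$ since $|\cU_{A, h+1}| \ge 1$. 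Both bounds read off $\alpha = 1$.

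The main obstacle is justifying the existence of this stochastic PSR representation: for a generic PSR the operators $\bfm$ and $\Mb$ may carry negative entries, and the argument genuinely uses decodability by linearly recovering the belief from the chosen core tests. Care is also needed at boundary steps $h < m$ where the window $z_h$ straddles the start of the episode and decodability is read off the initial distribution $\mu_1$; apart from that, the proof reduces to the two elementary total-probability manipulations above.
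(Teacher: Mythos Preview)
Your high-level strategy---exhibit a PSR representation whose observable operators are entrywise nonnegative and then close Conditions~1 and~2 by total probability---is exactly what the paper does. The gap is in the representation you try to use. You attempt to work in \emph{state} coordinates, i.e., choose core tests so that $\bar{\bfq}(\tau_{h-1})$ is literally the belief $[\PP(s_h=s\mid\tau_{h-1})]_s$ and then interpret $[\Mb_h(o,a)]_{s',s}$ as an emission-transition probability. But the conditions in Definition~\ref{def:general:regular:psr} must hold for all $\bfx\in\RR^{|\cU_h|}$ with operators of shape $|\cU_{h+1}|\times|\cU_h|$, so this only makes sense if $|\cU_h|=|\cS|$ and the map from belief to predictive state is an $\ell_1$-isometry---neither of which you justify. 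Your own suggested instantiation (``single-step observation tests when emissions separate the reachable states'') is a \emph{forward} revealing condition, not decodability; decodability is a \emph{backward} statement ($s_h=\phi^*(z_h)$ from the past window), and by itself says nothing about recovering the belief from future test probabilities.

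The paper sidesteps this entirely by choosing $\cU_h=(\cO\times\cA)^{\min\{m-1,H-h\}}\times\cO$---the $m$-step windows themselves---and working directly in those test coordinates. The point is that a test $t_h\in\cU_h$ is exactly a window $z_{h+m-1}$, so decodability gives $s_{h+m-1}=\phi^*(t_h)$: the \emph{test} determines a state, and one can write $\Mb_h(o,a)$ with entries that are indicators (for the overlap between consecutive windows) times the conditional probability of the one new observation. Applying $\bfm(\tau_{h:H})$ to $\bfx$ then just picks out the single coordinate $\bfx(o_{h:h+m-1},a_{h:h+m-2})$ times a probability, and your total-probability manipulations go through in these coordinates to give $\alpha=1$. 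So the fix is not to reparametrize into state space, but to let the core tests themselves carry the decoded state.
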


\begin{proof}
    See Appendix \ref{appendix:decodable} for a detailed proof.
\end{proof}

\paragraph{Low-rank POMDPs.} We present the low-rank POMDP class proposed by \cite{wang2022embed} and then show that it can be subsumed by our framework.
\begin{definition}[Low-rank POMDP \citep{wang2022embed}] We assume that the transition kernel $\PP_h$ takes the following low-rank form for all $h\in[H]$,
    \$
    \PP_h(s_{h+1}\mid s_h,a_h)=\psi_h(s_{h+1})^\top \phi_h(s_h,a_h),
    \$
    where $\psi_h:\cS \rightarrow\RR_+^d$, $\phi_h:\cS\times\cA\rightarrow\Delta_{[d]}$ are unknown features. We assume that $\Vert\int \psi_h(s_{h+1})\ud s_{h+1}\Vert_{\infty}\leq q$ for all $h\in[H]$.
\end{definition}
To enable sample-efficient exploration, \cite{wang2022embed} introduce the following assumption.
\begin{assumption}[Future Sufficiency, Assumption 3.5 in \cite{wang2022embed}]\label{asp:fs35}
    We define the sequence of interactions from the $h$-th step to the $(h+m)$-th step $\tau_h^{h+m}$ as 
    \#
    \tau_h^{h+m}=(o_h,a_h,\cdots,o_{h+ m -1},a_{h+m-1},o_{h+m})=(a_{h:h+m-1},o_{h: h+m}).\nonumber
    \#
    For all $h \in [H]$, we define the following forward emission operator $\MM_h:L^1(\cS)\rightarrow L^1(\cA^m\times \cO^{m+1})$,
    \$
    (\MM_h \bfx)(\tau_h^{h+m})=\int_\cS \PP(\tau_h^{h+m}\mid s_h)\cdot \bfx(s_h)\ud s_h,\quad \forall \bfx\in L^1(\cS), \quad \forall \tau_h^{h+m}\in \cA^m\times \cO^{m+1}.
    \$
    We define the mapping $g_h:\cA^m\times \cO^{m+1}\to\RR^d$ for all $h\in[H]$ as follows,
    \$
    g_h=\bigl[\MM_h[\psi_{h-1}]_1,\cdots,\MM_h[\psi_{h-1}]_d\bigr]^\top,
    \$
    where we denote by $[\psi_{h-1}]_i$ the $i$-th entry of the mapping $\psi_{h-1}$ for all $i\in[d]$. We assume that for some $m>0$ that the matrix
    \$
    M_h=\int_{\cA^m\times \cO^{m+1}}g_h(\tau_h^{h+m})g_h(\tau_h^{h+m})^\top \ud \tau_h^{h+m}\in\RR^{d \times d} 
    \$
    is invertible. We denote by $M_h^{\dagger}$ the inverse of $M_h$ for all $f \in \cH $ and $h\in[H]$. We define the linear operator $\MM_h^{\dagger}:L^1(\cA^m\times\cO^{m+1})\rightarrow L^1(\cS)$ for all $h\in[H]$ as follows,
    \$
    (\MM_h^{\dagger} \bfx)(s_h)=\int_{\cA^m\times \cO^{m+1}}\psi_{h-1}(s_h)^\top M_h^{\dagger}g_h(\tau_h^{h+m})\cdot \bfx(\tau_h^{h+m})\ud \tau_h^{h+m},
    \$
    where $\bfx\in L^1(\cA^m\times\cO^{m+1})$ is the input of the linear  operator $\MM_h^{\dagger}$. We assume that $\Vert \MM_h^{\dagger}\Vert_{1}\leq 1/\alpha$ for all $h\in[H]$.
\end{assumption}
The following lemma shows that low-rank MDPs that satisfy Assumption \ref{asp:fs35} are also generalized regular PSRs.
\begin{lemma}[Low-rank POMDP $\subset$ Generalized Regular PSR]\label{lemma:lowrank-pomdp}
Any low-rank MDP that satisfies Assumption \ref{asp:fs35} is an $\alpha$-generalized regular PSR.
\end{lemma}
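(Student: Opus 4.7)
The plan is to identify appropriate PSR observable operators for the low-rank POMDP, pull everything back to the belief space using $\MM_h^{\dagger}$, and then leverage the key bound $\|\MM_h^{\dagger}\|_1\leq 1/\alpha$ from Assumption \ref{asp:fs35}. First, I would choose the core test set $\cU_h = \cA^m \times \cO^{m+1}$ consisting of all $m$-step futures $\tau_h^{h+m}$, so that $|\cU_{A,h}|=A^m$. With this choice, the predictive state decomposes as $\bar{\bfq}(\tau_{h-1}) = \MM_h b(\tau_{h-1})$, where $b(\tau_{h-1})\in L^1(\cS)$ is the belief. Taking $B_h(o_h,a_h)$ to be the unnormalized Bayesian update operator on beliefs, I would extend the PSR operators to all of $\RR^{|\cU_h|}$ by setting $\Mb_h(o_h,a_h)\bfx := \MM_{h+1} B_h(o_h,a_h)\MM_h^{\dagger}\bfx$ and $\bfm(o_{h:H},a_{h:H})\bfx := \int \PP(o_{h:H}\mid s_h, a_{h:H-1})\, [\MM_h^{\dagger}\bfx](s_h)\,\ud s_h$; these agree with the canonical PSR operators on $\mathrm{Image}(\MM_h)$, which contains every realizable predictive state.

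To verify condition 1 of Definition \ref{def:general:regular:psr}, I would apply the triangle inequality followed by Fubini:
\$
\sum_{\tau_{h:H}} |\bfm(o_{h:H}, a_{h:H}) \bfx|\, \pi(\tau_{h:H})
&\leq \sum_{\tau_{h:H}} \pi(\tau_{h:H}) \int \PP(o_{h:H}\mid s_h, a_{h:H-1})\,|[\MM_h^{\dagger}\bfx](s_h)|\,\ud s_h\\
&= \int |[\MM_h^{\dagger}\bfx](s_h)| \sum_{\tau_{h:H}} \PP^{\pi}(\tau_{h:H}\mid s_h)\, \ud s_h
= \|\MM_h^{\dagger}\bfx\|_1 \leq \|\bfx\|_1/\alpha,
\$
where the last step uses Assumption \ref{asp:fs35}. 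For condition 2, I would first bound $\|\MM_{h+1} y\|_1 \leq A^m \|y\|_1$ for any $y\in L^1(\cS)$ by integrating $\PP(\tau_{h+1}^{h+m+1}\mid s)|y(s)|$ — the observation integrals sum to $1$ per action sequence, and there are $A^m$ action sequences. Second, I would verify that the unnormalized belief update is an $L^1$-contraction under any policy: using $\PP(o_h\mid s_h,a_h)=\OO_h(o_h\mid s_h)$ and $\sum_{a_h}\pi_h(a_h\mid \cdot)=1$, one gets $\sum_{o_h,a_h}\pi(o_h,a_h)\|B_h(o_h,a_h)y\|_1 \leq \|y\|_1$. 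Combining these with $\|\MM_h^{\dagger}\|_1\leq 1/\alpha$ yields
\$
\sum_{o_h,a_h} \pi(o_h,a_h)\|\Mb_h(o_h,a_h)\bfx\|_1
\leq A^m \sum_{o_h,a_h} \pi(o_h,a_h)\|B_h(o_h,a_h)\MM_h^{\dagger}\bfx\|_1
\leq A^m\|\MM_h^{\dagger}\bfx\|_1 \leq \frac{A^m}{\alpha}\|\bfx\|_1,
\$
which matches the required bound $|\cU_{A,h+1}|\|\bfx\|_1/\alpha$ since $|\cU_{A,h+1}|=A^m$.

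The main obstacle will be rigorously justifying the extension of $\bfm$ and $\Mb_h$ to vectors $\bfx$ outside $\mathrm{Image}(\MM_h)$, because the observable operator representation is only uniquely pinned down on the span of realizable predictive states. I expect to handle this by observing that Definition \ref{def:general:regular:psr} only needs to be satisfied for \emph{some} valid choice of observable operators consistent with the trajectory distributions of the PSR, and the factorization through $\MM_h^{\dagger}$ provides exactly one such consistent choice. A secondary subtlety is the transition from finite sums to Lebesgue integrals when $\cO$ or $\cS$ is continuous, which should be transparent given the integral form of $\MM_h^{\dagger}$ in Assumption \ref{asp:fs35} and the remark allowing integrals in Definition \ref{def:general:regular:psr}.
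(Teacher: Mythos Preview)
Your proposal is correct and follows essentially the same route as the paper: define the observable operators via the factorization $\Mb_h(o,a)=\MM_{h+1}\,\TT_{h,a}\,\mathrm{diag}(\OO_h(o\mid\cdot))\,\MM_h^{\dagger}$ (your $B_h$), pull $\bfm$ back through $\MM_h^{\dagger}$, and discharge both regularity conditions using $\|\MM_h^{\dagger}\|_1\le 1/\alpha$ together with the facts that $\sum_{\tau_{h:H}}\PP^\pi(\tau_{h:H}\mid s_h)=1$ and $\sum_{t_{h+1},o_h,a_h}\PP(t_{h+1},o_h\mid s,a_h)\pi(o_h,a_h)\le |\cU_{A,h+1}|$. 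The only point you gloss over that the paper treats explicitly is the tail regime $h>H-m$, where the test length must shrink to fit the remaining horizon and the operators degenerate to coordinate selections; this case is trivial but should be stated for completeness.
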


\begin{proof}
    See Appendix \ref{appendix:pomdp:linear} for a detailed proof.
\end{proof}

\paragraph{Regular PSR.}  
\vspace{4pt}

Recall that $\bar{\DD}_h$ defined in \eqref{eq:def:bar:D} is a row-wise sub-matrix of $\DD_h$ and satisfies $\rank(\bar{\DD}_h) = d_{\psr, h}$. Then we introduce the notion of \emph{core matrix} $\KK_h \in \RR^{|\cU_{h+1}| \times d_{\psr,h}}$, which is a column-wise sub-matrix of $\Bar{\DD}_h$ such that $\rank(\KK_h) = \rank(\bar{\DD}_h) = d_{\psr,h}$. Assuming $\KK_h = [\bar{\bfq}(\tau_h^1), \cdots, \bar{\bfq}(\tau_h^{d_{\psr,h}})]$, we refer to $\{\tau_h^1, \cdots, \tau_h^{d_{\psr,h}}\}$ as the \emph{minimum core histories} at step $h$. Since ${\KK}_h$ has rank $d_{\psr, h}$, we know each column of $\bar{\DD}_h$ is a linear combination of the columns in $\KK_h$. That is, for any length-$h$ history $\tau_h$, there exists a vector $\bfv_{\tau_h} \in \RR^{d_{\psr,h}}$ such that 
 \# \label{eq:k1}
\bar{\bfq}({\tau_h}) = \KK_h \bfv_{\tau_h}.
 \#
Let $\VV_h = [\bfv_{\tau_h}]_{\tau_h \in (\cO \times \cA)^h} \in \RR^{d_{\psr,h} \times O^h A^h}$, \eqref{eq:k1} further implies that
\# \label{eq:k2}
\bar{\DD}_h = \KK_h \VV_h .
\#
Then the regular PSR model \citep{zhan2022pac} is defined as follows.

\begin{definition}[Regular PSR \citep{zhan2022pac}] \label{def:regular:psr}
    We say a PSR is $\alpha$-regular if $\|\KK_h^\dagger\|_{1} \le 1/\alpha$ for all $h \in [H]$.
\end{definition}

In the following lemma, we show that regular PSR is a special case of the generalized regular PSR defined us.

\begin{lemma}[Regular PSR $\subset$ Generalized Regular PSR] \label{lemma:regular:psr}
    Any $\alpha$-regular PSR is an $\alpha$-generalized regular PSR.
\end{lemma}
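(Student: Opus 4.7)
The plan is to derive the two regularity conditions of Definition~\ref{def:general:regular:psr} from the single bound $\|\KK_h^\dagger\|_1 \le 1/\alpha$ by expanding the observable operators against the basis of predictive states supplied by the minimum core histories. Fix a step $h$ and let $\{\tau_{h-1}^i\}_{i=1}^{d_{\psr,h-1}}$ be the minimum core histories at step $h-1$, so that $\KK_{h-1} = [\bar{\bfq}(\tau_{h-1}^1), \cdots, \bar{\bfq}(\tau_{h-1}^{d_{\psr,h-1}})]$ satisfies $\KK_{h-1}^\dagger \KK_{h-1} = I$. I would take the canonical parameterization of the PSR in which both $\bfm(o_{h:H}, a_{h:H})$ and $\Mb_h(o_h, a_h)$ act as zero on the orthogonal complement of the column span of $\KK_{h-1}$; this is valid because the PSR dynamics only pin the observable operators down on this column span, and the canonical extension by zero is the standard choice. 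Under this convention, the defining property of the observable operators gives the two identities $\bfm(o_{h:H}, a_{h:H}) \bar{\bfq}(\tau_{h-1}^i) = \PP(o_{h:H} \mid \tau_{h-1}^i, \mathrm{do}(a_{h:H}))$ and $\Mb_h(o_h, a_h) \bar{\bfq}(\tau_{h-1}^i) = \PP(o_h \mid \tau_{h-1}^i, \mathrm{do}(a_h)) \cdot \bar{\bfq}(\tau_{h-1}^i, o_h, a_h)$, which are the only facts about the dynamics that the argument needs.

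To verify condition 1, for any $\bfx \in \RR^{|\cU_h|}$ and any policy $\pi$ I substitute the expansion $\bfx = \KK_{h-1} (\KK_{h-1}^\dagger \bfx)$, apply the triangle inequality, and bound the trajectory-probability sum by $1$:
\begin{align*}
\sum_{\tau_{h:H}} |\bfm(o_{h:H}, a_{h:H}) \bfx| \cdot \pi(o_{h:H}, a_{h:H})
& \le \sum_i \big|[\KK_{h-1}^\dagger \bfx]_i\big| \sum_{\tau_{h:H}} \PP^\pi(o_{h:H} \mid \tau_{h-1}^i) \\
& \le \|\KK_{h-1}^\dagger \bfx\|_1 \le \|\KK_{h-1}^\dagger\|_1 \cdot \|\bfx\|_1 \le \|\bfx\|_1/\alpha.
\end{align*}
For condition 2, I first record the simple bound $\|\bar{\bfq}(\tau)\|_1 = \sum_{t \in \cU_{h+1}} \PP(t \mid \tau) \le |\cU_{A,h+1}|$, which holds because the observation probabilities sum to at most $1$ for each action sequence in $\cU_{A,h+1}$. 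The same basis expansion plus the triangle inequality then yield
\begin{align*}
\sum_{o_h, a_h} \|\Mb_h(o_h, a_h) \bfx\|_1 \cdot \pi(o_h, a_h)
& \le |\cU_{A, h+1}| \sum_i \big|[\KK_{h-1}^\dagger \bfx]_i\big| \sum_{o_h, a_h} \PP(o_h \mid \tau_{h-1}^i, \mathrm{do}(a_h)) \, \pi(o_h, a_h) \\
& \le |\cU_{A, h+1}| \cdot \|\KK_{h-1}^\dagger\|_1 \cdot \|\bfx\|_1 \le |\cU_{A, h+1}|/\alpha \cdot \|\bfx\|_1,
\end{align*}
where the inner sum is at most $1$ for any joint distribution over $(o_h,a_h)$.

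The main obstacle is justifying the canonical-form assumption cleanly and handling the boundary case $h = 1$, since $\KK_0$ is not covered by the range $h \in [H]$ in Definition~\ref{def:regular:psr}. I plan to resolve the boundary case by treating $\bfq_0$ as a single-column matrix $\KK_0 = \bfq_0$ and verifying both conditions directly, using $\bfm(o_{1:H}, a_{1:H}) \bfq_0 = \PP(o_{1:H} \mid \mathrm{do}(a_{1:H}))$, whose sum over trajectories is at most $1$. Once the canonical parameterization is pinned down, the remainder of the proof is essentially a short algebraic computation with the pseudoinverse bound, so no additional structural ingredients beyond those already stated in the excerpt are needed.
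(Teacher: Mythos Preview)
Your proposal is correct and follows essentially the same route as the paper: insert $\KK_{h-1}\KK_{h-1}^\dagger$, expand over the columns $\bar{\bfq}(\tau_{h-1}^i)$, use that $\sum_{\tau_{h:H}}\PP^\pi(\tau_{h:H}\mid\tau_{h-1}^i)=1$ for Condition~1 and that $\sum_{o_h,a_h,u}\PP(o_h,u\mid\tau_{h-1}^i,a_h)\pi(o_h,a_h)=|\cU_{A,h+1}|$ for Condition~2, then finish with $\|\KK_{h-1}^\dagger\bfx\|_1\le\|\bfx\|_1/\alpha$. Your factoring of Condition~2 through $\|\bar{\bfq}(\tau)\|_1\le|\cU_{A,h+1}|$ is a cosmetic repackaging of the paper's direct computation of $\sum_u\PP(o_h,u\mid\tau_{h-1}^i,a_h)$; the two are identical once unpacked. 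You are also more explicit than the paper about the canonical extension (the paper simply writes $\bfm\,\bfx=\bfm\,\KK_{h-1}\KK_{h-1}^\dagger\bfx$ as an equality without comment) and about the $h=1$ boundary, which the paper glosses over; both points are handled correctly in your plan.
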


\begin{proof}
    See Appendix \ref{appendix:regular:psr} for a detailed proof.
\end{proof}

In summary, we have shown that weakly revealing POMDPs, latent MDPs, decodable POMDPs, low-rank POMDPs, and regular PSRs are special cases of generalized regular PSR.
Together with Lemma~\ref{lemma:general:regular:psr}, we prove that all of these partially observable interactive decision making problems have   low GEC.

\subsubsection{PO-bilinear Class}

In this subsection, we consider the PO-bilinear class framework \citep{uehara2022provably}, which subsumes a rich class of partially observable interactive decision making such as observable tabular POMDPs \citep{golowich2022learning,golowich2022planning}, decodable POMDPs \citep{du2019provably,efroni2022provable}, and observable POMDPs with latent low-rank transition \citep{uehara2022provably}.

Let $\cZ_h = (\cO \times \cA)^{\min\{h, M\}}$ and $\bar{\cZ}_h = \cZ_{h-1} \times \cO$, where $M$ is a fixed positive integer. In the PO-bilinear class setting, we focus on the class of $M$-memory policies $\Pi =  \{\pi = \{\pi_h : \bar{\cZ}_h \rightarrow \Delta_{\cA}\}_{h \in [H]}\}$.\footnote{In PO-bilinear class, the goal is to find an optimal $M$-memory policy. As shown in \citet{uehara2022provably}, part of the results can be extended to the case where the learner aims to find the globally optimal policy.} In other words, at the $h$-th timestep, $\pi_h$ chooses the action $a_h$ only based on the data generated in the most recent $M$ steps. Given a policy $\pi \in \Pi$, its $h$-step value function is defined as
\$
V_h^{\pi}(z_{h-1}, s_h) = \EE\Big[ \sum_{h' = h}^H r_{h'} \mid z_{h-1}, s_h, a_{h:H} \sim \pi \Big], \quad \forall (z_{h-1}, s_h) \in {\cZ}_{h - 1} \times \cS.
\$

\begin{definition}[Value Link Functions] \label{def:link:function}
    Value link functions $g_h^\pi: \cZ_{h-1} \times \cO \rightarrow \RR$ at step $h \in [H]$ for a policy $\pi \in \Pi$ are defined as the solution of the following integral equation:
    \$
    \EE [ g_h^\pi (z_{h-1}, o_{h}) \mid z_{h-1}, s_h ] = V_h^\pi(z_{h-1}, s_h), \quad \forall (z_{h-1}, s_h) \in {\cZ}_{h - 1} \times \cS.
    \$
\end{definition}

\begin{definition}[PO-bilinear Class] \label{def:po:bilinear}
    We say the model is in the PO-bilinear class if $\cG$ is realizable (for any policy $\pi \in \Pi$, its value link function $g^\pi \in \cG$), and there exist $W_h : \Pi \times \cG \rightarrow \cV$ and $X_h: \Pi \rightarrow \cV$ for some Hilbert space $\cV$ such that for all $\pi, \pi' \in \Pi$, $g \in \cG$, and $h \in [H]$:
    \begin{itemize}
        \item[1.] We have
              \$
              \EE\left[r_h +  g_{h+1}\left(\bar{z}_{h+1}\right) - g_h(\bar{z}_h) ; a_{1: h-1} \sim \pi^{\prime}, a_{h} \sim \pi\right] = \left\langle W_{h}(\pi, g), X_{h}\left(\pi^{\prime}\right)\right\rangle .
              \$
        \item[2.] $W_h(\pi, g^\pi) = 0$, where $g^\pi \in \cG$ is the value link function of $\pi$.
    \end{itemize}
    We also define $\cX_h = \{X_h(\pi): \pi \in \Pi\}$ and $\cX = \{\cX_h\}_{h\in[H]}$ with a slight abuse of notation.
\end{definition}

Here we introduce the one-step value link functions and the corresponding version of PO-bilinear class, and we will discuss the extension to multi-step case in Appendix~\ref{appendix:po:bilinear}. For  ease of presentation, we denote $\cH = \Pi \times \cG$. Following \citet{uehara2022provably}, we define the loss function $l: \cH \times \bar{\cZ}_{h} \times \cA \times \RR \times \cO \rightarrow \RR$ as
\# \label{eq:def:loss:pobilinear}
l(f, \zeta_h) = |\cA| \pi_{h}(a_h \mid \bar{z}_h) \big( r_h + g_{h+1}(\bar{z}_{h+1}) \big) - g_h(\bar{z}_h) ,
\#
where $f = (\pi, g)$, $\zeta_h = (\bar{z}_h, a_h, r_h, o_{h+1})$, and $\bar{z}_{h+1} = (\bar{z}_h, a_h, o_{h+1})$. For $f = (\pi, g), $ the corresponding exploration policy is defined as
\# \label{eq:def:explore:pobilinear}
\pi_{\exp}(f, h) = \pi \circ_{h} \mathrm{Unif}(\cA).
\#
We also define $V_{f} := \EE_{\pi}[g_1(o_1)]$ and $V^{\pi}: = \EE_{\pi}[g^{\pi^*}_1(o_1)]$, where $f = (\pi, g)$ and $g^{\pi^*}$ is the link function for the optimal $M$-memory policy $\pi^*$. Since we can sample multiple initial observations to estimate the distribution of $o_1$ \citep{uehara2022provably}, we assume the distribution of $o_1$ is known without loss of generality. With these notations, we have the following lemma, which states that the PO-bilinear class has a low GEC.

\begin{lemma}[PO-bilinear Class $\subset$ GEC] \label{lemma:po:bilinear}
    For the PO-bilinear class, it holds for any $\{f^t = (\pi^t, g^t)\}_{t \in [T]}$ that
    \$
    \sum_{t = 1}^T V_{f^t} - V^{\pi^t} \le \Big[2\gamma_T(\epsilon,\cX)\sum_{t=1}^T \sum_{h=1}^H \sum_{j=1}^{t-1}  \big( \EE_{{\pi}_{\exp}(f^j, h)} l(f^t, \zeta_h) \big)^2\Big]^{1/2} +  2\min\left\{2\gamma_T(\epsilon, \cX), HT\right\} + HT\epsilon,
    \$
    where $\gamma_T(\epsilon, \cX)$ is the information gain defined in Definition \ref{def:info:gain}. This further implies that
    \$
    \GEC(\cH, l^2, \Pi_{\exp}, \epsilon) \le 2\gamma_T(\epsilon,\cX).
    \$
\end{lemma}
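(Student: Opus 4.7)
The plan is to tie both the prediction error $V_{f^t} - V^{\pi^t}$ and the expected loss $\EE_{\pi_{\exp}(f^j,h)}[l(f^t,\zeta_h)]$ to the same bilinear pair $\langle W_h(\pi^t,g^t),\, X_h(\cdot)\rangle$, and then invoke the standard information-gain / elliptic-potential argument (essentially the same one used in the proof of Lemma~\ref{lemma:relationship:bilinear}, adapted to partial observations via importance weighting).

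First, I would establish two identities. For the prediction error, write $V_{f^t}=\EE_{\pi^t}[g_1^t(o_1)]$ and $V^{\pi^t}=\EE_{\pi^t}[\sum_h r_h]$, and telescope under the boundary convention $g_{H+1}^t\equiv 0$ to obtain $V_{f^t}-V^{\pi^t}=-\sum_{h=1}^H\EE[r_h+g_{h+1}^t(\bar z_{h+1})-g_h^t(\bar z_h);\,a\sim\pi^t]$; applying condition~1 of Definition~\ref{def:po:bilinear} with $\pi=\pi'=\pi^t$ then gives $V_{f^t}-V^{\pi^t}=-\sum_h\langle W_h(\pi^t,g^t),\,X_h(\pi^t)\rangle$. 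For the expected loss, observe that $g_h^t(\bar z_h)$ is independent of $a_h$, and that under $\pi_{\exp}(f^j,h)=\pi^j\circ_h\mathrm{Unif}(\cA)$ the importance factor $|\cA|\pi_h^t(a_h\mid\bar z_h)$ converts uniform $a_h$ into a sample from $\pi_h^t$, so $\EE_{\pi_{\exp}(f^j,h)}[l(f^t,\zeta_h)]=\EE[r_h+g_{h+1}^t(\bar z_{h+1})-g_h^t(\bar z_h);\,a_{1:h-1}\sim\pi^j,a_h\sim\pi^t]=\langle W_h(\pi^t,g^t),\,X_h(\pi^j)\rangle$ by the same bilinear class identity. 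Writing $w_h^t:=W_h(\pi^t,g^t)$ and $x_h^t:=X_h(\pi^t)$, the task reduces to bounding $\sum_{t,h}|\langle w_h^t, x_h^t\rangle|$ in terms of $\gamma_T(\epsilon,\cX)$ and $\sum_{t,h,s<t}\langle w_h^t, x_h^s\rangle^2$.

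Second, I would run the information-gain argument separately for each level $h$. Set $\Sigma_h^t:=\epsilon I+\sum_{s<t}x_h^s (x_h^s)^\top$; by Definition~\ref{def:po:bilinear} we have $\|w_h^t\|_2,\|x_h^t\|_2\le 1$. Split the time indices based on whether $\|x_h^t\|_{(\Sigma_h^t)^{-1}}\ge 1$. For the ``large uncertainty'' indices, bound each summand trivially by $1$ and count them by the elliptic-potential lemma, giving at most $2\gamma_T(\epsilon,\cX_h)$ such indices. For the remaining indices, combine $|\langle w,x\rangle|\le\|w\|_{\Sigma_h^t}\|x\|_{(\Sigma_h^t)^{-1}}$, Cauchy-Schwarz, the elliptic-potential bound $\sum_t\min\{1,\|x_h^t\|_{(\Sigma_h^t)^{-1}}^2\}\le 2\gamma_T(\epsilon,\cX_h)$, and the identity $\|w_h^t\|_{\Sigma_h^t}^2=\epsilon\|w_h^t\|_2^2+\sum_{s<t}\langle w_h^t, x_h^s\rangle^2$. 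This produces, for each $h$, a bound of the form $\sum_t|\langle w_h^t, x_h^t\rangle|\le 2\min\{\gamma_T(\epsilon,\cX_h),T\}+\sqrt{2\gamma_T(\epsilon,\cX_h)\bigl(\epsilon T+\sum_{t,s<t}\langle w_h^t, x_h^s\rangle^2\bigr)}$.

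Finally, summing over $h\in[H]$ and using $\sum_h\gamma_T(\epsilon,\cX_h)=\gamma_T(\epsilon,\cX)$ together with Cauchy-Schwarz on the $h$-sum delivers the desired inequality, with the $\sqrt{2\gamma_T\cdot \epsilon T}$ piece absorbed by AM-GM into the $HT\epsilon$ term of the burn-in cost and the count of large-uncertainty indices summing to $2\min\{2\gamma_T(\epsilon,\cX),HT\}$. I expect the main obstacle to be purely bookkeeping: carefully tracking the constants and $H$-dependence in the elliptic-potential split, and making sure that the regularization term $\epsilon\|w_h^t\|_2^2$ accumulates into $HT\epsilon$ rather than something larger. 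Substantively the argument is the same as for the fully observable bilinear class; the only genuinely new ingredient is the importance-weighted loss in~\eqref{eq:def:loss:pobilinear}, whose role is precisely to make the first-step identity $\EE_{\pi_{\exp}(f^j,h)}[l(f^t,\zeta_h)]=\langle W_h(\pi^t,g^t),X_h(\pi^j)\rangle$ go through under uniform action sampling.
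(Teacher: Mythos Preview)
Your proposal is correct and follows essentially the same route as the paper's proof: the paper also derives $V_{f^t}-V^{\pi^t}=\sum_h\EE[g_h^t(\bar z_h)-r_h-g_{h+1}^t(\bar z_{h+1});a_{1:h}\sim\pi^t]\le\sum_h|\langle W_h(\pi^t,g^t),X_h(\pi^t)\rangle|$ via the performance-difference/telescoping argument, then verifies $\EE_{\pi_{\exp}(f^s,h)}[l(f^t,\zeta_h)]=\langle W_h(\pi^t,g^t),X_h(\pi^s)\rangle$ by the same importance-sampling computation you describe, and finally invokes verbatim the elliptic-potential argument from the proof of Lemma~\ref{lemma:relationship:bilinear}. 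Your identification of the importance-weighted loss as the only genuinely new ingredient matches the paper exactly.
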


\begin{proof}
    See Appendix \ref{appendix:po:bilinear} for a detailed proof.
\end{proof}

We summarize all results in Sections~\ref{sec:relation:mdp} and~\ref{sec:relation:pomdp} in Table~\ref{tab:gec}, and conclude that
\begin{tcolorbox}
\begin{center}
    {\bf GEC  captures a majority of known complexity measures proposed for tractable interactive decision making.}
\end{center}
\end{tcolorbox}

\begin{table}[t]
    \centering
    \begin{tabular}{|c|c|}
    \hline  
    \cellcolor[HTML]{F5F5F5} Existing Complexity Measure & \cellcolor[HTML]{F5F5F5} $\GEC(\cH, \ell, \Pi_{\exp}, \epsilon)$  \\
    \hline
    \hline 
    \multirow{2}{*}{BE Dimension $d_Q/d_V$ } & $2d_QH\cdot \log(T)$ (Q-type)  \\ \cline{2-2} & $2 d_V A H\cdot \log(T)$ (V-type)   \\
    \hline 
    \multirow{2}{*}{ Bilinear Class with Information Gain $\gamma_T(\epsilon, \cX)$ } & $2\gamma_T(\epsilon, \cX)$ (Q-type)  \\ \cline{2-2} & $2A \cdot \gamma_T(\epsilon, \cX)$ (V-type)   \\
    \hline 
     \multirow{2}{*}{ Witness Rank $d_Q/d_V$ } & $4d_QH\cdot\log(1+ T/(\epsilon \kappa_{\mathrm{wit}}^2))/\kappa_{\mathrm{wit}}^2$  (Q-type)  \\ \cline{2-2} & $4d_VAH\cdot\log(1+ T/(\epsilon \kappa_{\mathrm{wit}}^2))/\kappa_{\mathrm{wit}}^2$ (V-type)   \\
    \hline 
     $\alpha$-Generalized Regular PSR & $\cO( (d_{\psr} \cdot A^3 U_A^4 H  \cdot \iota)/\alpha^4)$\\
     \hline
     PO-bilinear Class with Information Gain $\gamma_T(\epsilon, \cX)$  & $2\gamma_T(\epsilon, \cX)$ \\
     \hline
     \end{tabular}
    \caption{A summary of relationship between GEC and known complexity measures or model classes of tractable interactive decision making, including Bellman eluder dimension, bilinear class, witness rank, generalized regular PSR, PO-bilinear class. 
    We prove that GEC can be upper bounded by the quantities that are specific to each function class, thus showing the GEC serves as a generic complexity measure for interactive decision making. 
     Here $(\cH, \ell, \Pi_{\exp})$ and other problem-dependent parameters are specified in corresponding lemmas.}
    \label{tab:gec}
\end{table}

\section{Algorithm and Theoretical Guarantees} \label{sec:alg}

We first propose a generic posterior sampling algorithm for general interactive decision making, and then instantiate this generic algorithm to specific problems, followed by theoretical guarantees.

\subsection{Generic Algorithm}

We present our generic posterior sampling algorithm for interactive decision making in Algorithm~\ref{alg:generic}. Our algorithm has two ingredients: optimistic posterior distribution and data collection, 
which are explained in the sequel. 

\begin{algorithm}[t]
    \caption{Generic Posterior Sampling Algorithm for Interactive Decision Making (GPS-IDM)}
    \begin{algorithmic}[1] \label{alg:generic}
        \STATE {\textbf{Input: }} Hypothesis class $\cH$, discrepancy function $\ell$, prior $p^0$, set $\HH \subset \{0, 1, \cdots, H\}$, function $\cL = \{\cL_h\}_{h \in \HH}$, batch size $\nb$, and parameter $\gamma$ 
        \FOR{$t = 1, \cdots, T$}
        \STATE $p^t(f) \propto p^0(f) \cdot 
        \exp(\gamma V_{f} + \sum_{h \in \HH} L_{h}^{1:t-1}(f))$ \label{line:posterior}
        \STATE Sample $f^t \sim p^t$
        \FOR{$h \in \HH$}
        \STATE Execute $\pi_{\exp}(f^t, h)$ for $\nb$ times and collect samples $\cD_{h}^t$ \label{line:S}
        \STATE Calculate $L_h^{1:t}(f) = \cL_h(f, \{f^{s}\}_{s \in [t]}, \ell, \{\cD_{h}^{s}\}_{s \in [t]})$ \label{line:L}
         \ENDFOR
        \ENDFOR
    \end{algorithmic}
\end{algorithm}

\vspace{4pt}
\noindent{\bf Optimistic Posterior Sampling}. 
At iteration $t$, 
the learner has access to the hypotheses $\{f^s\}_{s \in [t-1]}$ generated in previous iterations, samples $\{\cD_{h}^s\}_{(s, h) \in [t - 1] \times \HH}$ and their reward/loss $\{L_{h}^{1:t-1}\}_{h \in \HH}$. 
Here, $\HH \subset \{0, 1, \cdots, H\}$ is a problem-dependent set (e.g., $\HH = \{0, \cdots. H - 1\}$ for PSR), and we will specify it for specific interactive decision making. Intuitively, the loglikelihood $\sum_{h \in \HH} L_h^{1:t-1}(f)$ represents the cumulative reward/loss of hypothesis $f$ in the historical data. The hypotheses that are more consistent with the historical data collected so far will be assigned with a larger probability. To this end, we adopt the following \textit{optimistic} posterior distribution:
\# \label{eq:generic:posteior}
p^t(f) \propto p^0(f) \cdot 
        \exp\Big(\gamma V_f + \sum_{h \in \HH} L_{h}^{1:t-1}(f)\Big),
\#
where $p^0$ is a prior distribution, and $\gamma$ is a tuning parameter that will be specified later. Here we introduce an optimistic term $\exp(\gamma V_f)$ to encourage exploration, which is inspired by the feel-good Thompson sampling proposed by \citet{zhang2022feel}. 
Such a term is optimistic because it assigns a higher density to hypothesis $f$ whose value is large.

\vspace{4pt}
\noindent{\bf Data Collection} \quad During the data collection process, the learner samples a hypothesis $f^t$ according to the posterior in \eqref{eq:generic:posteior}, i.e., $f^t \sim p^t$. 
Then for any $h \in \HH$, the learner executes the exploration policy $\pi_{\exp}(f^t, h)$ for $\nb$ times to collect samples $\cD^t_{h}$, where the exploration policy can be chosen as in Examples~\ref{example:policy:q}, \ref{example:policy:v}, and \ref{example:policy:psr}, and $\nb$ is a tuning parameter to be determined later. 
Finally, the learner calculates $\{L_h^{1:t}(f) = \cL_h(f, \{f^{s}\}_{s \in [t]}, \ell, \{\cD_{h}^{s}\}_{s \in [t]})\}_{h \in \HH}$, where $\cL$ is a carefully crafted function for the specific problem. 
Taking PSR as an example, we consider the model-based hypothesis (Example~\ref{example:hyp:model}), and $\cD_h^t$ is a length-$H$ trajectory $\tau_h^t$ collected by the exploration policy in Example~\ref{example:policy:psr}. Here $\nb = 1$. Then $L_h^{1:t}$ is chosen as $\eta \sum_{s = 1}^t \log \PP_{f}(\tau_h^s)$, where $\PP_{f}(\tau_h^s)$ is specified in \eqref{eq:def:prob2} and $\eta$ is a tuning parameter. 

\subsection{Results for Fully Observable Interactive Decision Making} \label{sec:result:full}
In this subsection, we instantiate the   algorithmic framework of  GPS-IDM on   interactive decision making problems with full observations, followed by the corresponding theoretical guarantees of the algorithm. 
We first introduce the model-free version.

\subsubsection{Model-free Approach} 
For the model-free approach, we choose $\nb = 1$, and $\HH = [H]$ as inputs of Algorithm~\ref{alg:generic}. That is, at $t$-th iteration, we execute the exploration policy $\pi_{\exp}(f^t, h)$ to collect $\zeta_h^t = (x_h^t,a_h^t,r_h^t,x_{h+1}^t)$ for all $h \in [H]$. We then adopt the conditional posterior sampling from \citet{dann2021provably} but also make some important extensions to better fit in the GEC framework. First, we impose the following assumption on the choice of the loss function. 

\begin{algorithm}[t]
    \caption{GPS-IDM (Model-free Version)}
    \begin{algorithmic}[1] \label{alg:mdp_complete}
        \STATE {\textbf{Input: }} Hypothesis class $\cH$, prior $p^0$, parameters $\eta$ and $\gamma$
        \FOR{$t = 1, \cdots, T$}
        \STATE $p^t \propto p^0(f) \cdot  
        \exp(\gamma V_{f} + \sum_{h \in \HH} L_{h}^{1:t-1}(f))$ (cf. \eqref{eqn:posterior_model_free_mdp})
        \STATE Sample $f^t \sim p^t$
        \FOR{$h = 1, 2, \cdots, H$}
        \STATE Execute $\pi_{\exp}(f^t, h)$ and observe $\cD_h^t = \zeta^t_{h} = (x_{h}^t, a_{h}^t, r_h^t, x_{h+1}^t)$
        \STATE Calculate $L_h^{1:t}(f)$ as in  \eqref{eq:L:model:free}
        \label{line:mdp:l}
        \ENDFOR
        \ENDFOR
    \end{algorithmic}
\end{algorithm}

\begin{assumption} \label{assumption:l}
    Suppose the discrepancy function in Definition~\ref{def:gec} is $\ell_{f'}(f, \xi_h)$, where $\xi_h = (x_h, a_h)$.  We assume there exists a  function $l$ satisfying
    \$
    \ell_{f'}(f, \xi_h) = |\EE_{x_{h+1} | x_h, a_h} l_{f'}((f_h, f_{h+1}), \zeta_h)|^2,
    \$
    where $\zeta_h = (x_h, a_h, r_h, x_{h+1})$. 
    Here the function $l$ can also depend on a hypothesis $f'$. We also use the notation $l_{f'}(f, \zeta_h) = l_{f'}((f_h, f_{h+1}), \zeta_h)$.
\end{assumption}

\begin{remark}
    For RL problems with low BE dimension and bilinear class, Assumption~\ref{assumption:l} holds naturally. For RL problems with low BE dimension, the discrepancy function $\ell$  and the choice of $l_{f'}(f, \zeta_h)$ are specified in Lemma~\ref{lem:reduction_bed_dc}. For bilinear class $(\cH, l', \Pi_{\exp})$, we can choose $l = l'$ (Lemma~\ref{lemma:relationship:bilinear}). In other words, $l$ can be set as the discrepancy function introduced in the definition of the bilinear class. 
\end{remark}

Before continuing, to deal with the general loss function beyond the Q-type Bellman residual, we make the following assumption about  generalized completeness and boundedness of the hypothesis class.

\begin{assumption} \label{assu:discrepancy} We make the following assumptions.
       \begin{itemize}
           \item \textbf{Generalized completeness}. There exists an operator $\cP_h: \cH_{h+1} \to \cH_{h}$ such that for all $(f',f_h, f_{h+1}) \in \cH \times \cH_h \times \cH_{h+1}$, we have
           $$l_{f'}((f_h,f_{h+1}), \zeta_h) - l_{f'}((\cP_h f_{h+1},f_{h+1}), \zeta_h) = \EE_{x_{h+1} \sim \PP_h(\cdot | x_h,a_h)} [l_{f'}((f_h,f_{h+1}),\zeta_h)],$$
           where we require that $\cP_hf^*_{h+1} = f^*_h$, i.e., $f^*$ is a fixed point of the operator.
           \item \textbf{Boundedness}. We assume that $\sup_{f' \in \cH}\norm{l_{f'}((f_h,f_{h+1}), \zeta_h)}_\infty$ is bounded by some $B_l > 0$. 
       \end{itemize}
\end{assumption}
The first condition is a generalization of the Bellman completeness assumption used in \citet{jin2021bellman}. In specific, for all the examples considered in \citet{jin2021bellman} with $l_{f'}((f_h,f_{h+1}), \zeta_h) = Q_{h,f}(x_h,a_h) - r_h - V_{h+1,f}(x_{h+1})$, the generalized completeness assumption reduces to the Bellman completeness and we can take $\cP_h$ as the Bellman operator $\cT_h$. We also remark that some RL problems, including the linear Bellman complete MDP and linear mixture MDP, naturally satisfy this assumption. See Appendix~\ref{appendix:completeness} for details. Then, in terms of algorithm design, we take $V_{f} = V_{1,f}(x_1)$ and factorize the prior into a product of $H$ priors: $p^0(f) \propto \prod_{h=1}^H p^0_h(f_h)$, where $p^0_h(\cdot)$ is a prior over $\cH_h$. We also use the following loglikelihood in \eqref{eq:generic:posteior}:
\# \label{eq:L:model:free}
L_h^{1:t-1}(f) = -\eta \sum_{s=1}^{t-1}l_{f^s} \big ((f_h, f_{h+1}),\zeta_h^s \big )^2 - \log \EE_{\tilde{f}_h \sim p^0_h(\cdot)} \bigg[  \exp\biggl(-\eta \sum_{s=1}^{t-1} l_{f^s}((\tilde{f}_h, f_{h+1}), \zeta_h^s)^2\biggr) \bigg] ,
\#
which leads to the following posterior:
\begin{equation} \label{eqn:posterior_model_free_mdp}
\begin{aligned}
    {p}^t(f) & \propto p^0(f) \cdot 
        \exp\big(\gamma V_{f} + \sum_{h \in \HH} L_{h}^{1:t-1}(f) \big) \\
    & \propto \exp({\gamma V_{f}}) \cdot \prod_{h=1}^H \frac{p^0_h(f_h)\cdot \exp
    \big(- \eta \sum_{s=1}^{t-1} \sum_{h=1}^H l_{f^s}((f_{h},f_{h+1}),\zeta_h^s)^2\big)}{\EE_{\tilde{f}_h \sim p^0_h} \exp\big(-\eta \sum_{s=1}^{t-1} \sum_{h=1}^H l_{f^s}((\tilde{f}_h,f_{h+1}),\zeta_h^s)^2\big)}.
\end{aligned}
\end{equation}
When we consider the value-based hypothesis class with $l_{f'}((f_h,f_{h+1}), \zeta_h) = Q_{h,f}(x_h,a_h) - r_h - V_{h+1, f}(x_h)$ and Bellmen completeness, \eqref{eqn:posterior_model_free_mdp} reduces to the conditional posterior sampling algorithm introduced in \cite{dann2021provably}. The following quantity is analog to that of \citet{dann2021provably}, and allows our posterior sampling algorithm to employ a prior $p^0$ over $\cH$, which favors certain parts of the hypothesis class.

\begin{definition} \label{def:kappa} Let $\zeta_h = (x_h,a_h,r_h,x_{h+1})$. We define a  set of functions $$\cF_h(\epsilon, f_{h+1}) := \{f_h \in \cH_h: \sup_{x_h,a_h,f'} |\EE_{x_{h+1} \sim \PP_h(\cdot|x_h,a_h)} l_{f'}((f_h, f_{h+1}), \zeta_h)| \leq \epsilon\}, $$ which approximately satisfy the generalized completeness condition. Then, we define 
    $$
    \kappa^c_h(\epsilon) := \sup_{f_{h+1} \in \cH_{h+1}}  - \log    p^0_h(\cF_h(\epsilon, f_{h+1})) $$ 
    and $\kappa^c(\epsilon) = \sum_{h=1}^H \kappa^c_h(\epsilon)$. We also define $\kappa^r(\epsilon):= - \log p^0(\cH(\epsilon))$, where $$\cH(\epsilon) = \bigg\{f \in \cH: V_{1,f^*}(x_1)-V_{1,f}(x_1) < \epsilon; \sup_{x_h,a_h,h,f'} |\EE_{x_{h+1} \sim \PP_h(\cdot|x_h,a_h)} l_{f'}((f_h, f_{h+1}), \zeta_h)| \leq \epsilon \bigg\} . $$
Finally, we define 
\$
\kappa(\epsilon, p^0) = \max\{\kappa^c(\epsilon), \kappa^r(\epsilon)\}.
\$
\end{definition}
The quantity $p_h^0(\cF_h(\epsilon, f_{h+1}))$ is the prior probability of hypotheses that approximately the generalized completeness condition with respect to $f_{h+1}$. Therefore, $\kappa^c(\epsilon)$ represents an approximate generalized completeness assumption in terms of the prior $p^0$. Moreover, $\kappa^r(\epsilon)$ measures how well the prior covers the hypotheses that behave similarly to the $f^*$, which is an approximate realizability condition. By Assumptions~\ref{assu:realizability} and \ref{assu:discrepancy}, $\cH(\epsilon)$ and $\cF_h(\epsilon, f_{h+1})$ are never empty. If $\cH$ is a finite hypothesis class, one can choose a uniform prior $p^0$ such that $\kappa(\epsilon, p^0) \leq  \log |\cH|$. This can be extended to the infinite hypothesis class effortlessly by performing a  covering number  argument, and $\log|\cH|$ is replaced by the log-covering number of $\cH$. We now state the main result of Algorithm~\ref{alg:mdp_complete} as follows.
\begin{theorem} \label{thm:1}
     Suppose that Assumptions~\ref{assu:realizability}, \ref{assumption:l}, and~\ref{assu:discrepancy} hold. If we set $\eta = 3/(10B_l^2)$ and $\gamma = \sqrt{\frac{T\kappa(1/T^2, p^0)}{B_l^2 d_{\GEC}}}$, then for MDP problems with  $d_{\GEC} = \GEC(\cH, \ell, \Pi_{\exp}, 1/  \sqrt{T})$, the regret of Algorithm~\ref{alg:mdp_complete} satisfies:
     $$
     \EE [\reg(T)] \leq \cO\big(B_l\sqrt{d_{\GEC} \cdot H T \cdot \kappa(1/T^2, p^0)}\big).
     $$
     If the hypothesis class $\cH$ is finite, we further have
     $$
     \EE [\reg(T)] \leq \cO\big(B_l\sqrt{d_{\GEC} \cdot H T \cdot \log|\cH|}\big).
     $$
\end{theorem}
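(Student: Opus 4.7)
The plan is to decompose
\[
\EE[\reg(T)] = \sum_{t=1}^T \EE[V^* - V_{f^t}] + \sum_{t=1}^T \EE[V_{f^t} - V^{\pi_{f^t}}],
\]
attack the second sum via the GEC inequality of Definition~\ref{def:gec}, and make the first sum small using the feel-good factor $\exp(\gamma V_f)$ inside the posterior \eqref{eqn:posterior_model_free_mdp}. The inverse temperature $\eta$ governs how sharply $p^t$ concentrates on hypotheses with small empirical squared loss, and the two parameters $\eta,\gamma$ are balanced at the end to minimize the total.

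\textbf{Controlling the optimism gap.}
For $\sum_t \EE[V^* - V_{f^t}]$, I would invoke the Donsker--Varadhan variational identity: for any reference distribution $q \ll p^0$,
\[
\log \EE_{f \sim p^0}\Bigl[\exp\Bigl(\gamma V_f + \textstyle\sum_h L_h^{1:t-1}(f)\Bigr)\Bigr] \ge \EE_{f \sim q}\Bigl[\gamma V_f + \textstyle\sum_h L_h^{1:t-1}(f)\Bigr] - \KL(q \| p^0).
\]
Taking $q$ to be $p^0$ restricted to the set $\cH(1/T^2)$ of Definition~\ref{def:kappa} makes $\EE_{f \sim q}[V^* - V_f] \le 1/T^2$ while paying only $\KL(q\|p^0) \le \kappa^r(1/T^2)$. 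Because $p^t$ is the Gibbs measure attaining equality, rearranging and summing over $t$ yields a bound of order $\kappa^r(1/T^2)/\gamma$ on $\sum_t \EE[V^* - V_{f^t}]$, together with a cross term of the form $\sum_t (\EE_{p^t} - \EE_q)[\sum_h L_h^{1:t-1}(f)]$ that will be absorbed into the training-error analysis below.

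\textbf{Training error and posterior concentration.}
By Definition~\ref{def:gec} and Assumption~\ref{assumption:l},
\[
\sum_{t=1}^T \bigl(V_{f^t} - V^{\pi_{f^t}}\bigr) \le \sqrt{d_{\GEC} \sum_{h,t}\sum_{s<t} \EE_{\pi_{\exp}(f^s,h)}\bigl|\EE_{x_{h+1}\mid x_h,a_h} l_{f^s}(f^t,\zeta_h)\bigr|^2} + \text{burn-in}.
\]
The key identity from Assumption~\ref{assu:discrepancy} is that $l_{f^s}((f_h,f_{h+1}),\zeta_h) - l_{f^s}((\cP_h f_{h+1}, f_{h+1}),\zeta_h)$ is a conditional martingale difference whose mean equals $\EE_{x_{h+1}\mid x_h,a_h} l_{f^s}(f,\zeta_h)$. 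A Freedman/Bernstein argument at scale $B_l$ therefore converts the conditional-variance sum to the in-sample squared-loss gap
\[
\sum_{s<t}\bigl[l_{f^s}((f_h^t,f_{h+1}^t),\zeta_h^s)^2 - l_{f^s}((\cP_h f_{h+1}^t,f_{h+1}^t),\zeta_h^s)^2\bigr]
\]
up to lower-order slack. That the latter is small under $f^t \sim p^t$ is precisely what the normalizer $-\log \EE_{\tilde f_h \sim p_h^0}\bigl[\exp(-\eta \sum_s l_{f^s}((\tilde f_h, f_{h+1}),\zeta_h^s)^2)\bigr]$ inside $L_h^{1:t-1}$ is engineered to provide: a second Donsker--Varadhan step with reference distribution supported on the approximate-completeness set $\cF_h(1/T^2, f_{h+1})$ pays $\kappa^c(1/T^2)$ and delivers $\EE_{f\sim p^t}[\eta \sum_s (\cdots)] \lesssim \kappa^c(1/T^2)$. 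Assembling the three pieces gives
\[
\EE[\reg(T)] \lesssim \frac{\kappa(1/T^2, p^0)}{\gamma} + B_l \sqrt{\frac{d_{\GEC}\, H\, T\, \kappa(1/T^2, p^0)}{\eta}},
\]
which the prescribed $\eta = 3/(10 B_l^2)$ and $\gamma = \sqrt{T\kappa/(B_l^2 d_{\GEC})}$ balance; for finite $\cH$ the uniform prior gives $\kappa \le \log|\cH|$.

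\textbf{Main obstacle.}
The hardest step is the \emph{decoupling} between the random $f^t \sim p^t$ (whose law depends on past data through $L_h^{1:t-1}$) and the training loss evaluated at that same $f^t$. Following the conditional-posterior trick of \citet{dann2021provably}, the key is a martingale argument against the natural filtration that replaces the GEC-style averages $\sum_{s<t} \EE_{\pi_{\exp}(f^s,h)}[\cdot]$ by empirical averages while paying only a $\kappa^c$-sized penalty, rather than a union bound over $\cH$. Extending this decoupling from the Bellman-residual setting of \citet{dann2021provably} to the general discrepancy $l$ of Assumption~\ref{assumption:l}, together with the approximate completeness operator $\cP_h$ of Assumption~\ref{assu:discrepancy}, is the chief new technical wrinkle; once it is in hand, the remaining calculations are routine bookkeeping.
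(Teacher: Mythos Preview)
Your proposal is on the right track and follows essentially the same route as the paper, which adapts the conditional-posterior analysis of \citet{dann2021provably}. Two points of execution are worth flagging. First, the paper does not separate the optimism gap from the training error as you do; instead it bounds their \emph{sum} by the single potential $\sum_h \Phi_h^t(f)-\gamma\Delta V_{1,f}(x_1)+\log p^t(f)$ (Lemma~\ref{lem:low}), applies the Gibbs variational principle once to replace $p^t$ by $\inf_p$, and then upper-bounds the infimum by restricting to $\cH(\epsilon)$ (Lemma~\ref{lem:upper}). Your ``cross-term absorption'' scheme would ultimately reduce to this, but the joint treatment is cleaner and avoids having to track how the loss terms interact with the optimism reference measure. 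Second, the decoupling is \emph{not} done by Freedman/Bernstein: that route would give high-probability statements and push you back toward a union bound over $\cH$. The paper instead uses the in-expectation exponential-martingale identity $\EE_{S^{t-1}}\exp\bigl(\sum_{s}\tilde\xi_h^s(f_h,f_{h+1},\zeta_h^s)\bigr)=1$ of \citet{zhang2006} (Lemma~\ref{lemma:zhang:free}), combined with the moment bound $\log\EE\exp(-\eta\Delta L)\le -0.25\eta(\EE\,l)^2$ obtained from $e^z\le 1+z+0.56z^2$ for $z\le 0.3$ (Lemma~\ref{lem:moment_control}); this is exactly where the choice $\eta B_l^2\le 0.3$ enters. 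Finally, your terminal display should read $T\kappa/\gamma$ rather than $\kappa/\gamma$: the dominant balance is $\frac{T\kappa}{\gamma}+\frac{\gamma d_{\GEC}}{\eta}$, which the stated $\gamma$ equates.
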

\begin{proof}
    See Appendix~\ref{sec:model_free_mdp} for a detailed proof.
\end{proof}

\begin{remark}
For Q-type problems, the exploration policy satisfies $\pi_{\exp}(f^t,h) = \pi_{f^t}$ for any $(t,h) \in [T] \times [H]$. Hence, at the  $t$-th episode, we can use $\pi^t$ to collect a length-$H$ trajectory, and thus further improves the regret bound by a factor of $\sqrt{H}$.
\end{remark}

To illustrate our theory more, we instantiate Theorem~\ref{thm:1} to  RL problems with low BE dimension and bilinear class. For ease of presentation, we present the results with a finite  $\cH$.

\begin{theorem}[BE Dimension] \label{thm:be}
    Suppose that Assumptions~\ref{assu:realizability} and~\ref{assu:discrepancy} hold. For the MDP problem with a  Q-type BE dimension $d_Q$ or a  V-type BE dimension $d_V$, if we set $\eta = 3/(10A^2)$, then the regret of Algorithm~\ref{alg:mdp_complete} satisfies:
      \$
   \EE[\reg(T)] \le \left\{\begin{array}{l} \tilde{\cO}\big(H \sqrt{d_{Q} T  \cdot \log |\cH|} \big)  \qquad  \text{(Q-type)} \\ \tilde{\cO}\big(H \sqrt{d_{V} A  T \cdot \log |\cH|} \big)  \,\quad \text{(V-type)} 
   \end{array}\right. .
   \$
\end{theorem}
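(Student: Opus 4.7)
The plan is to combine Lemma~\ref{lem:reduction_bed_dc}, which bounds the generalized eluder coefficient in terms of the BE dimension, with the generic model-free regret guarantee in Theorem~\ref{thm:1}. The theorem should follow as a direct corollary of these two ingredients, plus a short verification of the assumptions.

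First, I would verify the hypotheses of Theorem~\ref{thm:1}. Taking $l_{f'}((f_h, f_{h+1}),\zeta_h) = Q_{h,f}(x_h,a_h) - r_h - V_{h+1,f}(x_{h+1})$ as in Lemma~\ref{lem:reduction_bed_dc}, a short computation gives $\EE_{x_{h+1}\mid x_h,a_h}[l_{f'}(f,\zeta_h)] = \cE_h(f,x_h,a_h)$, so $|\EE_{x_{h+1}} l_{f'}|^2 = \ell_{f'}(f,\xi_h)$, confirming Assumption~\ref{assumption:l}. For Assumption~\ref{assu:discrepancy}, the Bellman completeness condition that is standardly built into BE-dimension problems furnishes the operator $\cP_h = \cT_h$: one directly verifies that $l_{f'}((f_h,f_{h+1}),\zeta_h) - l_{f'}((\cT_h f_{h+1}, f_{h+1}),\zeta_h) = \EE_{x_{h+1}\mid x_h,a_h}[l_{f'}((f_h,f_{h+1}),\zeta_h)]$ and that $f^* = Q^*$ is a fixed point of $\cT_h$ by the Bellman equation~\eqref{eq:Bellman:equation}. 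Boundedness holds with $B_l = \cO(1)$ because $Q_{h,f}, V_{h+1,f} \in [0,1]$ (using $\sum_h R_h \le 1$) and $r_h \in [0,1]$. Realizability (Assumption~\ref{assu:realizability}) is assumed throughout.

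Second, I would invoke Lemma~\ref{lem:reduction_bed_dc} at the scale $\epsilon = 1/\sqrt T$ to obtain
\[
  d_{\GEC} \le 2 d_Q H \log T \quad (\text{Q-type}), \qquad d_{\GEC} \le 2 d_V A H \log T \quad (\text{V-type}),
\]
with the exploration policies $\pi_{\exp}(f^s,h) = \pi_{f^s}$ in the Q-type case and $\pi_{\exp}(f^s,h) = \pi_{f^s} \circ_h \mathrm{Unif}(\cA)$ in the V-type case, which match those used by Algorithm~\ref{alg:mdp_complete}. The burn-in contributions $\min\{d_{\GEC}, HT\}$ and $HT\epsilon = H\sqrt T$ are both of lower order than the main $\sqrt T$-term and can be absorbed into $\tilde\cO(\cdot)$.

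Third, I would substitute into the finite-hypothesis conclusion of Theorem~\ref{thm:1}, $\EE[\reg(T)] = \cO(B_l \sqrt{d_{\GEC} H T \log|\cH|})$. With $B_l = \cO(1)$, Q-type problems give
\[
  \EE[\reg(T)] \le \tilde\cO\bigl(\sqrt{d_Q H^2 T \log|\cH|}\bigr) = \tilde\cO\bigl(H\sqrt{d_Q T \log|\cH|}\bigr),
\]
and V-type problems give
\[
  \EE[\reg(T)] \le \tilde\cO\bigl(\sqrt{d_V A H^2 T \log|\cH|}\bigr) = \tilde\cO\bigl(H\sqrt{d_V A T \log|\cH|}\bigr),
\]
exactly the claimed bounds. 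The only bookkeeping subtlety is matching the stated $\eta = 3/(10 A^2)$ with the prescription $\eta = 3/(10 B_l^2)$ of Theorem~\ref{thm:1}; since the analysis tolerates any $\eta$ no larger than $3/(10 B_l^2)$ up to constants and $A \ge 1$, the chosen $\eta$ is admissible and does not affect the order. The main nontrivial content, i.e., the reduction from BE dimension to GEC, is already handled by Lemma~\ref{lem:reduction_bed_dc}, so no further hard step is needed.
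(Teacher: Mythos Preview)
Your proposal is correct and follows exactly the paper's approach, which is the one-line combination of Lemma~\ref{lem:reduction_bed_dc} with Theorem~\ref{thm:1}. Your additional verification of Assumptions~\ref{assumption:l} and~\ref{assu:discrepancy} and the discussion of the $\eta$ parameter are helpful elaborations that the paper leaves implicit.
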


\begin{proof}
    This result is directly  implied by combining  Lemma~\ref{lem:reduction_bed_dc} and Theorem~\ref{thm:1}.
\end{proof}

\begin{theorem}[Bilinear Class] \label{thm:bilinear}
    Suppose that Assumptions~\ref{assu:realizability} and~\ref{assu:discrepancy} hold. For the bilinear class $(\cH, l, \Pi_{\exp})$ with information gain $\gamma_T(\epsilon, \cX)$, if we set $\eta = 3/(10B_l^2)$, then the regret of Algorithm~\ref{alg:mdp_complete} satisfies:
      \$
   \EE[\reg(T)] \le \left\{\begin{array}{l} \cO\Big(B_l \sqrt{\gamma_T(1/\sqrt{T}, \cX) \cdot H T \cdot\log|\cH|} \Big) \qquad  \text{(Q-type)} \\ \cO\Big(B_l \sqrt{\gamma_T(1/\sqrt{T}, \cX) \cdot A H T \cdot\log|\cH|} \Big)  \,\quad \text{(V-type)} 
   \end{array}\right. .  
   \$
\end{theorem}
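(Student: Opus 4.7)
The plan is to derive this as a direct corollary of Theorem~\ref{thm:1} combined with Lemma~\ref{lemma:relationship:bilinear}. First I would verify that the bilinear class setup falls within the scope of Algorithm~\ref{alg:mdp_complete}: the discrepancy function $\ell$ for the bilinear class is precisely $\ell_{f'}(f,\xi_h) = |\EE_{x_{h+1}|x_h,a_h} l_{f'}(f,\zeta_h)|^2$ where $l$ is the discrepancy function appearing in Definition~\ref{def:bilinear}. This is exactly the form required by Assumption~\ref{assumption:l}, so the model-free version of GPS-IDM can be instantiated with $l$ playing the role of the per-step loss in the posterior construction~\eqref{eq:L:model:free}. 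Assumption~\ref{assu:discrepancy} (generalized completeness and boundedness) is assumed in the theorem statement.

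Next, I would invoke Lemma~\ref{lemma:relationship:bilinear} to bound the generalized eluder coefficient. Setting the slack parameter $\epsilon = 1/\sqrt{T}$ inside GEC (matching the choice made in Theorem~\ref{thm:1}), Lemma~\ref{lemma:relationship:bilinear} gives
\[
d_{\GEC} \;=\; \GEC\bigl(\cH,\ell,\Pi_{\exp},1/\sqrt{T}\bigr)
\;\le\;
\begin{cases}
2\gamma_T(1/\sqrt{T},\cX), & \text{Q-type},\\
2A\cdot \gamma_T(1/\sqrt{T},\cX), & \text{V-type},
\end{cases}
\]
with the corresponding choice of exploration policy (i.e., $\pi_{\exp}(f^s,h)=\pi_{f^s}$ in the Q-type case and $\pi_{\exp}(f^s,h)=\pi_{f^s}\circ_h\mathrm{Unif}(\cA)$ in the V-type case), which matches the exploration policies used in Algorithm~\ref{alg:mdp_complete}. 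The additive burn-in $2\min\{d_{\GEC},HT\}+HT\epsilon = \tilde{\cO}(d_{\GEC}) + \sqrt{T}H$ is absorbed into the same order as the main term after square-rooting.

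Then I would apply Theorem~\ref{thm:1} with the stated choice $\eta=3/(10B_l^2)$ and $\gamma=\sqrt{T\kappa(1/T^2,p^0)/(B_l^2 d_{\GEC})}$, yielding the general bound
\[
\EE[\reg(T)] \le \cO\bigl(B_l\sqrt{d_{\GEC}\cdot H T\cdot \kappa(1/T^2,p^0)}\bigr).
\]
For a finite hypothesis class $\cH$ one may take the uniform prior $p^0$, whence $\kappa(1/T^2,p^0)\le \log|\cH|$ (as noted after Definition~\ref{def:kappa}). Substituting the two bounds on $d_{\GEC}$ from Lemma~\ref{lemma:relationship:bilinear} gives the two claimed rates for Q-type and V-type bilinear classes respectively.

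There is essentially no additional obstacle: the heavy lifting (analyzing the optimistic posterior, decoupling the in-sample training error from the out-of-sample prediction error, and the information-gain-style elliptical-potential argument controlling GEC) has already been carried out in the proofs of Theorem~\ref{thm:1} and Lemma~\ref{lemma:relationship:bilinear}. The only minor care needed is to check (i) that the loss $l$ used in the bilinear class definition is indeed the one plugged into the posterior~\eqref{eq:L:model:free} and satisfies the boundedness by $B_l$ required in Assumption~\ref{assu:discrepancy}, and (ii) that the choice of $\pi_{\exp}$ in Algorithm~\ref{alg:mdp_complete} matches the one used when bounding GEC, so that one-step uniform exploration at step $h$ in the V-type case produces the extra factor of $A$ in the final rate and nothing more. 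These are bookkeeping verifications rather than new mathematics, so the theorem follows.
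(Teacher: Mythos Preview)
Your proposal is correct and matches the paper's own proof, which states in one line that the result is directly implied by combining Lemma~\ref{lemma:relationship:bilinear} and Theorem~\ref{thm:1}. Your additional verifications (that the bilinear discrepancy function satisfies Assumption~\ref{assumption:l}, that the exploration policies align, and that the uniform prior gives $\kappa\le\log|\cH|$) are exactly the bookkeeping the paper leaves implicit.
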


\begin{proof}
    This result is direclty  implied by combining  Lemma~\ref{lemma:relationship:bilinear} and Theorem~\ref{thm:1}.
\end{proof}

Theorems~\ref{thm:be} and \ref{thm:bilinear} show that Algorithm~\ref{alg:mdp_complete} can tackle all problems with low BE dimension and bilinear class with sharp regret guarantees. This significantly improves the result in \citet{dann2021provably}, which can only solve the problems with low Q-type BE dimension.

\subsubsection{Model-based Approach} \label{alg:mdp_model_based}
In model-based RL, we choose the model-based hypothesis $\cH$ as in Example~\ref{example:hyp:model}, $\nb = 1$, and $\HH = [H]$ as inputs of Algorithm~\ref{alg:generic}. For the data collecting phase, we use the exploration policy $\pi_{\exp}(f^t,h)$ in Example~\ref{example:policy:q} (for Q-type problems) or Example~\ref{example:policy:v} (for V-type problems) to collect samples $\cD_h^t = (x_{h}^t, a_{h}^t, r_h^t, x_{h+1}^t)$ in Line~\ref{line:S} of Algorithm~\ref{alg:generic}. Then we calculate the logliklihood  $L_h^{1:t}$ in Line~\ref{line:L} of Algorithm~\ref{alg:generic} via 
$$ L_h^{1:t}(f) := \sum_{s = 1}^t L_h^s(f) :=  \eta \sum_{s = 1}^t \log \PP_{h, f} (x_{h+1}^s \mid x_{h}^s, a_{h}^s),$$ where $\eta$ is a tuning parameter. The pseudocode is given in Algorithm~\ref{alg:mdp}.

\begin{algorithm}[H]
    \caption{GPS-IDM (Model-based Version)}
    \begin{algorithmic}[1] \label{alg:mdp}
        \STATE {\textbf{Input: }} Hypothesis class $\cH$, prior $p^0$, parameters $\eta$ and $\gamma$
        \FOR{$t = 1, \cdots, T$}
        \STATE $p^t(f) \propto p^0(f) 
        \exp(\gamma V_f + \sum_{s=1}^{t-1} \sum_{h = 1}^H L_h^s(f))$
        \STATE Sample $f^t \sim p^t$
        \FOR{$h = 1, 2, \cdots, H$}
        \STATE Execute $\pi_{\exp}(f^t, h)$ and observe $\cD_{h}^t = (x_{h}^t, a_{h}^t, r_h^t, x_{h+1}^t)$
        \STATE Calculate $L_h^t(f) =  \eta \log \PP_{h, f} (x_{h+1}^t \mid x_{h}^t, a_{h}^t)$ 
        \ENDFOR
        \ENDFOR
    \end{algorithmic}
\end{algorithm}

Before presenting our theoretical results, we define the following quantity to measure how well the prior distribution $p^0$ covers the true model $f^*$.

\begin{definition} \label{def:omega:mdp}
    Given $c > 0$ and $p^0 \in \Delta_\cH$, we define
    \$
    \omega(c, p^0) = \inf_{\epsilon > 0} \big[c \epsilon - \log p^0\big(\cH(\epsilon) \big) \big],
    \$
    where $\cH(\epsilon) = \{f \in \cH : V^* - V_f \le \epsilon; \sup_{h,x,a} \mathrm{KL}(\PP_{h, f^*}(\cdot \mid x, a) \| \PP_{h, f}(\cdot \mid x, a) ) \le \epsilon \}$. 
\end{definition}

If $\cH$ is a finite model class containing the true model $f^*$ and we choose prior $p^0$ as the uniform distribution over $\cH$, then we have $\omega(c, p^0) \le \log |\cH|$ for any $c > 0$. Before stating our results for model-based RL, we specify the definition of GEC here. For the $\GEC(\cH, \ell, \Pi_{\exp}, \epsilon)$ in model-based MDP, we choose the model-based hypothesis class $\cH$ as in Example~\ref{example:hyp:model}, discrepancy function $\ell$ as in Example~\ref{example:ell:model}, exploration policy class as in Example~\ref{example:policy:q} or \ref{example:policy:v}, and the burn-in cost in Definition~\ref{def:gec} as $2\min\{d, HT\} + \epsilon H T$. Then we have the following theorem for model-based MDP with low GEC.

\begin{theorem}[Model-based MDP with Low GEC] \label{thm:mdp}
    Suppose Assumption~\ref{assu:realizability} holds. For MDP with GEC $d_{\GEC} = \GEC(\cH, \ell, \Pi_{\exp}, 1/\sqrt{H^2T})$, we choose $\eta = 1/2$ and $\gamma = 2\sqrt{\omega(HT, p^0)T/d_{\GEC}}$. For $T \ge d_{\GEC}$, the regret of Algorithm~\ref{alg:mdp} satisfies
    \$
    \EE[\reg(T)] \le 4 \sqrt{d_{\GEC} \cdot H T \cdot \omega(HT, p^0) } .
    \$
    If the model class $\cH$ is finite, we further obtain that
    \$
    \EE[\reg(T)] \le 4 \sqrt{d_{\GEC} \cdot HT \cdot \log|\cH|  }   .
    \$
\end{theorem}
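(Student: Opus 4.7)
The plan is to follow the standard optimism-plus-GEC template, but with the optimism constructed from the posterior rather than from explicit confidence sets. First, I would decompose the regret as
\[
\reg(T)=\sum_{t=1}^T\bigl(V^*-V_{f^t}\bigr)+\sum_{t=1}^T\bigl(V_{f^t}-V^{\pi^t}\bigr),
\]
and read the second sum directly off Definition \ref{def:gec} with the Hellinger discrepancy of Example \ref{example:ell:model}. This gives
\[
\sum_t(V_{f^t}-V^{\pi^t})\le\Bigl[d_{\GEC}\sum_{t,h}\sum_{s<t}\EE_{\pi_{\exp}(f^s,h)}D_H^2\bigl(\PP_{h,f^t},\PP_{h,f^*}\bigr)\Bigr]^{1/2}+2\min\{d_{\GEC},HT\}+HT/\sqrt{H^2T},
\]
so the entire task reduces to controlling the optimism gap $\sum_t(V^*-V_{f^t})$ together with the cumulative in-sample Hellinger error on the right-hand side.

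Both quantities will be extracted simultaneously from a single free-energy (log-partition) argument applied to the posterior of Algorithm \ref{alg:mdp}. Let $Z^t=\int p^0(f)\exp(\gamma V_f+\sum_{s<t,h}L_h^s(f))\,df$. Telescoping gives $\log(Z^{T+1}/Z^0)=\sum_t\log\EE_{f\sim p^t}\exp(\sum_hL_h^t(f))$. With $\eta=1/2$, the identity $\EE_{P^*}[\sqrt{P/P^*}]=1-D_H^2(P,P^*)\le\exp(-D_H^2(P,P^*))$ converts the per-step log-moment into a negative Hellinger contribution: conditionally on the history up to step $t$ and taking expectation over $(x_h^t,a_h^t,x_{h+1}^t)_{h\in[H]}$ generated by $\pi_{\exp}(f^t,\cdot)$,
\[
\EE\bigl[\log\EE_{f\sim p^t}\exp(\tfrac12\textstyle\sum_h\log\PP_{h,f}(x_{h+1}^t\mid x_h^t,a_h^t))\bigr]\le-\EE\sum_h\EE_{f\sim p^t}\EE_{\pi_{\exp}(f^t,h)}D_H^2(\PP_{h,f},\PP_{h,f^*})
\]
(after a Jensen step and reversing the role of $f$ and $f^*$, which is where realizability enters). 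Summing over $t$ yields an upper bound on the telescoped partition function in terms of the desired in-sample Hellinger sum.

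For the lower bound on $\log(Z^{T+1}/Z^0)$, I would restrict the integral defining $Z^{T+1}$ to the prior mass on $\cH(\epsilon)$ from Definition \ref{def:omega:mdp}. On this set, $V_f\ge V^*-\epsilon$ and the per-step KL to $\PP_{h,f^*}$ is at most $\epsilon$, so $\log Z^{T+1}\ge\gamma(V^*-\epsilon)-\tfrac12 HT\epsilon+\log p^0(\cH(\epsilon))$. Combining the two sides, rearranging, and taking infimum over $\epsilon$ produces
\[
\gamma\,\EE\sum_t(V^*-V_{f^t})+\tfrac12\EE\sum_{t,h}\EE_{f\sim p^t}\EE_{\pi_{\exp}(f^t,h)}D_H^2\le\omega(HT,p^0)+\text{lower-order}.
\]
An online-to-batch / potential-function argument (or equivalently the standard "observation-to-sample" conversion used in the MLE-posterior analysis) then replaces the in-expectation Hellinger of $f\sim p^t$ sampled at step $t$ by the cumulative $\sum_{s<t}$ appearing inside the GEC bound, up to a constant factor. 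Plugging this into the GEC inequality and choosing $\gamma=2\sqrt{\omega(HT,p^0)T/d_{\GEC}}$ balances the two contributions and yields the advertised $4\sqrt{d_{\GEC}\cdot HT\cdot\omega(HT,p^0)}$ bound; the finite-$\cH$ statement follows from the uniform-prior estimate $\omega(HT,p^0)\le\log|\cH|$.

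The main obstacle is step three, namely reconciling feel-good optimism with likelihood-based MLE concentration inside a single posterior. In Zhang's feel-good TS the loss is arbitrary and the analysis goes through a decoupling inequality that is blind to the likelihood geometry; in Agarwal-Zhang model-based posterior sampling the likelihood-Hellinger identity is the crux but there is no optimism term. Here the prior $\exp(\gamma V_f)$ shifts mass away from $f^*$, so when I lower-bound $Z^{T+1}$ by restricting to $\cH(\epsilon)$ I lose a factor that must be compensated by the $\gamma V^*$ term, and the resulting inequality only couples $\EE_{f\sim p^t}$-style Hellinger errors rather than the $f^t$-at-step-$t$ errors required by GEC. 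The delicate step is therefore the transfer from "posterior-averaged" to "sampled-$f^t$" training error, which has to be done while the optimistic reweighting is active; I would handle this by introducing an auxiliary reference measure and using the variational representation of KL to bound the change-of-measure cost by exactly the feel-good bonus, so that the $\gamma V_f$ term cancels cleanly on both sides.
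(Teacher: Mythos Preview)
Your high-level structure matches the paper: decompose $V^*-V^{\pi^t}=(V^*-V_{f^t})+(V_{f^t}-V^{\pi^t})$, apply GEC to the second piece after an AM--GM linearization $\gamma\sqrt{dX}\le X+\gamma^2d/4$, and then control $\gamma\sum_t(V^*-V_{f^t})+X$ jointly through the posterior. But the telescoping mechanism you propose has two concrete failures. First, the per-step inequality in your step~4 ultimately requires $\log\EE_{f\sim p^t}\exp\bigl(-\sum_h D_H^2(f,f^*)\bigr)\le -\EE_{f\sim p^t}\sum_h D_H^2(f,f^*)$ after the Hellinger identity, and Jensen gives the \emph{reverse}; there is no way to push the $\log$ through the posterior average in the direction you need. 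Second, even granting that step, telescoping $\log(Z^{T+1}/Z^1)$ produces only the \emph{diagonal} error $\sum_t\EE_{f\sim p^t}\EE_{\pi_{\exp}(f^t,h)}D_H^2$ (one term per $t$, evaluated on the freshly collected step-$t$ data), whereas GEC requires the \emph{cumulative} $\sum_t\sum_{s<t}\EE_{\pi_{\exp}(f^s,h)}D_H^2(f^t,f^*)$. No ``online-to-batch'' or potential-function argument bridges these: the latter has $\Theta(t)$ terms at step $t$ rather than one, and your telescoping produces no per-step slack to pay for them. (Incidentally, the ``posterior-averaged vs.\ sampled-$f^t$'' issue you flag is a non-issue --- since $f^t\sim p^t$, the two agree once you take expectation over the algorithm's randomness; the real obstacle is diagonal vs.\ cumulative.)

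The paper does not telescope. For each fixed $t$ it writes the Gibbs variational identity
\[
\EE_{f\sim p^t}\Bigl[-\textstyle\sum_{h}\sum_{s<t}\Delta L_h^s(f)-\gamma\Delta V_f+\log\frac{p^t(f)}{p^0(f)}\Bigr]=\inf_p\,\EE_{f\sim p}[\cdots],
\]
which already contains both the feel-good term $-\gamma\Delta V_f$ and the \emph{full past} likelihood $\sum_{s<t}\Delta L_h^s$. The upper bound by $\omega$ follows, as you say, by restricting the infimum to $p$ supported on $\cH(\epsilon)$. For the lower bound --- the step you are missing --- the paper introduces the recentered variables $\tilde\zeta_h^s(f)=\Delta L_h^s(f)-\log\EE_{\pi_{\exp}(f^s,h)}\exp(\Delta L_h^s(f))$, uses the exponential-martingale identity $\EE_{S^{t-1}}\exp\bigl(\sum_{h}\sum_{s<t}\tilde\zeta_h^s(f)\bigr)=1$, and applies Gibbs variational once more (against the prior $p^0$) to obtain $\EE_{S^{t-1}}\EE_{f\sim p^t}\bigl[-\sum_{h,s<t}\tilde\zeta_h^s(f)+\log(p^t/p^0)\bigr]\ge 0$. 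Rearranging, the subtracted centering constants $-\log\EE_{\pi_{\exp}(f^s,h)}\exp(\Delta L_h^s(f))$ become exactly $\EE_{\pi_{\exp}(f^s,h)}D_H^2(f,f^*)$ via the $\eta=1/2$ Hellinger identity --- so the cumulative $\sum_{s<t}$ structure appears \emph{for free}, with no conversion and no Jensen in the wrong direction. Summing this per-$t$ inequality over $t$ yields $\sum_t\EE[-\gamma\Delta V_{f^t}]+\sum_{t,h,s<t}\EE D_H^2\le T\,\omega$, exactly what the AM--GM-linearized GEC bound consumes.
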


\begin{proof}
    See Appendix~\ref{appendix:pf:model:mdp} for a detailed proof.
\end{proof}

Now we instantiate Theorem~\ref{thm:mdp} in an  MDP with low witness rank (cf. Definition~\ref{def:witness}).

\begin{theorem}[Witness Rank]
    For the MDP problem with Q-type witness rank $d_Q$ or V-type witness rank $d_V$, if we choose $\eta = 1/2$ and $\gamma = 2\sqrt{\omega(HT, p^0)T/d_{\GEC}}$, the regret of Algorithm~\ref{alg:mdp} satisfies
    \$
    \EE[\reg(T)] \le 4 \sqrt{d_{\GEC} \cdot H T \cdot \omega(HT, p^0) },
    \$
    where $d_{\GEC} \le  4Hd_Q \cdot  \log(1 + H T^{3/2}/\kappa_{\mathrm{wit}}^2)/\kappa_{\mathrm{wit}}^2$  for Q-type witness rank and $d_{\GEC} \leq 4AHd_V \cdot \log(1 + HT^{3/2}/\kappa_{\mathrm{wit}}^2)/\kappa_{\mathrm{wit}}^2$  for V-type one.  If the model class $\cH$ is finite, we further obtain that
    \$
    \EE[\reg(T)] \le 4 \sqrt{d_{\GEC} \cdot HT \cdot  \log|\cH|  }   .
    \$
\end{theorem}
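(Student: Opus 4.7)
The proof proposal is to directly compose the general model-based regret bound from Theorem~\ref{thm:mdp} with the witness-rank-to-GEC reduction in Lemma~\ref{lem:witness_rank}. First, I would verify that the inputs to Algorithm~\ref{alg:mdp} and those required by Lemma~\ref{lem:witness_rank} are compatible: both use the model-based hypothesis class from Example~\ref{example:hyp:model}, the Hellinger discrepancy from Example~\ref{example:ell:model}, and the exploration policies from Example~\ref{example:policy:q} (Q-type) or Example~\ref{example:policy:v} (V-type). Realizability (Assumption~\ref{assu:realizability}) is inherited from the witness-rank setup, so Theorem~\ref{thm:mdp} is applicable out of the box.

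Next, I would instantiate Lemma~\ref{lem:witness_rank} at the specific tolerance $\epsilon = 1/\sqrt{H^2 T}$ dictated by Theorem~\ref{thm:mdp} (which uses $d_{\GEC} = \GEC(\cH, \ell, \Pi_{\exp}, 1/\sqrt{H^2 T})$). Substituting this $\epsilon$ into the Q-type bound $\GEC(\cH, \ell, \Pi_{\exp}, \epsilon) \le 4 d_Q H \log(1 + T/(\epsilon \kappa_{\mathrm{wit}}^2))/\kappa_{\mathrm{wit}}^2$ from the lemma gives $T/(\epsilon \kappa_{\mathrm{wit}}^2) = H T^{3/2}/\kappa_{\mathrm{wit}}^2$, which yields exactly $d_{\GEC} \le 4 H d_Q \log(1 + H T^{3/2}/\kappa_{\mathrm{wit}}^2)/\kappa_{\mathrm{wit}}^2$ as claimed. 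The V-type case is identical except for the extra $A$ factor that arises because $\pi_{\exp}(f^s, h) = \pi_{f^s} \circ_h \mathrm{Unif}(\cA)$ picks up an importance-weighting factor over the action set.

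With the GEC upper bound in hand, I would apply Theorem~\ref{thm:mdp} under the stated hyperparameter choices $\eta = 1/2$ and $\gamma = 2\sqrt{\omega(HT, p^0) T / d_{\GEC}}$ to obtain
\[
\EE[\reg(T)] \le 4 \sqrt{d_{\GEC} \cdot HT \cdot \omega(HT, p^0)},
\]
which is the first claim. For the finite-$\cH$ statement, I would take $p^0$ to be the uniform distribution on $\cH$; by the remark immediately following Definition~\ref{def:omega:mdp}, this prior satisfies $\omega(HT, p^0) \le \log|\cH|$ for every $c = HT > 0$, and substituting this into the first bound gives the second claim.

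Since the proof is a direct composition of two already-established results, there is no real obstacle beyond bookkeeping. The only mildly delicate step is the arithmetic of plugging $\epsilon = 1/\sqrt{H^2 T}$ into the logarithm inside the GEC bound and verifying that the resulting expression is the $d_{\GEC}$ appearing in the theorem statement; everything else is inherited verbatim from Theorem~\ref{thm:mdp} and Lemma~\ref{lem:witness_rank}.
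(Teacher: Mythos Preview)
Your proposal is correct and matches the paper's own proof, which is simply the one-line statement ``This result is implied by Lemma~\ref{lem:witness_rank} and Theorem~\ref{thm:mdp}.'' You have additionally spelled out the bookkeeping step of plugging $\epsilon = 1/\sqrt{H^2T}$ into the logarithm to recover the exact $d_{\GEC}$ bounds in the theorem, which the paper leaves implicit.
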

\begin{proof}
    This result is implied by Lemma~\ref{lem:witness_rank} and Theorem~\ref{thm:mdp}.
\end{proof}
While the bilinear class can capture the witness rank class \citep{du2021bilinear}, the previous result (Theorem~\ref{thm:mdp}) relies on the (generalized) completeness assumption. In comparison, here we establish a $\sqrt{T}$ regret bound for low witness rank problems without the completeness assumption. This is due to the different choices of discrepancy functions and their corresponding algorithmic design and analyses.

\subsection{Results for Partially Observable Interactive Decision Making} \label{sec:result:partial}

In this section, we provide algorithms and theoretical results for partially observable decision making with low GEC, which includes generalized regular PSR (Lemma \ref{lemma:general:regular:psr}) and PO-bilinear (Lemma~\ref{lemma:po:bilinear}) as special cases.

\subsubsection{Generalized Regular PSR}
For PSR, we choose the model-based hypothesis $\cH$ in Example~\ref{example:hyp:model}, $\nb = 1$, and $\HH = \{0, 1, \cdots, H-1\}$ as inputs of Algorithm~\ref{alg:generic}. For the data collecting phase, we use the exploration policy $\pi_{\exp}(f^t,h)$ in Example~\ref{example:policy:psr} to collect a trajectory $\cD_{h}^t = \tau_{h}^t$ in Line~\ref{line:S} of Algorithm~\ref{alg:generic}. Then we calculate the $L_h^{1:t}$ in Line~\ref{line:L} of Algorithm~\ref{alg:generic} by $L_h^{1:t}(f) : = \sum_{s = 1}^t L_h^s := \sum_{s = 1}^t \eta \log \PP_{f} (\tau_{h}^s)$, where $\eta$ is a tuning parameter. The pseudocode is given in Algorithm~\ref{alg:psr}.

\begin{algorithm}[H]
    \caption{GPS-IDM (PSR Version)}
    \begin{algorithmic}[1] \label{alg:psr}
     \STATE {\textbf{Input: }} Hypothesis class $\cH$, prior $p^0$, parameters $\eta$ and $\gamma$
        \FOR{$t = 1, \cdots, T$}
        \STATE Define $p^t(f) \propto p^0(f) \cdot \exp(\gamma V_f + \sum_{s = 1}^{t - 1} \sum_{h = 0}^{H - 1} L_h^s(f))$ 
        \STATE Sample $f^t \sim p^t$
        \FOR{$h = 0, \cdots, H - 1$}
        \STATE Execute ${\pi}_{\exp}(f^t, h)$ to collect a trajectory $\cD_{h}^t = \tau_{h}^t$
        \STATE $L_h^t(f) = \eta \log \PP_f(\tau_h^t)$
        \ENDFOR
        \ENDFOR
    \end{algorithmic}
\end{algorithm}

Similar to the model-based MDP case (cf. Definition~\ref{def:omega:mdp}), we introduce the following quantity to describe how well the prior $p^0$ covers the true model $f^*$.

\begin{definition} \label{def:omega:psr}
    Given $c > 0$ and prior $p^0$, we define
    \$
    \omega(c, p^0) = \inf_{\epsilon > 0} \big[c \epsilon - \log p^0\big(\cH(\epsilon) \big) \big],
    \$
    where $\cH(\epsilon) = \{f \in \cH : V^* - V_f \le \epsilon; \sup_{\pi} \mathrm{KL}(\PP_{f^*}^{\pi}(\tau_H) \| \PP_{f}^{\pi}(\tau_H) ) \le \epsilon \}$. 
\end{definition}

If $\cH$ is a finite model class containing the true model $f^*$ and we choose prior $p^0$ as the uniform distribution over $\cH$, then we have $\omega(c, p^0) \le \log |\cH|$ for any $c > 0$. 

Now we provide a more general result for PSR with low $\GEC(\cH, \ell, \Pi_{\exp})$, which subsumes the generalized regular PSR as a special case (Lemma \ref{lemma:general:regular:psr}).  Here we choose the hypothesis class $\cH$ as in Example~\ref{example:hyp:model}, discrepancy function $\ell$ as in Example~\ref{example:ell:psr}, exploration policy class $\Pi_{\exp}$ as in  Example~\ref{example:policy:psr}, and the burn-in cost in Definition~\ref{def:gec} as $\sqrt{dHT}$.

\begin{theorem}[PSR with Low GEC] \label{thm:psr}
    Choose $\eta = 1/2$ and $\gamma = 2\sqrt{\omega(HT, p^0)T/d}$ in Algorithm~\ref{alg:psr}. For PSR with generalized eluder coefficient $d_{\GEC} = \GEC(\cH, \ell, \Pi_{\exp})$,  where $\cH$, $\ell$, and $\Pi_{\exp}$ are specified in Examples~\ref{example:hyp:model}, \ref{example:ell:psr}, and \ref{example:policy:psr}, the regret of Algorithm~\ref{alg:psr} satisfies
    \$
    \EE[\mathrm{\reg}(T)] \le 2 \sqrt{d_{\GEC} \cdot HT \cdot (H + \omega(HT, p^0) )}.
    \$
    If the model class $\cH$ is finite, we further have
    \$
    \EE[\mathrm{\reg}(T)] \le 2 \sqrt{d_{\GEC} \cdot HT \cdot (H + \log|\cH|)}.
    \$
\end{theorem}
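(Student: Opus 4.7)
The plan is to decompose the per-episode regret as $V^* - V^{\pi_{f^t}} = (V^* - V_{f^t}) + (V_{f^t} - V^{\pi_{f^t}})$, bound the ``prediction error'' $\sum_t (V_{f^t} - V^{\pi_{f^t}})$ via the PSR instance of the GEC bound, and jointly control the ``optimism gap'' $\sum_t (V^* - V_{f^t})$ together with the Hellinger in-sample training error through a posterior-likelihood analysis of the optimistic posterior $p^t$. With the PSR choices of $(\cH, \ell, \Pi_{\exp})$ from Examples~\ref{example:hyp:model}, \ref{example:ell:psr}, \ref{example:policy:psr}, and the $\sqrt{d_{\GEC} HT}$ burn-in allowed in Definition~\ref{def:gec}, the GEC inequality yields
\[
\sum_{t=1}^T \bigl(V_{f^t} - V^{\pi_{f^t}}\bigr) \;\le\; \sqrt{d_{\GEC}\,\cE_T} + \sqrt{d_{\GEC} H T}, \qquad \cE_T := \sum_{t=1}^T\sum_{h=0}^{H-1}\sum_{s=1}^{t-1} D_H^2\bigl(\PP_{f^t}^{\pi_{\exp}(f^s,h)}, \PP_{f^*}^{\pi_{\exp}(f^s,h)}\bigr).
\]
So it suffices to establish $\gamma\,\EE\bigl[\sum_t (V^* - V_{f^t})\bigr] + \EE[\cE_T] \lesssim \omega(HT, p^0) + \gamma$ and then to tune $\gamma$.

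To obtain the joint bound, I plan to follow the supermartingale-plus-Gibbs route from the proof of Theorem~\ref{thm:mdp}. Consider $M_t := \int p^0(df)\,\exp(\gamma V_f) \prod_{s<t,\,h} (\PP_f(\tau_h^s)/\PP_{f^*}(\tau_h^s))^{1/2}$. With $\eta = \tfrac12$, the identity $\EE_{\tau \sim \PP_{f^*}^\pi}[\sqrt{\PP_f(\tau)/\PP_{f^*}(\tau)}] = 1 - D_H^2(\PP_f^\pi, \PP_{f^*}^\pi) \le 1$ gives $\EE[M_{t+1}\mid \cF_t] \le M_t$, so $M_t$ is a non-negative supermartingale with $\EE[M_T] \le M_1 = \int p^0\,e^{\gamma V_f}\,df \le e^{\gamma}$, and thus $\EE[\log M_T] \le \gamma$ by Jensen. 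The matching lower bound comes from applying the Gibbs variational inequality $\log M_T \ge \EE_q[\text{integrand}] - \KL(q\|p^0)$ with $q$ the normalized restriction of $p^0$ to $\cH(\epsilon)$ from Definition~\ref{def:omega:psr}: since on $\cH(\epsilon)$ both $V^* - V_f \le \epsilon$ and $\sup_\pi \KL(\PP_{f^*}^\pi \| \PP_f^\pi) \le \epsilon$ hold, and since applying $\KL(\PP_{f^*}^\pi \| \PP_f^\pi) \ge 2 D_H^2(\PP_f^\pi, \PP_{f^*}^\pi)$ to the posterior-averaged log-likelihood at each intermediate time $t$ surfaces both $\gamma(V^* - \EE[V_{f^t}])$ (from the optimistic tilt $e^{\gamma V_f}$) and the cumulative squared Hellinger $\EE[\cE_T]$, subtracting the two sides and taking the infimum over $\epsilon$ converts the slack into $\omega(HT, p^0)$, yielding the claimed inequality.

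Plugging the joint bound into the regret decomposition and using Jensen's inequality $\EE[\sqrt{\cE_T}] \le \sqrt{\EE[\cE_T]}$ gives
\[
\EE[\reg(T)] \;\lesssim\; \tfrac{1}{\gamma}\bigl(\omega(HT, p^0) + \gamma\bigr) + \sqrt{d_{\GEC}\bigl(\omega(HT, p^0) + \gamma\bigr)} + \sqrt{d_{\GEC} H T}.
\]
Choosing $\gamma = 2\sqrt{\omega(HT, p^0)\,T/d_{\GEC}}$ as stipulated balances the $\omega/\gamma$ and $\sqrt{d_{\GEC}\,\omega}$ contributions against the $\sqrt{d_{\GEC} HT}$ burn-in and delivers $\EE[\reg(T)] \le 2\sqrt{d_{\GEC}\cdot HT \cdot (H + \omega(HT, p^0))}$; the finite-class corollary is then immediate from $\omega(HT, p^0) \le \log|\cH|$ under a uniform prior. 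The main obstacle will be obtaining the correct $\omega$-dependence in the joint bound: naively iterating Gibbs separately at each $t$ accumulates $T$ copies of $\omega(HT, p^0)$, so one must instead couple the $T$ Gibbs comparisons to the single supermartingale $M_t$, carefully tracking the cancellation of the $\tfrac12 \sum \log \PP_{f^*}$ normalizing terms across the log-partition telescoping so that the divergence control $\KL \ge 2 D_H^2$ is amortized only once---this delicate two-sample posterior bookkeeping, together with matching the restriction $\cH(\epsilon)$ to the two-part definition of $\omega(HT, p^0)$ in Definition~\ref{def:omega:psr}, is the same step that appears in the proof of Theorem~\ref{thm:mdp}.
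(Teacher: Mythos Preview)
Your overall plan---decompose the regret, apply the GEC bound to the prediction error, and control the optimism gap jointly with the Hellinger training error via a posterior/likelihood argument---matches the paper's approach. However, the joint bound you claim, $\gamma\,\EE\bigl[\sum_t (V^* - V_{f^t})\bigr] + \EE[\cE_T] \lesssim \omega(HT, p^0) + \gamma$, is not achievable, and the single-supermartingale route you describe cannot deliver it.

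The issue is that your supermartingale $M_t$ is a function of the collected trajectories only; it tracks the (tilted) log-partition function of the posterior but never touches the \emph{sampled} hypotheses $f^1,\dots,f^T$. The quantities you need---$\EE_{f^t\sim p^t}[V^* - V_{f^t}]$ and $\EE_{f^t\sim p^t}\bigl[D_H^2(\PP_{f^t}^{\pi_{\exp}(f^s,h)},\PP_{f^*}^{\pi_{\exp}(f^s,h)})\bigr]$ for each $t$ and each $s<t$---live inside every intermediate posterior $p^t$, not inside $\log M_T$. One Gibbs comparison at time $T$ gives information about $p^T$, not about $p^1,\dots,p^{T-1}$. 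Concretely, if your bound were true, plugging in $\gamma=2\sqrt{\omega T/d}$ would give $\EE[\reg(T)]=O(\sqrt{d_{\GEC} HT})$ with no $\omega$-dependence in the leading term, contradicting standard lower bounds that scale with $\log|\cH|$.

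The paper's proof (Lemmas~\ref{lemma:psr:key} and~\ref{lemma:psr:prior}, mirroring Lemmas~\ref{lemma:mdp:key} and~\ref{lemma:mdp:prior} for Theorem~\ref{thm:mdp}) performs the Gibbs variational comparison \emph{at each} $t$. For every $t$ one introduces, for each fixed $f$, the unit-mean martingale $\exp\bigl(\sum_{s<t,h}\tilde\zeta_h^s(f)\bigr)$ with $\tilde\zeta_h^s(f)=\tfrac12\log\tfrac{\PP_f(\tau_h^s)}{\PP_{f^*}(\tau_h^s)}-\log\EE_{\tau_h^s}\bigl[\sqrt{\PP_f/\PP_{f^*}}\bigr]$, then uses $\log\EE_\tau[\sqrt{\PP_f/\PP_{f^*}}]=\log(1-D_H^2)\le -D_H^2$ (this identity, not $\KL\ge 2D_H^2$, is what extracts the Hellinger) to show that the posterior expectation of the log-likelihood discrepancy dominates $-\gamma\Delta V_f+\sum_{s<t,h}D_H^2$. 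The same Gibbs form is then upper-bounded by $\omega(\gamma+\eta HT,p^0)$ via restriction to $\cH(\epsilon)$. Summing over $t$ yields the correct bound
\[
\gamma\sum_{t}\EE[V^*-V_{f^t}]+\EE[\cE_T]\;\le\;T\cdot\omega(HT,p^0).
\]
After the AM--GM step $\gamma\sqrt{d\,\cE_T}\le\cE_T+\tfrac{\gamma^2 d}{4}$ and the choice $\gamma=2\sqrt{\omega T/d}$, the $T\omega/\gamma$ and $\gamma d/4$ terms balance to produce the stated $2\sqrt{d_{\GEC}\cdot HT\cdot(H+\omega)}$. The ``naive'' accumulation of $T$ copies of $\omega$ that you flagged as an obstacle is in fact the correct bound, and it is exactly what the proof of Theorem~\ref{thm:mdp} does as well.
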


\begin{proof}
    See Appendix~\ref{appendix:pf:psr} for a detailed proof.
\end{proof}

By establishing a $\sqrt{T}$-regret bound that only scales polynomially in generalized eluder coefficient $d_{\GEC}$, episode length $H$, and $\omega$ defined in Definition~\ref{def:omega:psr},  Theorem~\ref{thm:psr} states that our algorithm (Algorithm~\ref{alg:psr})  solves the PSR with low GEC efficiently. To illustrate our theory more, we instantiate this theorem to the generalized regular PSR defined in Definition~\ref{def:general:regular:psr}.

\begin{theorem}[Generalized Regular PSR] \label{thm:general:psr}
     Choose $\eta = 1/2$ and $\gamma = 2\sqrt{\omega(HT, p^0)T/d}$ in Algorithm~\ref{alg:psr}. For $\alpha$-generalized regular PSR, the regret of Algorithms~\ref{alg:psr} satisfies
     \$
     \EE[\mathrm{\reg}(T)] \le \cO \biggl( \frac{A U_A^2 H \sqrt{d_{\psr} \cdot AT (H + \omega(HT, p^0) ) \cdot \iota}}{\alpha^2} \biggr),
     \$
      where $\iota = 2 \log\Big(1 + \frac{4 d_{\psr} A^2U_A^2  \delta_{\psr}^2 T }{ \alpha^4}\Big)$. If the model class is finite, we further have
      \$
      \EE[\mathrm{\reg}(T)] \le \cO \biggl( \frac{A U_A^2 H \sqrt{d_{\psr} \cdot AT (H + \log|\cH| ) \cdot \iota}}{\alpha^2} \biggr).
      \$
\end{theorem}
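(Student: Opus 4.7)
The plan is to obtain Theorem~\ref{thm:general:psr} as a direct corollary of the two ingredients already established in the paper: the generic regret bound for PSRs with low GEC (Theorem~\ref{thm:psr}) and the GEC upper bound for $\alpha$-generalized regular PSRs (Lemma~\ref{lemma:general:regular:psr}). There is essentially no new combinatorial or concentration argument needed at this stage, so the proof reduces to a careful substitution and simplification of constants.

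First, I would invoke Theorem~\ref{thm:psr} with the hypothesis class $\cH$, loss $\ell$, and exploration policy class $\Pi_{\exp}$ specified in Examples~\ref{example:hyp:model}, \ref{example:ell:psr}, and \ref{example:policy:psr}. This is legitimate because Algorithm~\ref{alg:psr} is exactly the instantiation used in Theorem~\ref{thm:psr}, and the hyperparameter choices $\eta = 1/2$ and $\gamma = 2\sqrt{\omega(HT,p^0)T/d}$ match. Theorem~\ref{thm:psr} then yields
\$
\EE[\reg(T)] \le 2\sqrt{d_{\GEC}\cdot HT\cdot(H + \omega(HT,p^0))},
\$
where $d_{\GEC} = \GEC(\cH,\ell,\Pi_{\exp})$.

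Next, I would plug in the GEC upper bound from Lemma~\ref{lemma:general:regular:psr}, which states that for any $\alpha$-generalized regular PSR,
\$
d_{\GEC} \le \cO\!\left(\frac{d_{\psr}\, A^3 U_A^4 H \cdot \iota}{\alpha^4}\right),
\$
with $\iota = 2\log(1 + 4 d_{\psr} A^2 U_A^2 \delta_{\psr}^2 T /\alpha^4)$. Substituting into the bound above and pulling the constants out of the square root gives
\$
\EE[\reg(T)] \le \cO\!\left(\frac{\sqrt{A^3 U_A^4 H^2}\cdot\sqrt{d_{\psr}\cdot T\cdot (H + \omega(HT,p^0))\cdot\iota}}{\alpha^2}\right) = \cO\!\left(\frac{A U_A^2 H\sqrt{d_{\psr}\cdot AT(H + \omega(HT,p^0))\cdot \iota}}{\alpha^2}\right),
\$
which matches the stated bound. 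The finite model class version follows because $\omega(HT,p^0) \le \log|\cH|$ when $p^0$ is uniform over a finite $\cH$ that contains $f^*$, as observed right after Definition~\ref{def:omega:psr}.

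Since the proof is a two-line composition, there is no serious obstacle here; the hard parts are absorbed into Theorem~\ref{thm:psr} (which requires the regret decomposition for optimistic posterior sampling together with an MLE-type guarantee using the loss $L_h^t$) and Lemma~\ref{lemma:general:regular:psr} (which exploits the two regularity conditions of Definition~\ref{def:general:regular:psr} to convert Hellinger errors on trajectories to errors on predictive states). The only things to double-check in writing up this corollary are: (i) the burn-in term in Definition~\ref{def:gec} used inside Theorem~\ref{thm:psr} is $\sqrt{dHT}$, which is compatible with the $\epsilon = 0$ choice implicit in Lemma~\ref{lemma:general:regular:psr}; (ii) the polynomial factors combine to give exactly $AU_A^2 H/\alpha^2$ outside the square root and $d_{\psr} AT(H+\omega)\iota$ inside; and (iii) the logarithmic $\iota$ is carried through unchanged. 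Once these three bookkeeping items are verified, the statement follows.
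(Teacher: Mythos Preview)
Your proposal is correct and matches the paper's own proof exactly: the paper simply states that the result is implied by Lemma~\ref{lemma:general:regular:psr} and Theorem~\ref{thm:psr}, which is precisely the substitution you carry out. Your bookkeeping on the polynomial factors, the burn-in term, and the finite-class reduction via $\omega(HT,p^0)\le \log|\cH|$ is all accurate.
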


\begin{proof}
    This result is implied by Lemma~\ref{lemma:general:regular:psr} and Theorem~\ref{thm:psr}.
\end{proof}
    
The regret bound in Theorem~\ref{thm:general:psr} is polynomial in the number of actions $A$, the number of core action sequences $U_A$, PSR rank $d_{\psr}$, the episode length $H$, and  $\omega$ specified in Definition~\ref{def:omega:psr}. Notably, this result only depends on the number of core action sequences instead of the number of core tests. 
We will provide a more in-depth discussion about   $\log|\cH|$ and $\iota$  in Appendices~\ref{appendix:log:H} and \ref{appendix:delta}, respectively. Since we have shown in Appendix~\ref{sec:example:general:psr} that generalized regular PSR  captures a rich class of POMDPs, Theorem~\ref{thm:general:psr} implies a sequence of corollaries by specializing to these special cases. We defer these corollaries to  Appendix~\ref{appendix:cor} for brevity. Finally, our regret upper bound is independent of the size of the observation space, which allows us to tackle the large observation space setting. See Appendix~\ref{appendix:large:obs} for details.

\subsubsection{PO-bilinear Class} 
Recall that $\cZ_h = (\cO \times \cA)^{\min\{h, M\}}$ and $\bar{\cZ}_h = \cZ_{h-1} \times \cO$. In PO-bilinear class, we cannot impose a natural completeness assumption like Assumption~\ref{assu:discrepancy} in MDPs. In specific, given a tuple $(\bar{z}_h, a_h, r_h, o_{h+1}) \in \bar{\cZ}_h \times \cA \times \RR \times \cO$, the conditional expectation $\EE_{o_{h+1} \mid \bar{z}_h, a_h}$ is not well-defined since $o_{h+1}$ depends on the full history instead of the most recent $M$ steps. To this end, we use the trajectory aggregation technique to solve the PO-bilinear class without any additional assumption. Here the trajectory aggregation technique means that the algorithm executes the exploration policy repeatedly to collect a batch of trajectories for each iteration and thus achieves a better estimation by the sample mean.

At the beginning of $t$-th episode, given loss functions $\{L_h^{1:t-1}\}_{h \in [H]}$, the learner determines a distribution $p^t$ over $\cH = \Pi \times \cG$ as Line~\ref{line:posterior} of Algorithm~\ref{alg:generic}. Then, the learner samples a $f^t \sim p^t$ and executes $\pi_{\exp}(f^t,h)$ defined in \eqref{eq:def:explore:pobilinear} for $\nb$ times and collect samples 
\# \label{eq:def:cD:pobilinear}
\cD_{h}^t = \{ \zeta_{i, h}^t = (\bar{z}_{i,h}^t, a_{i,h}^t, r_{i,h}^t, o_{i,h+1}^t)\}_{i \in [\nb]},
\# 
where $\bar{z}_{i,h}^t \in \cZ_h$. 
Finally, the loss function is calculated by
\# \label{eq:def:L:pobilinear}
L_h^{1:t}(f) = -\eta \sum_{s=1}^{t}\Big( \frac{1}{\nb} \sum_{i = 1}^{\nb} l(f, \zeta_{i,h}^s) \Big)^2,
\#
where $l$ is defined in \eqref{eq:def:loss:pobilinear} and $\eta$ is a tuning parameter. The pseudocode is given in Algorithm~\ref{alg:po:bilinear}.

\begin{algorithm}[H]
    \caption{GPS-IDM (PO-Bilinear Class Version)}
    \begin{algorithmic}[1] \label{alg:po:bilinear}
    \STATE {\textbf{Input: }} Hypothesis class $\cH = \Pi \times \cG$, prior $p^0$, parameters $\eta$ and $\gamma$
        \FOR{$t = 1, \cdots, T$}
        \STATE Define $p^t(f) \propto p^0(f)  \cdot \exp(\gamma V_{f} + \sum_{h = 1}^{H } L^{1:t-1}_h(f))$ 
        \STATE Sample $f^t \sim p^t$
        \FOR{$h = 1, \cdots, H$}
        \STATE Execute ${\pi}_{\exp}(f^t, h)$ for $\nb$ times to collect samples $\cD_{h}^t$ (cf. \eqref{eq:def:cD:pobilinear})
        
        \STATE Calculate $L_h^{1:t}(f)$ as in \eqref{eq:def:L:pobilinear}
        \ENDFOR
        \ENDFOR
    \end{algorithmic}
\end{algorithm}

For ease of presentation, we consider the case where $\cH = \Pi \times \cG$ is finite and choose a uniform prior over $\cH$. This could be readily generalized to the infinite class setting (cf. Appendix~\ref{appendix:log:H}). Then we provide the following theoretical result for PO-bilinear class.

\begin{theorem}[PO-bilinear Class] \label{thm:pobilinear}
Suppose the learning process consists of $K$ episodes in total and $K = \nb T H$. We set $\nb=(A^2\iota/(d_{\GEC} H ))^{1/3}K^{2/3}$, $T = (d_{\GEC}/ (\iota A^2 H^2))^{1/3}K^{1/3}$, and $\gamma = \eta = K^{1/3} \log(|\cG| |\Pi|)$, where $\iota = \log (|\cG| |\Pi| HK^2)$. Then the regret of Algorithm~\ref{alg:po:bilinear} satisfies:
$$
\EE [\reg(K)] \leq \cO((d_{\GEC}\cdot AHK)^{2/3} \cdot \iota^{1/3}),
$$
where $d_{\GEC} = \GEC(\cH, l^2, \Pi_{\exp}, 1/(HK))$.
\end{theorem}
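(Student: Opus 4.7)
The plan is to combine the optimistic posterior-sampling analysis with the PO-bilinear GEC bound of Lemma~\ref{lemma:po:bilinear}, while tracking the Monte Carlo noise introduced by the batch-averaged loss. Because the predicted policy $\pi^t$ is held fixed across the $\nb H$ episodes within one outer iteration, we have $\reg(K) = \nb H\sum_{t=1}^T (V^*-V^{\pi^t})$, and it suffices to control
\$
\sum_{t=1}^T(V^*-V^{\pi^t}) \;=\; \sum_{t=1}^T(V^*-V_{f^t}) \;+\; \sum_{t=1}^T(V_{f^t}-V^{\pi^t}).
\$
I would handle the first sum through the feel-good optimistic prior and the second through GEC, bridged by a uniform Hoeffding inequality that converts the empirical loss $\hat L_h^s(f):=\bigl(\tfrac{1}{\nb}\sum_{i=1}^{\nb}l(f,\zeta_{i,h}^s)\bigr)^2$ appearing in the posterior into the population quantity $\bigl(\EE_{\pi_{\exp}(f^s,h)}l(f,\zeta_h)\bigr)^2$ appearing in GEC.

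The core device is the Donsker--Varadhan identity applied to $p^t(f)\propto p^0(f)\exp\bigl(\gamma V_f-\eta\hat L^{1:t-1}(f)\bigr)$, where $\hat L^{1:t-1}(f):=\sum_{h,s<t}\hat L_h^s(f)$. Comparing the log-partition bound against the Dirac mass on $f^*$ (with $p^0$ uniform) gives the two-sided consequence
\$
V^*-\EE_{f^t\sim p^t}[V_{f^t}] &\;\le\; \tfrac{\log|\cH|}{\gamma} + \tfrac{\eta}{\gamma}\bigl(\hat L^{1:t-1}(f^*)-\EE_{p^t}[\hat L^{1:t-1}(f)]\bigr), \\
\EE_{p^t}[\hat L^{1:t-1}(f)] &\;\le\; \hat L^{1:t-1}(f^*) + \tfrac{\gamma}{\eta}\bigl(\EE_{p^t}[V_f]-V^*\bigr) + \tfrac{\log|\cH|}{\eta}.
\$
The PO-bilinear structure forces $\EE_{\pi_{\exp}(f^s,h)}l(f^*,\zeta_h)=\langle W_h(\pi^*,g^{\pi^*}),X_h(\pi^s)\rangle=0$ with $|l|\le \cO(A)$, so a direct variance calculation gives $\EE[\hat L_h^s(f^*)]\le \cO(A^2/\nb)$, whence $\sum_t\EE[\hat L^{1:t-1}(f^*)]\le \cO(T^2 HA^2/\nb)$. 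Combined with the trivial $V_{f^t}-V^*\le 1$, these two inequalities control $\sum_t\EE[V^*-V_{f^t}]$ by $\cO(T\iota/\gamma+\eta T^2HA^2/(\gamma\nb))$ and $\sum_t\EE[\hat L^{1:t-1}(f^t)]$ by $\cO(T\iota/\eta+T\gamma/\eta+T^2HA^2/\nb)$.

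For the in-hypothesis error, Lemma~\ref{lemma:po:bilinear} gives $\sum_t(V_{f^t}-V^{\pi^t})\le\sqrt{d_{\GEC}\bar\Sigma}+\cO(d_{\GEC}+HT\epsilon)$ with $\bar\Sigma:=\sum_{t,h,s<t}(\EE_{\pi_{\exp}(f^s,h)}l(f^t,\zeta_h))^2$. A union-bound Hoeffding over the finite class $\cH=\Pi\times\cG$ with failure probability $\cO(1/K^2)$ (giving rise to the logarithmic factor $\iota$) converts each $(\EE l)^2$ pointwise into $2\hat L_h^s(f^t)+\cO(A^2\iota/\nb)$, so $\bar\Sigma\le 2\sum_t\hat L^{1:t-1}(f^t)+\cO(T^2HA^2\iota/\nb)$. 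Plugging the posterior control of $\sum_t\hat L^{1:t-1}(f^t)$ into this and recombining with the optimism contribution produces a bound of the form
\$
\EE[\reg(K)] \;\lesssim\; \nb H\cdot\Bigl(\tfrac{T\iota}{\gamma}+\tfrac{\eta T^2HA^2}{\gamma\nb}+\sqrt{d_{\GEC}\bigl(\tfrac{T\iota}{\eta}+\tfrac{T\gamma}{\eta}+\tfrac{T^2HA^2\iota}{\nb}\bigr)}+d_{\GEC}\Bigr).
\$
Substituting the prescribed $\gamma=\eta=K^{1/3}\iota$, $\nb=(A^2\iota/(d_{\GEC}H))^{1/3}K^{2/3}$, $T=(d_{\GEC}/(A^2H^2\iota))^{1/3}K^{1/3}$ (which jointly satisfy $K=\nb TH$) equalizes every term simultaneously and collapses the right-hand side to $\cO((d_{\GEC}AHK)^{2/3}\iota^{1/3})$.

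I expect the main obstacle to be the careful bookkeeping of the Monte Carlo variance $A^2/\nb$: it must be propagated through the optimism inequality, where it inflates $\EE[\hat L^{1:t-1}(f^*)]$, and through the GEC-level Hoeffding, where it inflates $\bar\Sigma$ relative to the purely empirical $\sum_t\hat L^{1:t-1}(f^t)$. This is exactly the cost that forces $\nb$ to exceed one and degrades the rate from the $\sqrt{K}$ available in fully observable settings---where the conditional expectation implicit in $\ell$ is already an unbiased single-sample observation---to the $K^{2/3}$ in the theorem, and making all four occurrences of $A^2/\nb$ balance under the constraint $K=\nb TH$ is the technical heart of the argument.
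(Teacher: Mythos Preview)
Your overall ingredients---the Donsker--Varadhan inequality against $\delta_{f^*}$, the uniform Hoeffding bound over $\cH$, and Lemma~\ref{lemma:po:bilinear}---match the paper's, but the way you decouple the optimism term from the in-sample loss introduces a rate-destroying term. Concretely, you invoke the crude bound $V_{f^t}-V^*\le 1$ to extract $\sum_t\EE_{p^t}[\hat L^{1:t-1}(f)]\le \cO(T\iota/\eta + T\gamma/\eta + T^2HA^2/\nb)$ from your second DV consequence. With the prescribed $\gamma=\eta$, the middle contribution is $T$, and feeding it through $\sqrt{d_{\GEC}\bar\Sigma}$ yields $\nb H\sqrt{d_{\GEC}T}=(d_{\GEC}AH)^{1/3}\iota^{1/6}K^{5/6}$, which dominates $(d_{\GEC}AHK)^{2/3}\iota^{1/3}$ whenever $K>(d_{\GEC}AH)^2\iota$. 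So your final claim that ``every term equalizes to $\cO((d_{\GEC}AHK)^{2/3}\iota^{1/3})$'' is false asymptotically.

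The paper avoids this by \emph{not} separating the two contributions. It applies AM--GM to the GEC square root with weight $\mu=\eta/(2\gamma)$, obtaining $\sum_t(V_{f^t}-V^{\pi^t})\le (\eta/\gamma)\sum_t\hat L^{1:t-1}(f^t)+\gamma d_{\GEC}/(2\eta)+\text{(Hoeffding)}$, and then bounds $-\gamma\Delta V_{f^t}+\eta\hat L^{1:t-1}(f^t)$ \emph{jointly} via the Gibbs variational principle (adding $\KL(p^t\|p^0)\ge 0$ and taking the infimum at $p=\delta_{f^*}$). This is exactly your single DV inequality before you split it: the $-\EE_{p^t}[\hat L^{1:t-1}(f)]$ that you discard in your first ``consequence'' precisely cancels the $+\EE_{p^t}[\hat L^{1:t-1}(f)]$ arising from the AM--GM'd GEC term, so no $T\gamma/\eta$ ever appears and the crude $V_{f^t}-V^*\le 1$ is never needed. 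Your approach is easily repaired by retaining this cancellation---keep the two DV consequences as the one coupled inequality they really are, and match the AM--GM weight to it---after which the rest of your bookkeeping goes through to give the stated $K^{2/3}$ rate.
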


\begin{proof}
    See Appendix~\ref{appendix:pf:pobilinear} for a detailed proof.
\end{proof}


\section{Conclusion}
In this work, we introduce a novel complexity measure for  online interactive decision making named generalized eluder coefficient (GEC). 
GEC characterizes the  challenge of exploration by comparing the error of predicting the updated policy with the in-sample training error evaluated on the historical dataset. 
When specialized to particular examples of MDP, POMDP, PSR models, GEC can be bounded by other complexity measures such as Bellman eluder dimension \citep{jin2021bellman}, bilinear class \citep{du2021bilinear}, witness rank \citep{sun2019model}, PSR rank \citep{zhan2022pac}, and rank of PO-bilinear class \citep{uehara2022provably}. 
Furthermore, in addition to a new complexity measure, we propose a generic posterior sampling algorithm that can be implemented in both the model-free and model-based fashion. 
Moreover, this algorithm can be applied to both fully observable  and partially observable settings with provable sample efficiency guarantees. 
Compared to the vanilla posterior sampling algorithm, the proposed method uses an optimistic prior distribution that favors hypotheses with higher values. 
When the hypothesis space $\cH$ is finite, our posterior sampling achieves a $\sqrt{ d_{\mathrm{GEC}} \cdot H T \cdot \log | \cH| } $ regret, when $d_{\mathrm{GEC} } $ is GEC. 
When restricted to special MDP, POMDP, and PSR examples, such a regret bound matches those established in the literature. 
For future work, it would be interesting to understand the tightness of GEC by establishing a regret lower bound.  
In addition, for algorithm design, it would be interesting to develop posterior sampling algorithms that are provably efficient for multi-agent decision making systems with partial/private observations. 

\section*{Acknowledgements}

The authors would like to thank Yu Bai and Song Mei for pointing
out a technical issue in the first version regarding the $\ell_2$ eluder technique.

\bibliographystyle{ims}
\bibliography{graphbib}

\begin{thebibliography}{85}
\expandafter\ifx\csname natexlab\endcsname\relax\def\natexlab#1{#1}\fi
\expandafter\ifx\csname url\endcsname\relax
  \def\url#1{\texttt{#1}}\fi
\expandafter\ifx\csname urlprefix\endcsname\relax\def\urlprefix{}\fi

\bibitem[{Abbasi-Yadkori et~al.(2011)Abbasi-Yadkori, P{\'a}l and
  Szepesv{\'a}ri}]{abbasi2011improved}
\text{Abbasi-Yadkori, Y.}, \text{P{\'a}l, D.} and \text{Szepesv{\'a}ri, C.}
  (2011).
\newblock Improved algorithms for linear stochastic bandits.
\newblock \textit{Advances in neural information processing systems},
  \textbf{24}.

\bibitem[{Agarwal and Zhang(2022{\natexlab{a}})}]{agarwal2022model}
\text{Agarwal, A.} and \text{Zhang, T.} (2022{\natexlab{a}}).
\newblock Model-based rl with optimistic posterior sampling: Structural
  conditions and sample complexity.
\newblock \textit{arXiv preprint arXiv:2206.07659}.

\bibitem[{Agarwal and Zhang(2022{\natexlab{b}})}]{agarwal2022non}
\text{Agarwal, A.} and \text{Zhang, T.} (2022{\natexlab{b}}).
\newblock Non-linear reinforcement learning in large action spaces: Structural
  conditions and sample-efficiency of posterior sampling.
\newblock \textit{arXiv preprint arXiv:2203.08248}.

\bibitem[{Agrawal and Jia(2017)}]{agrawal2017optimistic}
\text{Agrawal, S.} and \text{Jia, R.} (2017).
\newblock Optimistic posterior sampling for reinforcement learning: worst-case
  regret bounds.
\newblock \textit{Advances in Neural Information Processing Systems},
  \textbf{30}.

\bibitem[{Auer(2002)}]{auer2002using}
\text{Auer, P.} (2002).
\newblock Using confidence bounds for exploitation-exploration trade-offs.
\newblock \textit{Journal of Machine Learning Research}, \textbf{3} 397--422.

\bibitem[{Auer et~al.(2002)Auer, Cesa-Bianchi and Fischer}]{auer2002finite}
\text{Auer, P.}, \text{Cesa-Bianchi, N.} and \text{Fischer, P.} (2002).
\newblock Finite-time analysis of the multiarmed bandit problem.
\newblock \textit{Machine learning}, \textbf{47} 235--256.

\bibitem[{Auer et~al.(2008)Auer, Jaksch and Ortner}]{auer2008near}
\text{Auer, P.}, \text{Jaksch, T.} and \text{Ortner, R.} (2008).
\newblock Near-optimal regret bounds for reinforcement learning.
\newblock \textit{Advances in neural information processing systems},
  \textbf{21}.

\bibitem[{Ayoub et~al.(2020)Ayoub, Jia, Szepesvari, Wang and
  Yang}]{ayoub2020model}
\text{Ayoub, A.}, \text{Jia, Z.}, \text{Szepesvari, C.}, \text{Wang, M.} and
  \text{Yang, L.} (2020).
\newblock Model-based reinforcement learning with value-targeted regression.
\newblock In \textit{International Conference on Machine Learning}. PMLR.

\bibitem[{Azar et~al.(2017)Azar, Osband and Munos}]{azar2017minimax}
\text{Azar, M.~G.}, \text{Osband, I.} and \text{Munos, R.} (2017).
\newblock Minimax regret bounds for reinforcement learning.
\newblock In \textit{International Conference on Machine Learning}. PMLR.

\bibitem[{Azizzadenesheli et~al.(2016)Azizzadenesheli, Lazaric and
  Anandkumar}]{azizzadenesheli2016reinforcement}
\text{Azizzadenesheli, K.}, \text{Lazaric, A.} and \text{Anandkumar, A.}
  (2016).
\newblock Reinforcement learning of pomdps using spectral methods.
\newblock In \textit{Conference on Learning Theory}. PMLR.

\bibitem[{Azuma(1967)}]{azuma1967weighted}
\text{Azuma, K.} (1967).
\newblock Weighted sums of certain dependent random variables.
\newblock \textit{Tohoku Mathematical Journal, Second Series}, \textbf{19}
  357--367.

\bibitem[{Boots et~al.(2011)Boots, Siddiqi and Gordon}]{boots2011closing}
\text{Boots, B.}, \text{Siddiqi, S.~M.} and \text{Gordon, G.~J.} (2011).
\newblock Closing the learning-planning loop with predictive state
  representations.
\newblock \textit{The International Journal of Robotics Research}, \textbf{30}
  954--966.

\bibitem[{Cai et~al.(2020)Cai, Yang, Jin and Wang}]{cai2020provably}
\text{Cai, Q.}, \text{Yang, Z.}, \text{Jin, C.} and \text{Wang, Z.} (2020).
\newblock Provably efficient exploration in policy optimization.
\newblock In \textit{International Conference on Machine Learning}. PMLR.

\bibitem[{Cai et~al.(2022)Cai, Yang and Wang}]{cai2022reinforcement}
\text{Cai, Q.}, \text{Yang, Z.} and \text{Wang, Z.} (2022).
\newblock Reinforcement learning from partial observation: Linear function
  approximation with provable sample efficiency.
\newblock In \textit{International Conference on Machine Learning}. PMLR.

\bibitem[{Chen et~al.(2022{\natexlab{a}})Chen, Bai and Mei}]{chen2022partially}
\text{Chen, F.}, \text{Bai, Y.} and \text{Mei, S.} (2022{\natexlab{a}}).
\newblock Partially observable rl with b-stability: Unified structural
  condition and sharp sample-efficient algorithms.
\newblock \textit{arXiv preprint arXiv:2209.14990}.

\bibitem[{Chen et~al.(2022{\natexlab{b}})Chen, Mei and Bai}]{chen2022unified}
\text{Chen, F.}, \text{Mei, S.} and \text{Bai, Y.} (2022{\natexlab{b}}).
\newblock Unified algorithms for rl with decision-estimation coefficients:
  No-regret, pac, and reward-free learning.
\newblock \textit{arXiv preprint arXiv:2209.11745}.

\bibitem[{Chen et~al.(2022{\natexlab{c}})Chen, Li, Yuan, Gu and
  Jordan}]{chen2022general}
\text{Chen, Z.}, \text{Li, C.~J.}, \text{Yuan, A.}, \text{Gu, Q.} and
  \text{Jordan, M.~I.} (2022{\natexlab{c}}).
\newblock A general framework for sample-efficient function approximation in
  reinforcement learning.
\newblock \textit{arXiv preprint arXiv:2209.15634}.

\bibitem[{Chua et~al.(2018)Chua, Calandra, McAllister and
  Levine}]{chua2018deep}
\text{Chua, K.}, \text{Calandra, R.}, \text{McAllister, R.} and \text{Levine,
  S.} (2018).
\newblock Deep reinforcement learning in a handful of trials using
  probabilistic dynamics models.
\newblock \textit{Advances in neural information processing systems},
  \textbf{31}.

\bibitem[{Dani et~al.(2008)Dani, Hayes and Kakade}]{dani2008stochastic}
\text{Dani, V.}, \text{Hayes, T.~P.} and \text{Kakade, S.~M.} (2008).
\newblock Stochastic linear optimization under bandit feedback.

\bibitem[{Dann et~al.(2017)Dann, Lattimore and Brunskill}]{dann2017unifying}
\text{Dann, C.}, \text{Lattimore, T.} and \text{Brunskill, E.} (2017).
\newblock Unifying pac and regret: Uniform pac bounds for episodic
  reinforcement learning.
\newblock \textit{Advances in Neural Information Processing Systems},
  \textbf{30}.

\bibitem[{Dann et~al.(2021)Dann, Mohri, Zhang and Zimmert}]{dann2021provably}
\text{Dann, C.}, \text{Mohri, M.}, \text{Zhang, T.} and \text{Zimmert, J.}
  (2021).
\newblock A provably efficient model-free posterior sampling method for
  episodic reinforcement learning.
\newblock \textit{Advances in Neural Information Processing Systems},
  \textbf{34} 12040--12051.

\bibitem[{Du et~al.(2021)Du, Kakade, Lee, Lovett, Mahajan, Sun and
  Wang}]{du2021bilinear}
\text{Du, S.}, \text{Kakade, S.}, \text{Lee, J.}, \text{Lovett, S.},
  \text{Mahajan, G.}, \text{Sun, W.} and \text{Wang, R.} (2021).
\newblock Bilinear classes: A structural framework for provable generalization
  in rl.
\newblock In \textit{International Conference on Machine Learning}. PMLR.

\bibitem[{Du et~al.(2019)Du, Krishnamurthy, Jiang, Agarwal, Dudik and
  Langford}]{du2019provably}
\text{Du, S.}, \text{Krishnamurthy, A.}, \text{Jiang, N.}, \text{Agarwal, A.},
  \text{Dudik, M.} and \text{Langford, J.} (2019).
\newblock Provably efficient rl with rich observations via latent state
  decoding.
\newblock In \textit{International Conference on Machine Learning}. PMLR.

\bibitem[{Efroni et~al.(2022)Efroni, Jin, Krishnamurthy and
  Miryoosefi}]{efroni2022provable}
\text{Efroni, Y.}, \text{Jin, C.}, \text{Krishnamurthy, A.} and
  \text{Miryoosefi, S.} (2022).
\newblock Provable reinforcement learning with a short-term memory.
\newblock \textit{arXiv preprint arXiv:2202.03983}.

\bibitem[{Foster et~al.(2021)Foster, Kakade, Qian and
  Rakhlin}]{foster2021statistical}
\text{Foster, D.~J.}, \text{Kakade, S.~M.}, \text{Qian, J.} and \text{Rakhlin,
  A.} (2021).
\newblock The statistical complexity of interactive decision making.
\newblock \textit{arXiv preprint arXiv:2112.13487}.

\bibitem[{Golowich et~al.(2022{\natexlab{a}})Golowich, Moitra and
  Rohatgi}]{golowich2022learning}
\text{Golowich, N.}, \text{Moitra, A.} and \text{Rohatgi, D.}
  (2022{\natexlab{a}}).
\newblock Learning in observable pomdps, without computationally intractable
  oracles.
\newblock \textit{arXiv preprint arXiv:2206.03446}.

\bibitem[{Golowich et~al.(2022{\natexlab{b}})Golowich, Moitra and
  Rohatgi}]{golowich2022planning}
\text{Golowich, N.}, \text{Moitra, A.} and \text{Rohatgi, D.}
  (2022{\natexlab{b}}).
\newblock Planning in observable pomdps in quasipolynomial time.
\newblock \textit{arXiv preprint arXiv:2201.04735}.

\bibitem[{Guo et~al.(2016)Guo, Doroudi and Brunskill}]{guo2016pac}
\text{Guo, Z.~D.}, \text{Doroudi, S.} and \text{Brunskill, E.} (2016).
\newblock A pac rl algorithm for episodic pomdps.
\newblock In \textit{Artificial Intelligence and Statistics}. PMLR.

\bibitem[{Hefny et~al.(2015)Hefny, Downey and Gordon}]{hefny2015supervised}
\text{Hefny, A.}, \text{Downey, C.} and \text{Gordon, G.~J.} (2015).
\newblock Supervised learning for dynamical system learning.
\newblock \textit{Advances in neural information processing systems},
  \textbf{28}.

\bibitem[{Jaeger(2000)}]{jaeger2000observable}
\text{Jaeger, H.} (2000).
\newblock Observable operator models for discrete stochastic time series.
\newblock \textit{Neural computation}, \textbf{12} 1371--1398.

\bibitem[{Jaksch et~al.(2010)Jaksch, Ortner and Auer}]{jaksch2010near}
\text{Jaksch, T.}, \text{Ortner, R.} and \text{Auer, P.} (2010).
\newblock Near-optimal regret bounds for reinforcement learning.
\newblock \textit{Journal of Machine Learning Research}, \textbf{11}
  1563--1600.

\bibitem[{Jiang et~al.(2017)Jiang, Krishnamurthy, Agarwal, Langford and
  Schapire}]{jiang@2017}
\text{Jiang, N.}, \text{Krishnamurthy, A.}, \text{Agarwal, A.}, \text{Langford,
  J.} and \text{Schapire, R.~E.} (2017).
\newblock Contextual decision processes with low {B}ellman rank are
  {PAC}-learnable.
\newblock In \textit{Proceedings of the 34th International Conference on
  Machine Learning}, vol.~70 of \textit{Proceedings of Machine Learning
  Research}. PMLR.

\bibitem[{Jiang et~al.(2018)Jiang, Kulesza and Singh}]{jiang2018completing}
\text{Jiang, N.}, \text{Kulesza, A.} and \text{Singh, S.} (2018).
\newblock Completing state representations using spectral learning.
\newblock \textit{Advances in Neural Information Processing Systems},
  \textbf{31}.

\bibitem[{Jin et~al.(2018)Jin, Allen-Zhu, Bubeck and Jordan}]{jin2018q}
\text{Jin, C.}, \text{Allen-Zhu, Z.}, \text{Bubeck, S.} and \text{Jordan,
  M.~I.} (2018).
\newblock Is q-learning provably efficient?
\newblock \textit{Advances in neural information processing systems},
  \textbf{31}.

\bibitem[{Jin et~al.(2020{\natexlab{a}})Jin, Kakade, Krishnamurthy and
  Liu}]{jin2020sample}
\text{Jin, C.}, \text{Kakade, S.}, \text{Krishnamurthy, A.} and \text{Liu, Q.}
  (2020{\natexlab{a}}).
\newblock Sample-efficient reinforcement learning of undercomplete pomdps.
\newblock \textit{Advances in Neural Information Processing Systems},
  \textbf{33} 18530--18539.

\bibitem[{Jin et~al.(2021)Jin, Liu and Miryoosefi}]{jin2021bellman}
\text{Jin, C.}, \text{Liu, Q.} and \text{Miryoosefi, S.} (2021).
\newblock Bellman eluder dimension: New rich classes of rl problems, and
  sample-efficient algorithms.
\newblock \textit{Advances in Neural Information Processing Systems},
  \textbf{34}.

\bibitem[{Jin et~al.(2020{\natexlab{b}})Jin, Yang, Wang and
  Jordan}]{jin2020provably}
\text{Jin, C.}, \text{Yang, Z.}, \text{Wang, Z.} and \text{Jordan, M.~I.}
  (2020{\natexlab{b}}).
\newblock Provably efficient reinforcement learning with linear function
  approximation.
\newblock In \textit{Conference on Learning Theory}. PMLR.

\bibitem[{Kakade et~al.(2020)Kakade, Krishnamurthy, Lowrey, Ohnishi and
  Sun}]{kakade2020information}
\text{Kakade, S.}, \text{Krishnamurthy, A.}, \text{Lowrey, K.}, \text{Ohnishi,
  M.} and \text{Sun, W.} (2020).
\newblock Information theoretic regret bounds for online nonlinear control.
\newblock \textit{Advances in Neural Information Processing Systems},
  \textbf{33} 15312--15325.

\bibitem[{Kakade and Langford(2002)}]{kakade2002approximate}
\text{Kakade, S.} and \text{Langford, J.} (2002).
\newblock Approximately optimal approximate reinforcement learning.
\newblock In \textit{Proceedings of the Nineteenth International Conference on
  Machine Learning}. ICML '02.

\bibitem[{Krishnamurthy et~al.(2016)Krishnamurthy, Agarwal and
  Langford}]{krishnamurthy2016pac}
\text{Krishnamurthy, A.}, \text{Agarwal, A.} and \text{Langford, J.} (2016).
\newblock Pac reinforcement learning with rich observations.
\newblock \textit{Advances in Neural Information Processing Systems},
  \textbf{29}.

\bibitem[{Kwon et~al.(2021)Kwon, Efroni, Caramanis and Mannor}]{kwon2021rl}
\text{Kwon, J.}, \text{Efroni, Y.}, \text{Caramanis, C.} and \text{Mannor, S.}
  (2021).
\newblock Rl for latent mdps: Regret guarantees and a lower bound.
\newblock \textit{Advances in Neural Information Processing Systems},
  \textbf{34} 24523--24534.

\bibitem[{Li et~al.(2021)Li, Shi, Chen, Gu and Chi}]{li2021breaking}
\text{Li, G.}, \text{Shi, L.}, \text{Chen, Y.}, \text{Gu, Y.} and \text{Chi,
  Y.} (2021).
\newblock Breaking the sample complexity barrier to regret-optimal model-free
  reinforcement learning.
\newblock \textit{Advances in Neural Information Processing Systems},
  \textbf{34} 17762--17776.

\bibitem[{Littman and Sutton(2001)}]{littman2001predictive}
\text{Littman, M.} and \text{Sutton, R.~S.} (2001).
\newblock Predictive representations of state.
\newblock \textit{Advances in neural information processing systems},
  \textbf{14}.

\bibitem[{Liu et~al.(2022{\natexlab{a}})Liu, Chung, Szepesv{\'a}ri and
  Jin}]{liu2022partially}
\text{Liu, Q.}, \text{Chung, A.}, \text{Szepesv{\'a}ri, C.} and \text{Jin, C.}
  (2022{\natexlab{a}}).
\newblock When is partially observable reinforcement learning not scary?
\newblock \textit{arXiv preprint arXiv:2204.08967}.

\bibitem[{Liu et~al.(2022{\natexlab{b}})Liu, Netrapalli, Szepesvari and
  Jin}]{liu2022optimistic}
\text{Liu, Q.}, \text{Netrapalli, P.}, \text{Szepesvari, C.} and \text{Jin, C.}
  (2022{\natexlab{b}}).
\newblock Optimistic mle--a generic model-based algorithm for partially
  observable sequential decision making.
\newblock \textit{arXiv preprint arXiv:2209.14997}.

\bibitem[{Lu and Van~Roy(2017)}]{lu2017ensemble}
\text{Lu, X.} and \text{Van~Roy, B.} (2017).
\newblock Ensemble sampling.
\newblock \textit{Advances in neural information processing systems},
  \textbf{30}.

\bibitem[{M{\'e}nard et~al.(2021)M{\'e}nard, Domingues, Shang and
  Valko}]{menard2021ucb}
\text{M{\'e}nard, P.}, \text{Domingues, O.~D.}, \text{Shang, X.} and
  \text{Valko, M.} (2021).
\newblock Ucb momentum q-learning: Correcting the bias without forgetting.
\newblock In \textit{International Conference on Machine Learning}. PMLR.

\bibitem[{Modi et~al.(2020)Modi, Jiang, Tewari and Singh}]{modi2020sample}
\text{Modi, A.}, \text{Jiang, N.}, \text{Tewari, A.} and \text{Singh, S.}
  (2020).
\newblock Sample complexity of reinforcement learning using linearly combined
  model ensembles.
\newblock In \textit{International Conference on Artificial Intelligence and
  Statistics}. PMLR.

\bibitem[{Munos(2003)}]{munos2003error}
\text{Munos, R.} (2003).
\newblock Error bounds for approximate policy iteration.
\newblock In \textit{ICML}, vol.~3.

\bibitem[{Nagabandi et~al.(2020)Nagabandi, Konolige, Levine and
  Kumar}]{nagabandi2020deep}
\text{Nagabandi, A.}, \text{Konolige, K.}, \text{Levine, S.} and \text{Kumar,
  V.} (2020).
\newblock Deep dynamics models for learning dexterous manipulation.
\newblock In \textit{Conference on Robot Learning}. PMLR.

\bibitem[{Osband et~al.(2016{\natexlab{a}})Osband, Blundell, Pritzel and
  Van~Roy}]{osband2016deep}
\text{Osband, I.}, \text{Blundell, C.}, \text{Pritzel, A.} and \text{Van~Roy,
  B.} (2016{\natexlab{a}}).
\newblock Deep exploration via bootstrapped dqn.
\newblock \textit{Advances in neural information processing systems},
  \textbf{29}.

\bibitem[{Osband and Van~Roy(2014)}]{osband2014model}
\text{Osband, I.} and \text{Van~Roy, B.} (2014).
\newblock Model-based reinforcement learning and the eluder dimension.
\newblock \textit{Advances in Neural Information Processing Systems},
  \textbf{27}.

\bibitem[{Osband et~al.(2016{\natexlab{b}})Osband, Van~Roy and
  Wen}]{osband2016generalization}
\text{Osband, I.}, \text{Van~Roy, B.} and \text{Wen, Z.} (2016{\natexlab{b}}).
\newblock Generalization and exploration via randomized value functions.
\newblock In \textit{International Conference on Machine Learning}. PMLR.

\bibitem[{Papadimitriou and Tsitsiklis(1987)}]{papadimitriou1987complexity}
\text{Papadimitriou, C.~H.} and \text{Tsitsiklis, J.~N.} (1987).
\newblock The complexity of markov decision processes.
\newblock \textit{Mathematics of operations research}, \textbf{12} 441--450.

\bibitem[{Rusmevichientong and Tsitsiklis(2010)}]{rusmevichientong2010linearly}
\text{Rusmevichientong, P.} and \text{Tsitsiklis, J.~N.} (2010).
\newblock Linearly parameterized bandits.
\newblock \textit{Mathematics of Operations Research}, \textbf{35} 395--411.

\bibitem[{Russo(2019)}]{russo2019worst}
\text{Russo, D.} (2019).
\newblock Worst-case regret bounds for exploration via randomized value
  functions.
\newblock \textit{Advances in Neural Information Processing Systems},
  \textbf{32}.

\bibitem[{Russo and Van~Roy(2013)}]{russo2013eluder}
\text{Russo, D.} and \text{Van~Roy, B.} (2013).
\newblock Eluder dimension and the sample complexity of optimistic exploration.
\newblock \textit{Advances in Neural Information Processing Systems},
  \textbf{26}.

\bibitem[{Russo and Van~Roy(2014)}]{russo2014learning}
\text{Russo, D.} and \text{Van~Roy, B.} (2014).
\newblock Learning to optimize via posterior sampling.
\newblock \textit{Mathematics of Operations Research}, \textbf{39} 1221--1243.

\bibitem[{Singh et~al.(2012)Singh, James and Rudary}]{singh2012predictive}
\text{Singh, S.}, \text{James, M.} and \text{Rudary, M.} (2012).
\newblock Predictive state representations: A new theory for modeling dynamical
  systems.
\newblock \textit{arXiv preprint arXiv:1207.4167}.

\bibitem[{Strens(2000)}]{strens2000bayesian}
\text{Strens, M.} (2000).
\newblock A bayesian framework for reinforcement learning.
\newblock In \textit{International Conference on Machine Learning}, vol. 2000.

\bibitem[{Sun et~al.(2019)Sun, Jiang, Krishnamurthy, Agarwal and
  Langford}]{sun2019model}
\text{Sun, W.}, \text{Jiang, N.}, \text{Krishnamurthy, A.}, \text{Agarwal, A.}
  and \text{Langford, J.} (2019).
\newblock Model-based rl in contextual decision processes: Pac bounds and
  exponential improvements over model-free approaches.
\newblock In \textit{Conference on learning theory}. PMLR.

\bibitem[{Sutton and Barto(2018)}]{sutton2018reinforcement}
\text{Sutton, R.~S.} and \text{Barto, A.~G.} (2018).
\newblock \textit{Reinforcement learning: An introduction}.

\bibitem[{Thompson(1933)}]{thompson1933likelihood}
\text{Thompson, W.~R.} (1933).
\newblock On the likelihood that one unknown probability exceeds another in
  view of the evidence of two samples.
\newblock \textit{Biometrika}, \textbf{25} 285--294.

\bibitem[{Uehara et~al.(2022{\natexlab{a}})Uehara, Sekhari, Lee, Kallus and
  Sun}]{uehara2022computationally}
\text{Uehara, M.}, \text{Sekhari, A.}, \text{Lee, J.~D.}, \text{Kallus, N.} and
  \text{Sun, W.} (2022{\natexlab{a}}).
\newblock Computationally efficient pac rl in pomdps with latent determinism
  and conditional embeddings.
\newblock \textit{arXiv preprint arXiv:2206.12081}.

\bibitem[{Uehara et~al.(2022{\natexlab{b}})Uehara, Sekhari, Lee, Kallus and
  Sun}]{uehara2022provably}
\text{Uehara, M.}, \text{Sekhari, A.}, \text{Lee, J.~D.}, \text{Kallus, N.} and
  \text{Sun, W.} (2022{\natexlab{b}}).
\newblock Provably efficient reinforcement learning in partially observable
  dynamical systems.
\newblock \textit{arXiv preprint arXiv:2206.12020}.

\bibitem[{Van~Handel(2014)}]{van2014probability}
\text{Van~Handel, R.} (2014).
\newblock Probability in high dimension.
\newblock Tech. rep., PRINCETON UNIV NJ.

\bibitem[{Wang et~al.(2022)Wang, Cai, Yang and Wang}]{wang2022embed}
\text{Wang, L.}, \text{Cai, Q.}, \text{Yang, Z.} and \text{Wang, Z.} (2022).
\newblock Embed to control partially observed systems: Representation learning
  with provable sample efficiency.
\newblock \textit{arXiv preprint arXiv:2205.13476}.

\bibitem[{Wang et~al.(2020)Wang, Salakhutdinov and
  Yang}]{wang2020reinforcement}
\text{Wang, R.}, \text{Salakhutdinov, R.~R.} and \text{Yang, L.} (2020).
\newblock Reinforcement learning with general value function approximation:
  Provably efficient approach via bounded eluder dimension.
\newblock \textit{Advances in Neural Information Processing Systems},
  \textbf{33} 6123--6135.

\bibitem[{Wang et~al.(2019)Wang, Wang, Du and Krishnamurthy}]{wang2019optimism}
\text{Wang, Y.}, \text{Wang, R.}, \text{Du, S.~S.} and \text{Krishnamurthy, A.}
  (2019).
\newblock Optimism in reinforcement learning with generalized linear function
  approximation.
\newblock \textit{arXiv preprint arXiv:1912.04136}.

\bibitem[{Welling and Teh(2011)}]{welling2011bayesian}
\text{Welling, M.} and \text{Teh, Y.~W.} (2011).
\newblock Bayesian learning via stochastic gradient langevin dynamics.
\newblock In \textit{Proceedings of the 28th international conference on
  machine learning (ICML-11)}.

\bibitem[{Wu et~al.(2022)Wu, Yang, Zhong, Wang, Du and Jiao}]{wu2022nearly}
\text{Wu, T.}, \text{Yang, Y.}, \text{Zhong, H.}, \text{Wang, L.}, \text{Du,
  S.} and \text{Jiao, J.} (2022).
\newblock Nearly optimal policy optimization with stable at any time guarantee.
\newblock In \textit{International Conference on Machine Learning}. PMLR.

\bibitem[{Xiong et~al.(2022)Xiong, Zhong, Shi, Shen and Zhang}]{xiong22b}
\text{Xiong, W.}, \text{Zhong, H.}, \text{Shi, C.}, \text{Shen, C.} and
  \text{Zhang, T.} (2022).
\newblock A self-play posterior sampling algorithm for zero-sum {M}arkov games.
\newblock In \textit{Proceedings of the 39th International Conference on
  Machine Learning}, vol. 162 of \textit{Proceedings of Machine Learning
  Research}. PMLR.

\bibitem[{Xiong et~al.(2021)Xiong, Chen, Gao and Zhou}]{xiong2021sublinear}
\text{Xiong, Y.}, \text{Chen, N.}, \text{Gao, X.} and \text{Zhou, X.} (2021).
\newblock Sublinear regret for learning pomdps.
\newblock \textit{arXiv preprint arXiv:2107.03635}.

\bibitem[{Yang and Wang(2019)}]{yang2019sample}
\text{Yang, L.} and \text{Wang, M.} (2019).
\newblock Sample-optimal parametric q-learning using linearly additive
  features.
\newblock In \textit{International Conference on Machine Learning}. PMLR.

\bibitem[{Zanette et~al.(2020{\natexlab{a}})Zanette, Brandfonbrener, Brunskill,
  Pirotta and Lazaric}]{zanette2020frequentist}
\text{Zanette, A.}, \text{Brandfonbrener, D.}, \text{Brunskill, E.},
  \text{Pirotta, M.} and \text{Lazaric, A.} (2020{\natexlab{a}}).
\newblock Frequentist regret bounds for randomized least-squares value
  iteration.
\newblock In \textit{International Conference on Artificial Intelligence and
  Statistics}. PMLR.

\bibitem[{Zanette and Brunskill(2019)}]{zanette2019tighter}
\text{Zanette, A.} and \text{Brunskill, E.} (2019).
\newblock Tighter problem-dependent regret bounds in reinforcement learning
  without domain knowledge using value function bounds.
\newblock In \textit{International Conference on Machine Learning}. PMLR.

\bibitem[{Zanette et~al.(2020{\natexlab{b}})Zanette, Lazaric, Kochenderfer and
  Brunskill}]{zanette2020learning}
\text{Zanette, A.}, \text{Lazaric, A.}, \text{Kochenderfer, M.} and
  \text{Brunskill, E.} (2020{\natexlab{b}}).
\newblock Learning near optimal policies with low inherent bellman error.
\newblock In \textit{International Conference on Machine Learning}. PMLR.

\bibitem[{Zhan et~al.(2022)Zhan, Uehara, Sun and Lee}]{zhan2022pac}
\text{Zhan, W.}, \text{Uehara, M.}, \text{Sun, W.} and \text{Lee, J.~D.}
  (2022).
\newblock Pac reinforcement learning for predictive state representations.
\newblock \textit{arXiv preprint arXiv:2207.05738}.

\bibitem[{Zhang(2006)}]{zhang2006}
\text{Zhang, T.} (2006).
\newblock From $\varepsilon$-entropy to kl-entropy: Analysis of minimum
  information complexity density estimation.
\newblock \textit{The Annals of Statistics}, \textbf{34} 2180--2210.

\bibitem[{Zhang(2022{\natexlab{a}})}]{zhang2022feel}
\text{Zhang, T.} (2022{\natexlab{a}}).
\newblock Feel-good thompson sampling for contextual bandits and reinforcement
  learning.
\newblock \textit{SIAM Journal on Mathematics of Data Science}, \textbf{4}
  834--857.

\bibitem[{Zhang(2022{\natexlab{b}})}]{zhang2022mathematical}
\text{Zhang, T.} (2022{\natexlab{b}}).
\newblock Mathematical analysis of machine learning algorithms.
\newblock \textit{To Appear}.
\newline\urlprefix\url{http://tongzhang-ml.org/lt-book.html}

\bibitem[{Zhang et~al.(2021{\natexlab{a}})Zhang, Ji and
  Du}]{zhang2021isreinforcement}
\text{Zhang, Z.}, \text{Ji, X.} and \text{Du, S.} (2021{\natexlab{a}}).
\newblock Is reinforcement learning more difficult than bandits? a near-optimal
  algorithm escaping the curse of horizon.
\newblock In \textit{Conference on Learning Theory}. PMLR.

\bibitem[{Zhang et~al.(2022)Zhang, Ji and Du}]{zhang2022horizon}
\text{Zhang, Z.}, \text{Ji, X.} and \text{Du, S.} (2022).
\newblock Horizon-free reinforcement learning in polynomial time: the power of
  stationary policies.
\newblock In \textit{Conference on Learning Theory}. PMLR.

\bibitem[{Zhang et~al.(2021{\natexlab{b}})Zhang, Yang, Liu, Tokekar and
  Huang}]{zhang2021reinforcement}
\text{Zhang, Z.}, \text{Yang, Z.}, \text{Liu, H.}, \text{Tokekar, P.} and
  \text{Huang, F.} (2021{\natexlab{b}}).
\newblock Reinforcement learning under a multi-agent predictive state
  representation model: Method and theory.
\newblock In \textit{International Conference on Learning Representations}.

\bibitem[{Zhou et~al.(2021)Zhou, Gu and Szepesvari}]{zhou2021nearly}
\text{Zhou, D.}, \text{Gu, Q.} and \text{Szepesvari, C.} (2021).
\newblock Nearly minimax optimal reinforcement learning for linear mixture
  markov decision processes.
\newblock In \textit{Conference on Learning Theory}. PMLR.

\end{thebibliography}

\newpage
\appendix
\section{Notation Table and Additional Related Work}

\subsection{Notation Table} \label{appendix:notation}

\begin{tabular}{ll}
     \hline
     \bf{Notation}   & \bf{Explanation} \\
     \hline
     $\cH, \ell, \Pi_{\exp}$ & hypothesis class, discrepancy function, and exploration policy class\\
     $\cT_h, \cE_h$ & Bellman operator and Bellman residual defined in \eqref{eq:Bellman:equation} and \eqref{eq:def:residual}, respectively \\
     $\tau_h, \tau_{h:h'},\tau_h^{h+m}$ & $\tau_h = (o_{1:h}, a_{1:h})$, $\tau_{h:h'} = (o_{h:h'}, a_{h:h'}),\tau_h^{h+m}=(o_{h:h+m},a_{h:h+m-1})$ \\
     $\cU_h$ & core test set at step $h$ \\ 
     $\cU_{A,h}$ & action sequences in $\cU_h$ \\
     $U_A$   & $\max_{h} |\cU_{A,h}|$ \\
     $\DD_h$   & system dynamics \\
     $\bar{\DD}_h$ & row-wise sub-matrix of $\DD_h$ and its rows are indexed by tests in $\cU_{h+1}$ \\
     $d_{\psr, h}$, $d_{\psr}$ & $d_{\psr} = \max_{h} d_{\psr,h}$, where $d_{\psr,h} = \rank(\DD_h)$ (cf. Definition \ref{def:psr:rank})  \\
     $\KK_h$, $\VV_h$ & $\bar{\DD}_h = \KK_h \VV_h$, where $\KK_h$ is the core matrix at step $h$ (cf. \eqref{eq:k2}) \\
     $\delta_{\psr}$ & see \eqref{eq:def:delta} \\
     $\Mb_h(o, a)$ & observable operator matrix at step $h$ \\
     $\bfm(\tau_{h:H})$ & linear weights $\bfm(\tau_{h:H}) = \Mb_{H}(o_{H}, a_{H}) \cdots \Mb_h(o_h, a_h)$ (cf. \eqref{eq:linear:weight}) \\
     $\bar{\bfq}(\tau_h)$ & predictive state $[\PP(u \mid \tau_h)]_{u \in \cU_{h+1}}$\\
     $\bfq(\tau_h)$ & $[\PP(u, \tau_h)]_{u \in \cU_{h+1}} = \Mb_{h}(o_h, a_h) \cdots \Mb_1(o_1, a_1) \bfq_0$, where $\bfq_0 = [\PP(u)]_{u \in \cU_1}$ \\
     $\kappa, \kappa^c, \kappa^r$ & see Definition~\ref{def:kappa} \\
     $\omega$ & see Definition \ref{def:omega:mdp} (MDP version) and Definition \ref{def:omega:psr} (PSR version)\\
     \hline
\end{tabular}

\subsection{Comparison with Recent Independent Works} \label{appendix:concurrent}

During preparing this draft, we noticed several very recent independent works on fully observable RL \citep{chen2022general,chen2022unified} and partially observable RL \citep{chen2022partially,liu2022optimistic}. 

\paragraph{Comparison with \citet{chen2022unified,chen2022general}} 

Both our work and \citet{chen2022unified,chen2022general} can solve most known tractable fully observable RL problems, such as RL problems with low Bellman eluder dimension and bilinear class. But \citet{chen2022unified,chen2022general} do not consider  the more challenging partially observable RL problems. 
In contrast, our work solves a rich class of known tractable partially observable RL problems, including the generalized regular PSR and PO-bilinear class. Meanwhile,  most of the algorithms in \citet{chen2022unified,chen2022general} are optimism-based algorithms,  
and \citet{chen2022unified} also analyze a posterior sampling algorithm that is a variant of MOPS proposed by \citet{agarwal2022model}. In comparison, MOPS is a model-based posterior sampling algorithm, while our framework can further capture model-free methods. In terms of algorithmic ideas, both MOPS and our algorithm are motivated by \citet{zhang2022feel} and adopt optimistic modifications in the posterior distribution to obtain the frequentist regret guarantee. The main difference lies in the choices of complexity measure. The MOPS algorithm studied in \citet{agarwal2022model,chen2022unified} is based on the ``decoupling coefficeint'' proposed by \citet{zhang2022feel}. The main purpose of the decoupling coefficient is to decouple the model used for both data generalization and Bellman residual evaluations into two independent models. In this way, the online learning technique can be applied. Accordingly, they add an extra optimistic term at every step in the likelihood function to encourage exploration.
On the other hand, GEC serves to explicitly reduce the prediction error to the training error in historical data, which is more related to the eluder dimension \citep{russo2013eluder} and the corresponding optimistic modification is adopted for the prior distribution.

\paragraph{Comparison with \citet{chen2022partially}}

In terms of complexity measure, \citet{chen2022partially} propose a notion of \emph{B-stability} and they prove that PSR with B-stability is tractable. PSR with B-stability is similar to the generalized regular PSR in our paper and 
we can show that our complexity measure GEC captures B-stability in the sense that GEC can be upper bounded by the rank of PSR multiplied by some problem-specific quantities when B-stability holds. 

\begin{proposition}[B-Stable PSR $\subset$ PSR with Low GEC] \label{lemma:b:stable}
    For any $\Lambda_B$-stable PSR, its GEC is upper bounded by $\tilde{\cO}( \Lambda_B^2 \cdot  d_{\psr} \cdot  A U_A H)$.
\end{proposition}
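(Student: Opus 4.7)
The plan is to adapt the proof of Lemma~\ref{lemma:general:regular:psr} directly, using $\Lambda_B$-stability from \citet{chen2022partially} in place of the two regularity inequalities in Definition~\ref{def:general:regular:psr}. A naive reduction that first shows ``$\Lambda_B$-stable $\Rightarrow$ $\alpha$-generalized regular'' and then invokes Lemma~\ref{lemma:general:regular:psr} would be lossy: the latter gives $\tilde{\cO}(d_{\psr} A^3 U_A^4 H/\alpha^4)$, whereas the target bound $\tilde{\cO}(\Lambda_B^2 d_{\psr} A U_A H)$ is considerably sharper in $\Lambda_B$ and in $(A, U_A)$. The reason is that $\Lambda_B$-stability is formulated precisely to control the propagated estimation error between two PSR models in one shot, whereas generalized regularity controls it via two separate $\ell_1$-type inequalities whose constants multiply through the argument.

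The first step is to rewrite $\Lambda_B$-stability in the notation of this paper. Concretely, I would show it is equivalent to a bound of the form
\[
\max_{\pi}\; \EE_{\tau_{h-1}\sim \PP^{\pi}_{f^*}} \Bigl[\, \bigl\| \bar{\bfq}_f(\tau_{h-1}) - \bar{\bfq}_{f^*}(\tau_{h-1}) \bigr\|_{\Sigma_h^{\pi}}\, \Bigr] \;\le\; \Lambda_B \cdot D_H\bigl(\PP^\pi_f, \PP^\pi_{f^*}\bigr),
\]
where $\Sigma_h^{\pi}$ is the second-moment operator of the predictive-state image under $\pi$-induced data. This is the single analytic ingredient that will replace both bullets of Definition~\ref{def:general:regular:psr} in the analysis.

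The second step is to follow the three-part structure of the proof of Lemma~\ref{lemma:general:regular:psr}. Part (a), reducing $\sum_t (V_{f^t} - V^{\pi^t})$ to a sum of trajectory-level Hellinger distances, is model-independent and carries over verbatim. Part (b), telescoping the Hellinger distance across time using the observable-operator identity $\bfq(\tau_h) = \Mb_{h:1}(\tau_h)\bfq_0$, is also algebraic and unchanged. Part (c) is an elliptical-potential / eluder-type argument on the covariances of predictive states at each step; here I would insert $\Lambda_B$-stability exactly once per step to pass from trajectory-level Hellinger errors to $\Sigma_h^{\pi}$-weighted predictive-state errors, incurring $\Lambda_B^2$ after squaring. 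The elliptical potential lemma then yields a $d_{\psr} \log(1+T/\epsilon)$ factor because predictive states live in $\RR^{d_{\psr,h}}$. A factor of $A$ comes from the $\mathrm{Unif}(\cA)$ action at step $h$ in $\pi_{\exp}(f^s, h)$, a factor of $U_A$ comes from the $\mathrm{Unif}(\cU_{A,h+1})$ action sequence at step $h{+}1$, and summing over $h\in\{0,\dots,H-1\}$ contributes $H$. Collecting yields $\tilde{\cO}(\Lambda_B^2 d_{\psr} A U_A H)$.

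The main obstacle will be extracting the sharp $\Lambda_B^2$ dependence rather than $\Lambda_B^4$. This requires B-stability to give a \emph{one-sided} operator bound from predictive-state differences to trajectory-probability differences, not two chained $\ell_1$ bounds (one forward through $\Mb_h$ and one from predictive state to value). Carefully verifying that the definition in \citet{chen2022partially} supplies this operator-norm form in both the ``forward'' (propagation through $\Mb_h$) and ``readout'' (pairing with $\bfm$) directions simultaneously, so that only one application of the constant is needed per step, is the delicate part; a secondary subtlety is avoiding an extra $U_A$ factor when switching from the core-test weighting used in $\bar{\DD}_h = \KK_h\VV_h$ to the uniform exploration distribution over $\cU_{A,h+1}$, which should contribute only the stated $U_A$ (not $U_A^4$) under B-stability.
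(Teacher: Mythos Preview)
The paper's own ``proof'' of this proposition is a one-line deferral: it simply states that by the same argument as in \citep[Appendix~E]{chen2022partially} one obtains the displayed GEC inequality with $d_{\GEC}=\tilde\cO(\Lambda_B^2\,d_{\psr}\,A\,U_A\,H)$, and stops there. No steps are reproduced.

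Your proposal, by contrast, sketches what that deferred argument actually consists of. The overall shape you describe---adapt the three-step scheme of Lemma~\ref{lemma:general:regular:psr}, but replace the two bullets of Definition~\ref{def:general:regular:psr} by a single application of $\Lambda_B$-stability per step, and then run the $\ell_2$-eluder/elliptical-potential argument over the $d_{\psr}$-dimensional predictive-state space---is correct and is essentially the content of the cited appendix. Your accounting of where $\Lambda_B^2$, $d_{\psr}$, $A$, $U_A$, and $H$ enter is accurate, and you correctly identify why going through generalized regularity first would be lossy (it would square the constant and inflate the $(A,U_A)$ powers).

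Two minor corrections to your outline. First, your labeling of the three parts does not quite match Appendix~\ref{appendix:general:psr}: in the paper's Step~1 the prediction error is reduced to \emph{operator} errors $\|(\Mb_h^t-\Mb_h)\bfq(\tau_{h-1})\|_1$, not directly to trajectory-level Hellinger distances; the conversion to Hellinger is Step~2. Second, the precise functional form you write for $\Lambda_B$-stability (a second-moment $\|\cdot\|_{\Sigma_h^\pi}$ bound on predictive-state differences) is a paraphrase rather than the literal definition in \citet{chen2022partially}; when you carry this out you should transcribe their definition exactly and verify it delivers the one-sided operator bound you need in both the propagation and readout directions. These are matters of bookkeeping, not gaps in the strategy.
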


\begin{proof}[Proof of Proposition \ref{lemma:b:stable}]
    By the same proof in \citep[][Appendix E]{chen2022partially}, it holds that
    \$
    \sum_{t = 1}^T V_{f^t} - V^{\pi_{f^t}} \le \biggl[d_{\GEC} \sum_{t=1}^T \sum_{s = 1}^{t - 1} \sum_{h=0}^{H-1}  D_H^2\Big(\PP_{f^t}^{\pi_{\exp}(f^s,h)}, \PP_{f^*}^{\pi_{\exp}(f^s,h)} \Big)\biggr]^{1/2} + \sqrt{d_{\GEC} H T},
    \$
    where $d_{\GEC} = \tilde{\cO}( \Lambda_B^2 \cdot d_{\psr} \cdot  A U_A H)$, which concludes our proof.
\end{proof}
They also show that B-stable PSR also permits a low DEC, which is the complexity measure proposed by \citet{foster2021statistical}. Moreover, our framework can also capture the PO-bilinear class \citep{uehara2022provably}, while they seem cannot.

In terms of algorithmic design, \citet{chen2022partially}  analyze the optimism-based OMLE algorithm proposed by \citet{liu2022partially} and explorative E2D algorithm \citep{foster2021statistical,chen2022unified}, which are different from our proposed posterior sampling algorithm. 
They also provide analysis for MOPS \citep{agarwal2022model} and the differences have been explained in the comparison with \citet{chen2022unified}. 

Finally, we remark that \citet{chen2022partially} develop a similar generalized $\ell_2$ eluder technique (cf. Appendix \ref{appendix:eluder}). Based on this technique, they establish a sharper bound for OMLE. In contrast, we utilize this technique to calculate the GEC for partially observable decision making and further analyze the posterior sampling algorithms. We also thank the authors of \citet{chen2022partially} for pointing out an $\ell_2$ eluder technical issue in our initial arXiv version  
and our fix is motivated by the counterpart of \citet{chen2022partially}. See Remark~\ref{remark:issue} for details.

\paragraph{Comparison with \citet{liu2022optimistic}}

The work of \citet{liu2022optimistic} focuses on the optimism-based algorithm OMLE \citep{liu2022partially} and provides an analysis under the generalized eluder type condition. 
Specifically, the generalized eluder type condition assumes that there exists a positive number  $d$ such that for any $(T, \Delta) \in \NN \times \RR^+$, and
\$
\forall t \in [T], \, \sum_{h = 1}^H \sum_{s = 1}^{t - 1} \mathrm{TV} \Big(\PP_{f^t}^{\pi_{\exp}(f^{s}, h)}, \PP_{f^*}^{\pi_{\exp}(f^{s}, h)} \Big)^2 \le \Delta \Rightarrow \sum_{t = 1}^T \mathrm{TV} \Big( \PP_{f^t}^{\pi_{f^t}}, \PP_{f^*}^{\pi_{f^t}} \Big) \le \tilde{\cO}\big(\sqrt{d \Delta |\Pi_{\exp}| K}\big).
\$
Here ``$\Rightarrow$'' means the former argument implies the latter. 
This generalized eluder-type condition is not suitable for the posterior sampling algorithm since we cannot find a small \emph{fixed} $\Delta$ such that $$\sum_{h = 1}^H \sum_{s = 1}^{t - 1} \mathrm{TV} \Big(\PP_{f^t}^{\pi_{\exp}(f^{s}, h)}, \PP_{f^*}^{\pi_{\exp}(f^{s}, h)} \Big)^2 \le \Delta\quad \textrm{for any}~t \in [T].$$  So their $\ell_1$ eluder technique seems not applicable to posterior sampling algorithms. To tackle this challenge, we develop a new $\ell_2$ eluder technique. See Appendix~\ref{appendix:eluder} for more details.

We also remark that \citet{liu2022optimistic} identify a tractable class of PSR (``well-conditioned PSR'' in  \citet{liu2022optimistic}), which is equivalent to our generalized regular PSR.  Both our work and \citet{liu2022optimistic} show that weakly revealing POMDPs (or the equivalent tabular observable POMDPs \citep{golowich2022learning,golowich2022planning}) and decodable POMDPs \citep{du2019provably,efroni2022provable} are special cases of generalized regular/well-conditioned PSR. Besides these two examples, they have some instances (e.g., observable POMDPs with continuous observations) not studied by us. But we also study the latent MDPs \citep{kwon2021rl}, low-rank POMDPs \citep{wang2022embed}, and regular PSRs \citep{zhan2022pac}, which are not discussed in their work. More importantly, our work can capture the PO-bilinear class \citep{uehara2022provably}, a rich class of known tractable partially observable RL problems, while they seem cannot.

Finally, both our work and \citet{liu2022optimistic} can tackle the fully observable interactive decision making. 
Due to using the usage of maximum likelihood estimation, their algorithm is model-based and thus seems only applicable to problems where directly estimating the transition models is feasible, e.g., MDPs with low witness rank.   In comparison, our generic algorithm can be either model-based or model-free and is capable of solving a more rich class of MDP problems such as low Bellman eluder dimension problems and bilinear class. 
\section{Proofs for Examples of Fully Observable Interactive Decision Making}

\subsection{Bellman Eluder Dimension} \label{appendix:BE}
In this subsection, we prove the Lemma~\ref{lem:reduction_bed_dc}. The proof follows the reduction of decoupling coefficient in \citet{dann2021provably} but with modification to further include the V-type Bellman eluder dimension and to deal with the different definitions of generalized  eluder coefficient. Our proof relies on the following lemma.

\begin{lemma} \label{lem:bed_proof} For any sequence of positive reals $x_1,\cdots,x_n$, we have
    $$
        f(x):=\frac{\sum_{i=1}^{n} x_{i}}{\sqrt{\sum_{i=1}^{n} i x_{i}^{2}}} \leq \sqrt{1+\log (n)}.
    $$
\end{lemma}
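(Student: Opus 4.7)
The plan is to apply the Cauchy--Schwarz inequality with weights chosen to match the denominator of $f$. Specifically, I write each $x_i = \frac{1}{\sqrt{i}} \cdot \sqrt{i} \, x_i$ and apply Cauchy--Schwarz to obtain
\[
\sum_{i=1}^n x_i \;\le\; \sqrt{\sum_{i=1}^n \frac{1}{i}} \cdot \sqrt{\sum_{i=1}^n i \, x_i^2}.
\]
Rearranging this bound yields $f(x) \le \sqrt{\sum_{i=1}^n 1/i}$.

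The second step is to bound the harmonic sum. Using the standard estimate $\sum_{i=1}^n 1/i \le 1 + \int_1^n \frac{dt}{t} = 1 + \log(n)$ (which follows because $1/i$ is decreasing, so $1/i \le \int_{i-1}^{i} (1/t)\, dt$ for $i \ge 2$), we immediately conclude that $f(x) \le \sqrt{1 + \log(n)}$, which is the claim.

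There is no real obstacle here, as the proof is essentially two lines. The only mild subtlety worth noting is that the bound is tight up to constants and is insensitive to the positivity assumption (positivity is used only to remove absolute values in the passage from $\sum_i x_i$ to the Cauchy--Schwarz bound). This lemma will then be invoked in the proof of Lemma~\ref{lem:reduction_bed_dc} with $x_i$ playing the role of rescaled Bellman residuals, so that the $\sqrt{1 + \log(T)}$ factor translates into the $\log(T)$ appearing inside the GEC upper bound $2 d_Q H \cdot \log(T)$ (respectively $2 d_V A H \cdot \log(T)$).
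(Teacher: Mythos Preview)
Your proof is correct; the Cauchy--Schwarz splitting $x_i = (1/\sqrt{i})\cdot(\sqrt{i}\,x_i)$ followed by the harmonic-sum bound is exactly the standard argument. The paper itself does not give a proof but simply cites \citet{dann2021provably}, whose argument is the same as yours.
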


\begin{proof}
    See \citet{dann2021provably} for a detailed proof.
\end{proof}

\begin{proof}[Proof of Lemma~\ref{lem:reduction_bed_dc}]
    Let $E_\epsilon = \mathrm{dim_{BE}}(\Phi, \Pi, \epsilon)$. We consider a fixed $h \in [H]$. We first introduce some short-hand notations. We denote $\hat{\epsilon}_h^{st} = |\EE_{\mu_s} \phi_t|$ and $\epsilon_h^{st} = \hat{\epsilon}_h^{st} \mathbf{1}(\hat{\epsilon}_h^{st} > \epsilon)$ where $\phi_t \in \Phi$ and $\mu_t \in \Pi$. The proof proceeds as follows. We initialize $T$ empty buckets $B^0_h,\cdots, B^{T-1}_h$ and go through $\epsilon^{tt}_h$ one by one for $t \in [T]$. If $\epsilon^{tt}_h = 0$, we discard it. Otherwise, we go through the buckets from $0$ in an increasing order with the following rule. At bucket $i$,
    \begin{itemize}
        \item if $\sum_{s \leq t-1, s \in B_h^i} (\epsilon^{st}_h)^2 < (\epsilon^{tt}_h)^2$, we add $t$ into $B_h^i$;
        \item otherwise, we continue with the next bucket.
    \end{itemize}
    We denote the index of bucket that each non-zero timestep ends up in as $b^t_h$. As $\epsilon^{tt}_h$ skips the bucket $0,\cdots,b^t_{h}-1$, by construction, we have
    \begin{equation} \label{eqn:be_reduction0}
    \begin{aligned}
            \sum_{t=1}^{T} \sum_{s=1}^{t-1}\left(\epsilon_h^{st}\right)^{2} \geq \sum_{t=1}^T \sum_{0\leq i \leq b^t_{h}-1} \sum_{s \leq t-1, s \in B_h^i} \left(\epsilon^{st}_h\right)^2 \geq \sum_{t=1}^{T} b^{t}_{h}\left(\epsilon^{t t}_{h}\right)^{2},
        \end{aligned}
    \end{equation}
    where the first inequality is because $\{s \in B_h^i: s \leq t-1 \text{ and } 0 \leq i \leq b_h^t-1\}$ is a subset of $[t-1]$ and the summation terms are non-negative. The second inequality is because $\epsilon_h^{tt}$ skips bucket $i$ means that $(\epsilon_h^{tt}) \leq \sum_{s\leq t-1, s \in B_h^i} (\epsilon_h^{st})^2$ by construction. Note by the definition of eluder dimension, for the measures in $B_h^i$, say, $\{\mu_{t_i}: i = 1,\cdots,m\}$, $\mu_{t_j}$ is $\epsilon$-independent from all the predecessors $\mu_{t_1},\cdots,\mu_{t_{j-1}}$. Therefore, the size of each bucket cannot exceed the Bellman eluder dimension $E_\epsilon = \mathrm{dim_{BE}}(\cF, \Pi, \epsilon)$. By Jensen's inequality, we can obtain that
    \begin{equation} \label{eqn:be_reduction1}
        \begin{aligned}
            \sum_{t=1}^{T} b^{t}_{h}\left(\epsilon^{t t}_{h}\right)^{2} & =  \sum_{i=1}^{T-1} i \sum_{s \in B_h^i} (\epsilon^{ss}_h)^2 \geq \sum_{i=1}^{T-1} i |B_h^i| \bigg(\sum_{s \in B_h^i} \frac{\epsilon^{ss}_h}{|B_h^i|}\bigg)^2\geq \sum_{i=1}^{T-1} i E_\epsilon \bigg(\sum_{s \in B_h^i} \frac{\epsilon^{ss}_h}{E_\epsilon}\bigg)^2,
        \end{aligned}
    \end{equation}
    where the first equality uses an equivalent way of summation, and the last inequality uses $|B_h^i| \leq E_\epsilon$. By Lemma~\ref{lem:bed_proof} with $x_i = \sum_{s \in B_h^i} \epsilon^{ss}_h$, we know that
    \begin{equation}
        \label{eqn:be_reduction2}
        \begin{aligned}
            \sum_{i=1}^{T-1} E_\epsilon i\bigg(\sum_{s \in B^{i}_{h}} \frac{\epsilon_h^{ss}}{E_{\epsilon}}\bigg)^{2} & =\frac{1}{E_{\epsilon}} \sum_{i=1}^{T-1} i\bigg(\sum_{s \in B^{i}_{h}} \epsilon_h^{ss}\bigg)^{2} \geq \frac{1}{E_{\epsilon}(1+\log (T))}\bigg(\sum_{i=1}^{T-1} \sum_{s \in B_h^i} \epsilon_h^{ss}\bigg)^{2} \\
            & = \frac{1}{E_{\epsilon}(1+\log (T))}\bigg(\sum_{s \in [T] \backslash B^{0}_{h}} \epsilon_h^{ss}\bigg)^{2}.
        \end{aligned}
    \end{equation}
    To summarize, by \eqref{eqn:be_reduction0}, \eqref{eqn:be_reduction1} and \eqref{eqn:be_reduction2}, we have proved that
    $$\sum_{t=1}^{T} \sum_{s=1}^{t-1}\left(\epsilon_h^{st}\right)^{2} \geq  \frac{1}{E_{\epsilon}(1+\log (T))}\bigg(\sum_{s \in[T] \backslash B^{0}_{h}} \epsilon_h^{ss}\bigg)^{2}.$$ 
    It follows that
    $$
        \begin{aligned}
            \sum_{h=1}^{H} \sum_{t=1}^{T} \hat{\epsilon}^{t t}_{h} & \leq HT\epsilon +\sum_{h=1}^{H} \sum_{t=1}^{T} \epsilon^{t t}_{h} \leq HT \epsilon + \min\{E_\epsilon, T\} H +\sum_{h=1}^{H} \sum_{s \in[T] \backslash B^{0}_{h}} \epsilon_h^{ss} \\
& \leq HT \epsilon + \min\{HE_\epsilon, HT\}+\sum_{h=1}^{H} \Big[E_\epsilon (1 + \log (T))  \sum_{t=1}^{T} \sum_{s=1}^{t-1}\left(\epsilon_h^{st}\right)^{2}\Big]^{1/2}                        \\& \leq HT \epsilon + \min\{HE_\epsilon, HT\} + \Big[ E_\epsilon H ({1 + \log (T)})  \sum_{t=1}^{T} \sum_{h=1}^H\sum_{s=1}^{t-1}\left(\epsilon_h^{st}\right)^{2}\Big]^{1/2},
        \end{aligned}
    $$
    where the second inequality follows from $|B^0_h| \leq E_\epsilon$ and the last inequality follows from Jensen's inequality. For Q-type Bellman eluder dimension, we take $\phi_t = (f_h^t-\cT_h f_{h+1}^t)(\cdot,\cdot) \in (I-\cT_h)\cF =: \{f_h - \cT_h f_{h+1}: f \in \cF\}$ and $\mu_t = \mathbb{P}^{\pi^t}(x_h = \cdot, a_h = \cdot)$. Then we obtain
      \# \label{eq:002}
            \sum_{t=1}^T V_{f^t} - V^{\pi_{f^t}} & = \sum_{h=1}^H \sum_{t = 1}^T \EE_{\pi_{f^t}} [\cE_h(f^t, x_h, a_h)] \notag\\
            &\leq \Big[2d_QH\log(T) \sum_{t=1}^T \sum_{h=1}^H \sum_{s=1}^{t-1} \EE_{\pi_{f^s}}(\cE_h(f^t, x_h, a_h))^2\Big]^{1/2} + \min \{ T, d_Q\} H + HT \epsilon,      
    \#
    For V-type Bellman eluder dimension, we take $\phi_t(x) = (f_h^t - \cT_h f_{h+1}^t)(x, \pi_{f_h^t}(x)) \in (I-\cT_h) V_{\cF}:= \{(f_h - \cT_h f_{h+1})(\cdot, \pi_{f_h}(\cdot)): f \in \cF\}$ where $\pi_{f_h}(x) \in \cA$ is the greedy action in $f_h$, and we take $\mu_t = \mathbb{P}^{\pi^t}(x_h = \cdot)$. Then we have
    \# \label{eq:003}
    \sum_{t=1}^T V_{f^t} - V^{\pi_{f^t}} & = \sum_{h=1}^H \sum_{t = 1}^T \EE_{\pi_{f^t}} [\cE_h(f^t, x_h, a_h)] \notag \\ 
    & \leq \Big[2d_VH\cdot\log(T) \sum_{t=1}^T \sum_{h=1}^H\sum_{s=1}^{t-1} (\EE_{\pi_{f^s}}\cE_h(f^t, x_h, \pi_{f^t}(x_h)))^2\Big]^{1/2} + \min \{ T, d_V\} H + HT \epsilon \notag \\
    & \leq \Big[2d_VH\cdot\log(T) \sum_{t=1}^T \sum_{h=1}^H\sum_{s=1}^{t-1} \EE_{x_h \sim \pi_{f^s}, a_h \sim \pi_{f^t}} (\cE_h(f^t, x_h, a_h)))^2\Big]^{1/2} + \min \{ T, d_V\} H + HT \epsilon \notag \\
    & \leq \Big[2d_V A H \cdot \log(T) \sum_{t=1}^T \sum_{h=1}^H\sum_{s=1}^{t-1} \EE_{\pi_{\exp}({f^s},h)}(\cE_h(f^t, x_h, a_h))^2\Big]^{1/2}+ \min \{ T, d_V\} H + HT \epsilon,
    \#
    where the second inequality uses Jensen's inequality and the last inequality is implied by importance sampling. Combining \eqref{eq:002} and \eqref{eq:003}, we conclude the proof of Lemma~\ref{lem:reduction_bed_dc}.
\end{proof}

\subsection{Bilinear Class} \label{appendix:bilinear}

\begin{proof}[Proof of Lemma \ref{lemma:relationship:bilinear}]
    Let $\Sigma_{t;h} = \epsilon I + \sum_{s=1}^{t-1} X_h(f^s) X_h(f^s)^\top$. We first recall the definition of information gain (Definition~\ref{def:info:gain}):
    $$
    \gamma_T(\epsilon, \cX_h) = \max_{x_1, \cdots, x_{T} \in \cX_h} \log \det\left(I + \frac{1}{\epsilon} \sum_{t=1}^T x_tx_t^\top\right),
    $$
    where $\cX_h = \{X_h(f): f \in \cH\}$. Moreover, for any $\{x_t\}_{t=1}^T$, it follows that
    \begin{equation}\label{eqn:bilinear_coro}
    \sum_{t=1}^{T} \min\left\{\norm{x_t}_{\Sigma_{t}^{-1}}^2, 1\right\} \leq 2\log \det\left(I + \frac{1}{\epsilon}\sum_{t=1}^T x_tx_t^\top\right) \leq 2 \gamma_T(\epsilon, \cX_h),
    \end{equation}
    where we use Lemma~\ref{lemma:elliptical:potential} in the first inequality.
    By the performance difference lemma \citep{kakade2002approximate, jiang@2017}, we have
    \begin{equation}
        \begin{aligned}
             & \sum_{t = 1}^T V_{f^t} - V^{\pi^t} = \sum_{t=1}^T\sum_{h=1}^{H} \EE_{\pi_{f^t}} \left[\cE_h\left(f^t, x^t_h, a^t_h\right) \right]\\
             &= \sum_{t=1}^T\sum_{h=1}^{H} \min\left\{|\dotprod{W_h(f^t) - W_h(f^*), X_h(f^t)}|, 1\right\} \\
             & = \sum_{t=1}^T\sum_{h=1}^{H} \min\left\{|\dotprod{W_h(f^t) - W_h(f^*), X_h(f^t)}|, 1\right\} \cdot \Big(\mathbf{1}\{\left\|X_{h}\left(f^{t}\right)\right\|_{\Sigma_{t ; h}^{-1}} \leq 1\} + \mathbf{1}\{\left\|X_{h}\left(f^{t}\right)\right\|_{\Sigma_{t ; h}^{-1}} > 1\}\Big),
        \end{aligned}
    \end{equation}
where we further use the first condition of bilinear class in the second equality. We now proceed by invoking \eqref{eqn:bilinear_coro} to obtain that 
\# \label{eqn:bilinear_tmp}
            &\sum_{t = 1}^T V_{f^t} - V^{\pi^t} \notag \\
            &\leq \sum_{t=1}^T\sum_{h=1}^H \big(\left\|W_{h}\left(f^{t}\right)-W_{h}\left(f^{\star}\right)\right\|_{\Sigma_{t; h}} \cdot \left\|X_{h}\left(f^{t}\right)\right\|_{\Sigma_{t ; h}^{-1}} \cdot \mathbf{1}\{\left\|X_{h}\left(f^{t}\right)\right\|_{\Sigma_{t ; h}^{-1}} \leq 1\}\big)+  \min\left\{2\gamma_T(\epsilon, \cX), HT\right\}, \notag \\
             & \leq \sum_{t=1}^T\sum_{h=1}^H \underbrace{\left\|W_{h}\left(f^{t}\right)-W_{h}\left(f^{\star}\right)\right\|_{\Sigma_{t; h}} \cdot \min\big\{\left\|X_{h}\left(f^{t}\right)\right\|_{\Sigma_{t ; h}^{-1}}, 1 \big\}}_{\displaystyle \mathrm{(A)}_{t,h}} +  \min\left\{2\gamma_T(\epsilon, \cX), HT\right\},
\#  
where we also use the Cauchy-Schwarz inequality in the first inequality. To bound the first term about $\mathrm{(A)}_{t,h}$, we first expand $\left\|W_{h}\left(f^{t}\right)-W_{h}\left(f^{\star}\right)\right\|_{\Sigma_{t; h}}$ by the definition of $\Sigma_{t;h}$:
    $$
    \begin{aligned}
       \left\|W_{h}\left(f^{t}\right)-W_{h}\left(f^{\star}\right)\right\|_{\Sigma_{t; h}} &=\Big[\epsilon \cdot \left\|W_{h}\left(f^{t}\right)-W_{h}\left(f^{*}\right)\right\|_{2}^{2}+ \sum_{s=1}^{t-1} |\dotprod{W_h(f^t) - W_h(f^*), X_h(f^s)}|^2\Big]^{1/2}\\
       &\leq \sqrt{\epsilon} + \Big[\sum_{s=1}^{t-1} |\dotprod{W_h(f^t) - W_h(f^*), X_h(f^s)}|^2\Big]^{1/2},
    \end{aligned}
    $$
    where we use $\norm{W_{h}\left(f^{t}\right)-W_{h}\left(f^{*}\right)} \leq 1$. It follows that
    $$
        \begin{aligned}
              \sum_{t=1}^T \sum_{h=1}^H \mathrm{(A)}_{t,h} &\leq \sum_{t=1}^T\sum_{h=1}^H\bigg(\sqrt{\epsilon  } + \Big[\sum_{s=1}^{t-1} |\dotprod{W_h(f^t) - W_h(f^*), X_h(f^s)}|^2\Big]^{1/2}\bigg) \cdot \min\big\{\left\|X_{h}\left(f^{t}\right)\right\|_{\Sigma_{t ; h}^{-1}}, 1\big\}\\
             &  \leq \Big[\sum_{t=1}^T\sum_{h=1}^H {\epsilon  }\Big]^{1/2} \cdot \Big[\sum_{t=1}^T \sum_{h=1}^H \min\big\{\left\|X_{h}\left(f^{t}\right)\right\|^2_{\Sigma_{t ; h}^{-1}}, 1\big\}\Big]^{1/2}\\
             &  \quad  + \Big[\sum_{t=1}^T \sum_{h=1}^H \sum_{s=1}^{t-1} |\dotprod{W_h(f^t) - W_h(f^*), X_h(f^s)}|^2\Big]^{1/2} \cdot \Big[\sum_{t=1}^T \sum_{h=1}^H \min\big\{\left\|X_{h}\left(f^{t}\right)\right\|^2_{\Sigma_{t ; h}^{-1}}, 1\big\}\Big]^{1/2},
        \end{aligned}
    $$
    where the second inequality uses Cauchy-Schwarz inequality. We again invoke \eqref{eqn:bilinear_coro} to obtain that  
    $$\begin{aligned}
        &\sum_{t=1}^T \sum_{h=1}^H \mathrm{(A)}_{t,h} \\
        &\qquad  \leq \sqrt{HT\epsilon \cdot \min\{2\gamma_T(\epsilon,\cX), HT\}} + \Big[2\gamma_T(\epsilon,\cX)\sum_{t=1}^T \sum_{h=1}^H \sum_{s=1}^{t-1} |\dotprod{W_h(f^t) - W_h(f^*), X_h(f^s)}|^2\Big]^{1/2},\\
        &\qquad  \leq \sqrt{HT\epsilon \cdot \min\{2\gamma_T(\epsilon,\cX), HT\}} + \Big[2\gamma_T(\epsilon,\cX)\sum_{t=1}^T \sum_{h=1}^H \sum_{s=1}^{t-1} \left|\EE_{x_h \sim \pi_{f^s}, a_h \sim \tilde{\pi}} l_{f^s}(f^t, \zeta_h)\right|^2\Big]^{1/2},
    \end{aligned}
    $$
where the last inequality follows from the second condition of bilinear class. We proceed by plugging the bound of $\mathrm{A}_{t,h}$ into \eqref{eqn:bilinear_tmp}. For Q-type problem, i.e., $\tilde{\pi} = \pi_{f^s}$, we have
    $$
        \begin{aligned}
             & \sum_{t=1}^T\sum_{h=1}^{H} \EE_{\pi_{f^t}} \left[\cE_h\left(f^t, x^t_h, a^t_h\right)\right]\\
             & \leq \Big[2\gamma_T(\epsilon,\cX)\sum_{t=1}^T \sum_{h=1}^H \sum_{s=1}^{t-1} |\EE_{{\pi}_{f^s}} l_{f^s}(f^t, \zeta_h)|^2\Big]^{1/2} +  \min\left\{2\gamma_T(\epsilon, \cX), HT\right\} + \sqrt{HT\epsilon \cdot \min\{2\gamma_T(\epsilon,\cX), HT\}} \\
             & \leq \Big[2\gamma_T(\epsilon,\cX)\sum_{t=1}^T \sum_{h=1}^H \sum_{s=1}^{t-1} |\EE_{{\pi}_{f^s}} l_{f^s}(f^t, \zeta_h)|^2\Big]^{1/2} + 2 \min\left\{2\gamma_T(\epsilon, \cX), HT\right\} + HT\epsilon , \\ 
             &\leq \Big[2\gamma_T(\epsilon,\cX)\sum_{t=1}^T \sum_{h=1}^H \sum_{s=1}^{t-1} \EE_{{\pi}_{f^s}} | \EE_{x_{h+1}| x_h, a_h} l_{f^s}(f^t, \zeta_h)|^2\Big]^{1/2} + 2 \min\left\{2\gamma_T(\epsilon, \cX), HT\right\} + HT\epsilon, 
        \end{aligned}
    $$
        where the second inequality is implied by the AM-GM inequality and the last inequality uses Jensen's inequality. Therefore, for Q-type bilinear class, the GEC is upper bounded by $2\gamma_T(\epsilon, \cX)$ and $\pi_{\exp}(f^s,h) = \pi_{f^s}$. For V-type problem, i.e., $\tilde{\pi} = f^t$, we have
            $$
        \begin{aligned}
             & \sum_{t=1}^T\sum_{h=1}^{H} \EE_{\pi_{f^t}} \left[\cE_h\left(f^t, x^t_h, a^t_h\right)\right]\\
             & \qquad \leq \Big[2\gamma_T(\epsilon,\cX)\sum_{t=1}^T \sum_{h=1}^H \sum_{s=1}^{t-1} |\EE_{x_h \sim {\pi}_{f^s}, a_h \sim \pi_{f^t}} l_{f^s}(f^t, \zeta_h)|^2\Big]^{1/2} + 2 \min\left\{2\gamma_T(\epsilon, \cX), HT\right\} + HT\epsilon, \\ 
             & \qquad \leq \Big[2A \gamma_T(\epsilon,\cX) \sum_{t=1}^T \sum_{h=1}^H \sum_{s=1}^{t-1} \EE_{x_h \sim {\pi}_{f^s}, a_h \sim \mathrm{Unif}(\cA)}| \EE_{x_{h+1}| x_h, a_h} l_{f^s}(f^t, \zeta_h)|^2\Big]^{1/2} \\
             & \qquad \qquad + 2 \min\left\{2\gamma_T(\epsilon, \cX), HT\right\} + HT\epsilon, 
             & 
        \end{aligned}
    $$
    where the last inequality is from importance sampling. Therefore, for V-type bilinear class, if we choose $\pi_{\exp}(f^s,h) = \pi_{f^s} \circ_{h} \mathrm{Unif}(\cA)$, the GEC is upper bounded by $2A \cdot \gamma_T(\epsilon, \cX)$. This finishes the proof of Lemma~\ref{lemma:relationship:bilinear}.
\end{proof}

\subsection{Witness Rank} \label{appendix:witness:rank}
\begin{proof}[Proof of Lemma~\ref{lem:witness_rank}] 
To begin with, we note that for any $(h,x,a,g) \in [H] \times \cS \times \cA \times \cH$, we have
\begin{equation} \label{eqn:witness_hellin}
\begin{aligned}
& \max_{v \in \cV} [(\EE_{x' \sim \PP_{h, g}(\cdot|x_h,a_h)} v(x_h,a_h, x') - \EE_{x' \sim \PP_{h,f^*}(\cdot|x_h,a_h)} v(x_h,a_h, x'))^2] \\
&\qquad \leq \mathrm{TV}\big(\PP_{h,g}(\cdot|x_h,a_h), \PP_{h,f^*}(\cdot|x_h,a_h)\big)^2 \leq 2D_H^2\big(\PP_{h,g}(\cdot|x_h,a_h), \PP_{h,f^*}(\cdot|x_h,a_h)\big),
\end{aligned}
\end{equation}
where $\TV(\cdot,\cdot)$ is the total variation distance and $D_H^2(\cdot, \cdot)$ is the Hellinger divergence. In the following, we first establish the result for the Q-type witness rank. We recall the two conditions of witness rank in Definition~\ref{def:witness}:
$$
\begin{aligned}
     &\max_{v \in \cV_h} \EE_{x_h \sim \pi_f, a_h \sim \tilde{\pi}}[\EE_{x' \sim \PP_{h, g}(\cdot|x_h,a_h)} v(x_h,a_h, x') - \EE_{x' \sim \PP_{h, f^*}(\cdot|x_h,a_h)} v(x_h,a_h, x')] \geq \inner{W_h(g)}{X_h(f)}\\
     & \kappa_{\mathrm{wit}} \cdot \EE_{x_h \sim \pi_f, a_h \sim \tilde{\pi}}[\EE_{x' \sim \PP_{h, g}(\cdot|x_h,a_h)} V_{h+1,g}(x') - \EE_{x' \sim \PP_{h, f^*}(\cdot|x_h,a_h)} V_{h+1, g}(x')] \leq \inner{W_h(g)}{X_h(f)},
\end{aligned}
$$
where $\tilde{\pi}$ is either $\pi_f$ (Q-type) or $\pi_g$ (V-type). According to the first condition, we have
$$
\max_{v \in \cV_h} \EE_{x_h,a_h \sim \pi_f}[\EE_{x' \sim \PP_{h, g} (\cdot|x_h,a_h)} v(x_h,a_h, x') - \EE_{x' \sim \PP_{h,f^*}(\cdot|x_h,a_h)} v(x_h,a_h, x')] \geq \inner{W_h(g)}{X_h(f)}. 
$$
Moreover, by the Bellman equation, we know that $Q_{h,f}(x_h,a_h) = r_h(x_h,a_h) + \EE_{x' \sim \PP_{h, f}(\cdot|x_h,a_h)} V_{h+1, f}(x')$ for any $f \in \cH$. Combining this with the second condition of witness rank, we know that 
\begin{equation}\label{eqn:witness_bellman}
|\EE_{\pi_f}[Q_{h,f}(x_h,a_h) - r_h(x_h,a_h) - \EE_{x' \sim \PP_{h,f^*}(\cdot|x_h,a_h)} V_{h+1,f}(x')]| \leq \frac{1}{\kappa_{\mathrm{wit}}} \inner{W_h(f)}{X_h(f)}.
\end{equation}
Let $\Sigma_{t ; h} = \epsilon I + \sum_{s=1}^{t-1} X_h(f^s) X_h(f^s)^\top$. We can apply the elliptical potential lemma (Lemma~\ref{lemma:elliptical:potential}) to obtain that 
\begin{equation} \label{eqn:witness_potential}
    \sum_{t=1}^T \min\{\norm{X_h(f^t)}_{\Sigma_{t ; h}^{-1}}^2,1\} \leq {2d\log\left(\frac{\epsilon + T}{\epsilon}\right)} := d(\epsilon).
\end{equation}
Then, by the performance difference lemma \citep{kakade2002approximate, jiang@2017}, it follows that
\begin{equation}\label{eqn:witness_performance}
\begin{aligned}
     &\sum_{t=1}^T V_{f^t} - V^{\pi_{f^t}} = \sum_{t=1}^T\sum_{h=1}^{H} \EE_{\pi_{f^t}} \left[\cE_h\left(f^t, x_h, a_h\right) \right]\\
     &\leq  \sum_{t=1}^T \sum_{h=1}^H \min\{\frac{1}{\kappa_{\mathrm{wit}}} |\inner{W_h(f^t) }{X_h(f^t)}|, 1\}\\
    &=  \sum_{t=1}^T \sum_{h=1}^H \min\{\frac{1}{\kappa_{\mathrm{wit}}} |\inner{W_h(f^t) }{X_h(f^t)}|, 1\} \cdot \Big(\mathbf{1}\{\left\|X_{h}\left(f^{t}\right)\right\|_{\Sigma_{t ; h}^{-1}} \leq 1\} + \mathbf{1}\{\left\|X_{h}\left(f^{t}\right)\right\|_{\Sigma_{t ; h}^{-1}} > 1\}\Big),
\end{aligned}
\end{equation}
where the first inequality follows from \eqref{eqn:witness_bellman}. We proceed by invoking \eqref{eqn:witness_potential} and Cauchy-Schwarz inequality to obtain that
$$
\begin{aligned}
    &\sum_{t=1}^T V_{f^t} - V^{\pi_{f^t}}\\
    &\leq \sum_{t=1}^T \sum_{h=1}^H \frac{1}{\kappa_{\mathrm{wit}}} \Big(\norm{W_h(f^t)}_{\Sigma_{t;h}} \cdot \norm{X_h(f^t)}_{\Sigma_{t;h}^{-1}} \cdot \mathbf{1}\{\left\|X_{h}\left(f^{t}\right)\right\|_{\Sigma_{t ; h}^{-1}} \leq 1 \}\Big) +  \min \{d(\epsilon)H, HT\}\\     
    &\leq \sum_{t=1}^T\sum_{h=1}^H \underbrace{\frac{1}{\kappa_{\mathrm{wit}}}\norm{W_h(f^t) }_{\Sigma_{t ; h}} \cdot \min \{\norm{X_h(f^t)}_{\Sigma_{t ; h}^{-1}}, 1\}}_{\displaystyle \mathrm{(A)}_{t,h}} + \min\{d(\epsilon)H, HT\},
\end{aligned}
$$
We now expand $\mathrm{(A)}_{t,h}$ by the definition of $\Sigma_{t;h}$:
$$
\begin{aligned}
      \mathrm{(A)}_{t,h} &= \frac{1}{\kappa_{\mathrm{wit}}} \Big[\epsilon \cdot \norm{W_h(f^t)}^2_2+\sum_{s=1}^{t-1}\left|\inner{W_h(f^t)}{X_h(f^s)}\right|^2\Big]^{1/2} \cdot \min \{\norm{X_h(f^t)}_{\Sigma_{t ; h}^{-1}}, 1\}\\
      &\leq \frac{1}{\kappa_{\mathrm{wit}}} \Big(\sqrt{\epsilon} +  \Big[\sum_{s=1}^{t-1}\left|\inner{W_h(f^t)}{X_h(f^s)}\right|^2\Big]^{1/2} \Big) \cdot \min \{\norm{X_h(f^t)}_{\Sigma_{t ; h}^{-1}}, 1\},
     \end{aligned}
$$
where we use $\sqrt{a + b} \leq \sqrt{a} + \sqrt{b}$. Summing over $t\in [T]$ and $h \in [T]$, it follows that 
$$
\begin{aligned}
     &\sum_{t=1}^T \sum_{h=1}^H \mathrm{(A)}_{t,h} \leq \frac{1}{\kappa_{\mathrm{wit}}} \sum_{t=1}^T \sum_{h=1}^H \Big(\sqrt{\epsilon} +  \Big[\sum_{s=1}^{t-1}\left|\inner{W_h(f^t)}{X_h(f^s)}\right|^2\Big]^{1/2} \Big) \cdot \min \{\norm{X_h(f^t)}_{\Sigma_{t ; h}^{-1}}, 1\},\\
     &\leq \frac{1}{\kappa_{\mathrm{wit}}}\sqrt{HT\epsilon \cdot \min\{d(\epsilon)H, HT\}} + \frac{1}{\kappa_{\mathrm{wit}}}\Big[d(\epsilon)H\sum_{t=1}^T \sum_{h=1}^H \sum_{s=1}^{t-1} |\dotprod{W_h(f^t), X_h(f^s)}|^2\Big]^{1/2},\\
      &\leq \frac{1}{\kappa_{\mathrm{wit}}}\Big[d(\epsilon)H \sum_{t=1}^T \sum_{h=1}^H \sum_{s=1}^{t-1}|\max_{v \in \cV_h} \EE_{x_h^s, a_h^s \sim \pi_{f^s}}[\EE_{x' \sim \PP_{h, f^t}(\cdot|x_h,a_h)} v(x_h,a_h, x') - \EE_{x' \sim \PP_{h,f^*}(\cdot|x_h,a_h)} v(x_h,a_h, x')]|^2\Big]^{1/2} \\
     &\qquad +\frac{1}{\kappa_{\mathrm{wit}}} \sqrt{HT \epsilon \cdot \min\{d(\epsilon)H, HT\}},
\end{aligned}
$$
where the second inequality follows from Cauchy-Schwarz inequality, and the last inequality is due to the first condition of witness rank.  Together with  \eqref{eqn:witness_hellin}, we further obtain 
\$
\sum_{t=1}^T \sum_{h=1}^H \mathrm{(A)}_{t,h} & \le \Big[4dH\cdot\log(\frac{\epsilon+T}{\epsilon})/\kappa_{\mathrm{wit}}^2 \sum_{t=1}^T \sum_{h=1}^H \sum_{s=1}^{t-1} \EE_{x_h^s, a_h^s \sim \pi_{f^s}}  D_H^2\big(\PP_{h,f^t}(\cdot|x_h^s,a_h^s), \PP_{h,f^*}(\cdot|x_h^s,a_h^s) \big)\Big]^{1/2} \\
& \qquad + 2 \min \{4dH\cdot\log(\frac{\epsilon+T}{\epsilon})/\kappa_{\mathrm{wit}}^2, HT\}+HT\epsilon/\kappa_{\mathrm{wit}}^2 .
\$
We now rescale $\epsilon$ to be $\epsilon' = \epsilon/\kappa_{\mathrm{wit}}^2$ and return to \eqref{eqn:witness_performance}:
$$
\begin{aligned}
    &\sum_{t=1}^T V_{f^t} - V^{\pi_{f^t}}\\
    &\qquad \leq \Big[4dH\cdot\log(\frac{\epsilon'\kappa_{\mathrm{wit}}^2 +T}{\epsilon'\kappa_{\mathrm{wit}}^2})/\kappa_{\mathrm{wit}}^2 \sum_{t=1}^T \sum_{h=1}^H \sum_{s=1}^{t-1} \EE_{x_h^s, a_h^s \sim \pi_{f^s}}  D_H^2\big(\PP_{h,f^t}(\cdot|x_h^s,a_h^s), \PP_{h,f^*}(\cdot|x_h^s,a_h^s) \big)\Big]^{1/2} \\
     & \qquad\qquad + 2 \min \{4dH\cdot\log(\frac{\epsilon'\kappa_{\mathrm{wit}}^2+T}{\epsilon'\kappa_{\mathrm{wit}}^2})/\kappa_{\mathrm{wit}}^2, HT\}+HT\epsilon',
\end{aligned}
$$
which implies that for Q-type witness rank, the $\GEC$ is upper bounded by $4dH\cdot\log(1+\frac{T}{\epsilon'\kappa_{\mathrm{wit}}^2})/\kappa_{\mathrm{wit}}^2$ where $d$ is the Q-type witness rank. For V-type witness rank, following the same procedure, we can obtain that 
$$
\begin{aligned}
     &\sum_{t=1}^T V_{f^t} - V^{\pi_{f^t}} \leq  \sum_{t=1}^T \sum_{h=1}^H \min\{\frac{1}{\kappa_{\mathrm{wit}}} |\inner{W_h(f^t)}{X_h(f^t)}|, 1\}\\
     &\leq \Big[2d(\epsilon)H/\kappa_{\mathrm{wit}}^2 \sum_{t=1}^T \sum_{h=1}^H \sum_{s=1}^{t-1} \EE_{x_h^s\sim \pi_{f^s}, a_h^s \sim \pi_{f^t}}  D_H^2\big(\PP_{h,f^t}(\cdot|x_h^s,a_h^s), \PP_{h,f^*}(\cdot|x_h^s,a_h^s) \big)\Big]^{1/2}\\
     &\qquad + 2 \min \{d(\epsilon)H/\kappa_{\mathrm{wit}}^2, HT\}+HT\epsilon\\
     &\leq \Big[4dAH\cdot \log(\frac{\epsilon'\kappa_{\mathrm{wit}}^2+T}{\epsilon'\kappa_{\mathrm{wit}}^2})/\kappa_{\mathrm{wit}}^2 \sum_{t=1}^T \sum_{h=1}^H \sum_{s=1}^{t-1} \EE_{x_h^s\sim \pi_{f^s}, a_h^s \sim \mathrm{Unif}(\cA)} D_H^2 \big(\PP_{h,f^t}(\cdot|x_h^s,a_h^s), \PP_{h,f^*}(\cdot|x_h^s,a_h^s) \big)\Big]^{1/2}\\
     &\qquad + 2 \min \{4dAH\cdot\log(\frac{\epsilon'\kappa_{\mathrm{wit}}^2+T}{\epsilon'\kappa_{\mathrm{wit}}^2})/\kappa_{\mathrm{wit}}^2, HT\}+HT\epsilon',
\end{aligned}
$$
where we rescale $\epsilon$ to be $\epsilon' = \epsilon/\kappa_{\mathrm{wit}}^2$ and use importance sampling in the last step. Hence, for V-type witness rank, the $\GEC$ is upper bounded by $4dAH\cdot\log(1+\frac{T}{\epsilon'\kappa_{\mathrm{wit}}^2})/\kappa_{\mathrm{wit}}^2$. Combining results for Q-type and V-type witness rank, we conclude the proof of Lemma~\ref{lem:witness_rank}.
\end{proof}

\subsection{Generalized Completeness} \label{appendix:completeness}
The generalized completeness assumption is a generalization of the Bellman completeness. The main motivation to introduce the completeness assumption is to cancel the sampling variance from the state transition. In specific, we have
$$
\begin{aligned}
\EE_{x_{h+1}|x_h,a_h} l_{f'}((g_h, f_{h+1}), \zeta_h)^2 &= (\EE_{x_{h+1}|x_h,a_h} l_{f'}((g_h, f_{h+1}), \zeta_h))^2\\
&\quad + \EE_{x_{h+1}|x_h,a_h} (l_{f'}((g_h, f_{h+1}), \zeta_h) - \EE_{x_{h+1}|x_h,a_h} l_{f'}((g_h, f_{h+1}), \zeta_h))^2,
\end{aligned}
$$
where the second term is the conditional sampling variance. Due to the special denominator in \eqref{eqn:posterior_model_free_mdp}, we can replace $l_{f'}((g_h, f_{h+1}), \zeta_h)^2$ with $l_{f'}((g_h, f_{h+1}), \zeta_h)^2 - l_{f'}((\cP_h f_{h+1}, f_{h+1}), \zeta_h)^2$ in the theoretical analysis where $\cP_h f_{h+1} \in \cH_h$. Thus, as long as the expectation of the second term is equal to the conditional variance, we can cancel the variance in the theoretical analysis. To this end, we require
$$
l_{f'}((g_h, f_{h+1}), \zeta_h) - \EE_{x_{h+1}|x_h,a_h} l_{f'}((g_h, f_{h+1}), \zeta_h) = l_{f'}((\cP_h f_{h+1}, f_{h+1}), \zeta_h).
$$
In this subsection, we show that the linear Bellman complete MDP and linear mixture MDP satisfy this condition naturally.

\paragraph{Linear Bellman Completeness.} We consider a hypothesis class $\cH$ of linear functions with respect to some known feature $\phi:\cS \times \cA \to \cV$ where $\cV$ is a Hilbert space.

\begin{example}[Linear Bellman Complete] We say $\cH$ is linear Bellman complete with respect to the underlying MDP if $f^* = (\theta_{1,f^*}, \cdots, \theta_{H, f^*}) \in \cH$ and there exists $\cT_h: \cV \to \cV$ such that for all $f=(\theta_{1,f},\cdots,\theta_{H,f}) \in \cH$ and all $h \in [H]$, we have
\begin{equation} \label{eqn:b_complete}
\cT_h(\theta_{h+1,f})^\top \phi(x,a) = R_h(x,a) + \EE_{x' \sim \PP_{h, f^*}(\cdot|x,a)} \max_{a'\in\cA} \theta^\top_{h+1,f}\phi(x',a').
\end{equation}
\end{example}
Let $g_h \in \cH_h$ and $f_{h+1} \in \cH_{h+1}$, respectively. We consider the following loss function.
$$l_{f'}((g_h, f_{h+1}), \zeta_h) := \theta^\top_{h,g} \phi(x_h,a_h) - r_h - \max_{a' \in \cA} \theta_{h+1,f}^\top \phi(x_{h+1},a').$$
It follows that 
$$
\begin{aligned}
&l_{f'}((g_h, f_{h+1}), \zeta_h) - \EE_{x_{h+1} \sim \PP_{h, f^*}(\cdot|x_h,a_h)} l_{f'}((g_h, f_{h+1}), \zeta_h)\\
& \qquad = \Big( \theta^\top_{h,g} \phi(x_h,a_h) - r_h - \max_{a' \in \cA} \theta_{h+1,f}^\top \phi(x_{h+1},a') \Big) \\
& \qquad \qquad - \Big(\theta^\top_{h,g} \phi(x_h,a_h) - r_h - \EE_{x_{h+1} \sim \PP_{h, f^*}(\cdot|x_h,a_h)}\max_{a' \in \cA} \theta_{h+1,f}^\top \phi(x_{h+1},a')\Big)\\
& \qquad = \cT_h(\theta_{h+1, f})^\top \phi(x_h,a_h) - r_h - \max_{a' \in \cA} \theta^\top_{h+1,f} \phi(x_{h+1}, a')\\
& \qquad = l_{f'}((\cT_h(\theta_{h+1, f}), f_{h+1}), \zeta_h).
\end{aligned}
$$
Therefore, we conclude that $\cP_h = \cT_h$ defined in \eqref{eqn:b_complete}.

\paragraph{Linear Mixture MDP.} 

Linear mixture MDP \citep{ayoub2020model,cai2020provably,modi2020sample} is defined as follows.

\begin{definition} \label{def:linear_mixture} We say an MDP is a liner mixture model if there exists (known) feature $\phi: \cS \times \cA \times \cS \to \cV$ and $\psi: \cS \times \cA \to \cV$; and (unknown) $\theta^* \in \cV$ for some Hilbert space $\cV$ and for all $h \in [H]$ and $(x,a,x') \in \cS \times \cA\times \cS$, we have
    \begin{equation}
        \PP_{h}(x^{\prime} \mid x, a)=\langle\theta_{h}^{*}, \phi(x, a, x^{\prime})\rangle, \qquad  r_h(x, a)=\langle\theta_{h}^{*}, \psi(x, a)\rangle.
    \end{equation}
\end{definition}

Suppose that $f = (\theta_{1,f}, \cdots, \theta_{H,f}) \in \cH$ so that $\theta_{h,f} \in \cV$ for all $h \in [H]$. By the definition of linear mixture MDP and Bellman equation, we know that 
$$
Q_{h,f}(x,a) = R_h(x,a) + \EE_{x' \sim \PP_{h,f}(\cdot|x_h,a_h)} V_{h+1,f}(x') =  \theta_{h,f}^\top \Big(\psi(x,a) + \sum_{x' \in \cS} \phi(x,a,x') V_{h+1,f}(x')\Big).
$$
With $\zeta_h = (x_h,a_h,r_h,x_{h+1})$, we adopt the following loss function:
$$
l_{f'}((g_h, f_{h+1}), \zeta_h) := \theta_{h,g}^\top \Big[\psi(x_h, a_h) +  \sum_{x' \in \cS} \phi(x_h,a_h,x') V_{h+1,f'}(x') \Big] - r_h - V_{h+1, f'}(x_{h+1}).
$$
Let $f^*=(\theta_{1,f^*}, \cdots, \theta_{H, f^*}) \in \cH$ be the true model. Then, we have
$$
\begin{aligned}
&l_{f'}((g_h, f_{h+1}), \zeta_h) - \EE_{x' \sim \PP_{h, f^*}(\cdot|x_h,a_h)} l_{f'}((g_h, f_{h+1}), \zeta_h)\\
& \qquad =\theta_{h,g}^\top \Big[\psi(x_h, a_h) +  \sum_{x' \in \cS} \phi(x_h,a_h,x') V_{h+1,f'}(x') \Big] - r_h - V_{h+1, f'}(x_{h+1}) \\
&\qquad \qquad - (\theta_{h,g} - \theta_{h, f^*})^\top \Big[\psi(x_h, a_h) +  \sum_{x' \in \cS} \phi(x_h,a_h,x') V_{h+1,f'}(x') \Big]\\
& \qquad = \theta_{h, f^*}^\top \Big[\psi(x_h, a_h) +  \sum_{x' \in \cS} \phi(x_h,a_h,x') V_{h+1,f'}(x') \Big] - r_h - V_{h+1,f'}(x_{h+1})\\
& \qquad = l_{f'}((f^*_h, f_{h+1}), \zeta_h),
\end{aligned}
$$
where the first equality is because the transition follows from the true model $f^*$. Therefore, we conclude that $(\cP_h f_{h+1}) = f^*_h$ for all $f_{h+1} \in \cH_{h+1}$.

\section{Proofs for Examples of Partially Observable Interactive Decision Making}

We will use $(\bfm, \Mb, \bfq, \bfq_0)$ and $(\bfm^t, \Mb^t, \bfq^t, \bfq^t_0)$ to denote operators for models $f^*$ and $f^t$, respectively. We also use the shorthand $\pi^t = \pi_{f^t}$. See Appendix~\ref{appendix:notation} for more explanations of notations.

\subsection{Generalized Regular PSR} \label{appendix:general:psr}

\begin{proof}[Proof of Lemma \ref{lemma:general:regular:psr}] Our proof consists of three steps. In the first step, we upper bound the prediction error by the out-of-sample operator errors. In the second step, we connect the training error with in-sample operator errors. Finally, we conclude the proof by bridging these two types of operator errors via new $\ell_2$ eluder techniques.

    \paragraph{Step 1: Prediction Error Decomposition}

    \begin{lemma}[Prediction Error Decomposition Lemma] \label{lemma:decomp:test}
        Let $\cH$ be a set of $\alpha$-generalized regular PSRs defined in Definition~\ref{def:general:regular:psr}.  For any $\{f^t, \pi^t = \pi_{f^t}\}_{t \in [T]}$, it holds that
        \$
        \sum_{t = 1}^T V_{f^t} - V_{f^*}^{\pi^t} &\le \frac{1}{\alpha} \sum_{t = 1}^T \Big( \sum_{h = 1}^H \sum_{\tau_h} \Big\|  \big( \Mb_h^t(o_h, a_h) - \Mb_h(o_h, a_h) \big) \bfq(\tau_{h-1}) \Big\|_1 \cdot \pi^t(\tau_h)
        + \big\|\bfq_0^t - \bfq_0 \big\|_1  \Big).
        \$
    \end{lemma}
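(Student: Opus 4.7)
The plan is to control each term $V_{f^t} - V_{f^*}^{\pi^t}$ by an $\ell_1$-type distance between trajectory distributions under $f^t$ and $f^*$ when running $\pi^t$, decompose that distance via a telescoping identity on the observable operator products, and then collapse each resulting term using the first regularity condition in Definition \ref{def:general:regular:psr}.

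\textbf{Step 1 (reduction to trajectory $\ell_1$ distance).} Since $0 \le \sum_{h=1}^H r_h \le 1$ and $\PP^{\pi}(\tau_H) = \PP(\tau_H)\pi(\tau_H)$ by \eqref{eq:def:prob1}, I would first bound
\[
V_{f^t} - V_{f^*}^{\pi^t} \;\le\; \sum_{\tau_H} \pi^t(\tau_H)\,\bigl|\PP_{f^t}(\tau_H) - \PP(\tau_H)\bigr|,
\]
and then use the observable operator representation \eqref{eq:prob:traj} to rewrite $\PP_{f^t}(\tau_H) = \Mb^t_{H:1}(\tau_H)\bfq_0^t$ and $\PP(\tau_H) = \Mb_{H:1}(\tau_H)\bfq_0$.

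\textbf{Step 2 (telescoping identity).} For $k \in \{0,\ldots,H\}$ define the hybrid
\[
A_k(\tau_H) \;:=\; \Mb^t_{H}(o_H,a_H)\cdots \Mb^t_{k+1}(o_{k+1},a_{k+1})\, \Mb_{k:1}(\tau_k)\,\bfq_0,
\]
so that $A_0 = \Mb^t_{H:1}\bfq_0$ and $A_H = \Mb_{H:1}\bfq_0$. A direct calculation, using $\Mb_{k-1:1}(\tau_{k-1})\bfq_0 = \bfq(\tau_{k-1})$, gives
\[
A_{k-1} - A_k \;=\; \Mb^t_{H}\cdots \Mb^t_{k+1}\,\bigl(\Mb^t_{k}(o_k,a_k) - \Mb_k(o_k,a_k)\bigr)\,\bfq(\tau_{k-1}).
\]
Adding the $(\bfq_0^t - \bfq_0)$ contribution then produces the identity
\[
\Mb^t_{H:1}\bfq_0^t - \Mb_{H:1}\bfq_0 \;=\; \Mb^t_{H:1}(\bfq_0^t - \bfq_0) \;+\; \sum_{h=1}^H \Mb^t_H\cdots \Mb^t_{h+1}\bigl(\Mb^t_h - \Mb_h\bigr)\bfq(\tau_{h-1}).
\]

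\textbf{Step 3 (apply generalized regularity of $f^t$).} For the $h$-th telescoping term, I would split $\tau_H = (\tau_h,\tau_{h+1:H})$, identify $\Mb^t_H\cdots \Mb^t_{h+1}$ with the linear weight $\bfm^t(o_{h+1:H},a_{h+1:H})$ via \eqref{eq:def:m}, and note that $\pi^t(\,\cdot\mid\tau_h)$ is itself a valid policy from step $h+1$ onwards. Applying the first condition of Definition \ref{def:general:regular:psr} to the PSR $f^t$ at step $h+1$ with $\bfx = (\Mb^t_h - \Mb_h)\bfq(\tau_{h-1})$ yields
\[
\sum_{\tau_{h+1:H}} \pi^t(\tau_{h+1:H}\mid \tau_h)\,\bigl|\bfm^t(o_{h+1:H},a_{h+1:H})\,\bfx\bigr| \;\le\; \frac{\|\bfx\|_1}{\alpha}.
\]
An analogous application at step $1$ collapses the $\Mb^t_{H:1}(\bfq_0^t - \bfq_0)$ piece to $\|\bfq_0^t - \bfq_0\|_1/\alpha$. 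Plugging these bounds into Steps 1--2, using the triangle inequality, and summing over $t\in [T]$ delivers the claimed inequality.

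\textbf{Main obstacle.} The only delicate point is the alignment in Step 2: the hybrids $A_k$ must leave the \emph{true} operators on the right (producing the factor $\bfq(\tau_{k-1})$ as required by the statement) while simultaneously leaving $f^t$'s operators on the left, since the regularity bound I invoke in Step 3 is a property of $f^t$'s own linear weights $\bfm^t$. If one telescoped in the opposite order, the residual would involve $\bfq^t(\tau_{h-1})$ and one would need regularity of $f^*$ to close the argument, which is still fine but does not match the stated form. Once this orientation is fixed, the remaining estimates are immediate applications of Definition \ref{def:general:regular:psr}.
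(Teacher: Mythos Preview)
Your proposal is correct and follows essentially the same route as the paper: reduce to the $\ell_1$ trajectory distance via $\sum_h r_h\le 1$, telescope the operator product with $f^t$'s operators on the left and the true $\bfq(\tau_{h-1})$ on the right, and collapse each term using Condition~1 of Definition~\ref{def:general:regular:psr} applied to $f^t$. Your remark about the telescoping orientation is exactly the point---the paper does it the same way (left factors are $\bfm^t$), so that the regularity of $f^t\in\cH$ is what gets invoked.
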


    \begin{proof}[Proof of Lemma~\ref{lemma:decomp:test}]
        Fix $t \in [T]$, we have
        \$
        V_{f^t} - V_{f^*}^{\pi^t} &=  \sum_{\tau_H}  \big( \PP_{f^t}^{\pi^t}(\tau_H) - \PP_{f^*}^{\pi^t}(\tau_H) \big) \cdot r(\tau_H)  \\
        & \le \sum_{\tau_H} \big| \PP_{f^t}^{\pi^{t}}(\tau_H) - \PP_{f^*}^{\pi^t}(\tau_H) \big| \\
        & = \sum_{\tau_H} \Big| \Mb_{H}^t(o_H, a_H) \cdots \Mb_1^t(o_1, a_1) \bfq_0^t - \Mb_{H}(o_H, a_H) \cdots \Mb_1(o_1, a_1) \bfq_0 \Big| \cdot \pi^t(\tau_H),
        \$
        where $\Mb_H^t(o_H, a_H)$ is a vector because $|\cU_{H+1}|=1$, the first inequality uses the fact that $r(\tau_H) : =\sum_{h = 1}^H r_h(o_h, a_h) \le 1$ and the last equality uses \eqref{eq:prob:traj}. By the triangle inequality, we further obtain
        \#\label{eq:112}
        V_{f^t} - V_{f^*}^{\pi^t} &\le  \sum_{h = 1}^H \sum_{\tau_H} \Big| \Mb_H^t(o_H, a_H) \cdots \Mb_{h+1}^t(o_{h+1}, a_{h+1}) \big( \Mb_h^t(o_h, a_h) - \Mb_h(o_h, a_h) \big) \bfq(\tau_{h-1}) \Big| \cdot \pi^t(\tau_H) \notag\\
        & \qquad \quad  + \sum_{\tau_H} \Big| \Mb_H^t(o_H, a_H) \cdots \Mb_{1}^t(o_{1}, a_{1})  (\bfq_0^t - \bfq_0) \Big|  \cdot \pi^t(\tau_H)   \notag \\
        & =  \sum_{h = 1}^H \sum_{\tau_H} \Big| \bfm^t(o_{h+1:H}, a_{h+1:H}) \big( \Mb_h^t(o_h, a_h) - \Mb_h(o_h, a_h) \big) \bfq(\tau_{h-1}) \Big| \cdot \pi^t(\tau_H)  \notag \\
        & \qquad \quad  + \sum_{\tau_H} \Big| \bfm^t(o_{1:H}, a_{1:H})   (\bfq_0^t - \bfq_0) \Big| \cdot \pi^t(\tau_H)  ,
        \#
        where the last equality follows from the definition of $\bfm^t$ in \eqref{eq:def:m}. By the definition of the generalized regular PSR in Definition~\ref{def:general:regular:psr}, we further have
        \# \label{eq:113}
        & \sum_{h = 1}^H \sum_{\tau_H} \Big| \bfm^t(o_{h+1:H}, a_{h+1:H}) \big( \Mb_h^t(o_h, a_h) - \Mb_h(o_h, a_h) \big) \bfq(\tau_{h-1}) \Big| \cdot \pi^t(\tau_H) \notag \\
        & \qquad \le \frac{1}{\alpha} \sum_{h = 1}^H \sum_{\tau_h} \Big\Vert \big( \Mb_h^t(o_h, a_h) - \Mb_h(o_h, a_h) \big) \bfq(\tau_{h-1}) \Big\Vert_1 \cdot \pi^t(\tau_h),
        \#
        where the inequality uses Condition 1 in Definition~\ref{def:general:regular:psr} with $\bfx = (\Mb_h^t(o_h, a_h) - \Mb_h(o_h, a_h) ) \bfq(\tau_{h-1})$ for any $h \in [H]$. Similarly, we obtain
        \# \label{eq:114}
        \sum_{\tau_H} \Big| \bfm^t(o_{1:H}, a_{1:H})   (\bfq_0^t - \bfq_0) \Big| \cdot \pi^t(\tau_H) \le \frac{1}{\alpha} \big\| \bfq_0^t - \bfq_0 \big\|_1,
        \#
        where the inequality follows from Condition 1 in Definition~\ref{def:general:regular:psr} with $\bfx = \bfq_0^t - \bfq_0$.
        Plugging \eqref{eq:113} and \eqref{eq:114} into \eqref{eq:112} and then taking summation over $t \in [T]$ finish the proof of Lemma~\ref{lemma:decomp:test}.
    \end{proof}

    \paragraph{Step 2: Training Error Decomposition}

    \begin{lemma}[Training Error Decomposition Lemma] \label{lemma:decomp:train}
        For any $\{f^t, \pi^t = \pi_{f^t}\}_{t \in [T]}$, it holds that
        \$
        & \sum_{t = 1}^{T} \sum_{h = 1}^H \sum_{s = 1}^{t - 1} \sum_{\tau_{h-1} \sim \PP^{\pi^s}} \Big( \sum_{o_h \in \cO, a_h \in \cA} \Big\| \big( \Mb_h^t(o_h, a_h) - \Mb_h(o_h, a_h) \big) \bar{\bfq}(\tau_{h-1}) \Big\|_1 \Big)^2 \\
        & \qquad \lesssim  \frac{A^3 U_A^4}{\alpha^2} \sum_{t = 1}^T \sum_{s = 1}^{t - 1} \sum_{h = 0}^{H - 1} D_H^2 \Big(\PP_{f^t}^{\pi_{\exp}(f^s, h)}, \PP_{f^*}^{\pi_{\exp}(f^s, h)} \Big) ,
        \$
        where $\lesssim$ omits an absolute constant and $\pi_{\exp}(f^s, h) =\pi_{f^s} \circ_{h} \operatorname{Unif}(\mathcal{A}) \circ_{h+1} \operatorname{Unif}\left(\mathcal{U}_{A, h+1}\right) $ is the exploration policy defined in Example~\ref{example:policy:psr}.
    \end{lemma}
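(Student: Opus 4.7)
The plan is a three-step reduction: (i) decompose the hybrid operator difference $(\Mb_h^t - \Mb_h)(o_h, a_h)\bar{\bfq}(\tau_{h-1})$ into a clean two-model piece plus a predictive-state-difference piece; (ii) convert the first piece, via importance sampling against $\pi_{\exp}(f^s, h)$, into a TV distance between $f^t$- and $f^*$-trajectory distributions, and bound the second piece via Condition~2 of Definition~\ref{def:general:regular:psr}; (iii) square, average over $\tau_{h-1}\sim \PP^{\pi^s}$, sum over $t, s, h$, and apply $\TV^2 \le 2 D_H^2$ together with data-processing monotonicity of Hellinger distance.

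Concretely, writing
\$
(\Mb_h^t - \Mb_h)(o_h, a_h)\bar{\bfq}(\tau_{h-1}) = \bigl[\Mb_h^t(o_h, a_h)\bar{\bfq}^t(\tau_{h-1}) - \Mb_h(o_h, a_h)\bar{\bfq}(\tau_{h-1})\bigr] - \Mb_h^t(o_h, a_h)\bigl[\bar{\bfq}^t - \bar{\bfq}\bigr](\tau_{h-1}),
\$
the observable-operator identity gives the $u$-th entry of the first bracket as $\PP_{f^t}(u, \tau_h)/\PP_{f^t}(\tau_{h-1}) - \PP(u, \tau_h)/\PP(\tau_{h-1})$. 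After multiplying by $\PP^{\pi^s}(\tau_{h-1}) = \PP(\tau_{h-1})\pi^s(\tau_{h-1})$, summing over $(o_h, a_h)$ and $u \in \cU_{h+1}$, and recognising that the uniform distribution over $a_h \in \cA$ and over action sequences in $\cU_{A, h+1}$ supplies exactly the $1/(A\,|\cU_{A, h+1}|)$ importance weight that converts the resulting joint law into $\PP^{\pi_{\exp}(f^s, h)}$, this piece collapses to $A\,U_A$ times a TV between length-$(h + W)$ trajectory marginals of $\PP_{f^t}$ and $\PP_{f^*}$. The second bracket is handled by Condition~2 of Definition~\ref{def:general:regular:psr} (applicable since $f^t$ is $\alpha$-generalized regular), which gives $\sum_{o_h, a_h}\|\cdot\|_1 \le (A\,U_A/\alpha)\,\|\bar{\bfq}^t(\tau_{h-1}) - \bar{\bfq}(\tau_{h-1})\|_1$; the $\ell_1$ norm on the right is itself a TV over trajectories up to step $h - 1 + W'$ (by the same importance-sampling argument, now against $\pi_{\exp}(f^s, h - 1)$, combined with Condition~1).

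After these reductions the inner $\sum_{o_h, a_h}\|\cdot\|_1$ is bounded by $(A\,U_A/\alpha)$ times a one-step TV, and squaring introduces an additional $A\,U_A$ factor via Cauchy--Schwarz/Jensen to convert $(\sum|\cdot|)^2$ into the sum-of-squares against the uniform sampling distribution. Averaging over $\tau_{h-1}\sim\PP^{\pi^s}$ and summing $(s,t,h)$ collects the prefactor $A \cdot (A\,U_A)^2 \cdot U_A / \alpha^2 = A^3 U_A^4/\alpha^2$. A final application of $\TV(P,Q)^2 \le 2 D_H^2(P,Q)$ and the data-processing inequality for Hellinger distance (which bounds the length-$(h + W)$ marginal Hellinger by that of the full length-$H$ trajectory) yields the stated bound, with summand $D_H^2(\PP_{f^t}^{\pi_{\exp}(f^s, h)}, \PP_{f^*}^{\pi_{\exp}(f^s, h)})$.

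The main obstacle is the first piece of the decomposition: once $\PP^{\pi^s}(\tau_{h-1}) = \PP(\tau_{h-1})\pi^s(\tau_{h-1})$ is multiplied in, the hybrid quantity $\PP(\tau_{h-1})\,\PP_{f^t}(u, \tau_h)/\PP_{f^t}(\tau_{h-1})$ does not simplify to a probability under either $f^t$ or $f^*$ alone, and a naive bound would blow up whenever $\PP_{f^t}(\tau_{h-1})$ is small relative to $\PP(\tau_{h-1})$. A clean way forward is to add and subtract $\pi^s(\tau_{h-1})\PP_{f^t}(u, \tau_h)$, splitting the hybrid into a genuine $\PP^{\pi_{\exp}(f^s, h)}$-level TV plus a further predictive-state residual of the form $\pi^s(\tau_{h-1})\PP_{f^t}(u,\tau_h)\bigl[\PP(\tau_{h-1})/\PP_{f^t}(\tau_{h-1}) - 1\bigr]$; the latter can be absorbed into the bound on the predictive-state-difference piece via a short recursive argument that is structurally identical to the treatment of the second bracket. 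Tracking this cross term, the two exploration policies $\pi_{\exp}(f^s, h)$ and $\pi_{\exp}(f^s, h - 1)$ that end up appearing, and the $A$ and $U_A$ factors accumulated through squaring and importance sampling is where the bulk of the technical bookkeeping lies.
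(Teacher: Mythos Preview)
Your decomposition into the two brackets and your handling of the second one via Condition~2 are on track and match the paper. The genuine gap is in the first bracket: the ``obstacle'' you identify does not exist. The quantity $\PP_{f^t}(u,\tau_h)/\PP_{f^t}(\tau_{h-1})$ is already the clean conditional probability $\PP_{f^t}(o_h, u \mid \tau_{h-1}, \mathrm{do}(a_h))$ under model $f^t$, and likewise $\PP(u,\tau_h)/\PP(\tau_{h-1}) = \PP_{f^*}(o_h, u \mid \tau_{h-1}, \mathrm{do}(a_h))$. No hybrid appears. After importance sampling over $a_h\sim\mathrm{Unif}(\cA)$ and over the action sequence in $u$ against $\mathrm{Unif}(\cU_{A,h+1})$, the inner sum $\sum_{o_h, a_h, u}|\cdot|$ for a fixed $\tau_{h-1}$ equals $A\,U_A$ times the conditional $\ell_1$-distance between $\PP_{f^t}(\cdot\mid\tau_{h-1})$ and $\PP_{f^*}(\cdot\mid\tau_{h-1})$ under that policy. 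Now square \emph{per $\tau_{h-1}$}, use $\|\cdot\|_1^2 \le 8\,D_H^2(\cdot\mid\tau_{h-1})$, and only then take the expectation $\sum_{\tau_{h-1}\sim\PP^{\pi^s}}$. The step you are missing is the second inequality of Lemma~\ref{lem:hellinger:1}, namely $\EE_{X\sim P_X} D_H^2(P_{Y\mid X}, Q_{Y\mid X}) \le 4\,D_H^2(P_{X,Y}, Q_{X,Y})$, which converts the averaged conditional Hellinger directly into the joint $D_H^2\bigl(\PP_{f^t}^{\pi_{\exp}(f^s,h)}, \PP_{f^*}^{\pi_{\exp}(f^s,h)}\bigr)$.

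Your ``main obstacle'' paragraph arises only because you try to push the weight $\PP^{\pi^s}(\tau_{h-1}) = \PP(\tau_{h-1})\pi^s(\tau_{h-1})$ inside the bracket before squaring, which both violates the structure of the left-hand side (the square sits \emph{inside} the $\tau_{h-1}$-expectation, not outside) and manufactures the spurious ratio $\PP(\tau_{h-1})/\PP_{f^t}(\tau_{h-1})$. The add-and-subtract plus recursive argument you propose is therefore unnecessary; the paper never encounters that ratio. As a bookkeeping note, the dominant prefactor $A^3 U_A^4/\alpha^2$ comes entirely from the second bracket (Condition~2 contributes $A^2U_A^2/\alpha^2$, importance sampling over $\cU_{A,h}$ in $\|\bar{\bfq}^t-\bar{\bfq}\|_1$ contributes $U_A^2$, and a final factor $A$ comes from relating $\pi^s\circ_h\mathrm{Unif}(\cU_{A,h})$ to $\pi_{\exp}(f^s,h-1)$); the first bracket only contributes $A^2U_A^2$, not the combination you assemble.
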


    \begin{proof}[Proof of Lemma \ref{lemma:decomp:train}]
         Fix $t \in [T]$. For any $h \in [H]$, using the inequality $(a + b)^2 \le 2a^2 + 2b^2$, we have
           \begin{equation}
            \begin{aligned} \label{eq:22116}
                 & \sum_{s = 1}^{t - 1} \sum_{\tau_{h-1} \sim \PP^{\pi^s}} \Big( \sum_{o_h \in \cO, a_h \in \cA} \Big\| \big(\Mb_h^t(o_h, a_h) - \Mb_h(o_h, a_h) \big) \bar{\bfq}(\tau_{h-1}) \Big\|_1   \Big)^2  \\
                 & \qquad \le \underbrace{2\sum_{s = 1}^{t-1} \sum_{\tau_{h-1} \sim \PP^{\pi^s}} \Big( \sum_{o_h \in \cO, a_h \in \cA} \Big\| \big(\Mb_h^t(o_h, a_h) \bar{\bfq}^t(\tau_{h-1}) - \Mb_h(o_h, a_h) \bar{\bfq}(\tau_{h-1}) \Big\|_1  \Big)^2}_{\displaystyle{\rm (i)}} \\
                 & \qquad \quad + \underbrace{2 \sum_{s = 1}^{t - 1} \sum_{\tau_{h-1} \sim \PP^{\pi^s}} \Big( \sum_{o_h \in \cO, a_h \in \cA} \Big\| \Mb_h^t(o_h, a_h) \big( \bar{\bfq}^t(\tau_{h-1}) - \bar{\bfq}(\tau_{h-1}) \big) \Big\|_1 \Big)^2.}_{\displaystyle{\rm (ii)}}
            \end{aligned}
        \end{equation}

        \noindent{\bf Term (i).} We first consider the case $h \le H - 1$. Note that $\Mb_h^t(o_h, a_h) \bar{\bfq}^t(\tau_{h-1}) = [\PP_{f^t}(o_h, a_h, u \mid \tau_{h-1})]_{u \in \cU_{h+1}}$ and $\Mb_h(o_h, a_h) \bar{\bfq}(\tau_{h-1}) = [\PP_{f^*}(o_h, a_h, u \mid \tau_{h-1})]_{u \in \cU_{h+1}}$, we have
        \# \label{eq:22117}
        {\displaystyle{\rm (i)}} &=   2\sum_{s = 1}^{t-1} \sum_{\tau_{h-1} \sim \PP^{\pi^s}} \Big( \sum_{o_h \in \cO, a_h \in \cA} \sum_{(\bfo, \bfa) \in \cU_{h+1} } \big| \PP_{f^t}(o_h, \bfo \mid \tau_{h-1}, \mathrm{do}(a_h, \bfa)) - \PP_{f^*}(o_h, \bfo \mid \tau_{h-1}, \mathrm{do}(a_h, \bfa)) \big|   \Big)^2 \notag\\
        & \le 2 A^2 U_A^2 \sum_{s = 1}^{t-1} \sum_{\tau_{h-1} \sim \PP^{\pi^s}} \Big\| \PP_{f^t}^{\mathrm{Unif}(\cA) \circ_{h+1} \mathrm{Unif}(\cU_{A,h+1}) }(\cdot \mid \tau_{h-1}) - \PP_{f^*}^{\mathrm{Unif}(\cA) \circ_{h+1} \mathrm{Unif}(\cU_{A,h+1}) }(\cdot \mid \tau_{h-1})  \Big\|_1^2  \notag\\
        & \le 16A^2 U_A^2 \sum_{s = 1}^{t-1}  \sum_{\tau_{h-1} \sim \PP^{\pi^s}} D_H^2 \Big( \PP_{f^t}^{\mathrm{Unif}(\cA) \circ_{h+1} \mathrm{Unif}(\cU_{A,h+1}) }(\cdot \mid \tau_{h-1}), \PP_{f^*}^{\mathrm{Unif}(\cA) \circ_{h+1} \mathrm{Unif}(\cU_{A,h+1}) }(\cdot \mid \tau_{h-1})  \Big) \notag \\
        & \le 64 A^2 U_A^2 \sum_{s = 1}^{t-1} D_H^2 \Big( \PP_{f^t}^{\pi_{\exp}(f^s, h)}, \PP_{f^*}^{\pi_{\exp}(f^s, h)} \Big),
        \#
        where $(\PP_{f^t}^{\mathrm{Unif}(\cA) \circ_{h+1} \mathrm{Unif}(\cU_{A,h+1}) }(\cdot \mid \tau_{h-1}), \PP_{f^*}^{\mathrm{Unif}(\cA) \circ_{h+1} \mathrm{Unif}(\cU_{A,h+1}) }(\cdot \mid \tau_{h-1}))$ are conditional distributions over $\tau_{h:H}$ and $(\PP_{f^t}^{\pi_{\exp}(f^s, h)}, \PP_{f^*}^{\pi_{\exp}(f^s, h)})$ are distributions over $\tau_{1:H}$.
        Here the first inequality uses importance sampling, the second inequality uses Lemma~\ref{lem:hellinger:1}, and the last inequality is implied by Lemma~\ref{lem:hellinger:1} and the fact that $\pi_{\exp}(f^s, h) =\pi_{f^s} \circ_{h} \operatorname{Unif}(\mathcal{A}) \circ_{h+1} \operatorname{Unif}\left(\mathcal{U}_{A, h+1}\right)$.
        For the case where $h = H$, we have 
        \# \label{eq:22118}
        {\displaystyle{\rm (i)}} 
        & =  2\sum_{s = 1}^{t-1} \sum_{\tau_{H-1} \sim \PP^{\pi^s}} \Big( \sum_{o_H \in \cO, a_H \in \cA}  \big| \PP_{f^t}(o_H \mid \tau_{H-1}, \mathrm{do}(a_H)) - \PP_{f^*}(o_H \mid \tau_{H-1}, \mathrm{do}(a_H)) \big|   \Big)^2 \notag \\
        & \le  2 A^2 \sum_{s = 1}^{t-1} \sum_{\tau_{H-1} \sim \PP^{\pi^s}} \Big\| \PP_{f^t}^{\mathrm{Unif}(\cA)}(\cdot \mid \tau_{H-1})  - \PP_{f^*}^{\mathrm{Unif}(\cA)}( \cdot \mid \tau_{H-1})  \Big\|_1^2 \notag \\
        & \le 64 A^2 \sum_{s = 1}^{t-1} D_H^2\Big( \PP_{f^t}^{\pi^s \circ_{H} \mathrm{Unif}(\cA)}, \PP_{f^*}^{\pi^s \circ_{H} \mathrm{Unif}(\cA)}  \Big) \notag \\
        & \le 64 A^3 U_A \sum_{s = 1}^{t-1} D_H^2\Big( \PP_{f^t}^{\pi_{\exp}(f^s, H - 1)}, \PP_{f^*}^{\pi_{\exp}(f^s, H - 1)}  \Big),
        \#
        where the first equality uses the facts that $\Mb_h^t(o_H, a_H) \bar{\bfq}^t(\tau_{H-1}) = \PP_{f^t}(o_H \mid \tau_{H-1}, \mathrm{do}(o_H))$ and $\Mb_h(o_H, a_H) \bar{\bfq}(\tau_{H-1}) = \PP_{f^*}(o_H \mid \tau_{H-1}, \mathrm{do}(o_H))$, the second inequality can be obtained by the same derivation of \eqref{eq:22117}, and the last inequality uses importance sampling and the fact that $\pi_{\exp}(f^s, H - 1) =\pi_{f^s} \circ_{H - 1} \operatorname{Unif}(\mathcal{A}) \circ_{H} \operatorname{Unif}\left(\mathcal{U}_{A, H}\right)$.

        \noindent{\bf Term (ii).} By the definition of generalized regular PSR in Definition~\ref{def:general:regular:psr}, we have
        \$
        {\displaystyle{\rm (ii)}} & \le \frac{2 A^2 U_A^2}{\alpha^2} \sum_{s = 1}^{t - 1} \sum_{\tau_{h-1} \sim \PP^{\pi^s}}  \big\|  \bar{\bfq}^t(\tau_{h-1}) - \bar{\bfq}(\tau_{h-1})  \big\|_1^2  \\
        & =  \frac{2 A^2 U_A^2}{\alpha^2} \sum_{s = 1}^{t - 1} \sum_{\tau_{h-1} \sim \PP^{\pi^s}} \Big( \sum_{(\bfo, \bfa) \in \cU_{h} } \big| \PP_{f^t}(\bfo \mid \tau_{h-1}, \mathrm{do}(\bfa)) - \PP_{f^*}(\bfo \mid \tau_{h-1}, \mathrm{do}(\bfa)) \big|  \Big)^2  \\
        & \le \frac{2 A^2 U_A^4}{\alpha^2} \sum_{s = 1}^{t - 1} \sum_{\tau_{h-1} \sim \PP^{\pi^s}} \Big\|  \PP_{f^t}^{\mathrm{Unif}(\cU_{A, h}) } (\cdot \mid \tau_{h-1}) - \PP_{f^*}^{\mathrm{Unif}(\cU_{A, h})}(\cdot \mid \tau_{h-1})   \Big\|_1^2 ,
        \$
        where the first equality uses the probabilistic meanings of $\bar{\bfq}^t(\tau_{h-1})$ and $\bar{\bfq}(\tau_{h-1})$ and the last inequality follows from importance sampling. By the same derivation of \eqref{eq:22117}, we further obtain
        \# \label{eq:22119}
        {\displaystyle{\rm (ii)}} & \le \frac{64 A^2 U_A^4}{\alpha^2} \sum_{s = 1}^{t - 1} D_H^2 \Big( \PP_{f^t}^{ \pi^s \circ_{h} \mathrm{Unif}(\cU_{A, h}) }, \PP_{f^*}^{\pi^s \circ_{h} \mathrm{Unif}(\cU_{A, h})} \Big) \notag\\
        & \le \frac{64 A^3 U_A^4}{\alpha^2} \sum_{s = 1}^{t - 1} D_H^2 \Big( \PP_{f^t}^{\pi_{\exp}(f^s, h-1)}, \PP_{f^*}^{\pi_{\exp}(f^s, h-1)} \Big) ,
        \#
        where the last inequality uses importance sampling.

        Plugging \eqref{eq:22117}, \eqref{eq:22118}, and \eqref{eq:22119} into \eqref{eq:22116} and taking summation over $(t, h) \in [T] \times [H]$  give that
         \$
        & \sum_{t = 1}^{T} \sum_{h = 1}^H \sum_{s = 1}^{t - 1} \sum_{\tau_{h-1} \sim \PP^{\pi^s}} \Big( \sum_{o_h \in \cO, a_h \in \cA} \Big\| \big( \Mb_h^t(o_h, a_h) - \Mb_h(o_h, a_h) \big) \bar{\bfq}(\tau_{h-1}) \Big\|_1 \Big)^2 \\
        & \qquad \lesssim  \frac{A^3 U_A^4}{\alpha^2} \sum_{t = 1}^T \sum_{s = 1}^{t - 1} \sum_{h = 0}^{H - 1} D_H^2 \Big(\PP_{f^t}^{\pi_{\exp}(f^s, h)}, \PP_{f^*}^{\pi_{\exp}(f^s, h)} \Big) ,
        \$
        which concludes the proof of Lemma~\ref{lemma:decomp:train}.
    \end{proof}

    \paragraph{Step 3: Bridging Prediction Error and Training Error via New $\ell_2$ Eluder Techniques}

    \vspace{1pt}
     Before our proof, we restate the definitions of $(\KK_{h}, \VV_{h})$ in \eqref{eq:k2}. Since $\bar{\DD}_h \in \RR^{|\cU_{h+1}| \times O^h A^h}$ has rank $d_{\psr, h}$, we know there exists $\KK_{h} = [\bar{\bfq}(\tau_h^1), \cdots, \bar{\bfq}(\tau_h^{d_{\psr,h}})] \in \RR^{|\cU_{h+1}| \times d_{\psr, h}}$ with $\rank({\KK}_h) = d_{\psr, h}$. This further implies that each column of $\bar{\DD}_h$ is a linear combination of the columns in $\KK_h$. Equivalently, for any length-$h$ history $\tau_h$, there exists a vector $\bfv_{\tau_h} \in \RR^{d_{\psr,h}}$ such that 
    \# \label{eq:k11}
    \bar{\bfq}({\tau_h}) = \KK_h \bfv_{\tau_h}.
     \#
     Let $\VV_h = [\bfv_{\tau_h}]_{\tau_h \in (\cO \times \cA)^h} \in \RR^{d_{\psr,h} \times O^h A^h}$, \eqref{eq:k11} further implies that
     \# \label{eq:k22}
     \bar{\DD}_h = \KK_h \VV_h . 
     \#
     In the following, we choose $(\KK_{h}, \VV_{h})$ as
     \$
     (\KK_h, \VV_{h}) = \argmin_{(\KK_{h}, \VV_{h})} \big\{ \|\KK_h\|_1 \cdot \|\VV_h\|_1 \mid \bar{\DD}_h = \KK_h \VV_h, \KK_h \in 
   \RR^{|\cU_{h+1}| \times d_{\psr,h}}, \VV_h \in \RR^{ 
   d_{\psr, h} \times |\cO|^{h}|\cA|^h } \big\}.
     \$
     Together with the definition of $\delta_{\psr}$ in \eqref{eq:def:delta}, we have
     \# \label{eq:11800}
     \|\KK_h\|_1 \cdot \|\VV_{h} \|_1 \le \delta_{\psr}, \qquad \forall h \in [H].
     \#
    Back to our proof, by Step 1 (Lemma \ref{lemma:decomp:test}), we have for any $\{f^t, \pi^t = \pi_{f^t}\}_{t \in [T]}$
    \# \label{eq:119}
    &\sum_{t = 1}^T  V_{f^t} - V^{\pi^t}    \notag \\
    & \qquad \le \frac{1}{\alpha} \sum_{t = 1}^T \Big( \sum_{h = 1}^{H} \sum_{\tau_h} \Big\| \big( \Mb_h^t(o_h, a_h) - \Mb_h(o_h, a_h) \big) \bfq(\tau_{h-1}) \Big\|_1 \cdot \pi^t(\tau_h) + \big\| \bfq_0^t - \bfq_0 \|_1 \Big).
    \#
    Fix $h \in [H]$. Denote $\Xb_u$ as the $u$-th row of the matrix $\Xb$. For any $(t, u, o, a) \in [T] \times \cU_{h+1} \times \cO \times \cA$, let $w_{t,u,o,a} := [(\Mb_h^t(o,a) - \Mb_h(o,a)) \KK_{h-1}]_u$. Meanwhile, 
    we define $[x_{\tau_{h-1}}]_{\tau_{h-1} \in (\cO \times \cA)^{h-1} } := [ \KK_{h-1}^\dagger \bar{\bfq}(\tau_{h-1}) ]_{\tau_{h-1} \in (\cO \times \cA)^{h-1}}$. By the definitions of $\bfq$ and $\bar{\bfq}$, we know $\bfq(\tau_{h-1}) = [\PP(t_h, \tau_{h-1})]_{t_h \in \cU_h} = [\PP(t_h \mid \tau_{h-1}) \times \PP(\tau_{h-1})]_{t_h \in \cU_h} = \bar{\bfq}(\tau_{h-1}) \cdot \PP(\tau_{h-1})$. Hence, for any $t \in [T]$, we have
    \# \label{eq:1192}
    &   \sum_{\tau_h} \| ( \Mb_h^t(o_h, a_h) - \Mb_h(o_h, a_h) ) \bfq(\tau_{h-1}) \|_1 \cdot \pi^t(\tau_h) \notag\\
    & \qquad \le  \sum_{\tau_h} \| ( \Mb_h^t(o_h, a_h) - \Mb_h(o_h, a_h) ) \bar{\bfq}(\tau_{h-1}) \|_1 \cdot \PP^{\pi^t}(\tau_{h-1}) \notag \\
    & \qquad =  \sum_{\tau_{h-1} \sim \PP^{\pi^t}} \sum_{u \in \cU_{h+1}} \sum_{o \in \cO, a \in \cA} |w_{t, u, o, a}^\top x_{\tau_{h-1}}|.
    \#
    Plugging \eqref{eq:1192} into \eqref{eq:119}, together with the fact that $V_{f^t} - V^{\pi_{f^t}} \le 1$ for any $t \in [T]$, we have
    \# \label{eq:11921}
    \sum_{t = 1}^T V_{f^t} - V^{\pi^t} \le \frac{1}{\alpha} \sum_{t = 1}^T \bigg(\sum_{h = 1}^H \Big( \alpha \land \sum_{\tau_{h-1} \sim \PP^{\pi^t}} \sum_{u \in \cU_{h+1}} \sum_{o \in \cO, a \in \cA} |w_{t, u, o, a}^\top x_{\tau_{h-1}}| \Big) + \big\| \bfq_0^t - \bfq_0 \|_1 \bigg).
    \#
    Meanwhile, we have the following lemma.
    \begin{lemma} \label{lemma:120}
        It holds that
        \begin{align*}
            \left\{\begin{array}{l} \sum_{\tau_{h-1} \sim \PP^{\pi^t}} \|x_{ \tau_{h-1}}\|_2^2  \le \|\VV_{h-1}\|_1^2, \\ \sum_{u \in \cU_{h+1}, o \in \cO, a \in \cA } \|w_{t,u,o,a}\|_2 \le  \frac{2AU_A d_{\psr, h - 1}}{\alpha} \big\|\KK_{h-1} \big\|_1.
            \end{array}\right.
        \end{align*}
        Here the definitions of $(\VV_{h-1}, \KK_{h-1})$ can be founded in \eqref{eq:k22}.
    \end{lemma}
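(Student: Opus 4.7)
The proof plan splits into the two assertions, both of which rest on the factorisation $\bar{\bfq}(\tau_{h-1}) = \KK_{h-1}\bfv_{\tau_{h-1}}$ from \eqref{eq:k11} and the fact that $\KK_{h-1}$ has full column rank $d_{\psr,h-1}$ by construction, so $\KK_{h-1}^\dagger \KK_{h-1} = I$.

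For the first bound, I would first observe $x_{\tau_{h-1}} = \KK_{h-1}^\dagger \bar{\bfq}(\tau_{h-1}) = \KK_{h-1}^\dagger \KK_{h-1} \bfv_{\tau_{h-1}} = \bfv_{\tau_{h-1}}$. Since $\bfv_{\tau_{h-1}}$ is the $\tau_{h-1}$-th column of $\VV_{h-1}$ and the matrix $1$-norm is the maximum absolute column sum, we have the pointwise estimate $\|x_{\tau_{h-1}}\|_2 \le \|x_{\tau_{h-1}}\|_1 = \|\bfv_{\tau_{h-1}}\|_1 \le \|\VV_{h-1}\|_1$ for every $\tau_{h-1}$. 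Squaring and taking the expectation under $\PP^{\pi^t}$, which integrates to $1$ over length-$(h-1)$ histories, yields $\sum_{\tau_{h-1} \sim \PP^{\pi^t}} \|x_{\tau_{h-1}}\|_2^2 \le \|\VV_{h-1}\|_1^2$.

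For the second bound, I start from the entrywise inequality $\|w\|_2 \le \|w\|_1$ on the $d_{\psr,h-1}$-dimensional vectors $w_{t,u,o,a}$, then reorganise the resulting triple sum column-by-column:
\begin{equation*}
\sum_{u,o,a} \|w_{t,u,o,a}\|_2 \;\le\; \sum_{o,a} \sum_{j=1}^{d_{\psr,h-1}} \big\|\bigl(\Mb_h^t(o,a) - \Mb_h(o,a)\bigr)(\KK_{h-1})_{:,j}\big\|_1 .
\end{equation*}
The triangle inequality splits each summand into an $\Mb_h^t$-part and an $\Mb_h$-part; I then apply Condition~2 of Definition~\ref{def:general:regular:psr} to each. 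Taking $\pi$ in Condition~2 to be the deterministic policy that puts mass $1$ on a chosen action $a$ (independent of $o$) gives the action-slice estimate $\sum_{o_h} \|\Mb_h(o_h,a)\bfx\|_1 \le (U_A/\alpha)\|\bfx\|_1$, and summing this over the $A$ actions produces $\sum_{o,a} \|\Mb_h(o,a)\bfx\|_1 \le (AU_A/\alpha)\|\bfx\|_1$, and identically for $\Mb_h^t$ under the regularity of $f^t$. Plugging $\bfx = (\KK_{h-1})_{:,j} = \bar{\bfq}(\tau_{h-1}^j)$ and using $\|(\KK_{h-1})_{:,j}\|_1 \le \|\KK_{h-1}\|_1$ for each $j$ gives the desired factor $2AU_A d_{\psr,h-1}\|\KK_{h-1}\|_1/\alpha$.

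The main obstacle is interpretative rather than technical: Condition~2 is stated as a policy-averaged bound, and one has to verify that the unweighted action-sum $\sum_{o,a}\|\Mb_h(o,a)\bfx\|_1$ still admits the linear-in-$A$ estimate by testing with concentrated policies, since a naive uniform choice for $\pi$ would introduce a spurious $|\cO|$ factor that the target bound does not tolerate. Once this action-summation estimate is in hand, the remainder is a routine chain of $\ell_2\le\ell_1$, triangle inequality, and matrix-norm comparisons.
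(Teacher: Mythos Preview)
Your proposal is correct and follows essentially the same route as the paper: both arguments identify $x_{\tau_{h-1}}=\bfv_{\tau_{h-1}}$ via $\KK_{h-1}^\dagger\KK_{h-1}=I$ for the first bound, and for the second use $\|\cdot\|_2\le\|\cdot\|_1$, the column reorganisation, the triangle inequality, and Condition~2 of Definition~\ref{def:general:regular:psr} applied to both $\Mb_h$ and $\Mb_h^t$ (each $f^t\in\cH$ being $\alpha$-generalized regular). Your action-slicing argument actually spells out how the factor $A$ arises, something the paper's one-line ``follows from Condition~2'' leaves implicit; as a side remark, since $\pi(o_h,a_h)$ in Condition~2 denotes the conditional $\pi_h(a_h\mid o_h)$, the uniform choice $\pi(a\mid o)=1/A$ would also yield the same $AU_A/\alpha$ bound without any $|\cO|$ factor, so the obstacle you flag is milder than you suggest.
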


    \begin{proof}[Proof of Lemma~\ref{lemma:120}]
        By the definition of $x_{\tau_{h-1}}$ that $x_{\tau_{h-1}} = \KK_{h-1}^\dagger \bar{\bfq}(\tau_{h-1})$, we have
        \$
        \max_{\tau_{h-1}} \|x_{\tau_{h-1}}\|_2 =  \max_{\tau_{h-1}} \| \KK_{h-1}^\dagger \bar{\bfq}(\tau_{h-1}) \|_2 = \max_{\tau_{h-1}} \| \bfv_{\tau_{h-1}} \|_2 \le  \max_{\tau_{h-1}} \| \bfv_{\tau_{h-1}} \|_1 = \|\VV_{h-1}\|_1,
        \$
        where the second equality is implied by \eqref{eq:k11}, the second inequality is obtained by the fact that $\|\cdot\|_2 \le \|\cdot\|_1$, and the last equality is implied by the fact that $\VV_{h-1} = [\bfv_{\tau_{h-1}}]_{\tau_{h-1} \in (\cO \times \cA)^{h-1}}$. The above inequality yields
        \# \label{eq:9992}
        \sum_{\tau_{h-1} \sim \PP^{\pi^t}} \|x_{\tau_{h-1}}\|_2^2 \le \|\VV_{h-1}\|_1^2.
        \#
        For the term $\sum_{u \in \cU_{h+1}, o \in \cO, a \in \cA } \|w_{t,u,o,a}\|_2$ where $w_{t,u,o,a} := [(\Mb_h^t(o,a) - \Mb_h(o,a)) \KK_{h-1}]_u$, we have
        \# \label{eq:9993}
        \sum_{u \in \cU_{h+1}, o \in \cO, a \in \cA } \|w_{t,u,o,a}\|_2 &\le \sum_{u \in \cU_{h+1}, o \in \cO, a \in \cA } \|w_{t,u,o,a}\|_1 \notag \\
        & = \sum_{ o \in \cO, a \in \cA } \sum_{l = 1}^{d_{\psr, h-1}} \Big\|  \big( \Mb_h^t(o, a) - \Mb_h(o, a) \big) \KK_{h-1} \bfe_l \Big\|_1  \notag\\
        & \le \frac{2AU_A}{\alpha} \sum_{l = 1}^{d_{\psr, h-1}} \big\| \KK_{h-1} \bfe_l \big\|_1 \le \frac{2AU_A d_{\psr, h-1}}{\alpha} \big\|\KK_{h-1} \big\|_1,
        \#
        where the first inequality is implied by the fact that $\|\cdot\|_2 \le \|\cdot\|_1$, the second inequality follows from Condition 2 in Definition \ref{def:general:regular:psr}, and the last inequality uses that $\sum_{l = 1}^{d_{\psr, h-1}} \| \KK_{h-1} \bfe_l \|_1 \le \sum_{l = 1}^{d_{\psr, h-1}} \| \KK_{h-1} \|_1 \|\bfe_l\|_1 = d_{\psr, h-1} \| \KK_{h-1} \|_1 $.

        Putting \eqref{eq:9992} and \eqref{eq:9993} together, we conclude the proof of Lemma \ref{lemma:120}.
    \end{proof}

    Invoking Lemma \ref{lemma:l2:eluder} with $R = \alpha$ and $R_x^2 R_w^2 \le \frac{4A^2U_A^2 d_{\psr}^2 \delta_{\psr}^2}{\alpha^2} $ (Lemma~\ref{lemma:120} and \eqref{eq:11800}) gives
    \# \label{eq:122}
    & \sum_{t = 1}^T \alpha \land \sum_{\tau_{h-1} \sim \PP^{\pi^t}} \sum_{u \in \cU_{h+1}} \sum_{o \in \cO, a \in \cA} |w_{t,u,o,a}^\top x_{\tau_{h-1}}| \notag \\
    & \qquad \le \Big[2d_{\psr}  \Big( T + \sum_{t = 1}^{T} \sum_{s = 1}^{t-1} \sum_{\tau_{h-1} \sim \PP^{\pi^s}} \big(  \sum_{u \in \cU_{h+1}} \sum_{o, a}  |w_{t,u,o,a}^\top x_{\tau_{h-1}}| \big)^2 \Big) \cdot \log\Big(1 + \frac{4A^2U_A^2 d_{\psr} \delta_{\psr}^2 T }{ \alpha^4}\Big)\Big]^{1/2}  \notag\\
    & \qquad = \Big[ d_{\psr} \cdot \iota  \cdot  \Big( T + \sum_{t = 1}^{T} \sum_{s = 1}^{t - 1} \sum_{\tau_{h-1} \sim \PP^{\pi^s}} \Big( \sum_{o_h \in \cO, a_h \in \cA} \Big\| \big(\Mb_h^t(o_h, a_h) - \Mb_h(o_h, a_h) \big) \bar{\bfq}(\tau_{h-1}) \Big\|_1  \Big)^2 \Big)\Big]^{1/2} , 
    \#
    where we use facts that $ \sum_{u \in \cU_{h+1}} \sum_{o \in \cO, a \in \cA} |w_{t,u,o,a}^\top x_{\tau_{h-1}}| = \sum_{o \in \cO, a \in \cA} \| \big(\Mb_h^t(o, a) - \Mb_h(o, a) \big) \bar{\bfq}(\tau_{h-1}) \|_1$, $d_{\psr,h-1} \le d_{\psr}$ for any $h$, and $\iota := 2 \log\Big(1 + \frac{4 d_{\psr} A^2U_A^2  \delta_{\psr}^2 T }{ \alpha^4}\Big)$.

    For the term $\sum_{t = 1}^T \| \bfq_0^t - \bfq_0 \|_1$ in \eqref{eq:11921}, by the fact that $\|\bfq_0^1 - \bfq_0\|_1 \le 1$ and the simple inequality $2(t-1)\geq t$ when $t \geq 2$, we have
    \$
     \sum_{t = 1}^T \|\bfq_0^t - \bfq_0 \|_1 & \le  1 + \sum_{t = 1}^T  \Big[\frac{2(t-1)}{t} \cdot \|\bfq_0^t - \bfq_0 \|_1^2 \Big]^{1/2}.
    \$
    By Cauchy-Schwarz inequality, we further obtain
    \# \label{eq:1191}
    \sum_{t = 1}^T \|\bfq_0^t - \bfq_0 \|_1
    & \le 1 + \Big[\sum_{t = 1}^T \frac{2}{t}\Big]^{1/2} \cdot \Big[\sum_{t = 1}^T (t - 1) \cdot \|\bfq_0^t - \bfq_0 \|_1^2\Big]^{1/2} \notag  \\
    & \lesssim   1 + \Big[\sum_{t = 1}^T \frac{2}{t}\Big]^{1/2} \cdot \Big[\sum_{t = 1}^T (t - 1) \cdot D_H^2(\bfq_0^t, \bfq_0) \Big]^{1/2} \notag\\
    & \lesssim 1 + \Big[ U_A \cdot \log T \cdot \sum_{t = 1}^T \sum_{s = 1}^{t - 1} D_H^2 \Big( \PP_{f^t}^{\pi_{\exp}(f^s, 0)}, \PP_{f^*}^{\pi_{\exp}(f^s, 0)}\Big)\Big]^{1/2},
    \#
    where the second inequality follows from Lemma~\ref{lem:hellinger:1}, and the last inequality importance sampling and the fact that $\pi_{\exp}(f^s, 0) = \mathrm{Unif}(\cU_{A, 1})$ for any $s \in [t - 1]$. 

    Putting \eqref{eq:11921}, \eqref{eq:122}, \eqref{eq:1191}, and Lemma~\ref{lemma:decomp:train} together and taking summation over $h$, we obtain
    \$
    & \sum_{t = 1}^T V_{f^t} - V^{\pi^t} \\
    & \qquad \lesssim \sum_{h = 0}^{H - 1} \bigg[\Big(\frac{d_{\psr} A^3 U_A^4  \cdot \iota}{\alpha^4}\Big) \sum_{t = 1}^{T} \sum_{s = 1}^{t - 1} D_H^2 \Big( \PP_{f^t}^{\pi_{\exp}(f^s, h)}, \PP_{f^*}^{\pi_{\exp}(f^s, h)} \Big)\Big) \bigg]^{1/2} +  \frac{H\sqrt{d_{\psr} T \cdot \iota} }{\alpha}\\
    & \qquad \lesssim \bigg[ \Big(\frac{d_{\psr} A^3 U_A^4 H  \cdot \iota}{\alpha^4} \Big)  \sum_{t = 1}^{T} \sum_{h = 0}^{H - 1}\sum_{s = 1}^{t - 1} D_H^2 \Big( \PP_{f^t}^{\pi_{\exp}(f^s, h)}, \PP_{f^*}^{\pi_{\exp}(f^s, h)} \Big)\bigg]^{1/2} + \frac{H\sqrt{d_{\psr} T \cdot \iota} }{\alpha},
    \$
    where the last inequality follows from Cauchy-Schwarz inequality.
    Hence, by the definition of GEC in Definition~\ref{def:gec}, we obtain that the GEC for $\alpha$-generalized regular PSR is upper bounded by
    \$
    \cO\Big(\frac{d_{\psr}  A^3 U_A^4 H \cdot \iota}{\alpha^4} \Big),
    \$
    which concludes our proof of Lemma~\ref{lemma:general:regular:psr}.
\end{proof}

\begin{remark}
    In Step 3, we use a novel $\ell_2$ eluder technique to connect Steps 1 and 2. This is different from the $\ell_1$ technique adopted by \citet{liu2022partially,zhan2022pac}. See Appendix~\ref{appendix:eluder} for more details. We also adjust Step 2 (Lemma~\ref{lemma:decomp:train}) in this version due to some technical issue, which we will elaborate in Remark \ref{remark:issue}.
\end{remark}

\subsection{Weakly Revealing POMDP} \label{appendix:weakly:revealing}

For $m$-step weakly revealing POMDPs, we choose the core test set as
\# \label{eq:6331}
\{\cU_h = (\cO \times \cA)^{\min\{m-1, H-h\}} \times \cO\}_{h \in [H]}.
\#
Similar to \citet{liu2022partially}, we define observable operators in the real model $f^*$ as
\begin{equation}
    \begin{aligned} \label{eq:6332}
        \left\{\begin{array}{l}\mathbf{q}_{0}=\mathbb{M}_{1} \mu_{1} \in \mathbb{R}^{|\cU_1|} \\ \mathbf{M}_{h}(o, a)=\mathbb{M}_{h+1} \mathbb{T}_{h, a} \operatorname{diag}\left(\mathbb{O}_{h}(o \mid \cdot)\right) \mathbb{M}_{h}^{\dagger} \in \mathbb{R}^{|\cU_{h+1}| \times |\cU_h|} \quad h \le H - m \\
            \Mb_h(o, a) = [\mathbf{1}(t_h = (o, a, t_{h+1})]_{(t_{h+1}, t_h) \in \cU_{h+1} \times \cU_h} \in \RR^{|\cU_{h+1}| \times |\cU_h|} \quad h > H -m\end{array}\right. .
    \end{aligned}
\end{equation}
It is not difficult to verify these observable operators satisfy conditions.

\begin{proof}[Proof of Lemma \ref{lemma:weak:reveal}] For simplicity, we only verify two conditions in Definition~\ref{def:general:regular:psr} for $m$-step $\alpha$-weakly revealing POMDPs.
    \paragraph{Condition 1.} For the case $h > H - m$, we have
    \# \label{eq:6333}
    \max_{\pi} \sum_{\tau_{h:H}} | \bfm(o_{h:H}, a_{h:H}) \bfx | \cdot \pi(o_{h:H}, a_{h:H}) = \max_{\pi} \sum_{\tau_{h:H}} |\bfx(\tau_{h:H}) |  \cdot \pi(o_{h:H}, a_{h:H}) \le \|\bfx\|_1
    \#
    for any $\bfx \in \RR^{|\cO|^{H-h+1} |\cA|^{H-h+1}}$. For the case $h \le H - m$, we have
    \# \label{eq:6334}
    & \max_{\pi} \sum_{\tau_{h:H}} | \bfm(o_{h:H}, a_{h:H}) \bfx | \cdot \pi(o_{h:H}, a_{h:H}) \notag \\
    & \qquad = \max_{\pi} \sum_{\tau_{h:H}} | \bfm(o_{h:H}, a_{h:H}) \MM_h \MM_h^\dagger \bfx | \cdot \pi(o_{h:H}, a_{h:H})  \notag \\
    & \qquad \le  \max_{\pi} \sum_{\tau_{h:H}} \sum_{i = 1}^S  |\bfm(o_{h:H}, a_{h:H}) \MM_h \bfe_i | \cdot | \bfe_i^\top \MM_h^\dagger \bfx | \cdot \pi(o_{h:H}, a_{h:H}) .
    \#
    Note that for any $\pi$ we have
    \$
    \sum_{\tau_{h:H}} | \bfm(o_{h:H}, a_{h:H}) \MM_h \bfe_i | \cdot \pi(o_{h:H}, a_{h:H}) = \sum_{\tau_{h:H}} \PP^{\pi}(\tau_{h:H} \mid s_i) = 1,
    \$
    where $s_i$ denotes $i$-th state. Together with \eqref{eq:6334}, we further obtain that
    \# \label{eq:6335}
    &\max_{\pi} \sum_{\tau_{h:H}} | \bfm(o_{h:H}, a_{h:H}) \bfx | \cdot \pi(o_{h:H}, a_{h:H})   \notag \\ 
    & \qquad \le \sum_{i = 1}^S |\bfe_i^\top \MM_h^\dagger \bfx|  =  \|\MM_h^\dagger \bfx \|_1 \le \| \MM_h^\dagger \|_1 \cdot \|\bfx\|_1 \le \frac{\sqrt{S}}{\alpha} \|\bfx\|_1,
    \#
    where the last inequality uses $\|\MM_h^\dagger\|_1 \le \sqrt{S} \|\MM_h^\dagger\|_2 \le \sqrt{S}/\alpha$.

    \paragraph{Condition 2.} For the case $h > H - m$, by the definitions of observable operators in \eqref{eq:6332}, we have
    \# \label{eq:6336}
    \max_{\pi} \sum_{(o_h, a_h) \in \cO \times \cA} \big \| \Mb_h(o_h, a_h) \bfx \big\|_1 \cdot \pi(o_h, a_h) = \|\bfx\|_1.
    \#
    Then we consider the case $h \le H - m$. Similar to \eqref{eq:6334}, we have
    \# \label{eq:6337}
    & \max_{\pi} \sum_{(o_h, a_h) \in \cO \times \cA} \big \| \Mb_h(o_h, a_h) \bfx \big\|_1 \cdot \pi(o_h, a_h) \notag \\
    &\qquad \le \max_{\pi} \sum_{(o_h, a_h, t_{h+1}) \in \cO \times \cA \times \cU_{h+1}} \sum_{i = 1}^S \big |\bfe_{t_{h+1}}^\top  \Mb_h(o_h, a_h) \MM_h  \bfe_i \big| \cdot \big| \bfe_i^\top \MM_h^\dagger \bfx \big| \cdot \pi(o_h, a_h) \notag \\
    & \qquad \le |\cU_{A, h+1}| \cdot \big\|\MM_h^\dagger \bfx \big\|_1 \le \frac{\sqrt{S} \cdot |\cU_{A,h+1}|}{\alpha} \|\bfx\|_1,
    \#
    where the second inequality is implied by the fact that  $\sum_{(o_h, a_h, t_{h+1}) \in \cO \times \cA \times \cU_{h+1}}  |\bfe_{t_{h+1}}^\top \Mb_h(o_h, a_h) \MM_h \bfe_i | \cdot \pi(o_h, a_h) = \sum_{(o_h, a_h, t_{h+1}) \in \cO \times \cA \times \cU_{h+1}} \PP(o_h, t_{h+1} \mid s_i, a_h) \cdot \pi(o_h, a_h) \le |\cU_{A,h+1}|$ for any $\pi$, and the last inequality can be obtained by the same derivation of \eqref{eq:6335}.

    Putting \eqref{eq:6333}, \eqref{eq:6334}, \eqref{eq:6336}, and \eqref{eq:6337} together, we finish the proof of Lemma~\ref{lemma:weak:reveal}.
\end{proof}

\subsection{Latent MDP} \label{appendix:latent:mdp}
Let $\cM=\{(\cS,\cA,\TT_m,R,\nu_m,\omega_m)\}_{m=1}^M$ be a latent MDP we define in Definition \ref{def:lt}. As shown in \cite{zhan2022pac}, we can solve this latent MDP by solving an equivalent POMDP. We consider a POMDP $(\bar{\cS},\cO,\cA,H,\mu_1,\TT,\OO,R)$ with $\bar{\cS}=\cS\times[m]$ and $\cO=\cS$. The initial state of this POMDP in each episode is sampled from $\mu_1((s_1,m))=\omega_{m}\cdot \nu_{m}(s_1)$. Our observation is the first entry of the state in each stage. At the $h$-th stage, we choose an action $a_h$, obtain the reward $R_h(s_h,a_h)$. The next state is sampled from $\TT_{h,a_h}$, which is defined as
\#
\TT_{h,a_h}\bigl((s_{h+1},m_{h+1})\mid (s_{h},r_{h-1},m_{h})\bigr)&=\mbo_{\{m_{h+1}=m_{h}\}}\cdot  \TT_{h,m}(s_{h+1}\mid s_{h},a_h).\label{lt:T}
\#
Solving the latent MDP $\cM$ is equivalent with solving the POMDP $(\bar{\cS},\cO,\cA,H,\mu_1,\TT,\OO,R)$. The following lemma shows that, the POMDP $(\bar{\cS},\cO,\cA,H,\mu_1,\TT,\OO,R)$ is also a generalized regular PSR.
\begin{proof}[Proof of Lemma \ref{lemma:latent:mdp}]
    Let $\cU_{h}$ be the test set for $\cM$ in the $h$-th step. We define the augmented test set $\bar{\cU}_{h+1}$ by $\bar{\cU}_{h+1}=\{(s_{h},t_{h})\mid s_{h}\in\cS, t_{h}\in\cU_{h}\}$. We define the matrix $\MM_h\in \RR^{\vert\bar{\cU}_{h+1}\vert\times \bar{\cS}}$ by $[\MM_h]_{(s^\prime,t),(s,m)}=\PP(o_h=s^\prime,t\mid s_h=(s,m))=\mbo_{\{s=s^\prime\}}\cdot \PP_{h,m}(t\mid s)$. We then have
    \#
    \MM_h=\mathrm{diag}\bigl(L_h(s^1,\cU_{h+1}),L_h(s^2,\cU_{h+1}),\ldots ,L_h(s^S,\cU_{h+1})\bigr).\nonumber
    \#
    By Assumption \ref{asp:lmgr}, we have $\sigma_{MS}(\MM_h)\geq \alpha.$ We define observable operators in the real model $f^*$ as
    \begin{equation}
        \begin{aligned} \label{eq:6332:lt}
            \left\{\begin{array}{l}\mathbf{q}_{0}=\mathbb{M}_{1} \mu_{1} \in \mathbb{R}^{|\cU_1|} \\ \mathbf{M}_{h}(s, a)=\mathbb{M}_{h+1} \mathbb{T}_{h, a} \operatorname{diag}\left(\mathbb{O}_{h}(s \mid \cdot)\right) \mathbb{M}_{h}^{\dagger} \in \mathbb{R}^{|\cU_{h+1}| \times |\cU_h|} \quad h \le H - 1 \\
                \Mb_H(s, a) = [1]_{ t_H \in  \cU_h} \in \RR^{1 \times |\cU_H|} \quad\end{array}\right. .
        \end{aligned}
    \end{equation}
    It is not difficult to verify these observable operators satisfy conditions. Then we verify two conditions in Definition~\ref{def:general:regular:psr} for the POMDP that equivalents with the latent MDP $\cM$.
    \paragraph{Condition 1.} For the case $h = H $, we have
    \# \label{eq:6333lt}
    \max_{\pi} \sum_{(s_H,a_H)} | \bfm(s_H,a_H) \bfx | \cdot \pi(s_H,a_H) = \max_{\pi} \sum_{(s_H,a_H)} |\bfx(s_H,a_H) |  \cdot \pi(s_H,a_H) \le \|\bfx\|_1
    \#
    for any $\bfx \in \RR^{|\cO|\times |\cA|}$. For the case $h \le H - 1$, we have
    \# \label{eq:6334lt}
    & \max_{\pi} \sum_{(\tau_{h:H})} | \bfm(s_{h:H}, a_{h:H}) \bfx | \cdot \pi(s_{h:H}, a_{h:H}) \notag \\
    & \qquad = \max_{\pi} \sum_{\tau_{h:H}} | \bfm(s_{h:H}, a_{h:H}) \MM_h \MM_h^\dagger \bfx | \cdot \pi(s_{h:H}, a_{h:H})  \notag \\
    & \qquad \le  \max_{\pi} \sum_{\tau_{h:H}} \sum_{i = 1}^{\vert \bar{\cS}\vert} |\bfm(s_{h:H}, a_{h:H}) \MM_h \bfe_i | \cdot | \bfe_i^\top \MM_h^\dagger \bfx | \cdot \pi(s_{h:H}, a_{h:H}) .
    \#
    Note that for any $\pi$ we have
    \$
    \sum_{\tau_{h:H}} | \bfm(s_{h:H}, a_{h:H}) \MM_h \bfe_i | \cdot \pi(s_{h:H}, a_{h:H}) = \sum_{\tau_{h:H}} \PP^{\pi}(\tau_{h:H} \mid s_i) = 1,
    \$
    where $s_i$ denotes $i$-th state. Together with \eqref{eq:6334lt}, we further obtain that
    \# \label{eq:6335lt}
    \max_{\pi} \sum_{\tau_{h:H}} | \bfm(s_{h:H}, a_{h:H}) \bfx | \cdot \pi(s_{h:H}, a_{h:H}) \le \|\MM_h^\dagger \bfx \|_1 \le \| \MM_h^\dagger \|_1 \cdot \|\bfx\|_1 \le \frac{\sqrt{SM}}{\alpha} \|\bfx\|_1,
    \#
    where the last inequality uses $\|\MM_h^\dagger\|_1 \le \sqrt{SM} \cdot \|\MM_h^\dagger\|_2 \le \sqrt{SM}/\alpha$.

    \paragraph{Condition 2.} For the case $h=H$, by the definitions of observable operators in \eqref{eq:6332:lt}, we have
    \# \label{eq:6336lt}
    \max_{\pi} \sum_{(s_h, a_h) \in \cS \times \cA} \big \| \Mb_h(s_h, a_h) \bfx \big\|_1 \cdot \pi(s_h, a_h) = \|\bfx\|_1.
    \#
    Then we consider the case $h \le H - 1$. Similar to \eqref{eq:6334lt}, we have
    \# \label{eq:6337lt}
    & \max_{\pi} \sum_{(s_h, a_h) \in \cO \times \cA} \big \| \Mb_h(s_h, a_h) \bfx \big\|_1 \cdot \pi(s_h, a_h) \notag \\
    &\qquad \le \max_{\pi} \sum_{(s_h, a_h, t_{h+1}) \in \cS \times \cA \times \bar{\cU}_{h+1}} \sum_{i = 1}^S \big |\bfe_{t_{h+1}}^\top  \Mb_h(o_h, a_h) \MM_h \bfe_i \big| \cdot \big| \bfe_i^\top \MM_h^\dagger \bfx \big| \cdot \pi(s_h, a_h) \notag \\
    & \qquad \le |\bar{\cU}_{A, h+1}| \cdot \big\|\MM_h^\dagger \bfx \big\|_1 \le \frac{\sqrt{SM} \cdot |\bar{\cU}_{A,h+1}|}{\alpha} \|\bfx\|_1,
    \#
    where the second inequality is implied by the fact that  $\sum_{(s_h, a_h, t_{h+1}) \in \cS \times \cA \times \cU_{h+1}}  |\bfe_{t_{h+1}}^\top  \Mb_h(o_h, a_h) \MM_h \bfe_i | \cdot \pi(s_h, a_h) = \sum_{(s_h, a_h, t_{h+1}) \in \cS \times \cA \times \cU_{h+1}} \PP(s_h, t_{h+1} \mid \bar{s}_i, a_h) \cdot \pi(s_h, a_h) \le |\bar{\cU}_{A,h+1}|$ for any $\pi$, 
    and the last inequality can be obtained by the same derivation of \eqref{eq:6335lt}. 

    Putting \eqref{eq:6333lt}, \eqref{eq:6334lt}, \eqref{eq:6336lt}, and \eqref{eq:6337lt} together, we finish the proof of Lemma~\ref{lemma:latent:mdp}.
\end{proof}

\subsection{Decodable POMDP} \label{appendix:decodable}

Before our proof, we define the observable operators for decodable POMDPs. We choose the core test set as
\# \label{eq:6431}
\{\cU_h = (\cO \times \cA)^{\min\{m - 1, H - h\}} \times \cO\}_{h \in [H]}.
\#
When $h \le H - m$, for any $t_h = (o_h, a_h, \cdots, o_{h+m-1})$ and $t_{h+1} = (o'_{h+1}, a'_{h+1}, \cdots, o'_{h+m})$, we have
\$
\PP_h(t_{h+1} \mid t_h) &= \PP(o_{h+m} = o'_{h+m} \mid s_{h+m-1} = \phi^*_{h+m-1}(t_h), a_{h+m-1}) \\
& \qquad \times \mathbf{1}\{(o_{h+1:h+m-1}, a_{h+1:h+m-2}) = (o'_{h+1:h+m-1}, a'_{h+1:h+m-2})\}   \\
& = \sum_{s \in \cS} \OO_{h+1}(o'_{h+m} \mid s ) \TT_{h + m -1, a_{h+m-1}}( s \mid \phi_{h+m-1}^*(t_h) ) \\
& \qquad \times \mathbf{1}\{(o_{h+1:h+m-1}, a_{h+1:h+m-2}) = (o'_{h+1:h+m-1}, a'_{h+1:h+m-2})\}  .
\$
Moreover, we define
\$
\PP_{h,o,a}(t_{h+1} \mid t_h) = \PP_h(t_{h+1} \mid t_h) \cdot \mathbf{1}\{(o, a) = (o_h, a_h) \}.
\$
With this notation, we define observable operators in the real model $f^*$ as
\begin{equation}
    \begin{aligned} \label{eq:6432}
        \left\{\begin{array}{l}\mathbf{q}_{0}= [\PP(o_{1:m}, a_{1:m-1})]_{(o_{1:m}, a_{1:m-1}) \in \cU_1}  \in \mathbb{R}^{|\cU_1|}                                                         \\
            \mathbf{M}_{h}(o, a) = [\PP_{h,o,a}(t_{h+1} \mid t_h)]_{(t_{h+1}, t_h) \in \cU_{h+1} \times \cU_h}  \in \mathbb{R}^{|\cU_{h+1}| \times |\cU_h|} \quad h \le H - m \\
            \Mb_h(o, a) = [\mathbf{1}\{t_h = (o, a, t_{h+1})\}]_{(t_{h+1}, t_h) \in \cU_{h+1} \times \cU_h} \in \RR^{|\cU_{h+1}| \times |\cU_h|} \quad h > H -m\end{array}\right. .
    \end{aligned}
\end{equation}

\begin{proof}[Proof of Lemma \ref{lemma:decodable}]
    We show decodable POMDPs satisfy two conditions in Definition~\ref{def:general:regular:psr} with $\alpha = 1$.
    \paragraph{Condition 1.} For $h > H -m + 1$, we have
    \# \label{eq:6433}
    &\max_{\pi} \sum_{\tau_{h:H}} | \bfm(o_{h:H}, a_{h:H}) \bfx | \cdot \pi(o_{h:H}, a_{h:H}) \notag \\
    & \qquad = \max_{\pi} \sum_{\tau_{h:H}} | \bfe(o_{h:H}, a_{h:H})^\top \bfx | \cdot \pi(o_{h:H}, a_{h:H}) \le \|\bfx\|_1,
    \#
    where the first equality uses \eqref{eq:6432}. For the case $h \le H - m + 1$, we have
    \# \label{eq:6435}
    & \max_{\pi} \sum_{\tau_{h:H}} | \bfm(\tau_{h:H}) \bfx | \cdot \pi(\tau_{h:H}) \notag\\
    & \qquad = \max_{\pi} \sum_{\tau_{h:H}} | \bfe(\tau_{h:h+m-1})^\top \bfx | \cdot \PP\big(\tau_{h+m:H} \mid \tau_{h:h+m-1}\big) \cdot \pi(\tau_{h:H}) \notag\\
    & \qquad = \max_{\pi} \sum_{\tau_{h:h+m-1}} |\bfx(\tau_{h:h+m-1})| \cdot \pi(\tau_{h:h+m-1}) \le \|\bfx\|_1,
    \#
    where the first equality is implied by the definitions of observable operators in \eqref{eq:6432}. 
    \paragraph{Condition 2.} For the case $h > H - m$ and $\bfx \in \RR^{|\cU_{h}|}$, we have
    \# \label{eq:6436}
    \sum_{(o_h, a_h) \in \cO \times \cA} \big\| \Mb_h(o_h, a_h) \bfx \big\|_1 \cdot \pi(o_h, a_h) \le \sum_{(o_h, a_h) \in \cO \times \cA} \big\| \Mb_h(o_h, a_h) \bfx \big\|_1 = \|\bfx\|_1,
    \#
    where the last inequality uses the definition of $\Mb_h$ in~\eqref{eq:6432}.
    For any $h \in [H - m]$ and $\bfx \in \RR^{|\cU_{h}|}$, we have
    \begin{equation}
        \begin{aligned} \label{eq:6437}
    &\sum_{(o_h, a_h) \in \cO \times \cA} \big\| \Mb_h(o_h, a_h) \bfx \big\|_1 \cdot \pi(o_h, a_h)   \\
    &\qquad \le \sum_{(o_h, a_h, t_{h+1}) \in \cO \times \cA \times \cU_{h+1}} \big| \bfe_{t_{h+1}}^\top \Mb_h(o_h, a_h) \bfx \big|    \\
    & \qquad = \sum_{(o_h, a_h, t_{h+1}) \in \cO \times \cA \times \cU_{h+1}} \big| \bfx\big(o_h, a_h, o'_{h + 1:h + m-1}, a'_{h + 1:h + m-1} \big)\big| \\
    & \qquad \qquad \qquad \qquad \qquad \qquad \qquad \times \PP \big( o'_{h+m} \mid o_h, a_h, o'_{h + 1:h + m-1}, a'_{h + 1:h + m-1} \big),
        \end{aligned}
    \end{equation}
    where the first inequality uses $\pi(o_h, a_h) \le 1$ and $t_{h+1} = (o'_{h+1}, a'_{h+1}, \cdots, o'_{h+m})$ in the last equality. Meanwhile, we have
    \$
    & \sum_{o'_{h+m} : t_{h+1} \in \cU_{h+1}} \PP \big( o'_{h+m} \mid o_h, a_h, o'_{h + 1:h + m-1}, a'_{h + 1:h + m-1} \big) \\
    & \qquad = \sum_{o'_{h+m} \in \cO } \PP \big( o'_{h+m} \mid o_h, a_h, o'_{h + 1:h + m-1}, a'_{h + 1:h + m-1} \big) = 1.
    \$
    Combining this with \eqref{eq:6437}, we have
    \# \label{eq:6438}
    \sum_{(o_h, a_h) \in \cO \times \cA} \big\| \Mb_h(o_h, a_h) \bfx \big\|_1 \cdot \pi(o_h, a_h) \le \|\bfx \|_1 \le |\cU_{A,h+1}| \cdot \|\bfx\|_1.
    \#

    Putting \eqref{eq:6433}, \eqref{eq:6435}, \eqref{eq:6436}, and \eqref{eq:6438} together, we conclude the proof of Lemma~\ref{lemma:decodable}.
\end{proof}

\subsection{Low-rank POMDP}  \label{appendix:pomdp:linear}
For the low-rank MDP with $m$-step future sufficiency, we choose the core set as
\# \label{eq:6331l}
\{\cU_h = (\cO \times \cA)^{\min\{m-1, H-h\}} \times \cO\}_{h \in [H]}.
\#
\cite{wang2022embed} shows that, $\MM_h^{\dagger }$ defined in Assumption \ref{asp:fs35} is the inverse of the forward emission operator $\MM_h$.
\begin{lemma}[Pseudo-Inverse of Forward Emission \citep{wang2022embed}]
    In a low-rank MDP, we have
    \$
    (\MM_h^{\dagger}\MM_h)\PP_h^{\pi}=\PP_h^{\pi}
    \$
    for all $h\in[H]$ and $\pi\in\Pi$ when Assumption \ref{asp:fs35} holds. Here $\PP_h^{\pi}\in L^1(\cS)$ and $\PP_h^{\pi}(s_h)$ is the probability of visiting the state $s_h$ in the $h$-th step when following the policy $\pi$.
\end{lemma}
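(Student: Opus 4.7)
The proof will hinge on a single structural observation: because the transition admits the low-rank factorization $\PP_h(s_{h+1}\mid s_h,a_h)=\psi_h(s_{h+1})^\top\phi_h(s_h,a_h)$, any reachable state distribution at step $h$ lies in the span of the feature $\psi_{h-1}$. Concretely, by marginalizing the transition against the joint distribution of $(s_{h-1},a_{h-1})$ under $\pi$, one obtains
\$
\PP_h^\pi(s_h)=\int_{\cS\times\cA}\psi_{h-1}(s_h)^\top\phi_{h-1}(s_{h-1},a_{h-1})\,\PP_{h-1}^\pi(s_{h-1},a_{h-1})\ud s_{h-1}\ud a_{h-1}=\psi_{h-1}(s_h)^\top v_h^\pi,
\$
where $v_h^\pi:=\int \phi_{h-1}(s_{h-1},a_{h-1})\,\PP_{h-1}^\pi(s_{h-1},a_{h-1})\ud s_{h-1}\ud a_{h-1}\in\RR^d$. (For $h=1$, the same representation holds with $v_1^\pi$ being the coefficients coming from the factorization of the initial distribution, or one can treat $h=1$ separately using an analogous argument, since $\PP_1^\pi$ does not depend on $\pi$.)

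Next I would push this representation through the forward emission operator. Since $\MM_h$ is linear in its argument and $g_h=[\MM_h[\psi_{h-1}]_1,\cdots,\MM_h[\psi_{h-1}]_d]^\top$ by Assumption \ref{asp:fs35}, linearity yields
\$
(\MM_h\PP_h^\pi)(\tau_h^{h+m})=\MM_h\bigl(\psi_{h-1}^\top v_h^\pi\bigr)(\tau_h^{h+m})=g_h(\tau_h^{h+m})^\top v_h^\pi.
\$

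Then I would apply $\MM_h^\dagger$ to this object using its explicit kernel representation from Assumption \ref{asp:fs35}:
\$
(\MM_h^\dagger\MM_h\PP_h^\pi)(s_h)&=\int_{\cA^m\times\cO^{m+1}}\psi_{h-1}(s_h)^\top M_h^\dagger g_h(\tau_h^{h+m})\cdot g_h(\tau_h^{h+m})^\top v_h^\pi\,\ud\tau_h^{h+m}\\
&=\psi_{h-1}(s_h)^\top M_h^\dagger\Bigl(\int_{\cA^m\times\cO^{m+1}} g_h(\tau_h^{h+m})g_h(\tau_h^{h+m})^\top\ud\tau_h^{h+m}\Bigr)v_h^\pi\\
&=\psi_{h-1}(s_h)^\top M_h^\dagger M_h v_h^\pi.
\$
Since Assumption \ref{asp:fs35} guarantees $M_h$ is invertible and $M_h^\dagger$ denotes its inverse, $M_h^\dagger M_h=I_d$, and thus $(\MM_h^\dagger\MM_h\PP_h^\pi)(s_h)=\psi_{h-1}(s_h)^\top v_h^\pi=\PP_h^\pi(s_h)$ for every $s_h$ and every $\pi\in\Pi$, which is exactly what we need.

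I do not expect any serious obstacle: the argument is mostly bookkeeping with Fubini to exchange the inner product with integration over $\tau_h^{h+m}$, followed by the cancellation $M_h^\dagger M_h=I$. The only mildly delicate point is verifying that the low-rank representation $\PP_h^\pi=\psi_{h-1}^\top v_h^\pi$ holds for all $h\in[H]$, including $h=1$; this is handled by invoking the factorization of $\PP_{h-1}$ for $h\ge 2$ and treating $\PP_1$ as a boundary case. Integrability is automatic because $\PP_h^\pi\in L^1(\cS)$ and the coefficient vector $v_h^\pi$ inherits boundedness from $\|\phi_{h-1}\|_\infty$, so all Fubini exchanges are justified.
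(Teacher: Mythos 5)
The paper does not actually prove this lemma: it is imported verbatim from \citet{wang2022embed} and used as a black box, so there is no in-paper proof to compare against. Your reconstruction is correct and is the natural argument: the low-rank factorization forces $\PP_h^\pi = \psi_{h-1}^\top v_h^\pi$ to lie in the span of $\psi_{h-1}$, linearity of $\MM_h$ gives $\MM_h\PP_h^\pi = g_h^\top v_h^\pi$, and the kernel form of $\MM_h^\dagger$ collapses to $\psi_{h-1}^\top M_h^\dagger M_h v_h^\pi = \PP_h^\pi$ by invertibility of $M_h$. The only point worth being explicit about is the $h=1$ boundary case, which you already flag: the statement for $h=1$ tacitly requires the initial distribution $\mu_1$ itself to lie in the span of a feature $\psi_0$ (the paper's Assumption \ref{asp:fs35} already presupposes such a $\psi_0$ by defining $g_h$ via $\psi_{h-1}$ for all $h\in[H]$), so this is an assumption inherited from the setup rather than a gap in your argument.
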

Hence, we define the operator $\mathbf{O}_h(o):L^1(\cS)\to L^1(\cS)$ in the real model $f^*$ as
\$
\bigl(\mathbf{O}_h(o)\bfx\bigr)(s)=\bfx(s)\mathbb{O}_{h}(o \mid s),
\$
for $\bfx\in L^1(\cS)$ and define the observable operators in the true  model $f^*$ as
\begin{equation}
    \begin{aligned} \label{eq:6332l}
        \left\{\begin{array}{l}\mathbf{q}_{0}=\mathbb{M}_{1} \mu_{1} \in L^1(\cU_1) \\ \mathbf{M}_{h}(o, a)=\mathbb{M}_{h+1} \mathbb{T}_{h, a} \mathbf{O}_h(o) \mathbb{M}_{h}^{\dagger} :L^1(\cU_h)\to L^1(\cU_{h+1}) \end{array}\right. ,
    \end{aligned}
\end{equation}
when $h\leq H-m$. For $h>H-m$, we define $\mathbf{M}_{h}(o,a):L^1(\cU_h)\to L^1(\cU_{h+1})$ as
\#
(\mathbf{M}_{h}(o,a)\bfx)(t_{h+1})=\bfx(o,a,t_{h+1}).\nonumber
\#
It is not difficult to verify that these observable operators satisfy conditions. Moreover, we define the operator $\bar{\mathbf{{m}}}_h(o_{h:H},a_{h:H}):L^1(\cS)\to \RR$ as
\#
\mathbf{\bar{m}}_h(o_{h:H},a_{h:H})\mathbf{x}=\int\PP(o_{h:H}\mid a_{h:H},s_h)\mathbf{x}(s_h)\ud s_h.\nonumber
\#
We then have
\#
\mathbf{m}_h(o_{h:H},a_{h:H})\bfx =\mathbf{\bar{m}}_h(o_{h:H},a_{h:H})(\mathbb{M}_{h}^{\dagger}\bfx)\nonumber
\#
for all $\bfx\in L^1(\cA^m\times \cO^{m+1})$. Here $\mathbf{m}_h(o_{h:H},a_{h:H})$ is defined in \eqref{eq:def:m}.

\begin{proof}[Proof of Lemma \ref{lemma:lowrank-pomdp}] For simplicity, we only verify two conditions in Definition~\ref{def:general:regular:psr} for the low-rank POMDPs with the $m$-step future sufficiency.
    \paragraph{Condition 1.} For the case $h > H - m$, we have
    \# \label{eq:6333l}
    \max_{\pi} \int | \bfm(o_{h:H}, a_{h:H}) \bfx | \cdot \pi(o_{h:H}, a_{h:H}) \ud \tau_{h:H}&= \max_{\pi} \int|\bfx(\tau_{h:H}) |  \cdot \pi(o_{h:H}, a_{h:H}) \ud \tau_{h:H}\\
    &\le \|\bfx\|_1\nonumber
    \#
    for any $\bfx \in L(|\cO|^{H-h+1} |\cA|^{H-h+1})$. For the case $h \le H - m$, we have
    \# \label{eq:6334l}
    & \max_{\pi} \int| \bfm(o_{h:H}, a_{h:H}) \bfx | \cdot \pi(o_{h:H}, a_{h:H}) \ud\tau_{h:H} \\
    & \qquad = \max_{\pi} \int_{\tau_{h:H}}\int_{\cS} \PP(o_{h:H}\mid s_h,a_{h:H})\cdot  ( \MM_h^\dagger \bfx)(s_h)  \cdot \pi(o_{h:H}, a_{h:H}) \ud\tau_{h:H}\ud s_h\notag\\
    &\qquad =\max_{\pi}\int_{\cS} ( \MM_h^\dagger \bfx)(s_h)  \Bigl(\int_{\tau_{h:H}}\PP(o_{h:H}\mid s_h,a_{h:H}) \cdot \pi(o_{h:H}, a_{h:H}) \ud\tau_{h:H}\Bigr)\ud s_h.\notag
    \#
    Note that for any $\pi$ we have
    \$
    \int_{\tau_{h:H}}\PP(o_{h:H}\mid s_h,a_{h:H}) \cdot \pi(o_{h:H}, a_{h:H}) \ud\tau_{h:H}= \int_{\tau_{h:H}}\PP^{\pi}(\tau_{h:H}\mid s_h)  \ud\tau_{h:H} = 1,
    \$
    Together with \eqref{eq:6334l}, we further obtain that
    \# \label{eq:6335l}
    \max_{\pi} \int | \bfm(o_{h:H}, a_{h:H}) \bfx | \cdot \pi(o_{h:H}, a_{h:H}) \ud \tau_{h:H} &\le \max_{\pi}\int_{\cS} ( \MM_h^\dagger \bfx)(s_h) \ud s_h\\
    & \leq \|\MM_h^\dagger\|_{1} \cdot \|\bfx\|_1\le  \|\bfx\|_1/\alpha.\nonumber
    \#

    \paragraph{Condition 2.} For the case $h > H - m$, by the definitions of observable operators in \eqref{eq:6332l}, we have
    \# \label{eq:6336l}
    \max_{\pi} \int_{(o_h, a_h) \in \cO \times \cA} \big \| \Mb_h(o_h, a_h) \bfx \big\|_1 \cdot \pi(o_h, a_h) \ud o_h\ud a_h= \|\bfx\|_1.
    \#
    Then we consider the case $h \le H - m$. Similar to \eqref{eq:6334l}, we have
    \# \label{eq:6337l}
    & \max_{\pi} \sum_{(o_h, a_h) \in \cO \times \cA} \big \| \Mb_h(o_h, a_h) \bfx \big\|_1 \cdot \pi(o_h, a_h) \notag \\
    &\qquad \le \max_{\pi} \int_{(o_h, a_h, t_{h+1}) \in \cO \times \cA \times \cU_{h+1}} \int_\cS \PP(t_{h+1},o_h \mid s,a_h) \cdot \Bigl|  \bigl(\Mb_h(o_h, a_h)  \bfx \bigr)(s)\Bigr| \cdot \pi(o_h, a_h)\ud s\ud o_h\ud a_h\ud t_{h+1} \notag \\
    & \qquad \le |\cU_{A, h+1}| \cdot \big\|\MM_h^\dagger \bfx \big\|_1  \le \frac{|\cU_{A,h+1}|}{\alpha} \cdot \|\bfx\|_1.
    \#
    Putting \eqref{eq:6333l}, \eqref{eq:6335l}, \eqref{eq:6336l}, and \eqref{eq:6337l} together, we finish the proof of Lemma~\ref{lemma:lowrank-pomdp}.
\end{proof}

\subsection{Regular PSR} \label{appendix:regular:psr}

\begin{proof}[Proof of Lemma~\ref{lemma:regular:psr}]
    In what follows, we verify that any $\alpha$-regular PSR is an $\alpha$-generalized regular PSR defined in Definition \ref{def:general:regular:psr}.
    \paragraph{Condition 1.}
    For any $h \in [H]$ and $\bfx \in \RR^{|\cU_h|}$, we have
    \# \label{eq:132}
    & \max_{\pi} \sum_{\tau_{h:H}} | \bfm(o_{h: H}, a_{h:H}) \bfx | \cdot \pi(o_{h: H}, a_{h:H}) \notag\\
    & \qquad =  \max_{\pi} \sum_{\tau_{h:H}} | \bfm(o_{h: H}, a_{h:H}) \KK_{h-1} \KK_{h-1}^\dagger \bfx | \cdot \pi(o_{h: H}, a_{h:H}) \notag \\
    & \qquad \le   \max_{\pi} \sum_{\tau_{h:H}}   \sum_{i = 1}^{d_{\psr,h-1}} \big| \bfm(o_{h: H}, a_{h:H}) \KK_{h-1} \bfe_i \big| \cdot \big| \bfe_i^\top \KK_{h-1}^\dagger \bfx | \cdot \pi(o_{h: H}, a_{h:H}).
    \#
    Moreover, for any $i \in [d_{\psr,h-1}]$, we have
    \# \label{eq:133}
    \max_{\pi} \sum_{\tau_{h:H}}    \big| \bfm(o_{h: H}, a_{h:H}) \KK_{h-1} \bfe_i \big| \cdot \pi(o_{h: H}, a_{h:H}) & = \max_{\pi} \sum_{\tau_{h:H}}   \big| \bfm(o_{h: H}, a_{h:H}) \bar{\bfq}(\tau_{h-1}^i)  \big| \cdot \pi(o_{h: H}, a_{h:H}) \notag \\
    & = \max_{\pi}  \sum_{\tau_{h:H}}  \PP^{\pi}(\tau_{h:H}\mid \tau_{h-1}^i) = 1,
    \#
    where the first equality is implied by the fact that $i$-th column of $\KK_{h-1}$ is $\bar{\bfq}(\tau_{h-1}^i)$, and the second equality uses \eqref{eq:linear:weight}. Plugging \eqref{eq:133} into \eqref{eq:132} gives that
    \# \label{eq:134}
    \max_{\pi} \sum_{\tau_{h:H}} | \bfm(o_{h: H}, a_{h:H}) \bfx | \cdot \pi(o_{h: H}, a_{h:H}) \le \big\| \KK_{h-1}^\dagger \bfx\big\|_1 \le \frac{\|\bfx\|_1}{\alpha},
    \#
    where the last inequality is obtained by the definition of regular PSR in Definition~\ref{def:regular:psr}.
    \paragraph{Condition 2.}
    For any $h \in [H]$ and $\bfx \in \RR^{|\cU_h|}$, similar to the derivation of \eqref{eq:132}, we have
    \# \label{eq:135}
    &\max_{\pi} \sum_{(o_h, a_h) \in \cO \times  \cA }  \big\| \Mb_h(o_h, a_h) \bfx \big\|_1 \cdot \pi(o_h, a_h)  \notag \\
    & \qquad \le \max_{\pi} \sum_{(o_h, a_h, u) \in \cO \times  \cA \times \cU_{h+1}} \big| \bfe_u^\top \Mb_h(o_h, a_h) \KK_{h-1} \KK_{h-1}^\dagger \bfx \big| \cdot \pi(o_{h}, a_{h}) \notag\\
    & \qquad \le  \max_{\pi} \sum_{(o_h, a_h, u)  \in \cO \times  \cA \times \cU_{h+1}} \sum_{i = 1}^{d_{\psr, h-1}} \big| \bfe_u^\top \Mb_h(o_h, a_h) \KK_{h-1} \bfe_i \big| \cdot \big| \bfe_i^\top \KK_{h-1}^\dagger \bfx \big| \cdot \pi(o_{h}, a_{h}) .
    \#
    For any $i \in [d_{\psr, h-1}]$, we have
    \#  \label{eq:136}
    &\max_{\pi} \sum_{(o_h, a_h, u)  \in \cO \times  \cA \times \cU_{h+1}} \big| \bfe_u^\top \Mb_h(o_h, a_h) \KK_{h-1} \bfe_i \big| \cdot \pi(o_{h}, a_{h}) \notag \\
    &\qquad = \max_{\pi} \sum_{(o_h, a_h, u)  \in \cO \times  \cA \times \cU_{h+1}} \PP(o_h, u \mid \tau_{h-1}^i, a_h) \cdot \pi(o_{h}, a_{h})  = |\cU_{A, h+1}|.
    \#
    Plugging \eqref{eq:136} into \eqref{eq:135}, we obtain
    \# \label{eq:137}
    \max_{\pi} \sum_{(o_h, a_h) \in \cO \times  \cA }  \big\| \Mb_h(o_h, a_h) \bfx \big\|_1 \cdot \pi(o_h, a_h) \le  |\cU_{A, h+1} | \cdot \big\| \KK_{h-1}^\dagger \bfx \big\|_1 \le \frac{|\cU_{A, h + 1}|}{\alpha} \|\bfx\|_1,
    \#
    where the last inequality uses $\| \KK_{h-1}^\dagger \bfx\|_1 \le \|\KK_{h-1}^{\dagger}\|_1 \cdot \| \bfx \|_1 \le \|\bfx\|_1/\alpha$. Combining \eqref{eq:134} and \eqref{eq:137}, we conclude the proof of Lemma~\ref{lemma:regular:psr}.
\end{proof}

\subsection{PO-bilinear Class} \label{appendix:po:bilinear}

\begin{proof}[Proof of Lemma \ref{lemma:po:bilinear}]
    By the performance difference lemma \citep[see e.g.,][]{jiang@2017,du2021bilinear,uehara2022provably}, we have
    \$
    \sum_{t = 1}^T  V_{f^t} - V^{\pi^t} & = \sum_{t = 1}^T \sum_{h = 1}^H \EE[ g_h^t(\bar{z}_h) - r_h - g_{h+1}^t(\bar{z}_{h+1}); a_{1:h} \sim \pi^t ] \\
    & \le \sum_{t = 1}^T \sum_{h = 1}^H | \la W_h(\pi^t, g^t), X_h(\pi^t) \ra |,
    \$
    where the last inequality is implied by Condition 1 in Definition \ref{def:po:bilinear}.
    Note
    \$
    \EE_{\pi_{\exp}(f^s, h)} l(f^t, \zeta_h) &= \EE [l(\pi^t, g^t, \bar{z}_h, a_h, r_h, o_{h+1}); a_{1:h-1} \sim \pi^s, a_h \sim \mathrm{Unif}(\cA) ] \\
    & = \EE[ r_h + g^{t}_{h+1}(\bar{z}_{h+1}) - g^t_h(\bar{z}_h); a_{1:h-1} \sim \pi^{s}, a_h \sim \pi^{t} ] \\
    & = \la W_h(\pi^t, g^t), X_h(\pi^s) \ra ,
    \$
    where the first equality is implied by the definition of exploration policies in \eqref{eq:def:explore:pobilinear}, the second inequality uses the definition of $l$ in \eqref{eq:def:loss:pobilinear} and importance sampling, and the last equality is implied by Definition~\ref{def:po:bilinear}.
    Then by the same proof of Lemma \ref{lemma:relationship:bilinear} (cf. Appendix~\ref{appendix:bilinear}), we have
    \$
     \sum_{t = 1}^T V_{f^t} - V^{\pi^t}  \le \Big[2\gamma_T(\epsilon,\cX)\sum_{t=1}^T \sum_{h=1}^H \sum_{s=1}^{t-1}  |\EE_{{\pi}_{\exp}(f^s, h)} l(f^t, \zeta_h)|^2\Big]^{1/2} +  2 \min\left\{2\gamma_T(\epsilon, \cX), HT\right\} + HT\epsilon,
    \$
    which concludes the proof of Lemma~\ref{lemma:po:bilinear}.
\end{proof}

\paragraph{Extension to Multi-step PO-bilinear Class}

For some fixed positive integer $K$, we use the notation $(z_{h-1}, o_{h:h+K-1}, a_{h:h+K-2}) = \bar{z}_h^K \in  \bar{\cZ}_h^{K} = \cZ_{h-1} \times \cO^K \times \cA^{K-1}$ and $\cH = \Pi \times \cG$.

    \begin{definition}[$K$-step Value Link Functions]
        Fix a set of policies $\pi^{\mathrm{out}} = \{\pi^{\mathrm{out}}_i: \cO \rightarrow \Delta(\cA)\}_{i = 1}^K$. Value functions $g_h^\pi: \cZ_{h-1} \times \cO^K \times \cA^{K-1} \rightarrow \RR$ at step $h \in [H]$ for a policy $\pi$ are defined as the solution of the following integral equation:
        \$
        \EE[g_h^\pi(z_{h-1}, o_{h:h+K-1}, a_{h:h+K-2}) \mid z_{h-1}, s_h; a_{h:h+K-2} \sim \pi^{\mathrm{out}} ] = V_h^\pi(z_{h-1}, s_h).
        \$
    \end{definition}

    Similar to \citet{uehara2022provably}, we choose $\pi^{\mathrm{out}} = \mathrm{Unif}(\cA)$. The PO-bilinear class is defined as follows.

    \begin{definition}[PO-bilinear Class for POMDPs with multi-step future] \label{def:po:bilinear:multi}
        We say the model is a PO-bilinear class if $\cG$ is realizable with respect to general $K$-step link functions, and there exists $W_h : \Pi \times \cG \rightarrow \cV$ and $X_h: \Pi \rightarrow \cV$ for some Hilbert space $\cV$ such that for all $\pi, \pi' \in \Pi$, $f \in \cG$, and $h \in [H]$
        \begin{itemize}
            \item[1.] We have
                  \$
                  &\left[g_{h+1}\left(\bar{z}_{h+1}^{K}\right) ; a_{1: h-1} \sim \pi^{\prime}, a_{h} \sim \pi, a_{h+1: h+K-1} \sim \mathrm{Unif}(\mathcal{A})\right]+\mathbb{E}\left[r_{h} ; a_{1: h-1} \sim \pi^{\prime}, a_{h} \sim \pi\right] \\ 
                  & \qquad -\mathbb{E}\left[g_{h}\left(\bar{z}_{h}^{K}\right) ; a_{1: h-1} \sim \pi^{\prime}, a_{h: h+K-2} \sim \mathrm{Unif}(\mathcal{A})\right]=\left\langle W_{h}(\pi, g), X_{h}\left(\pi^{\prime}\right)\right\rangle .
                  \$
            \item[2.] $W_h(\pi, g^\pi) = 0$, where $g^\pi \in \cG$ is the value link function of $\pi$.
        \end{itemize}
    \end{definition}

For $f = (\pi, g)$ and $\zeta_h = (\bar{z}_h^K, a_{h+K-1}, r_h, o_{h+K})$. We define the multi-step loss function as 
\# \label{eq:772}
l(f, \zeta_h) = |\cA| \pi_h(a_h \mid \bar{z}_h) \big(g_{h+1}(\bar{z}_{h+1}^K) + r_h \big) - g_h(\bar{z}_h^K).
\#
For the PO-bilinear class with multi-step futures, the exploration policies are defined by
\# \label{eq:773}
\pi_{\exp}(f, h) = \pi \circ_{h} \mathrm{Unif}(\cA) \circ_{h+1} \cdots \circ_{h+K-1} \mathrm{Unif}(\cA),
\#
where $f = (\pi, g)$. With these notations, for any $\{f^t = (\pi^t, g^t)\}_{t \in [T]}$, we have
    \$
    \sum_{t = 1}^T  V_{f^t} - V^{\pi^t} & = \sum_{t = 1}^T \sum_{h = 1}^H \EE[ g_h^t(\bar{z}_h^K) ; a_{1:h-1} \sim \pi^t, a_{h:h+K-2} \sim \mathrm{Unif}(A) ] - \EE[ r_h; a_{1:h} \sim \pi^t] \\
    & \qquad - \EE[g_{h+1}^t(\bar{z}_{h+1}^K); a_{1:h} \sim \pi^t, a_{h+1:h+K-1} \sim \mathrm{Unif}(\cA)] \\
    & \le \sum_{t = 1}^T \sum_{h = 1}^H | \la W_h(\pi^t, g^t), X_h(\pi^t) \ra |,
    \$
    where the last equality is implied by Condition 1 in Definition \ref{def:po:bilinear:multi}. Meanwhile, we have
    \$
    &\EE_{\pi_{\exp}(f^j, h)} l(f^t, \zeta_h) \\
    &= \EE [l(\pi^t, g^t, \bar{z}_h, a_h, r_h, o_{h+1}); a_{1:h-1} \sim \pi^j, a_{h:h+K-1} \sim \mathrm{Unif}(\cA) ] \\
    & = \EE[ r_h  ; a_{1:h-1} \sim \pi^{j}, a_h \sim \pi^{t} ] + \EE[ g^{t}_{h+1}(\bar{z}_{h+1}^K); a_{1:h-1} \sim \pi^j, a_h \sim \pi^t, a_{h+1: h+K-1} \sim \mathrm{Unif}(\cA) ] \\
    & \qquad - \EE[g^t_h(\bar{z}_h^K); a_{1:h-1} \sim \pi^t, a_{h:h+K-2} \sim \mathrm{Unif}(\cA) ] \\
    & = \la W_h(\pi^t, g^t), X_h(\pi^j) \ra ,
    \$
    where the first equality is implied by the definition of exploration policies in \eqref{eq:773}, the second inequality uses the importance sampling, and the last equality is obtained by Definition~\ref{def:po:bilinear:multi}.
    Then by the same proof of Lemma \ref{lemma:relationship:bilinear} (cf. Appendix~\ref{appendix:bilinear}), we have
    \$
     & \sum_{t = 1}^T V_{f^t} - V^{\pi^t} \\
    & \qquad \le \Big[2\gamma_T(\epsilon,\cX)\sum_{t=1}^T \sum_{h=1}^H \sum_{j=1}^{t-1}  |\EE_{{\pi}_{\exp}(f^j, h)} l(f^t, \zeta_h)|^2\Big]^{1/2} +  2 \min\left\{2\gamma_T(\epsilon, \cX), HT\right\} + HT\epsilon,
    \$
    where the $l$ is defined in \eqref{eq:772}. So we know PO-bilinear class with multi-step futures also has a low GEC. The following algorithmic design and analysis are similar to the $1$-step case, and we omit them to avoid repetition.

\section{New $\ell_2$ Eluder Techniques} \label{appendix:eluder}

\subsection{Limitations of Previous Techniques}

We explain why the previous $\ell_1$ eluder argument in \citet{liu2022partially} fails in the posterior sampling scenario, which motivates us to develop new techniques. We first state the $\ell_1$ eluder argument in \citet{liu2022partially}.

\begin{lemma} \label{lemma:l1:eluder}
    Suppose $\left\{w_{t, j}\right\}_{(t, j) \in[T] \times[J]}  \subset \mathbb{R}^{d}$, and $\left\{x_{t, i}\right\}_{(t, i) \in[T] \times[I]} \subset \mathbb{R}^{d}$ satisfy
    \begin{itemize}
        \item $\sum_{s = 1}^{t - 1} \sum_{i = 1}^I \sum_{j = 1}^J | w_{t, j}^\top x_{s, i}| \le \gamma_t$,
        \item $\sum_{i = 1}^I \|x_{t, i}\|_2 \le R_x$,
        \item $\sum_{j = 1}^J \|w_{t,j}\|_2 \le R_w$,
    \end{itemize}
    for any $t \in [T]$. Then we have 
    \$
    \sum_{s = 1}^t \sum_{i = 1}^n \sum_{j = 1}^m | w_{s, j}^\top x_{s, i} |= \tilde{\cO}\Big( d \big( R_x R_w + \max_{s \le t} \gamma_s\big) \Big).
    \$
\end{lemma}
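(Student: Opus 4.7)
The plan is to adapt the classical $\ell_2$ elliptical potential argument to accommodate the $\ell_1$ structure of the hypothesis, using a combination of a covariance-based potential and a bucketing/pigeonhole step. First, I would introduce the regularized covariance $\Lambda_t := \lambda I + \sum_{s<t} \sum_{i=1}^I x_{s,i} x_{s,i}^\top$ for a suitably chosen regularizer $\lambda > 0$, and invoke the elliptical potential lemma (Lemma~\ref{lemma:elliptical:potential}) to control $\sum_{t,i} \min(\|x_{t,i}\|_{\Lambda_t^{-1}}^2, 1) = \tilde{\cO}(d)$. This captures the intuition that genuinely "new" directions can appear only $\tilde{\cO}(d)$ times over the horizon.

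Next, for each pair $(s,j)$ and each $x_{s,i}$, I would decompose $x_{s,i}$ into a component in the span of previous $\{x_{s',i'}\}_{s' < s}$ (measured in the $\Lambda_s$ geometry) plus a residual. The in-span component is then expanded as a linear combination $\sum_{s' < s, i'} \alpha_{s',i'} x_{s',i'}$, so that $w_{s,j}^\top x_{s,i}$ splits into (i) $\sum_{s',i'} \alpha_{s',i'} w_{s,j}^\top x_{s',i'}$, which inherits smallness from the hypothesis $\sum_{s'<s}\sum_{i',j'}|w_{s,j'}^\top x_{s',i'}| \le \gamma_s$, and (ii) a residual term bounded via $\|w_{s,j}\|_2 \cdot \|x_{s,i}\|_{\Lambda_s^{-1}} \cdot \sqrt{\lambda}$ plus a $\sqrt{\lambda}\cdot\|w_{s,j}\|$ slack from the regularizer. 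Summing over $s$ and applying Cauchy--Schwarz together with the potential bound controls the residual by $\tilde{\cO}(d\cdot R_x R_w)$.

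To avoid paying an $\ell_2$-to-$\ell_1$ conversion cost in the coefficient $\alpha$, I would complement the above with a bucketing argument in the style of Lemma~\ref{lem:reduction_bed_dc}: partition rounds into $\cO(\log T)$ buckets $B^0, B^1, \ldots$ according to the scale of the accumulated past, and assign each $t$ to the lowest-index bucket $B^i$ such that the in-bucket sum $\sum_{s \le t-1, s \in B^i} (\cdot)$ does not yet absorb the contribution from round $t$. By construction, the bucket sizes are bounded by the eluder-style parameter (effectively $d$), and summing across buckets yields the desired $\tilde{\cO}(d(R_xR_w + \max_{s\le t}\gamma_s))$ scaling, rather than a looser $\sqrt{T}$-dependent one.

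The main obstacle I anticipate is precisely the $\ell_1$ control of the expansion coefficients $\alpha_{s',i'}$. A naive Cauchy--Schwarz loses a factor of $\sqrt{TI}$ and breaks the dimension-linear scaling. The delicate step is to combine the decomposition with the summation over $s$ \emph{before} taking absolute values, so that one can invoke the $\gamma_s$ hypothesis at the aggregate level, and then use the bucketing to keep the combinatorial count of "new" rounds to $\tilde{\cO}(d)$. Careful choice of $\lambda$ (balancing the regularizer slack against the potential growth) is needed to make the residual term match the leading $d\cdot R_xR_w$ bound.
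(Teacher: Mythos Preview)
The paper does not prove this lemma at all: it simply cites Proposition~22 of \citet{liu2022partially}. So there is no in-paper argument to compare your plan against, only the external reference.

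That said, your proposal has a genuine gap at exactly the point you flag as the ``main obstacle.'' The elliptical-potential machinery you invoke is intrinsically an $\ell_2$ device: it controls $\|w_{s,j}\|_{\Lambda_s}^2 = \lambda\|w_{s,j}\|_2^2 + \sum_{s'<s,i'} (w_{s,j}^\top x_{s',i'})^2$, whereas the hypothesis only gives you $\sum_{s'<s,i',j'} |w_{s,j'}^\top x_{s',i'}| \le \gamma_s$. Converting the $\ell_1$ budget into a useful bound on the quadratic form either costs a factor of the largest individual term (giving $R_xR_w\gamma_s$ instead of $\gamma_s^2$, hence a $\sqrt{\gamma_s}$ in the end) or, after Cauchy--Schwarz across rounds, a $\sqrt{T}$. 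Your expansion-coefficient route has the same issue: to turn $\sum_{s',i'} \alpha_{s',i'}\, w_{s,j}^\top x_{s',i'}$ into something controlled by $\gamma_s$ you need $\|\alpha\|_\infty = O(1)$, and nothing in the $\Lambda_s$ geometry delivers that---regularized least-squares coefficients have no such bound in general.

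Your bucketing step is where the argument would have to do real work, but as written it does not. You assert that ``bucket sizes are bounded by the eluder-style parameter (effectively $d$),'' yet you give no mechanism for this. In the BE-dimension reduction (Lemma~\ref{lem:reduction_bed_dc}) the bucket-size bound comes directly from the \emph{definition} of eluder dimension; here there is no such definition to appeal to, only the ambient dimension $d$, and the link from $d$ to ``at most $d$ rounds per bucket'' requires precisely the kind of $\ell_1$-independence argument that is the whole content of the lemma. The proof in \citet{liu2022partially} handles this with a genuinely $\ell_1$ argument (closer in spirit to a barycentric-spanner or epoch-based construction that yields bounded expansion coefficients), not by reducing to the $\ell_2$ potential. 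Your sketch does not supply a substitute for that step.
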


\begin{proof}
    See Proposition 22 in \citet{liu2022partially} for a detailed proof.
\end{proof}

\noindent\textbf{Why $\ell_1$ eluder technique is sufficient for OMLE?} \quad For the optimism-based algorithm OMLE \citep{liu2022partially,zhan2022pac}, choosing $\left\{w_{t, j}\right\}_{(t, j) \in[T] \times[J]}$, and $\left\{x_{t, i}\right\}_{(t, i) \in[T] \times[I]}$ as in \citet{liu2022partially,zhan2022pac}, standard MLE analysis implies that 
\# \label{eq:000}
\sum_{s=1}^{t-1} \Big(\sum_{i = 1}^I \sum_{j = 1}^J |w_{t,j}^\top x_{s, i}| \Big)^2 \le \beta, \quad \forall t \in [T],
\#
 for some fixed $\beta$. In \citet{liu2022partially,zhan2022pac}, $\beta$ only has the logarithmic dependency on $T$. By Cauchy-Schwarz inequality, \eqref{eq:000} gives $\sum_{s=1}^{t-1} \sum_{i = 1}^I \sum_{j = 1}^J |w_{t,j}^\top x_{s, i}|  \le \sqrt{\beta t}$. Together with Lemma~\ref{lemma:l1:eluder}, they can obtain a $\tilde{\cO}(d \sqrt{\beta T})$ upper bound for $\sum_{s = 1}^t \sum_{i = 1}^n \sum_{j = 1}^m | w_{s, j}^\top x_{s, i} |$, which is roughly the regret upper bound (up to polynomial factors).

\vspace{4pt}
\noindent\textbf{Challenges in posterior sampling algorithms.} \quad 
For posterior sampling algorithms, \eqref{eq:000} may not hold anymore. Let 
\$
\sum_{s=1}^{t-1} \Big(\sum_{i = 1}^I \sum_{j = 1}^J |w_{t,j}^\top x_{s, i}| \Big)^2 = \beta_t.
\$
Applying Cauchy-Schwarz inequality and Lemma~\ref{lemma:l1:eluder} only gives a $\tilde{\cO}(d \max_{t \le T} \sqrt{t \beta_t)}$ bound, which is different from the desired $\tilde{\cO}(\sqrt{d_{\GEC} (\sum_{t=1}^T \beta_t) } + \text{polynomial factors} \times \sqrt{T})$ in Definition~\ref{def:gec}. Moreover, to our best knowledge, previous $\ell_2$ eluder analysis \citep{russo2013eluder,jin2021bellman} cannot tackle this challenge, which motivates us to develop a new $\ell_2$ eluder technique.

\subsection{New $\ell_2$ Eluder Techniques} 
Recall that for any distribution $p \in \Delta_{[n]}$ and sequence $\{y_{i}\}_{i \in [n]}$, we denote $\sum_{i \sim p} y_i = \sum_{i = 1}^n p(i) y_i$.

\begin{lemma} \label{lemma:l2:eluder}
    Suppose $\left\{w_{t, j} \in \RR^d\right\}_{(t, j) \in[T] \times[J]}$, $\left\{x_{t, i} \in \RR^d \right\}_{(t, i) \in[T] \times[I]}$, and $\{p_t \in \Delta_{[I]}\}_{t \in [T]} $ satisfy
    \begin{itemize}
        \item $\sum_{s = 1}^{t - 1} \sum_{i \sim p_s} (\sum_{j = 1}^J   | w_{t, j}^\top x_{s, i}|)^2 \le \gamma_t$,
        \item $\sum_{i \sim p_t} \|x_{t, i}\|_2^2 \le R_x^2$,
        \item $\sum_{j = 1}^J \|w_{t,j}\|_2 \le R_w$,
    \end{itemize}
    for any $t \in [T]$. Then for $R > 0$ we have 
    \$
    \sum_{t = 1}^T R \land \sum_{i \sim p_t} \Big(\sum_{j = 1}^J  | w_{t, j}^\top x_{t, i} |\Big) \le 
    \bigg[2d \Big( R^2 T + \sum_{t = 1}^{T} \gamma_t\Big) \cdot \log\Big(1 + \frac{T R_x^2 R_w^2 }{R^2}\Big)\bigg]^{1/2}.
    \$
\end{lemma}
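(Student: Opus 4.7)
The plan is to combine an $\ell_2$ Cauchy--Schwarz bound in a self-normalized norm with a matrix version of the elliptical potential lemma. The first ingredient linearizes $\sum_{j=1}^J|w_{t,j}^\top x_{t,i}|$: for each $(t,i)$ I define the signed aggregate $W_{t,i}:=\sum_{j=1}^J\sigma_{j,t,i}\,w_{t,j}$ with $\sigma_{j,t,i}:=\mathrm{sign}(w_{t,j}^\top x_{t,i})\in\{\pm 1\}$, so that $\sum_j|w_{t,j}^\top x_{t,i}|=W_{t,i}^\top x_{t,i}$ and $\|W_{t,i}\|_2\le\sum_j\|w_{t,j}\|_2\le R_w$. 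This reduces the quantity of interest $S_t:=\sum_{i\sim p_t}\sum_j|w_{t,j}^\top x_{t,i}|$ to the bilinear expression $\sum_{i\sim p_t}W_{t,i}^\top x_{t,i}$ while preserving the relevant inner-product structure.

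Next I would introduce the cumulative information matrix $\Sigma_t:=\lambda I+\sum_{s<t}V_s$ with $V_s:=\sum_{i\sim p_s}x_{s,i}x_{s,i}^\top$ (for a parameter $\lambda>0$ to be chosen), and apply Cauchy--Schwarz with respect to the $\Sigma_t/\Sigma_t^{-1}$ norm pair:
\begin{align*}
    S_t^2 \;\le\; \Big(\sum_{i\sim p_t}\|W_{t,i}\|_{\Sigma_t}^2\Big)\cdot\Big(\sum_{i\sim p_t}\|x_{t,i}\|_{\Sigma_t^{-1}}^2\Big)\;=\;\Big(\sum_{i\sim p_t}\|W_{t,i}\|_{\Sigma_t}^2\Big)\cdot\Tr(V_t\Sigma_t^{-1}).
\end{align*}
Expanding $\|W_{t,i}\|_{\Sigma_t}^2=\lambda\|W_{t,i}\|_2^2+\sum_{s<t}\sum_{k\sim p_s}(W_{t,i}^\top x_{s,k})^2$, the first piece averages to at most $\lambda R_w^2$ under $i\sim p_t$, while the second is at most $\gamma_t$ because $|W_{t,i}^\top x_{s,k}|\le\sum_j|w_{t,j}^\top x_{s,k}|$ pointwise (independently of the signs $\sigma_{j,t,i}$). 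This yields the key prediction-to-training inequality $S_t^2\le(\lambda R_w^2+\gamma_t)\,\Tr(V_t\Sigma_t^{-1})$.

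Truncating at $R$ and splitting on whether $\Tr(V_t\Sigma_t^{-1})\le 1$ gives $\min(R,S_t)^2\le(R^2+\lambda R_w^2+\gamma_t)\min(1,\Tr(V_t\Sigma_t^{-1}))$. A second Cauchy--Schwarz in $t$ then produces
\begin{align*}
    \sum_{t=1}^T\min(R,S_t)\;\le\;\sqrt{\sum_{t=1}^T(R^2+\lambda R_w^2+\gamma_t)}\;\cdot\;\sqrt{\sum_{t=1}^T\min(1,\Tr(V_t\Sigma_t^{-1}))}.
\end{align*}
Choosing $\lambda\asymp R^2/R_w^2$ bounds the first factor by a constant multiple of $\sqrt{R^2T+\sum_t\gamma_t}$, and a matrix elliptical potential estimate $\sum_t\min(1,\Tr(V_t\Sigma_t^{-1}))\le 2d\log(1+TR_x^2R_w^2/R^2)$ controls the second factor. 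Combining the two delivers the claimed inequality up to absolute constants.

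The main obstacle I anticipate is the matrix elliptical potential itself, since the textbook version treats only the rank-one case $V_t=x_tx_t^\top$ whereas here $V_t$ can have rank up to $d$. To handle it I would telescope $\sum_t\bigl(\log\det(\Sigma_{t+1})-\log\det(\Sigma_t)\bigr)=\log\det(\Sigma_{T+1}/\Sigma_1)$ and establish the per-step inequality $\log\det(I+A_t)\ge(\log 2)\min(1,\Tr(A_t))$ for $A_t:=\Sigma_t^{-1/2}V_t\Sigma_t^{-1/2}$ by a case analysis on the eigenvalues of $A_t$: use $\log(1+\mu)\ge\mu\log 2$ for $\mu\in[0,1]$ when all eigenvalues of $A_t$ are small, and note that any eigenvalue $\mu>1$ already contributes $\log 2$ on its own. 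The classical AM--GM bound $\log\det(\Sigma_{T+1}/\Sigma_1)\le d\log(1+TR_x^2/(d\lambda))$ then extracts the dimension factor $d$ and closes the argument.
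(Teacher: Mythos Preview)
Your proposal is correct and follows essentially the same approach as the paper's proof. Both arguments linearize via the signed aggregate $W_{t,i}=\sum_j\mathrm{sign}(w_{t,j}^\top x_{t,i})\,w_{t,j}$, introduce the same cumulative matrix $\Sigma_t=\lambda I+\sum_{s<t}\sum_{i\sim p_s}x_{s,i}x_{s,i}^\top$ with $\lambda=R^2/R_w^2$, bound $\|W_{t,i}\|_{\Sigma_t}^2\le \lambda R_w^2+\gamma_t$ using the first hypothesis, and finish with a matrix elliptical potential and Cauchy--Schwarz in~$t$. The only differences are cosmetic: the paper applies Cauchy--Schwarz per pair $(t,i)$ and then Jensen over $i$, whereas you apply Cauchy--Schwarz directly in the $p_t$-weighted $\ell_2$ inner product; and the paper handles the elliptical potential via $1\wedge x\le 2\log(1+x)$ together with $1+\Tr(X)\le\det(I+X)$, while you propose an eigenvalue case analysis giving $\log\det(I+A_t)\ge(\log 2)\min(1,\Tr(A_t))$. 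These lead to the same bound up to absolute constants (in fact, with $\lambda=R^2/R_w^2$ exactly, your truncation step can be sharpened to $(\lambda R_w^2+\gamma_t)\min(1,\Tr(V_t\Sigma_t^{-1}))$ rather than $(R^2+\lambda R_w^2+\gamma_t)\min(1,\cdot)$, matching the paper's constant).
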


\begin{remark}
    First,  in order to better apply to our analysis of PSR/POMDP, we consider $\sum_{i \sim p}$ for a general distribution $p \in \Delta_{[I]}$ instead of $\sum_{i = 1}^I$/$\sum_{i \sim \mathrm{Unif}([I])}$.  Second, as a byproduct, our bound achieves a $\sqrt{d}$ improvement over previous $\ell_1$ eluder analysis (Lemma~\ref{lemma:l1:eluder}). Finally, when applied to the proof of Lemma~\ref{lemma:general:regular:psr} (Appendix~\ref{appendix:general:psr}), $x_{t, i}$ is independent of $t$ (i.e., for any $i \in [I]$, there exists $x_{i}$ such that $x_{t, i} = x_{i}$ for any $t \in [T]$), and here we consider the case where $x_{t,i}$ varies with $t$ for generality. 
\end{remark}

\begin{remark} \label{remark:issue}
    In our initial arXiv released version, the first constraint in Lemma~\ref{lemma:l2:eluder} is $$\sum_{s = 1}^{t - 1} \bigg (\sum_{i \sim p_s} \sum_{j = 1}^J   | w_{t, j}^\top x_{s, i}| \bigg )^2 \le \gamma_t$$ and we use a wrong inequality 
    $$\sum_{s = 1}^{t - 1} \sum_{i \sim p_s}  \bigg ( \sum_{j = 1}^J   | w_{t, j}^\top x_{s, i}| \bigg )^2  \le \sum_{s = 1}^{t - 1} \bigg (\sum_{i \sim p_s} \sum_{j = 1}^J   | w_{t, j}^\top x_{s, i}| \bigg)^2 \le \gamma_t$$ to finish the proof. In this version, we establish Lemma~\ref{lemma:l2:eluder} with the same proof but a stronger constraint. We thank the authors of \citet{chen2022partially} for pointing out this technical issue and our fix is motivated by \citet{chen2022partially}. We also remark that, in this version, we adjust Lemma~\ref{lemma:decomp:train} to better fit the revised lemma. The adjusted version of Lemmas \ref{lemma:decomp:train} and \ref{lemma:l2:eluder} correspond to Propositions D.2 and C.1 in \citet{chen2022partially}.
\end{remark}

\begin{proof}[Proof of Lemma \ref{lemma:l2:eluder}]
    In order to prove this lemma, it suffices to establish an upper bound for the following optimization problem 
    \begin{equation}
    \begin{aligned} \label{eq:8533}
     & \max _{w, x} \sum_{t = 1}^T R \land \sum_{i \sim p_t}  \Big(\sum_{j = 1}^J  | w_{t, j}^\top x_{t, i} | \Big) \\ 
    \text { subject to}  &\left\{\begin{array}{l} \sum_{s = 1}^{t - 1} \sum_{i \sim p_s} (\sum_{j = 1}^J   | w_{t, j}^\top x_{s, i}|)^2 \le \gamma_t, 
    \\ \sum_{i \sim p_t} \|x_{t, i}\|_2^2 \le R_x^2,  \\ 
    \sum_{j = 1}^J \|w_{t,j}\|_2 \le R_w. \end{array}\right.
    \end{aligned}
    \end{equation}
    Let $w^*_{t, i} = \sum_{j = 1}^J w_{t,j} \cdot \sign(w_{t,j}^\top x_{t, i})$. Then we know 
     \$
    \begin{aligned} \left\{\begin{array}{l} 
    \sum_{t = 1}^T R \land \sum_{i \sim p_t} | x_{t, i}^\top w_{t,i}^*| =  \sum_{t = 1}^T R \land \sum_{i \sim p_t} (\sum_{j = 1}^J  | w_{t, j}^\top x_{t, i} |),  \\
    \sum_{s = 1}^{t - 1} \sum_{i \sim p_s}  | x_{s, i}^\top w_{t,i}^*|^2 \le \sum_{s = 1}^{t - 1} \sum_{i \sim p_s}  ( \sum_{j = 1}^J  | w_{t, j}^\top x_{s, i}|)^2 \le \gamma_t, 
    \\ \sum_{i \sim p_t} \|x_{t, i}\|_2^2 \le R_x^2,   \\ 
     \|w^*_{t,i}\|_2 \le \sum_{j = 1}^J \|w_{t,j}\|_2 \le  R_w, \end{array}\right.\end{aligned}
    \$
    which implies that the optimal value of \eqref{eq:8533} is upper bounded by the optimal value of the following optimization problem:
    \begin{equation}
    \begin{aligned} \label{eq:8534}
     & \max _{w^*, x}  \sum_{t = 1}^T  R \land \sum_{i \sim p_t}  |x_{t, i}^\top w_{t,i}^*| \\ 
    \text { subject to}  &\left\{\begin{array}{l} \sum_{s=1}^{t-1} \sum_{i \sim p_s}   |x_{s, i}^\top w_{t,i}^*|^2 \le \gamma_t, 
    \\ \sum_{i \sim p_t} \|x_{t, i}\|_2^2 \le R_x^2, \\ 
     \|w_{t,i}^*\|_2 \le R_w. \end{array}\right.
    \end{aligned}
    \end{equation}
    Therefore, invoking Lemma \ref{lemma:eluder:sum} concludes our proof.
\end{proof}
    
\begin{lemma} \label{lemma:eluder:sum}
    The optimal value of optimization problem \eqref{eq:8534} is upper bounded by
    \$
    \Big[2d \Big( R^2 T + \sum_{t = 1}^{T} \gamma_t\Big) \cdot \log\Big(1 + \frac{T R_x^2 R_w^2 }{ R^2}\Big)\Big]^{1/2}.
    \$
\end{lemma}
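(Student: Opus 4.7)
The plan is to prove Lemma~\ref{lemma:eluder:sum} by a two-index $\ell_2$ elliptical--potential / eluder argument, with the right choice of regularization so that the final log factor sees $R_x^2 R_w^2/R^2$. First I would fix $\lambda := R^2/R_w^2$ and, for every slot index $i$, introduce the per-slot covariance
\[
\Sigma_{t,i} \;:=\; \lambda\,I_d \,+\, \sum_{s=1}^{t-1} p_s(i)\, x_{s,i}\, x_{s,i}^{\top}.
\]
The feasibility constraint decomposes as $\sum_{i}\gamma_{t,i}\le\gamma_t$ with $\gamma_{t,i}:=\sum_{s<t}p_s(i)\,|x_{s,i}^{\top}w_{t,i}^{*}|^{2}$, so combined with $\|w_{t,i}^{*}\|_2\le R_w$ I obtain the Mahalanobis bound $\|w_{t,i}^{*}\|_{\Sigma_{t,i}}^{2}\le \lambda R_w^2+\gamma_{t,i}$. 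The reason for weighting with $p_s(i)$ is precisely to match the weighted constraint, so no lossy re-weighting is needed.

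Second, by Cauchy--Schwarz inside each slot and then again across slots,
\[
a_t \;:=\; \sum_{i\sim p_t}|x_{t,i}^{\top}w_{t,i}^{*}|
\;\le\; \sqrt{\sum_{i} p_t(i)\,\|w_{t,i}^{*}\|_{\Sigma_{t,i}}^{2}}\,\sqrt{\sum_{i} p_t(i)\,\|x_{t,i}\|_{\Sigma_{t,i}^{-1}}^{2}}
\;\le\; \sqrt{\lambda R_w^{2}+\gamma_t}\;\sqrt{Y_t},
\]
where $Y_t:=\sum_{i}\|z_{t,i}\|_{\Sigma_{t,i}^{-1}}^{2}$ with the normalized features $z_{s,i}:=\sqrt{p_s(i)}\,x_{s,i}$ (so that $\Sigma_{t,i}=\lambda I+\sum_{s<t}z_{s,i}z_{s,i}^{\top}$). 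Squaring and using the identity $(R\wedge a)^{2}=R^{2}\wedge a^{2}$ together with the elementary inequality $R^{2}\wedge(BY)\le(R^{2}+B)(1\wedge Y)$ for $B,Y\ge 0$ gives $(R\wedge a_t)^{2}\le(R^{2}+\lambda R_w^{2}+\gamma_t)(1\wedge Y_t)$. A weighted Cauchy--Schwarz in $t$ then produces the clean decoupling
\[
\sum_{t=1}^{T} R\wedge a_t
\;\le\; \sqrt{\sum_{t=1}^{T}\bigl(R^{2}+\lambda R_w^{2}+\gamma_t\bigr)}\;\sqrt{\sum_{t=1}^{T}(1\wedge Y_t)}.
\]
With $\lambda=R^{2}/R_w^{2}$ the first factor collapses to $\sqrt{2R^{2}T+\sum_t\gamma_t}$, matching the desired first factor up to a universal constant.

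Third and most importantly, bound $\sum_t(1\wedge Y_t)$ by a single elliptical--potential argument. Using the numerical inequality $1\wedge x\le 2\log(1+x)$ and the subadditivity $\log\bigl(1+\sum_i u_i\bigr)\le\sum_i\log(1+u_i)$, I get $1\wedge Y_t\le 2\sum_i\log\!\bigl(1+\|z_{t,i}\|_{\Sigma_{t,i}^{-1}}^{2}\bigr)$; each term is exactly $\log\det(\Sigma_{t+1,i}\Sigma_{t,i}^{-1})$ by the rank-one matrix--determinant lemma, so telescoping in $t$ yields $\sum_{t,i}\log\det(\Sigma_{t+1,i}/\lambda I)\le\sum_i d\,\log\!\bigl(1+\tfrac{1}{\lambda d}\sum_t p_t(i)\|x_{t,i}\|^{2}\bigr)$ by AM--GM on the eigenvalues. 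Applying concavity of $\log$ and the single scalar budget $\sum_t\sum_i p_t(i)\|x_{t,i}\|^{2}\le TR_x^{2}$ shows this is at most $2d\log\!\bigl(1+TR_x^{2}/(\lambda d)\bigr)\le 2d\log\!\bigl(1+TR_x^{2}R_w^{2}/R^{2}\bigr)$ after substituting $\lambda=R^{2}/R_w^{2}$. Multiplying the two factors completes the bound.

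The hard part is the third step: the two-index structure tempts one to either work in a lifted $dI$-dimensional space (which would pay an unwanted factor of $I$ in front of $d$) or to use a single $d\times d$ matrix aggregating over $i$ (which breaks, because $\|w_{t,i}^{*}\|_{\Sigma_t}^{2}$ could not be controlled by $\gamma_t$ for a fixed $i$). The resolution is to keep the per-slot matrices $\Sigma_{t,i}$ but use the normalization $z_{s,i}=\sqrt{p_s(i)}\,x_{s,i}$: this makes the ``effective total trace'' $\sum_{t,i}\|z_{t,i}\|^{2}=\sum_t\sum_{i\sim p_t}\|x_{t,i}\|^{2}\le TR_x^{2}$ a single scalar, independent of the slot count $I$, and the subadditivity plus concavity tricks above then collapse the per-slot log--determinants into one log factor of the desired form.
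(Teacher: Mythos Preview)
Your Step~3 has a genuine gap. After telescoping you reach
\[
\sum_{t=1}^T(1\wedge Y_t)\;\le\;2\sum_i d\log\Bigl(1+\tfrac{c_i}{\lambda d}\Bigr),\qquad c_i:=\sum_{t}p_t(i)\|x_{t,i}\|_2^2,\quad \sum_i c_i\le TR_x^2,
\]
but the next claim---that concavity collapses the right-hand side to $2d\log\bigl(1+TR_x^2/(\lambda d)\bigr)$---is false. Jensen over the $I$ slots only gives $\sum_i\log\bigl(1+c_i/(\lambda d)\bigr)\le I\log\bigl(1+TR_x^2/(I\lambda d)\bigr)$, a quantity that is \emph{increasing} in $I$ and tends to $TR_x^2/(\lambda d)$ as $I\to\infty$; the best $I$-free bound is therefore the linear $TR_x^2/(\lambda d)$, not a logarithm. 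The superadditivity you used a line earlier, $\log(1+\sum_i u_i)\le\sum_i\log(1+u_i)$, is exactly the statement that the per-slot sum is the \emph{larger} side, so you cannot now come back to a single $\log$ without paying $I$. In effect your per-slot potentials are the block-diagonal $dI$-dimensional lift you said you wanted to avoid, and they incur the same $I$ factor.

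The fix is precisely the single aggregated matrix $\Sigma_t=\lambda I_d+\sum_{s<t}\sum_{i'\sim p_s}x_{s,i'}x_{s,i'}^{\top}$ that you dismissed. Your objection---that $\|w_{t,i}^{*}\|_{\Sigma_t}^{2}$ cannot be controlled by $\gamma_t$ for a fixed $i$---rests on reading the first constraint in \eqref{eq:8534} with the summation index matched to the index on $w^{*}$. The way it is derived just above \eqref{eq:8534} actually gives the uniform-in-$i$ statement: since $|x_{s,i'}^{\top}w_{t,i}^{*}|\le\sum_j|w_{t,j}^{\top}x_{s,i'}|$ for \emph{every} pair $(i,i')$, one has $\sum_{s<t}\sum_{i'\sim p_s}|x_{s,i'}^{\top}w_{t,i}^{*}|^{2}\le\gamma_t$ for each fixed $i$. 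Hence $\|w_{t,i}^{*}\|_{\Sigma_t}^{2}\le R^{2}+\gamma_t$ uniformly in $i$; Cauchy--Schwarz and Jensen then give $R\wedge a_t\le\sqrt{(R^{2}+\gamma_t)\bigl(1\wedge\sum_{i\sim p_t}\|x_{t,i}\|_{\Sigma_t^{-1}}^{2}\bigr)}$, and a \emph{single} log-determinant telescope on $\Sigma_t$ (with increments $\Sigma_{t+1}-\Sigma_t=\sum_{i\sim p_t}x_{t,i}x_{t,i}^{\top}$) yields $\sum_t 1\wedge\sum_{i\sim p_t}\|x_{t,i}\|_{\Sigma_t^{-1}}^{2}\le 2d\log\bigl(1+TR_x^{2}R_w^{2}/R^{2}\bigr)$ with no dependence on $I$.
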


\begin{proof}[Proof of Lemma \ref{lemma:eluder:sum}]
     Let $\Sigma_{t} = \lambda I_d + \sum_{s = 1}^{t-1} \sum_{i \sim p_s} x_{s, i} x_{s, i}^\top$ with $\lambda = R^2 /R_w^2$. We have
     \# \label{eq:8822}
    \sum_{t = 1}^T  R \land \sum_{i \sim p_t}  | x_{t, i}^\top w^*_{t, i} | & \le \sum_{t = 1}^T R  \land \sum_{i \sim p_t}   \big( \|x_{t, i}\|_{\Sigma_{t}^{-1}} \cdot  \|w_{t, i}^*\|_{\Sigma_{t}} \big) \notag \\
    & = \sum_{t = 1}^T  R  \land \sum_{i \sim p_t} \Big(  \|x_{t, i}\|_{\Sigma_{t}^{-1}} \cdot \Big[ \lambda \|w_{t,i}^*\|_2^2  + \sum_{s =1}^{t-1} \sum_{i \sim p_s} |x_{s, i}^\top w_{t,i}^*|^2 \Big]^{1/2} \Big) ,
    \#
    where the first inequality uses the Cauchy-Schwarz inequality, and the last equality uses the definition of $\Sigma_t$. Furthermore, by the first and last constraints in \eqref{eq:8534}, we have
    \# \label{eq:8823}
    \lambda \|w_{t,i}^*\|_2^2  + \sum_{s =1}^{t-1} \sum_{i \sim p_s} |x_{s, i}^\top w_{t,i}^*|^2  \le \lambda \cdot R_w^2 + \gamma_t = R^2  + \gamma_t,
    \#
    where the last equality is obtained by $\lambda = R^2/R_w^2$. Combining \eqref{eq:8822} and \eqref{eq:8823} gives that 
    \# \label{eq:8824}
    \sum_{t = 1}^T  R \land \sum_{i = 1}^I  | x_{t, i}^\top w^*_{t, i} | & \le \sum_{t = 1}^T R \land \sum_{i \sim p_t}  \Big[(R^2  + \gamma_t) \cdot \|x_{t, i}\|_{\Sigma_t^{-1}}^{2}\Big]^{1/2} \notag \\
    & \le \sum_{t = 1}^T R \land \Big[(R^2  + \gamma_t) \cdot \sum_{i \sim p_t} \|x_{t, i}\|_{\Sigma_t^{-1}}^{2}\Big]^{1/2} \notag \\
    & \le \sum_{t = 1}^T  \Big[(R^2 + \gamma_t) \cdot \Big(1 \land \sum_{i \sim p_t}\|x_{t, i}\|_{\Sigma_t^{-1}}^{2} \Big)\Big]^{1/2} \notag\\
    & \le \Big[\sum_{t=1}^T (R^2 + \gamma_t)\Big]^{1/2} \cdot \Big[\sum_{t=1}^T  1 \land \sum_{i \sim p_t} \|x_{t, i}\|_{\Sigma_t^{-1}}^2\Big]^{1/2} ,
    \#
    where the second inequality follows from Jensen's inequality, the third inequality uses the fact that $R \le \sqrt{R^2 + \gamma_t}$ for any $t \in [T]$, and the last inequality is implied by Cauchy-Schwarz inequality.
    Similar to the proof of elliptical potential lemma \citep{dani2008stochastic,rusmevichientong2010linearly,abbasi2011improved}, we further have
    \# \label{eq:8825}
    \sum_{t=1}^T  1 \land \sum_{i \sim p_t} \|x_{t, i}\|_{\Sigma_t^{-1}}^2 & = \sum_{t = 1}^T 1 \land \sum_{i \sim {p_t}} [x_{t, i}^\top \Sigma_{t}^{-1} x_{t, i}]  \notag \\
    & =  \sum_{t = 1}^T 1 \land \tr \Big( \Sigma_{t}^{-1/2}  \big( \sum_{i \sim {p_t}} x_{t, i} x_{t, i}^\top \big) \Sigma_{t}^{-1/2} \Big) \notag \\
    & = \sum_{t = 1}^T 1 \land \tr \Big(\Sigma_t^{-1/2} (\Sigma_{t + 1} - \Sigma_{t}) \Sigma_t^{-1/2}\Big), 
    \#
    where the last equality is implied by the definition of $\Sigma_{t}$. We further have
    \# \label{eq:88251}
    \sum_{t = 1}^T 1 \land \tr \Big(\Sigma_t^{-1/2} (\Sigma_{t + 1} - \Sigma_{t}) \Sigma_t^{-1/2}\Big)
    & \le 2  \sum_{t = 1}^T \log  \Big( 1 + \tr \big( \Sigma_t^{-1/2} (\Sigma_{t + 1} - \Sigma_{t}) \Sigma_t^{-1/2} \big)\Big)  \notag\\
    & \le 2 \sum_{t = 1}^T \log \det \Big( I_d +   \Sigma_t^{-1/2} (\Sigma_{t + 1} - \Sigma_{t}) \Sigma_t^{-1/2} \Big)  ,
    \# 
    where the first inequality uses the fact that $1 \land x \le 2 \log(1 +x)$ for any $x \ge 0$, and the second inequality follows from the fact that $1 + \tr(X) \le \det(I_d + X)$ for any positive semidefinite matrix $X \in \RR^{d \times d}$. Note that 
    \$
    \Sigma_{t+1} = \Sigma_t + (\Sigma_{t+1} - \Sigma_t) = \Sigma_t^{1/2} \big(I_d +  \Sigma_t^{-1/2} (\Sigma_{t + 1} - \Sigma_{t}) \Sigma_t^{-1/2} \big) \Sigma_t^{1/2},
    \$
    which implies that 
    \$
    \log \det \Big( I_d +   \Sigma_t^{-1/2} (\Sigma_{t + 1} - \Sigma_{t}) \Sigma_t^{-1/2} \Big) = \log \det(\Sigma_{t+1}) - \log \det(\Sigma_t).
    \$
    Combining this with \eqref{eq:8825} and \eqref{eq:88251}, we obtain
    \# \label{eq:88252}
    \sum_{t=1}^T  1 \land \sum_{i \sim p_t} \|x_{t, i}\|_{\Sigma_t^{-1}}^2 \le 2 \sum_{t = 1}^T \big( \log \det(\Sigma_{t+1}) - \log \det(\Sigma_t) \big) = 2 \log \Big( \frac{\det \Sigma_{T+1} }{\det \Sigma_1 } \Big).
    \#
    Moreover, by the assumption that $\sum_{i \sim p_t} \|x_{t, i}\|_2^2 \le R_x^2$ for any $t \in [T]$, we have
    \# \label{eq:88253}
    \Sigma_{T+1} = \lambda I_d + \sum_{t = 1}^T \sum_{i \sim p_t} x_{t, i} x_{t, i}^\top \preceq (T R_x^2 + \lambda) \cdot I_d.
    \#
    Putting \eqref{eq:88252}, \eqref{eq:88253} and the fact that $\lambda = R^2/R_w^2$ together, we have
    \# \label{eq:8826}
    \sum_{t=1}^T  1 \land \sum_{i \sim p_t} \|x_{t, i}\|_{\Sigma_t^{-1}}^2 &\le 2 \log\Big( \frac{T R_x^2 + \lambda}{\lambda}\Big)^d  = 2d \log\Big(1 + \frac{T R_x^2 R_w^2 }{ R^2}\Big) .
    \#
    Plugging \eqref{eq:8826} into \eqref{eq:8824} gives 
    \$
    \sum_{t = 1}^T  R \land \sum_{i \sim p_t} | x_{t, i}^\top w^*_{t, i} | \le \Big[2d \Big( R^2 T + \sum_{t = 1}^{T} \gamma_t\Big) \cdot \log\Big(1 + \frac{T R_x^2 R_w^2 }{R^2}\Big)\Big]^{1/2},
    \$
    which concludes the proof of Lemma \ref{lemma:eluder:sum}.
\end{proof}

\section{Proofs for Fully Observable Interactive Decision Making} 

\subsection{Proof for Model-free Approach} \label{sec:model_free_mdp}
The special denominator introduced in \eqref{eqn:posterior_model_free_mdp} originates from \citet{dann2021provably} and it allows us to divide both numerator and denominator by $\exp(-\eta \sum_{s=1}^{t-1} \sum_{h=1}^H l^2_{f^s}((\cP_h f_{h+1}, f_{h+1}), \zeta^s_h))$ as it is free of $f_h$. This allows us to replace $l^2_{f^s}((f_h, f_{h+1}), \zeta_h^s)$ with 
$$
\Delta L^s_h(f_h, f_{h+1}, \zeta_h^s) = l^2_{f^s}((f_h,f_{h+1}),\zeta_h^s) - l^2_{f^s}((\cP_h f_{h+1}, f_{h+1}), \zeta^s_h)
$$
in the theoretical analysis and this is the key to canceling the sampling variance as the conditional expectation of the second term is exactly the conditional variance of $l_{f^s}((f_h,f_{h+1}),\zeta_h^s)$. Therefore, we consider the following potential function:
\# \label{eq:def:potential:free}
{\Phi}_h^t(f) = -\log p^0_h(f_{h}) + \alpha \eta \sum_{s=1}^{t-1} \Delta L^s_h(f_{h}, f_{h+1}, \zeta_h^s) + \alpha \log \EE_{g_h \sim p^0_h} \exp \Big(-\eta \sum_{s=1}^{t-1} \Delta L^s_h(g_h, f_{h+1},\zeta_h^s)\Big),
\#
where $\alpha \in (0,1]$ is a tuning parameter. Therefore, the posterior can be expressed as:
\# \label{eq:posterior:model:free}
p^t(f) \propto \exp\Big(-\sum_{h=1}^H {\Phi}_h^t(f) + \gamma \Delta V_{1,f}(x_1)\Big).
\#
where $\Delta V_{1,f} (x_1) = V_{1,f}(x_1) - V_{1}^*(x_1)$. 

\begin{proof}[Proof of Theorem~\ref{thm:1}]
Throughout this proof, we use we use $d$ and $S^t = \{(f^s, x_h^s, a_h^s, r_h^s, x_{h+1}^s)\}_{(s, h) \in [t] \times [H]}$ to denote GEC and the data collected by exploration policies  $\{\pi_{\exp}(f^s,h)\}_{ (s, h) \in [t] \times [H]}$, respectively.  Recall the definition of generalized eluder coefficient (Definition~\ref{def:gec}), for any sequence of $\{f^t\}_{t=1}^T$, we have
\# \label{eq:45552}
    \sum_{t = 1}^T V_{1, f^t}(x_1) - V_1^{\pi_{f^t}}(x_1) &\le \Big[d \sum_{h=1}^H \sum_{t=1}^T \Big( \sum_{s=1}^{t-1} \EE_{{\pi}_{\mathrm{exp}}(f^s,h)} \ell_{f^s}(f^t, \xi_h) \Big)\Big]^{1/2} + 2\min\{d, HT\} + HT \epsilon' \notag \\
    &\leq  \mu \sum_{h=1}^H \sum_{t=1}^T \Big( \sum_{s=1}^{t-1} \EE_{{\pi}_{\mathrm{exp}}(f^s,h)} \ell_{f^s}(f^t, \xi_h) \Big) + \frac{d}{4\mu} + 2\min\{d, HT\} + H T\epsilon' \notag \\
    & = \frac{0.25\alpha \eta}{\gamma} \sum_{h=1}^H \sum_{t=1}^T \Big( \sum_{s=1}^{t-1} \EE_{{\pi}_{\mathrm{exp}}(f^s,h)} \ell_{f^s}(f^t, \xi_h) \Big) + \frac{\gamma d}{\alpha \eta} + 2\min\{d, HT\} + HT \epsilon'  ,
\#
where $\ell_{f^s}(f^t, \xi_h) = (\EE_{x_{h+1} \sim \PP_h(\cdot| x_h, a_h)} l_{f^s}((f^t_{h}, f^t_{h+1}), \zeta_{h}))^2$ with $\xi_h = (x_h, a_h)$ and $\zeta_h = (x_h, a_h, r_h, x_{h+1})$, the last inequality uses AM-GM inequality, and the equality is because we choose $\mu=0.25\alpha\eta/\gamma$. Then we have
{\small 
\# \label{eq:45553}
& \gamma \sum_{t = 1}^T \EE_{S^{t-1}} \EE_{f^t \sim p^t}  [V_1^*(x_1) - V_1^{\pi_{f^t}}(x_1)] \notag \\
& =  -\gamma \sum_{t = 1}^T \EE_{S^{t-1}} \EE_{f^t \sim p^t} \Delta V_{1,f^t}(x_1) + \gamma \sum_{t = 1}^T \EE_{S^{t-1}} \EE_{f^t \sim p^t} [V_{f^t}(x_1) - V^{\pi_{f^t}}(x_1) ] \notag \\
&  \le -\gamma \sum_{t = 1}^T \EE_{S^{t-1}} \EE_{f^t \sim p^t} \Delta V_{1,f^t}(x_1) + 0.25\alpha \eta \sum_{t = 1}^T \sum_{h=1}^H \sum_{s=1}^{t-1} \EE_{S^{t-1}} \EE_{f^t \sim p^t}\EE_{\pi_{\exp(f^s, h)}} \left( \EE_{x^s_{h+1} \sim \PP_h(\cdot| x^s_h, a^s_h)} l_{f^s}((f_{h}, f_{h+1}), \zeta^s_{h}) \right)^2 \notag \\
& \qquad + \frac{\gamma^2 d}{\alpha \eta} + 2 \gamma \min\{d, HT\} + \gamma HT \epsilon' ,
\#
}
where the first inequality uses the fact that $\Delta V_{1,f}(x_1) = V_{1, f}(x_1) - V_{1}^*(x_1)$ and the second inequality follows from \eqref{eq:45552} and Assumption~\ref{assumption:l}. Then we use the following lemma to upper bound first two terms in \eqref{eq:45553}.
\begin{lemma} \label{lem:low}
Suppose that Assumptions~\ref{assu:realizability},~\ref{assumption:l} and~\ref{assu:discrepancy} hold. If we set $\eta B^2 \leq 0.3$, then for any $t \in [T]$ we have 
$$ 
\begin{aligned}
& -\gamma \EE_{S^{t-1}} \EE_{f \sim p^t}  \Delta V_{1,f}(x_1) + 0.25 \alpha \eta \sum_{h=1}^H  \sum_{s=1}^{t-1} \EE_{S^{t-1}} \EE_{f \sim {p}^t} \EE_{\pi_{\exp(f^s, h)}} \left( \EE_{x^s_{h+1} \sim \PP_h(\cdot| x^s_h, a^s_h)} l_{f^s}(f_h, f_{h+1}, \zeta^s_{h}) \right)^2\\
&\qquad \le \EE_{S^{t-1}} \EE_{f \sim p^t} \Big[\sum_{h=1}^H \Phi_h^t(f) - \gamma \Delta V_{1,f}(x_1) + \log p^t(f)\Big]  + 2H \alpha B_l \eta (t-1) \epsilon + \sum_{h=1}^H\kappa^c_h(\alpha, \epsilon),
\end{aligned}
$$
where $\kappa^c_h(\alpha, \epsilon) = (1-\alpha) \log \EE_{f_{h+1} \sim p^0_{h+1}} p^0_h(\cF_h(\epsilon, f_{h+1}))^{-\alpha/(1-\alpha)}$.
\end{lemma}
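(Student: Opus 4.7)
The plan is to express the LHS as a log-partition function of the posterior $p^t$, then control that log-partition by a standard exponential-moment argument that exploits the generalized completeness of Assumption~\ref{assu:discrepancy}. Concretely, I will first rewrite the RHS term $\EE_{f\sim p^t}[\sum_h \Phi_h^t(f)-\gamma\Delta V_{1,f}(x_1)+\log p^t(f)]$. When $\alpha=1$ the identity $\log p^t(f)=-\sum_h \Phi_h^t(f)+\gamma\Delta V_{1,f}(x_1)-\log Z_t$ holds pointwise (with $Z_t$ the normalizer of $p^t$), so this expectation is just $-\log Z_t$. For general $\alpha\in(0,1]$ I decompose $\Phi_h^t = (1-\alpha)(-\log p_h^0(f_h)) + \alpha\,\Phi_h^{t,1}(f)$, which converts the expectation into $(1-\alpha)\KL(p^t\|p^0)-\alpha\log Z_t$; the $(1-\alpha)$-part is nonnegative and is what the cover term $\kappa_h^c(\alpha,\epsilon)$ will absorb via a Hölder split.

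Next I apply the variance-cancellation induced by generalized completeness. Writing $Y=l_{f^s}((f_h,f_{h+1}),\zeta_h^s)$ and $\bar Y=\EE_{x_{h+1}\mid x_h,a_h}Y$, Assumption~\ref{assu:discrepancy} yields $l_{f^s}((\cP_h f_{h+1},f_{h+1}),\zeta_h^s)=Y-\bar Y$, hence $\EE[\,\Delta L_h^s\mid x_h,a_h\,]=\bar Y^2$ and $\Delta L_h^s$ has conditional bias exactly equal to $|\bar Y|^2$ with the noise part $(Y-\bar Y)$ having zero conditional mean. Using the elementary MGF inequality $\log\EE e^{-\eta W}\le -\eta\EE W+\eta^2\EE W^2$ valid when $|\eta W|\le O(1)$ (which is secured by $\eta B_l^2\le 0.3$), and applying it conditionally on $(x_h^s,a_h^s)$ to $W=\Delta L_h^s$, I obtain
\[
\log \EE_{x_{h+1}^s\mid x_h^s,a_h^s}\exp\!\big(-\eta\Delta L_h^s(f_h,f_{h+1},\zeta_h^s)\big)\le -\,0.25\,\eta\,\bar Y^2 .
\]
Iterating across $s=1,\dots,t-1$ via the tower property and across $h\in[H]$ via the product structure of $\Phi_h^t$, I control the exponential moment of $\alpha\eta\sum_{h,s}\Delta L_h^s$ after scaling by $\alpha$.

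The third ingredient is an $\epsilon$-covering argument that replaces exact completeness ($\cP_h f_{h+1}\in\cH_h$) with approximate completeness over the set $\cF_h(\epsilon,f_{h+1})$ from Definition~\ref{def:kappa}. Restricting the inner $p_h^0$-average inside $\Phi_h^t$ to $\cF_h(\epsilon,f_{h+1})$ inflates the log-partition by $-\log p_h^0(\cF_h(\epsilon,f_{h+1}))$, and a Hölder inequality between the $\alpha$-scaled MGF and the $(1-\alpha)$-scaled prior produces precisely the quantity $\kappa_h^c(\alpha,\epsilon)=(1-\alpha)\log\EE_{f_{h+1}\sim p_{h+1}^0}p_h^0(\cF_h(\epsilon,f_{h+1}))^{-\alpha/(1-\alpha)}$. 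On this restricted set the conditional bias $\bar Y$ satisfies $|\bar Y|\le\epsilon$, which produces the slack $2H\alpha B_l\eta(t-1)\epsilon$ after bounding the cross terms $\bar Y\cdot(Y-\bar Y)$ using $|Y|\le B_l$.

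Finally I take $\EE_{S^{t-1}}$ and apply Jensen's inequality to move the expectation inside the log-partition, yielding $\EE_{S^{t-1}}[-\log Z_t]\ge -\log\EE_{S^{t-1}}[Z_t]$; the feel-good term $\gamma\Delta V_{1,f}$ contributes the $-\gamma\EE\Delta V_{1,f}$ on the LHS after the change of measure from $p^0$ to $p^t$. Combining the MGF bound, the variance-cancellation identity, and the cover argument gives the claimed inequality. The main obstacle I anticipate is making the constants fit: getting exactly the coefficient $0.25\alpha\eta$ in front of the prediction error $(\EE_{x_{h+1}\mid x_h,a_h}l_{f^s})^2$ requires the boundedness condition $\eta B_l^2\le 0.3$ to dominate the quadratic remainder in the MGF expansion, and simultaneously the Hölder split must distribute the $\alpha$ factors between the MGF piece and the prior-covering piece so that the leftover is exactly $(1-\alpha)\kappa_h^c/[\text{stuff}]$ rather than a looser quantity. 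Tracking the $(s,h)$ filtration carefully, so that each $\zeta_h^s$ is integrated conditionally on its proper past, is the remaining technical subtlety.
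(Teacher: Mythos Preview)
Your high-level ingredients are the right ones—the conditional MGF bound on $\Delta L_h^s$ (giving the $-0.25\eta\bar Y^2$ coefficient via $\eta B_l^2\le 0.3$), the exponential-martingale device across $s$, and the restriction to $\cF_h(\epsilon,f_{h+1})$ to produce the slack $2H\alpha B_l\eta(t-1)\epsilon$ and the cover term $\kappa_h^c$. These match the paper. However, the decomposition you propose is incorrect and this is a real gap.

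First, the identity $\log p^t(f)=-\sum_h\Phi_h^t(f)+\gamma\Delta V_{1,f}(x_1)-\log Z_t$ holds for \emph{every} $\alpha\in(0,1]$, since $p^t$ is defined via $\Phi_h^t$ (which already contains $\alpha$). Consequently the RHS expectation is exactly $-\log Z_t$ for all $\alpha$, and your claimed split into ``$(1-\alpha)\KL(p^t\|p^0)-\alpha\log Z_t$'' does not arise. Working with $-\log Z_t$ directly and then trying to lower-bound it via Jensen, as you suggest, does not obviously produce the posterior expectation $\EE_{f\sim p^t}[-\gamma\Delta V_{1,f}]$ on the LHS nor isolate the covering term in the form $\kappa_h^c(\alpha,\epsilon)$.

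Second, and more substantively, the paper does \emph{not} collapse to $-\log Z_t$. It decomposes $\EE_{f\sim p^t}[\sum_h\Phi_h^t-\gamma\Delta V_{1,f}+\log p^t]$ into three pieces $A+\sum_h B_h+\sum_h C_h$ by distributing the full log-ratio $\log(p^t/p^0)$ among \emph{marginals} $p^t(f_1)$, $p^t(f_h,f_{h+1})$, $p^t(f_{h+1})$ with carefully chosen weights $(1-0.5\alpha)$, $0.5\alpha$, $(1-\alpha)$. This is essential: the piece $C_h=\EE_{f\sim p^t}\bigl[\alpha\log\EE_{\tilde f_h\sim p_h^0}\exp(-\eta\sum_s\Delta L_h^s)+(1-\alpha)\log\tfrac{p^t(f_{h+1})}{p_{h+1}^0(f_{h+1})}\bigr]$ is precisely what allows the Gibbs variational principle (not H\"older) to be applied in the variable $f_{h+1}$, yielding the log-partition $-(1-\alpha)\log\EE_{f_{h+1}\sim p_{h+1}^0}\bigl(\EE_{f_h\sim p_h^0}\exp(\cdot)\bigr)^{-\alpha/(1-\alpha)}$, which is exactly the structure of $\kappa_h^c(\alpha,\epsilon)$ after the restriction to $\cF_h(\epsilon,f_{h+1})$. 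Similarly, $B_h$ pairs $\alpha\eta\sum_s 2\Delta L_h^s$ with $0.5\alpha\log\tfrac{p^t(f_h,f_{h+1})}{p_h^0\otimes p_{h+1}^0}$ so that the Gibbs variational principle plus the martingale identity $\EE_{S^{t-1}}\exp(\sum_s\tilde\xi_h^s)=1$ give $B_h\ge 0.25\alpha\eta\sum_s\EE[(\bar Y)^2]$. Without pairing each moment-generating piece with the appropriate marginal log-ratio, neither variational step goes through. This three-way split (borrowed from Dann et al.) is the missing structural idea in your plan.
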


\begin{proof}
    See Appendix~\ref{appendix:pf:lem:low} for a detailed proof.
\end{proof}

By Lemma~\ref{lem:low}, we obtain 
{\small 
\#\label{eq:45554}
 & -\gamma \sum_{t = 1}^T \EE_{S^{t-1}} \EE_{f^t \sim p^t} \Delta V_{1,f^t}(x_1) + 0.25\alpha \eta \sum_{t = 1}^T \sum_{h=1}^H \sum_{s=1}^{t-1} \EE_{S^{t-1}} \EE_{f^t \sim p^t}\EE_{\pi_{\exp(f^s, h)}} \left( \EE_{x^s_{h+1} \sim \PP_h(\cdot| x^s_h, a^s_h)} l_{f^s}\big((f_{h}, f_{h+1}), \zeta^s_{h}\big) \right)^2 \notag \\
 & \quad \le \sum_{t= 1}^T \mathbb{E}_{S^{t-1}} \mathbb{E}_{f \sim p^t}\Big(\sum_{h=1}^{H} \Phi_h^{t}(f)-\gamma \Delta V_{1,f}\left(x_1\right)+\log p^t(f)\Big)+H \alpha B_l \eta (T-1) T \epsilon + T \sum_{h=1}^{H} \kappa_{h}^c(\alpha, \epsilon) \notag \\
 & \quad = \sum_{t = 1}^T \mathbb{E}_{S^{t-1}} \inf _{p} \mathbb{E}_{f \sim p}\Big(\sum_{h=1}^{H} \Phi_{h}^{t}(f)-\gamma \Delta V_{1,f}\left(x_1\right)+\log p(f)\Big) + H \alpha B_l \eta (T-1) T \epsilon + T \sum_{h=1}^{H} \kappa_{h}^c(\alpha, \epsilon),
\#
}
where the last equality is implied by Lemma~\ref{lem:Gibbs_var} and \eqref{eq:posterior:model:free}. Hence, it remains to establish the upper bound for the first term in \eqref{eq:45554}, which is the purpose of the following lemma.

\begin{lemma} \label{lem:upper} Suppose that Assumptions~\ref{assu:realizability},~\ref{assumption:l} and~\ref{assu:discrepancy} hold. If we set $\eta B_l^2 \le 0.3$, then for any $t \in [T]$ we have
$$ \EE_{S^{t-1}} \inf_p \EE_{f \sim p} \Big[\sum_{h=1}^H {\Phi}_h^t(f) - \gamma \Delta V_{1,f}(x_1) + \log p(f)\Big] \leq \gamma \epsilon + \alpha \eta (t-1) H \epsilon^2 + \kappa^r(\epsilon).$$
\end{lemma}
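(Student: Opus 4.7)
}

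The plan is to bound the infimum by a judicious choice of $p$ and then control each piece separately. Specifically, I would choose
\[
p(f) \;=\; \frac{p^0(f)\,\mathbf{1}\{f\in \cH(\epsilon)\}}{p^0\bigl(\cH(\epsilon)\bigr)},
\]
so that $\log p(f) = \log p^0(f) + \kappa^r(\epsilon)$ for $f\in \cH(\epsilon)$. Because $p^0(f)=\prod_{h=1}^H p^0_h(f_h)$, the term $\log p^0_h(f_h)$ cancels against the $-\log p^0_h(f_h)$ inside $\Phi_h^t(f)$ (see \eqref{eq:def:potential:free}). Moreover, on $\cH(\epsilon)$ we have $-\gamma \Delta V_{1,f}(x_1) \le \gamma\epsilon$ directly from the definition of $\cH(\epsilon)$. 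Thus it suffices to prove
\[
\EE_{S^{t-1}}\EE_{f\sim p}\!\Big[\alpha\eta\sum_{h,s}\Delta L_h^s(f_h,f_{h+1},\zeta_h^s) \;+\; \alpha\sum_{h=1}^H\log \EE_{g_h\sim p^0_h}\exp\!\Bigl(-\eta\sum_{s=1}^{t-1}\Delta L_h^s(g_h,f_{h+1},\zeta_h^s)\Bigr)\Big] \;\le\; \alpha\eta(t-1)H\epsilon^2 .
\]

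For the first (in-sample) sum, I would invoke the generalized completeness (Assumption~\ref{assu:discrepancy}) together with $a^2-b^2=(a-b)(a+b)$ to write
\[
\Delta L_h^s \;=\; 2A_s B_s + B_s^2, \qquad A_s := l_{f^s}((\cP_h f_{h+1},f_{h+1}),\zeta_h^s), \qquad B_s := \EE_{x_{h+1}^s\mid x_h^s,a_h^s}\!\bigl[l_{f^s}((f_h,f_{h+1}),\zeta_h^s)\bigr].
\]
Taking the conditional expectation over $x_{h+1}^s$ and using that $\cP_h f_{h+1}$ is self-calibrated (so $\EE[A_s\mid x_h^s,a_h^s]=0$), we get $\EE[\Delta L_h^s]=\EE[B_s^2]\le \epsilon^2$ because $f\in \cH(\epsilon)$ forces $|B_s|\le \epsilon$. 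Summing over $h\in[H]$ and $s\le t-1$ yields the claimed $\alpha\eta(t-1)H\epsilon^2$.

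For the log-partition piece, I would apply Jensen's inequality (concavity of $\log$) to swap the expectation over $S^{t-1}$ inside the logarithm, obtaining
\[
\EE_{S^{t-1}}\!\Bigl[\log \EE_{g_h\sim p^0_h}\!\exp\!\bigl(-\eta\!\sum_{s=1}^{t-1}\Delta L_h^s(g_h,f_{h+1},\zeta_h^s)\bigr)\Bigr] \;\le\; \log \EE_{g_h}\EE_{S^{t-1}}\!\Bigl[\prod_{s=1}^{t-1}\exp\!\bigl(-\eta\Delta L_h^s(g_h,f_{h+1},\zeta_h^s)\bigr)\Bigr],
\]
and then peel off one factor at a time via the filtration $\mathcal{F}_{s-1}$. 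Using the same decomposition $\Delta L_h^s=2A_s B_s+B_s^2$ (now with $g_h$ in place of $f_h$) and Hoeffding's lemma applied to the zero-mean random variable $A_s\in[-B_l,B_l]$, I obtain
\[
\EE\!\bigl[\exp(-\eta\Delta L_h^s)\mid\mathcal{F}_{s-1}\bigr] \;\le\; \exp\!\bigl(-\eta B_s^2 + 2\eta^2 B_s^2 B_l^2\bigr) \;=\; \exp\!\bigl(-\eta B_s^2(1-2\eta B_l^2)\bigr) \;\le\; 1,
\]
since $\eta B_l^2\le 0.3 < 1/2$. Iterating this bound collapses the product to $\le 1$, so the log-partition contribution is nonpositive in expectation. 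Combining all three pieces gives the desired inequality.

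The main technical obstacle will be the sub-Gaussian MGF step: verifying that the generalized completeness decomposition cleanly separates the centered noise term $A_s$ (where Hoeffding applies) from the deterministic bias term $B_s$, and that the resulting quadratic $\eta B_s^2(1-2\eta B_l^2)$ has the correct sign under the stated tuning $\eta B_l^2\le 0.3$. The other subtlety is keeping track of the role of $f^s$ inside $l_{f^s}$; since $f^s$ is measurable with respect to $\mathcal{F}_{s-1}$, it is treated as fixed when conditioning, and the argument above proceeds without issue.
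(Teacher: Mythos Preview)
Your proof is correct and shares the paper's overall strategy: restrict to the truncated prior $p(f)=p^0(f)\mathbf{1}\{f\in\cH(\epsilon)\}/p^0(\cH(\epsilon))$, cancel the $\log p^0_h(f_h)$ terms, bound $-\gamma\Delta V_{1,f}(x_1)\le\gamma\epsilon$, control the in-sample sum via $\EE_h[\Delta L_h^s]=B_s^2\le\epsilon^2$, and show the log-partition contribution is nonpositive. The paper arrives at the same two facts---$\EE_h[\Delta L_h^s]=(\EE_h l_{f^s})^2$ is exactly the first claim of its Lemma~\ref{lem:moment_control}, and the log-partition bound is its inequality~\eqref{eq:7777}.

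Where you genuinely diverge is in how you kill the log-partition term. The paper does \emph{not} use Jensen to push $\EE_{S^{t-1}}$ inside the logarithm; instead it defines $W_h^t=\EE_{S^t}\EE_{f\sim p}\log\EE_{g_h\sim p^0_h}\exp\bigl(-\eta\sum_{s\le t}\Delta L_h^s\bigr)$ and shows the telescoping increment $W_h^s-W_h^{s-1}\le 0$ by rewriting it through the conditional posterior $q_h^s(\cdot\mid f_{h+1},S^{s-1})$ and invoking the MGF bound $\log\EE_h\exp(-\eta\Delta L_h^s)\le -0.25\eta(\EE_h l_{f^s})^2$ from Lemma~\ref{lem:moment_control} (proved via the Taylor-type estimate $e^z\le 1+z+0.56z^2$). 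Your route---Jensen to swap $\EE_{S^{t-1}}$ and $\log$, then peel the product and apply Hoeffding's lemma to the bounded zero-mean noise $A_s$---is more elementary and in fact only needs $\eta B_l^2<1/2$; it works because your chosen $p$ is data-independent, which is precisely what licenses the Fubini/Jensen swap. The paper's supermartingale argument avoids that swap and would apply even to data-dependent $p$, though that extra generality is not needed for the lemma as stated.
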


\begin{proof}
    See Appendix~\ref{appendix:pf:lem:upper} for a detailed proof.
\end{proof}
Combining \eqref{eq:45553}, \eqref{eq:45554}, and Lemma~\ref{lem:upper}, we have
\# \label{eq:45555}
& \sum_{t = 1}^T \EE_{S^{t-1}} \EE_{f^t \sim p^t}  [V_1^*(x_1) - V_1^{\pi_{f^t}}(x_1) ]  \\
& \qquad \le  T \epsilon  + \frac{\alpha \eta \epsilon (\epsilon + B_l)T(T-1)H}{\gamma} + \frac{T}{\gamma}\kappa^r(\epsilon) + \frac{T}{\gamma} \sum_{h=1}^H \kappa_h^c(\alpha, \epsilon)  + \frac{\gamma d}{\alpha \eta} + 2\min\{d, HT\} + HT \epsilon' . \notag 
\#
 Since we execute exploration policies $\{\pi_{\exp}(f^t,h)\}_{ h \in [H]}$ to collect samples for any $t \in [T]$, we know the total number of interactions is $HT$ and 
 \# \label{eq:45556}
 \EE[\reg(HT)/H] = \sum_{t = 1}^T \EE_{S^{t-1}} \EE_{f^t \sim p^t}  [V_1^*(x_1) - V_1^{\pi_{f^t}}(x_1) ] .
 \#
 Then, we first let $\alpha \to 1^{-}$ and choose $\epsilon = 1/T^2, \epsilon' = 1/\sqrt{T}, \gamma=\sqrt{\frac{T\kappa(1/T^2, p^0)}{B_l^2 d}}$, and $\eta=1/(4B_l^2)$. Together with \eqref{eq:45555} and \eqref{eq:45556}, we obtain that
 $$
\EE[\reg(HT)] \leq \cO(B_l\sqrt{d H^2 T\cdot \kappa(1/T^2, p^0)}).
$$
Therefore, by rescaling $T$ to be $T/H$, we conclude the proof of Theorem~\ref{thm:1}. 
\end{proof}

\subsubsection{Proof of Lemma \ref{lem:low}} \label{appendix:pf:lem:low}
Before continuing, we first establish the following lemma about moment estimations, which will be useful later.
\begin{lemma} \label{lem:moment_control}
    Suppose that Assumptions~\ref{assumption:l} and~\ref{assu:discrepancy} hold. If we set $\eta B^2 \leq 0.3$, then we have $$\EE_{x^s_{h+1} \sim \PP_h(\cdot| x^s_h, a^s_h)} [\Delta L^s_h(f_h, f_{h+1}, \zeta^s_h) ] =\big[\EE_{x^s_{h+1} \sim \PP_h(\cdot| x^s_h, a^s_h)} l_{f^s}\big((f_h,f_{h+1}), \zeta^s_h\big) \big]^2.$$ It further holds that 
    $$
    \log \EE_{x^s_{h+1} \sim \PP_h(\cdot| x^s_h, a^s_h)} [ \exp\left(-\eta \Delta L^s_h(f_h, f_{h+1}, \zeta_h^s)\right) ] \leq -0.25 \eta \big( \EE_{x^s_{h+1} \sim \PP_h(\cdot| x^s_h, a^s_h)} l_{f^s}((f_{h}, f_{h+1}), \zeta_h^s) \big)^2.
    $$
    Similarly, it holds that 
    $$
    \log \EE_{\zeta_h^s \sim \pi_{\exp(f^s, h)}} [ \exp\left(-\eta \Delta L^s_h(f_h, f_{h+1}, \zeta_h^s)\right) ] \leq -0.25 \eta \EE_{\pi_{\exp(f^s,h)}} \big( \EE_{x^s_{h+1} \sim \PP_h(\cdot| x^s_h, a^s_h)} l_{f^s}((f_{h}, f_{h+1}), \zeta_h^s) \big)^2.
    $$
\end{lemma}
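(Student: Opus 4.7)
My plan is to isolate the randomness in $x_{h+1}$ using the generalized completeness assumption, reduce $\Delta L_h^s$ to a zero-mean bounded random variable, and then apply a standard MGF estimate for bounded variables.

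First I would introduce the shorthand $A := l_{f^s}((f_h,f_{h+1}),\zeta_h^s)$, $B := l_{f^s}((\cP_h f_{h+1},f_{h+1}),\zeta_h^s)$, and $\mu := \EE_{x^s_{h+1}\sim \PP_h(\cdot|x^s_h,a^s_h)}[A]$. Assumption~\ref{assu:discrepancy} (generalized completeness) gives $A - B = \mu$ pointwise (almost surely in $x^s_{h+1}$), so taking the conditional expectation yields $\EE[B\mid x^s_h,a^s_h] = \mu-\mu = 0$, and $B$ is bounded by $B_l$ by the boundedness part of Assumption~\ref{assu:discrepancy}. Writing $\Delta L_h^s = A^2 - B^2 = (A-B)(A+B) = \mu(\mu + 2B) = \mu^2 + 2\mu B$ and taking conditional expectation proves the first claim $\EE[\Delta L_h^s\mid x^s_h,a^s_h] = \mu^2$.

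Second, condition on $(x^s_h,a^s_h)$ so that $\mu$ is a constant and write
\[
\log \EE_{x^s_{h+1}}\!\left[\exp(-\eta\,\Delta L_h^s)\right]
= -\eta\mu^2 + \log \EE_{x^s_{h+1}}\!\left[\exp(-2\eta\mu\,B)\right].
\]
Since $\EE[B\mid x^s_h,a^s_h]=0$ and $|B|\le B_l$, Hoeffding's lemma gives $\log \EE[\exp(-2\eta\mu B)\mid x^s_h,a^s_h] \le \tfrac{1}{2}(2\eta\mu)^2 B_l^2 = 2\eta^2\mu^2 B_l^2$. Combining,
\[
\log \EE_{x^s_{h+1}}\!\left[\exp(-\eta\,\Delta L_h^s)\right] \le -\eta\mu^2(1 - 2\eta B_l^2) \le -0.25\,\eta\,\mu^2,
\]
where the last inequality uses $\eta B_l^2 \le 0.3$ (in fact any $\eta B_l^2 \le 0.375$ suffices). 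This gives the second claim.

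For the ``similarly'' inequality, I would apply the tower property in the form $\EE_{\zeta_h^s\sim\pi_{\exp}(f^s,h)} = \EE_{(x^s_h,a^s_h)}\,\EE_{x^s_{h+1}\mid x^s_h,a^s_h}$, using the conditional log-MGF bound established above pointwise in $(x^s_h,a^s_h)$ and then integrating the resulting inequality against the marginal distribution of $(x^s_h,a^s_h)$ under $\pi_{\exp}(f^s,h)$. The main routine obstacle is just keeping the numerical constants tight enough that the hypothesis $\eta B_l^2 \le 0.3$ is indeed sufficient to convert the Hoeffding factor $1-2\eta B_l^2$ into the prefactor $0.25$; once Hoeffding's lemma is invoked with the correct $\lambda = -2\eta\mu$ and $M = B_l$, the rest is bookkeeping.
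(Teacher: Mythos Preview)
Your argument for the first two claims is correct and is a clean alternative to the paper's. You exploit the identity $\Delta L_h^s = \mu^2 + 2\mu B$ (from generalized completeness) and apply Hoeffding's lemma to the zero-mean bounded $B$. The paper instead bounds the first two conditional moments of $\Delta L_h^s$ directly ($\EE_h[\Delta L]=\mu^2$ and $\EE_h[(\Delta L)^2]\le 4B_l^2\mu^2$, using $\Delta L=\mu(A+B)$ and $|A+B|\le 2B_l$), then combines $\log x\le x-1$ with the quadratic bound $e^{z}\le 1+z+0.56\,z^{2}$ valid for $z\le 0.3$.

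However, your treatment of the third (``similarly'') claim has a real gap. Integrating the pointwise conditional bound over $(x^s_h,a^s_h)$ gives either
\[
\EE_{(x_h,a_h)}\!\big[\log \EE_{x_{h+1}}[\exp(-\eta\Delta L)]\big] \le -0.25\,\eta\,\EE_{(x_h,a_h)}[\mu^2]
\quad\text{or}\quad
\log \EE_{\zeta}[\exp(-\eta\Delta L)] \le \log \EE_{(x_h,a_h)}\!\big[\exp(-0.25\eta\mu^2)\big],
\]
depending on whether you integrate before or after exponentiating, but neither delivers the target: Jensen's inequality on $\log$ points the wrong way in both cases (in the first, $\log\EE_\zeta \ge \EE_{(x_h,a_h)}\log\EE_{x_{h+1}}$; in the second, $\log\EE_{(x_h,a_h)}[\exp(-0.25\eta\mu^2)]\ge -0.25\eta\,\EE[\mu^2]$). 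A simple two-point distribution on $\mu^2$ already breaks the step.

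The paper circumvents this by never conditioning: it applies $\log x\le x-1$ once to the full expectation $\EE_\zeta[\exp(-\eta\Delta L)]$, then the quadratic bound on $e^{z}$, obtaining a linear expression in $\EE_\zeta[\Delta L]$ and $\EE_\zeta[(\Delta L)^2]$; \emph{those} moments can then be computed by the tower property from your first-claim identity. Your Hoeffding route can be rescued, but it needs an additional linearization of $\log\EE_{(x_h,a_h)}[\exp(-c\eta\mu^2)]$ (e.g.\ $\log x\le x-1$, $e^{-y}\le 1-y+y^2/2$ for $y\ge 0$, and $\mu^4\le B_l^2\mu^2$), after which the constants still close under $\eta B_l^2\le 0.3$. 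That extra step is short but essential, not just bookkeeping.
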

\begin{proof}
We denote $\EE_h = \EE_{x^s_{h+1} \sim \PP_h(\cdot| x^s_h, a^s_h)}$ to simplify the notation. We first note that by the generalized completeness, we have
$$
\EE_h l^2_{f^s}\big((\cP_h f_{h+1}, f_{h+1}), \zeta^s_h\big)^2 = \EE_h \Big\{l_{f^s}\big((f_{h}, f_{h+1}), \zeta^s_h\big) - \EE_h \big[l_{f^s}\big((f_{h}, f_{h+1}), \zeta^s_h\big)\big]\Big\}^2,
$$
where the right-hand side is the conditional variance from state transition. Therefore, by the definition of variance, it holds that 
\begin{equation*}
\begin{aligned} \label{eq:77770}
\EE_{h} [\Delta L^s_h(f_h, f_{h+1}, \zeta^s_h) ] &= \EE_h \Big[l^2_{f^s}\big((f_h,f_{h+1}),\zeta_h^s\big) - l^2_{f^s}\big((\cP_h f_{h+1}, f_{h+1}), \zeta^s_h\big)\Big]\\
&=\Big[\EE_h l_{f^s}\big((f_h,f_{h+1}), \zeta^s_h\big) \Big]^2.
\end{aligned}
\end{equation*}
We proceed to estimate the second-order moment: 
$$
\begin{aligned}
&\EE_h [\Delta L^s_h( f_h, f_{h+1}, \zeta_h^s)^2]\\
=& \EE_h \Big\{\Big[l_{f^s}\big((f_{h}, f_{h+1}), \zeta^s_h\big) - l_{f^s}\big((\cP_h f_{h+1},f_{h+1}), \zeta^s_h\big) \Big]^2\Big[l_{f^s}\big((f_h,f_{h+1}), \zeta^s_h\big) + l_{f^s}\big((\cP_h f_{h+1}, f_{h+1}), \zeta^s_h\big)\Big]^2 \Big\}\\
\leq& 4B_l^2\Big[ \EE_h l_{f^s}\big((f_h,f_{h+1}), \zeta_h^s\big) \Big]^2,
\end{aligned}
$$
where we use generalized completeness in the equality and the boundedness of the discrepancy function in the inequality. Note that $\psi(z) = (e^z - 1 - z) / z^2$ is increasing in $z$. Therefore, when $\eta$ is sufficiently small such that $\eta B_l^2 \leq 0.3$, we can control the $\exp(-\eta \Delta L^s_h(f_h, f_{h+1}, \zeta_h^s))$ as follows:
\begin{equation} \label{eqn:dann_online_lemma}
\begin{aligned}
\log \EE_h [ \exp\left(-\eta \Delta L^s_h(f_h, f_{h+1}, \zeta_h^s)\right) ] &\leq \EE_h \left[\exp\left(-\eta \Delta L^s_h(f_h, f_{h+1}, \zeta_h^s\right) - 1\right]\\
&\leq \EE_h \left[- \eta \Delta L^s_h(f_h, f_{h+1}, \zeta_h^s) + 0.56 \eta^2 \Delta L^s_h(f_h,f_{h+1}, \zeta_h^s)^2 \right] \\
&\leq -0.25 \eta \big( \EE_h l_{f^s}((f_{h}, f_{h+1}), \zeta_h) \big)^2,
\end{aligned}
\end{equation}
where the first inequality uses $\log z \leq z - 1$; the second inequality uses $e^z \leq 1 + z + \psi(0.3)z^2$ and $\psi(0.3) \leq 0.56$; the last inequality uses $\eta B_l^2 \leq 0.3$ so that $(4 \times 0.56 \times \eta B_l^2) \leq 0.75$. By replacing $\EE_h$ with $\EE_{\zeta_h^s \sim \pi_{\exp(f^s,h)}}$, the second result follows from the same proof.
\end{proof}

\begin{proof}[Proof of Lemma \ref{lem:low}]
We start with the following decomposition, whose proof can be found in \citet{dann2021provably} or \citet{xiong22b}. 
\begin{equation} \label{eqn:lower_bound_decom}
\begin{aligned}
&\mathbb{E}_{S^{t-1}} \mathbb{E}_{f \sim p^{t}}\Big(\sum_{h=1}^{H} \Phi_{h}^{t}(f)-\gamma \Delta V_{1,f}\left(x_{1}\right)+\log {p}^{t}(f)\Big) \\
&\qquad\geq \underbrace{\mathbb{E}_{S^{t-1}} \mathbb{E}_{f \sim p^t}\Big[-\gamma \Delta V_{1,f}\left(x_{1}\right)+(1-0.5 \alpha) \log \frac{{p}^{t}\left(f_{1}\right)}{p^{0}_{1}\left(f_{1}\right)}\Big]}_{A} \\
&\qquad\qquad+\sum_{h=1}^{H} \underbrace{0.5 \alpha \mathbb{E}_{S^{t-1}} \mathbb{E}_{f \sim p^{t}}\Big[\eta \sum_{s=1}^{t-1} 2 \Delta L^s_{h}\left(f_{h}, f_{h+1}, \zeta_h^s\right)+\log \frac{p^{t}\left(f_{h}, f_{h+1}\right)}{p^{0}_{h}\left(f_{h}\right) p^{0}_{h+1}\left(f_{h+1}\right)}\Big]}_{B_{h}} \\
&\qquad\qquad+\sum_{h=1}^{H} \underbrace{\mathbb{E}_{S^{t-1}} \mathbb{E}_{f \sim p^t}\Big[\alpha \log \mathbb{E}_{\tilde{f}_{h} \sim p^0_h} \exp \Big(-\eta \sum_{s=1}^{t-1} \Delta L^s_h(\tilde{f}_{h}, f_{h+1}, \zeta_h^s)\Big)+(1-\alpha) \log \frac{{p}^{t}\left(f_{h+1}\right)}{p^0_{h+1}\left(f_{h+1}\right)}\Big]}_{C_{h}} .
\end{aligned}
\end{equation}
Then, we bound the three parts separately. As the KL-divergence is non-negative and $\alpha \in (0, 1]$, we know that we can bound term $A$ as follows: 
\begin{equation} \label{eqn:dann_A}
A \geq -\gamma \EE_{S^{t-1}} \EE_{f \sim p^t} \Delta V_{1,f}(x_1).    
\end{equation}
\textbf{Bounding term $B_h$.} We first define 
\# \label{eq:def:tilde:xi}
\tilde{\xi}_{h}^{s}\left(f_{h}, f_{h+1}, \zeta^{s}_h\right)=&-2 \eta \Delta L^s_h\left(f_{h}, f_{h+1}, \zeta_h^s\right) -\log \mathbb{E}_{\zeta_h^s \sim \pi_{\exp(f^s,h)}} \exp \left(-2 \eta \Delta L^s_h\left(f_{h}, f_{h+1}, \zeta_h^s\right)\right).
\#
Then we have the following lemma.

\begin{lemma} \label{lemma:zhang:free}
    For any fixed  $(f, h) \in \cH \times [H]$, it holds for any $t \in [T]$ that 
    $$\EE_{S^{t-1}} \exp\Big(\sum_{s=1}^{t-1} \tilde{\xi}_h^s(f_h, f_{h+1}, \zeta_h^s)\Big) = 1.$$
\end{lemma}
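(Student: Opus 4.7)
The plan is to recognize that $\tilde{\xi}_h^s$ is, by construction, a ``log-normalized'' log-likelihood increment: the subtracted term $\log \mathbb{E}_{\zeta_h^s \sim \pi_{\exp(f^s,h)}} \exp(-2\eta \Delta L_h^s)$ is exactly the log-moment generating function of $-2\eta \Delta L_h^s$ under the data-collection distribution for episode $s$. This normalization is precisely what makes $\exp(\tilde{\xi}_h^s)$ a conditional mean-one random variable, and the result follows by a standard tower-property (martingale product) argument.

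Concretely, I would first define the filtration $\cG^{s} = \sigma(S^s, f^1, \ldots, f^s)$, i.e., the $\sigma$-algebra generated by all data collected through episode $s$ together with all sampled hypotheses through $f^s$. The key structural observation is that once we condition on $\cG^{s-1}$ and on $f^s$, the exploration policy $\pi_{\exp}(f^s, h)$ is deterministic, and the subtracted $\log$-expectation term in \eqref{eq:def:tilde:xi} is measurable with respect to $\sigma(\cG^{s-1}, f^s)$. Consequently,
\begin{align*}
\EE\!\left[\exp\!\big(\tilde{\xi}_h^s(f_h, f_{h+1}, \zeta_h^s)\big) \,\Big|\, \cG^{s-1}, f^s\right]
= \frac{\EE_{\zeta_h^s \sim \pi_{\exp}(f^s,h)} \exp(-2\eta \Delta L_h^s)}{\EE_{\zeta_h^s \sim \pi_{\exp}(f^s,h)} \exp(-2\eta \Delta L_h^s)} = 1,
\end{align*}
because $\zeta_h^s$ is drawn from exactly $\pi_{\exp}(f^s, h)$ in Algorithm~\ref{alg:mdp_complete}. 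Averaging over $f^s \sim p^s$ (which itself is $\cG^{s-1}$-measurable), we obtain $\EE[\exp(\tilde{\xi}_h^s) \mid \cG^{s-1}] = 1$.

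Finally, applying the tower property iteratively,
\begin{align*}
\EE_{S^{t-1}} \exp\!\Big(\sum_{s=1}^{t-1} \tilde{\xi}_h^s\Big)
&= \EE\bigg[\exp\!\Big(\sum_{s=1}^{t-2} \tilde{\xi}_h^s\Big) \cdot \EE\big[\exp(\tilde{\xi}_h^{t-1}) \,\big|\, \cG^{t-2}\big]\bigg]
= \EE_{S^{t-2}} \exp\!\Big(\sum_{s=1}^{t-2} \tilde{\xi}_h^s\Big),
\end{align*}
and repeating this reduction $t-1$ times yields the claim. The proof is essentially mechanical once the filtration is set up correctly; the only mild subtlety (and the step most worth writing carefully) is making sure the conditioning handles both the randomness of $f^s \sim p^s$ and the randomness of $\zeta_h^s$ given $f^s$, so that the normalizing log-expectation inside $\tilde{\xi}_h^s$ really cancels the numerator in conditional expectation. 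No concentration or covering arguments are required.
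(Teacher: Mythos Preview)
Your proposal is correct and takes essentially the same approach as the paper: both exploit that the subtracted log-expectation in \eqref{eq:def:tilde:xi} is exactly the normalizing constant that makes $\exp(\tilde{\xi}_h^s)$ conditionally mean one given the past, and then peel off the product term by term. The paper phrases this as an induction on $t$ (showing $\EE_{S^t}\exp(\sum_{s=1}^t \tilde{\xi}_h^s)=\EE_{S^{t-1}}\exp(\sum_{s=1}^{t-1}\tilde{\xi}_h^s)$), while you phrase it via an explicit filtration and the tower property; these are the same argument, and your version is arguably clearer about where the randomness of $f^s$ enters.
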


\begin{proof}
    The proof follows \citet{zhang2006,zhang2022mathematical} and we present the detailed proof for completeness.  We prove this lemma by induction. Suppose 
    $$\EE_{S^{t-1}} \exp\Big(\sum_{s=1}^{t-1} \tilde{\xi}_h^s(f_h, f_{h+1}, \zeta_h^s)\Big) = 1$$
    holds. Then we have
    \$
    \EE_{S^{t}} \exp\Big(\sum_{s=1}^{t} \tilde{\xi}_h^s(f_h, f_{h+1}, \zeta_h^s)\Big) &= \EE_{S^{t-1}} \exp\Big(\sum_{s=1}^{t-1} \tilde{\xi}_h^s(f_h, f_{h+1}, \zeta_h^s)\Big) \cdot \EE_{f^t \sim p^t} \EE_{\zeta_h^t \sim \pi_{\exp(f^t,h)}} \exp\big( \tilde{\xi}_h^t(f_h^t, f^t_{h+1}, \zeta_h^t) \big) \\
    & = \EE_{S^{t-1}} \exp\Big(\sum_{s=1}^{t-1} \tilde{\xi}_h^s(f_h, f_{h+1}, \zeta_h^s)\Big) = 1,
    \$
    where the second equality follows from the definition of $\tilde{\xi}_h^t$ in \eqref{eq:def:tilde:xi}. Therefore, we conclude the proof of Lemma~\ref{lemma:zhang:free}.
\end{proof}

By Lemma~\ref{lem:Gibbs_var}, we have
$$
\begin{aligned}
& \mathbb{E}_{f \sim p^t}\Big[\sum_{s=1}^{t-1}-\tilde{\xi}_{h}^{s}\left(f_{h}, f_{h+1}, \zeta_h^s\right)+\log \frac{p^t\left(f_{h}, f_{h+1}\right)}{p^0_h\left(f_{h}\right) p^0_{h+1}\left(f_{h+1}\right)}\Big] \\
&\qquad \geq \inf_{p} \mathbb{E}_{f \sim p}\Big[\sum_{s=1}^{t-1}-\tilde{\xi}_{h}^{s}\left(f_{h}, f_{h+1}, \zeta_h^s\right)+\log \frac{p\left(f_{h}, f_{h+1}\right)}{p^0_h\left(f_{h}\right) p^0_{h+1}\left(f_{h+1}\right)}\Big] \\
&\qquad=-\log \mathbb{E}_{f_{h} \sim p^0_h} \mathbb{E}_{f_{h+1} \sim p^0_{h+1}} \exp \Big(\sum_{s=1}^{t-1} \tilde{\xi}_{h}^{s}\left(f_{h}, f_{h+1}, \zeta_h^s\right)\Big),
\end{aligned}
$$
where we use the fact that Lemma~\ref{lem:Gibbs_var} implies that the $\inf$ is achieved at 
$$
p\left(f_{h}, f_{h+1}\right) \propto p^0_h\left(f_{h}\right) p^0_{h+1}\left(f_{h+1}\right) \exp \Big(\sum_{s=1}^{t-1} \tilde{\xi}^{s}_{h}\left(f_{h}, f_{h+1}, \zeta_h^s\right)\Big).
$$
It then follows that 
\begin{equation} \label{eqn:bh_lemma}
\begin{aligned}
& \mathbb{E}_{S^{t-1}} \mathbb{E}_{f \sim p^t}\Big[\sum_{s=1}^{t-1}-\tilde{\xi}_{h}^{s}\left(f_{h}, f_{h+1}, \zeta_h^s\right)+\log \frac{p^t\left(f_{h}, f_{h+1}\right)}{p^0_h\left(f_{h}\right) p^0_{h+1}\left(f_{h+1}\right)}\Big] \\
&\qquad\geq -\mathbb{E}_{S^{t-1}} \log \mathbb{E}_{f_{h} \sim p^0_h} \mathbb{E}_{f_{h+1} \sim p^0_{h+1}} \exp \Big(\sum_{s=1}^{t-1} \tilde{\xi}_{h}^{s}\left(f_{h}, f_{h+1}, \zeta_h^s\right)\Big) \\
&\qquad\geq -\log \mathbb{E}_{f_{h} \sim p^0_h} \mathbb{E}_{f_{h+1} \sim p^0_{h+1}} \mathbb{E}_{S^{t-1}} \exp \Big(\sum_{s=1}^{t-1} \tilde{\xi}_{h}^{s}\left(f_{h}, f_{h+1}, \zeta_h^s\right)\Big)=0,
\end{aligned}
\end{equation}
where the first inequality uses the above result, the second inequality is due to the convexity of $-\log(\cdot)$ and Jensen's inequality, and the last equality follows from Lemma~\ref{lemma:zhang:free}. Then, we can control $B_h$ as follows:
\begin{equation}\label{lem:dann_B}
\begin{aligned}
    B_h &= 0.5 \alpha \mathbb{E}_{S^{t-1}} \mathbb{E}_{f \sim p^t}\Big[\eta \sum_{s=1}^{t-1} 2 \Delta L^s_h\left(f_{h}, f_{h+1}, \zeta_h^s\right)+\log \frac{p^t\left(f_{h}, f_{h+1}\right)}{p^0_h\left(f_{h}\right) p^0_{h+1}\left(f_{h+1}\right)}\Big]\\
    &\geq 0.5\alpha \EE_{S^{t-1}} \EE_{f \sim p^t} \sum_{s=1}^{t-1}-\log \EE_{\pi_{\exp(f^s,h)}} \exp \big(-2 \eta \Delta L^s_h\left(f_{h}, f_{h+1}, \zeta_h^s\right)\big)\\
    &\geq 0.25 \alpha \eta \sum_{s=1}^{t-1} \EE_{S^{t-1}} \EE_{f \sim p^t} \EE_{\pi_{\exp(f^s,h)}} \left( \EE_{x^s_{h+1} \sim \PP_h(\cdot| x^s_h, a^s_h)} l_{f^s}\big((f_{h}, f_{h+1}), \zeta^s_{h}\big) \right)^2,
\end{aligned}
\end{equation}
where the first inequality uses \eqref{eqn:bh_lemma} and the definition of $\tilde{\xi}^s_h(\cdot)$, and the second inequality uses Lemma~\ref{lem:moment_control}.

\vspace{4pt}

\noindent\textbf{Bounding term $C_h$.} By Lemma~\ref{lem:Gibbs_var}, we know that 
\# \label{eqn:c_h}
& \mathbb{E}_{f \sim p^t}\Big[\alpha \log \mathbb{E}_{\tilde{f}_h \sim p^0_h} \exp \Big(-\eta \sum_{s=1}^{t-1} \Delta L^s_h(\tilde{f}_h, f_{h+1}, \zeta_h^s)\Big)+(1-\alpha) \log \frac{p^t\left(f_{h+1}\right)}{p^0_{h+1}\left(f_{h+1}\right)}\Big] \notag \\
&\qquad \geq (1-\alpha) \inf _{p_h} \mathbb{E}_{f \sim p_h}\Big[\frac{\alpha}{1-\alpha} \log \mathbb{E}_{\tilde{f}_h \sim p^0_h} \exp \Big(-\eta \sum_{s=1}^{t-1} \Delta L^s_h(\tilde{f}_h, f_{h+1}, \zeta_h^s)\Big)+\log \frac{p_h\left(f_{h+1}\right)}{p^0_{h+1}\left(f_{h+1}\right)}\Big] \notag \\
&\qquad=-(1-\alpha) \log \mathbb{E}_{f_{h+1} \sim p^0_{h+1}}\Big(\mathbb{E}_{f_{h} \sim p^0_h} \exp \Big(-\eta \sum_{s=1}^{t-1} \Delta L^s_h(f_{h}, f_{h+1}, \zeta_h^s)\Big)\Big)^{-\alpha /(1-\alpha)},
\#
where we use the fact that the $\inf$ is achieved at 
$$
p_h\left(f_{h+1}\right) \propto p^0_{h+1}\left(f_{h+1}\right) \Big(\mathbb{E}_{f_{h} \sim p^0_h} \exp \Big(-\eta \sum_{s=1}^{t-1} \Delta L^s_h\left(f_{h}, f_{h+1}, \zeta_h^s\right)\Big)\Big)^{-\alpha /(1-\alpha)}.
$$
We now consider a fixed $f_h \in \cF_h(\epsilon, f_{h+1})$, where $ \cF_h(\epsilon, f_{h+1})$ is defined in Assumption~\ref{assu:discrepancy} and takes form $$ \cF_h(\epsilon, f_{h+1}) := \{f_h \in \cF_h: \sup_{x_h, a_h,f^s} |\EE_{x_{h+1}|x_h,a_h} l_{f^s}((f_{h}, f_{h+1}), \zeta_h)| \leq \epsilon\}.$$ 
Note that by the generalized completeness condition (Assumption~\ref{assu:discrepancy}), we know this set is non-empty. We have the following estimation:
    \# \label{eqn:good_fun}
&\Delta L^s_h(f_{h},f_{h+1},\zeta_h^s) \notag \\
&= \Big[l_{f^s}((f_h, f_{h+1}), \zeta_h^s) - l_{f^s}((\cP_h f_{h+1}, f_{h+1}), \zeta_h^s)\Big]\Big[ l_{f^s}((f_h, f_{h+1}), \zeta_h^s) + l_{f^s}((\cP_h f_{h+1}, f_{h+1}), \zeta_h^s)\Big] \notag \\
&= \EE_{h} \Big[l_{f^s}((f_h, f_{h+1}), \zeta_h^s)\Big]\Big[ l_{f^s}((f_h, f_{h+1}), \zeta_h^s) + l_{f^s}((\cP_h f_{h+1}, f_{h+1}), \zeta_h^s)\Big] \notag \\
&\leq 2B_l\epsilon.
\#
Combining \eqref{eqn:c_h}, \eqref{eqn:good_fun}, and the definition of Term $C_h$, we have 
\begin{equation}
    \label{eqn:dann_C}
\begin{aligned}
C_h &= \mathbb{E}_{f \sim p^t}\Big[\alpha \log \mathbb{E}_{\tilde{f}_h \sim p^0_h} \exp \Big(-\eta \sum_{s=1}^{t-1} \Delta L^s_h(\tilde{f}_h, f_{h+1}, \zeta_h^s)\Big)+(1-\alpha) \log \frac{p^t\left(f_{h+1}\right)}{p^0_{h+1}\left(f_{h+1}\right)}\Big]\\
&\geq -(1-\alpha) \log \EE_{f_{h+1} \sim p^0_{h+1}} \Big(\EE_{f_{h} \sim p^0_h} \exp \Big(-\eta \sum_{s=1}^{t-1} \Delta L^s_h(f_{h}, f_{h+1}, \zeta_h^s)\Big)\Big)^{-\alpha/(1-\alpha)}\\
&\geq -(1-\alpha) \log \EE_{f_{h+1} \sim p^0_{h+1}} \Big(p_h^0(\cF_h(\epsilon,f_{h+1})) \exp(-2\eta (t-1)B_l\epsilon)\Big)^{-\alpha/(1-\alpha)}, 
\end{aligned}
\end{equation}
Moreover, by Jensen's inequality, \eqref{eqn:dann_C} implies that 
\begin{equation}
    \begin{aligned} \label{eq:77771}
C_h &\ge \alpha \log \exp(-2\eta (t-1)B_l \epsilon)  -(1-\alpha)\log \EE_{f_{h+1} \sim p^0_{h+1}} p^0_h(\cF_h(\epsilon, f_{h+1}))^{-\alpha/(1-\alpha)}\\
&= -2 \alpha B_l \eta (t-1) \epsilon - \kappa^c_h(\alpha, \epsilon). 
    \end{aligned}
\end{equation}
where the last equality follows from the definition of $\kappa^c_h(\alpha, \epsilon)$:
$$\kappa^c_h(\alpha, \epsilon) = (1-\alpha) \log \EE_{f_{h+1} \sim p^0_{h+1}} p^0_h(\cF_h(\epsilon, f_{h+1}))^{-\alpha/(1-\alpha)}.$$ Combining the decomposition in \eqref{eqn:lower_bound_decom} with \eqref{eqn:dann_A}, ~\eqref{lem:dann_B} and~\eqref{eq:77771}, we conclude the proof of Lemma~\ref{lem:low}.
\end{proof}

\subsubsection{Proof of Lemma \ref{lem:upper}} \label{appendix:pf:lem:upper}

\begin{proof}[Proof of Lemma \ref{lem:upper}]
We also denote $\EE_h = \EE_{x^s_{h+1} \sim \PP_h(\cdot| x^s_h, a^s_h)}$ to simplify the notation. Consider any fixed $f \in \cH$ and for any $\tilde{f}_h \in \cF_h$ that depends on $S^{s-1}$, by \eqref{eqn:dann_online_lemma}, we have 
$$
\EE_{\zeta^s_h} \exp\left(-\eta \Delta L^s_h(\tilde{f}_h, f_{h+1}, \zeta^s_h)\right) - 1 \leq - 0.25 \eta \EE_{\zeta^s_h}\left( \EE_h l_{f^s}((f_h,f_{h+1}), \zeta^s_h) \right)^2 \leq 0,
$$
which has been established when we independently prove Lemma~\ref{lem:moment_control}. Now we define 
$$
W_h^t = \EE_{S^t} \EE_{f \sim p} \log \EE_{\tilde{f}_h\sim p^0_h} \exp \Big(-\eta \sum_{s=1}^t \Delta L^s_h(\tilde{f}_h, f_{h+1}, \zeta^s_h)\Big),
$$
and we recall that the conditional distribution is defined to be 
$$
{q}_h^t (\tilde{f}_h | f_{h+1}, S^{t-1}) := \frac{p^0_h(\tilde{f}_h) \exp \big(-\eta \sum_{s=1}^{t-1} \Delta L^s_h(\tilde{f}_h, f_{h+1}, \zeta_h^s)\big)}{ \EE_{\tilde{f}' \sim p^0_h} \exp \big(-\eta \sum_{s=1}^{t-1} \Delta L^s_h(\tilde{f}’, f_{h+1}, \zeta_h^s)\big)}.
$$
It follows that 
\begin{align*}
    W^{s}_{h}-W^{s-1}_{h} &= \mathbb{E}_{S^{s}} \mathbb{E}_{f \sim p} \log \EE_{\tilde{f}_h \sim p^0_h} \frac{\exp \big(-\eta \sum_{t=1}^{s-1} \Delta L^s_h(\tilde{f}_{h}, f_{h+1}, \zeta^{t})\big)}{\mathbb{E}_{\tilde{f}' \sim p^0_h} \exp \big(-\eta \sum_{t=1}^{s-1} \Delta L^s_h(\tilde{f}^{\prime}, f_{h+1}, \zeta^{t})\big)} \exp \big(-\eta \Delta L^s_h(\tilde{f}_{h}, f_{h+1}, \zeta_h^s)\big)\\
&=\mathbb{E}_{S^{s}} \mathbb{E}_{f \sim p} \log \mathbb{E}_{\tilde{f}_{h} \sim {q}^{s}_{h}\left(\cdot \mid f_{h+1}, S^{s-1}\right)} \exp \big(-\eta \Delta L^s_h(\tilde{f}_{h}, f_{h+1}, \zeta_h^s)\big) \\
&\leq \mathbb{E}_{S^{s}} \mathbb{E}_{f \sim p}\left(\mathbb{E}_{\tilde{f}_{h} \sim {q}^{s}_{h}\left(\cdot \mid f_{h+1}, S^{s-1}\right)} \exp \big(-\eta \Delta L^s_h(\tilde{f}_{h}, f_{h+1}, \zeta_h^s)\big)-1\right) \leq 0.
\end{align*}
Thus, we obtain that 
$$W_h^t = W_h^0 + \sum_{s=1}^t [W_h^s - W^{s-1}_h] \leq 0,$$
and equivalently, we have
\# \label{eq:7777}
\EE_{S^t} \EE_{f \sim p} \log \EE_{\tilde{f}_h \sim p^0_h} \exp\Big(-\eta \sum_{s=1}^t \Delta L^s_h(\tilde{f}_h, f_{h+1}, \zeta_h^s)\Big) \leq 0,
\#
where $p$ is arbitrary. We proceed to show that 
$$
\begin{aligned}
& \mathbb{E}_{S^{t-1}} \mathbb{E}_{f \sim p}\Big[\sum_{h=1}^{H} \Phi_{h}^{t}(f)-\gamma \Delta V_{1,f}\left(x_{1}\right)+\log p(f)\Big] \\
&\qquad= \mathbb{E}_{S^{t-1}} \mathbb{E}_{f \sim p}\Big[-\gamma \Delta V_{1,f}\left(x_1\right)+\alpha \eta \sum_{h=1}^{H} \sum_{s=1}^{t-1} \Delta L^s_h\left(f_{h}, f_{h+1}, \zeta_h^s\right)\\
&\qquad\qquad+\alpha \sum_{h=1}^{H} \log \mathbb{E}_{\tilde{f}_{h} \sim p^0_h} \exp \Big(-\eta \sum_{s=1}^{t-1} \Delta L^s_h(\tilde{f}_{h}, f_{h+1}, \zeta_h^s)\Big)+\log \frac{p(f)}{p_{0}(f)}\Big] \\
&\qquad \leq \mathbb{E}_{S^{t-1}} \mathbb{E}_{f \sim p}\Big[-\gamma \Delta V_{1,f}\left(x_1\right)+\sum_{h=1}^{H} \alpha \eta \sum_{s=1}^{t-1} \big[ \EE_h l_{f^s}\big((f_h, f_{h+1}), \zeta_h^s\big) \big]^2 + \log \frac{p(f)}{p_{0}(f)}\Big] ,
\end{aligned}
$$
where the first equality follows from the definition of $\Phi_h^t$ in \eqref{eq:def:potential:free}, and the inequality uses Lemma~\ref{lem:moment_control} and \eqref{eq:7777}.
Since $p$ is arbitrary, we consider the following hypothesis class $\cH(\epsilon)$ in Definition~\ref{def:kappa}: $$\cH(\epsilon) = \{f \in \cH: V_{1,f^*}(x_1)-V_{1,f}(x_1) < \epsilon; \sup_{x_h, a_h, h,f'} |\EE_{x_{h+1} \sim \PP_h(\cdot|x_h,a_h)} l_{f'}((f_h, f_{h+1}), \zeta_h)| \leq \epsilon\},$$  and take $p(f) = p^0(f) \cdot \mathbf{1}(f \in \cH(\epsilon))/p^0(\cH(\epsilon))$. It follows that 
$$
\begin{aligned}
& \EE_{S^{t-1}} \inf_{p} \EE_{f \sim {p}} \Big[\sum_{h=1}^H {\Phi}_h^t(f) - \gamma \Delta V_{1,f} (x_1) + \log p(f)\Big] \leq \gamma \epsilon + \alpha \eta (t-1) H \epsilon^2 + \kappa^r(\epsilon),
\end{aligned}
$$
which concludes the proof of Lemma~\ref{lem:upper}.
\end{proof}

\subsection{Proof for Model-based Approach} \label{appendix:pf:model:mdp}
To facilitate our analysis, we define 
    \# \label{eq:def:delta:l}
    \Delta L_h^t(f) &= L_h^t(f) - L_h^t(f^*), \quad 
    \Delta V_f = V_f - V^*,
    \#
    where $V_f$ is the optimal value in the model $f$ and $V^*$ is the optimal value in the real model $f^*$. Under these notations, we have 
    \# \label{eq:4800}
    p^t(f) \propto p^0(f) 
    \exp\Big(\gamma \Delta V_f + \sum_{s=1}^{t-1} \sum_{h = 1}^H \Delta L_h^s(f)\Big) .
    \#

\begin{proof}[Proof of Theorem \ref{thm:mdp}]
   Throughout this proof, we use $d$ and $S^t = \{(f^s, x_h^s, a_h^s, r_h^s, x_{h+1}^s)\}_{(s, h) \in [t] \times [H]}$ to denote GEC and the data collected by exploration policies  $\{\pi_{\exp}(f^s,h)\}_{ (s, h) \in [t] \times [H]}$, respectively.  Since we execute exploration policies $\{\pi_{\exp}(f^t,h)\}_{ h \in [H]}$ to collect samples for any $t \in [T]$, we know the total number of interactions is $HT$. We decompose the regret as
\# \label{eq:481}
&\frac{\gamma}{H} \cdot \EE[\reg(HT)] \notag\\
&  = \gamma \sum_{t = 1}^T \EE_{S^{t - 1}} \EE_{f \sim p^t} [V^* - V_{f^*}^{\pi_f}] \notag\\
&  = \sum_{t = 1}^T \EE_{S^{t - 1}} \EE_{f \sim p^t} [- \gamma \Delta V_f] + \gamma \sum_{t = 1}^T \EE_{S^{t - 1}} \EE_{f \sim p^t} [V_f - V_{f^*}^{\pi_f} ] \notag\\
&  \le \sum_{t = 1}^T  \EE_{S^{t - 1}} \EE_{f \sim p^t} [- \gamma \Delta V_f] + \sum_{t = 1}^T  \sum_{h = 1}^H \sum_{s = 1}^{t - 1} \EE_{S^{t - 1}} \EE_{f \sim p^t} \EE_{\pi_{\exp(f^s, h)}} \big[ D_H^2 \big( \PP_{h, f}(x_{h+1}^s \mid x_h^s, a_h^s), \PP_{h, f^*}(x_{h+1}^s \mid x_h^s, a_h^s)\big) \big] \notag \\
& \qquad + \frac{\gamma^2 d}{4} + 2\min\{d, HT\} + \epsilon H T,
\#
where the first equality follows from the definition of $\Delta V_f$ in \eqref{eq:def:delta:l} and the last inequality uses Definition~\ref{def:gec}. Then we use the following lemma to bound the first two terms in \eqref{eq:481}.
\begin{lemma} \label{lemma:mdp:key}
    For any $t \in [T]$, it holds that 
    \$
    &\EE_{S^{t - 1}}\EE_{f \sim p^t} \Big[ - \sum_{h = 1}^H \sum_{s = 1}^{t - 1} \Delta L_h^s(f)  - \gamma \Delta V_f + \log \frac{p^t(f) }{p^0(f) } \Big] \\
    & \qquad \ge \EE_{S^{t - 1}} \EE_{f \sim p^t} [ - \gamma \Delta V_f ] 
    +  \sum_{h = 1}^H \sum_{s = 1}^{t - 1} \EE_{S^{t - 1}} \EE_{f \sim p^t} \EE_{\pi_{\exp(f^s, h)}} \big[ D_H^2 \big( \PP_{h, f}(x_{h+1}^s \mid x_h^s, a_h^s), \PP_{h, f^*}(x_{h+1}^s \mid x_h^s, a_h^s)\big) \big].
    \$
\end{lemma}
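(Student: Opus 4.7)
The strategy is to reduce the claim to a moment-generating-function (MGF) bound for each fixed hypothesis $f$, and to verify that bound via a tower-of-expectations argument that crucially exploits the choice $\eta = 1/2$.

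First, using the explicit posterior in \eqref{eq:4800}, the three-term expression inside the LHS, $-\sum_{s,h}\Delta L_h^s(f) - \gamma \Delta V_f + \log(p^t(f)/p^0(f))$, telescopes to $-\log Z^t$, where $Z^t(S^{t-1}) = \EE_{f \sim p^0}[\exp(\gamma \Delta V_f + \sum_{s<t, h}\Delta L_h^s(f))]$ is the normalizer of $p^t$. By the Gibbs identity $-\log Z^t = \KL(p^t \,\|\, p^0) - \gamma \EE_{p^t}[\Delta V_f] - \sum_{s,h}\EE_{p^t}[\Delta L_h^s(f)]$, the desired inequality becomes equivalent to
\[
  \EE_{S^{t-1}}\!\bigl[\KL(p^t \,\|\, p^0)\bigr] \;\geq\; \EE_{S^{t-1}}\EE_{f\sim p^t}\!\Big[\sum_{s,h} \Delta L_h^s(f) + \sum_{s,h} \EE_{\pi_{\exp}(f^s, h)}\!\bigl[D_H^2\bigl(\PP_{h, f^*}(\cdot\mid x_h^s, a_h^s), \PP_{h, f}(\cdot\mid x_h^s, a_h^s)\bigr)\bigr]\Big].
\]

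Second, I would apply the Donsker--Varadhan variational principle $\EE_{p^t}[Y] \le \log \EE_{p^0}[e^{Y(f)}] + \KL(p^t \,\|\, p^0)$ with $Y(f)$ equal to the bracketed sum above (note that $Y$ depends on $S^{t-1}$ both through $\Delta L$ and through the random $f^s$). Taking $\EE_{S^{t-1}}$, using Jensen's inequality for the concave $\log$, and then Fubini (valid since $p^0$ is independent of $S^{t-1}$), the previous display is implied by the per-$f$ MGF bound $\EE_{S^{t-1}}[e^{Y(f)}] \le 1$ for every fixed $f \in \cH$.

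Third, this MGF bound is proven by peeling off one $(s,h)$-factor at a time. Order the summands lexicographically and let $\cF_{s,h}$ be the filtration generated by everything up to and including the hypothesis $f^s$, so that $f^s$ and hence $\EE_{\pi_{\exp}(f^s, h)}[D_H^2]$ are $\cF_{s,h}$-measurable while the sample $\zeta_h^s$ is still random. Because $\eta = 1/2$, we have $e^{\Delta L_h^s(f)} = \sqrt{\PP_{h, f}/\PP_{h, f^*}}$, and a routine computation under $\zeta_h^s \sim \pi_{\exp}(f^s, h) \times \PP_{h, f^*}$ gives $\EE[e^{\Delta L_h^s(f)} \mid \cF_{s,h}] = 1 - \EE_{\pi_{\exp}(f^s, h)}[D_H^2]$. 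The elementary inequality $e^x(1-x) \le 1$ for $x \ge 0$, applied at $x = \EE_{\pi_{\exp}(f^s, h)}[D_H^2] \in [0,1]$, shows that peeling off the innermost factor of $e^{Y(f)}$ only shrinks the conditional expectation; iterating across all $(s,h)$ yields $\EE_{S^{t-1}}[e^{Y(f)}] \le 1$.

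The main subtlety will be keeping the coupling between the past posterior samples $\{f^s\}_{s<t}$ and the data $S^{t-1}$ under control, since $f^s \sim p^s$ is itself a random functional of $S^{s-1}$. Encoding the bonus $\EE_{\pi_{\exp}(f^s, h)}[D_H^2]$ \emph{inside} the exponent $Y$, rather than treating it as an additive correction outside, is essential so that the inequality $e^x(1-x) \le 1$ gets applied at the correct conditioning layer; and using $p^0$ (not $p^t$) as the reference measure in the Donsker--Varadhan step is what allows Fubini to decouple $f$ from $S^{t-1}$ and reduce the problem to a per-$f$ MGF bound.
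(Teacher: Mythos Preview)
Your proposal is correct and follows essentially the same route as the paper's proof: both apply the Donsker--Varadhan (Gibbs variational) principle with reference measure $p^0$, push the outer expectation through $\log$ via Jensen, use Fubini to swap $\EE_{p^0}$ and $\EE_{S^{t-1}}$, and then verify a per-$f$ MGF bound by a tower-of-expectations argument that rests on the identity $\EE_{x'\sim \PP_{h,f^*}}[\sqrt{\PP_{h,f}/\PP_{h,f^*}}]=1-D_H^2$ at $\eta=1/2$. The only organizational difference is that the paper defines a centered increment $\tilde{\zeta}_h^s(f)=\tilde{L}_h^s(f)-\log\EE_{\pi_{\exp}(f^s,h)}[\exp(\tilde{L}_h^s(f))]$ whose MGF is \emph{exactly} $1$, and then separately bounds the centering term by $D_H^2$ via $\log(1-x)\le -x$; you instead place the $D_H^2$ bonus directly in the exponent $Y$ and show the MGF is $\le 1$ via the equivalent inequality $e^x(1-x)\le 1$.
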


\begin{proof}
    See Appendix \ref{appendix:pf:mdp:key} for a detailed proof.
\end{proof}

By Lemma \ref{lemma:mdp:key}, we further obtain 
\# \label{eq:482}
& \sum_{t = 1}^T  \EE_{S^{t - 1}} \EE_{f \sim p^t} [- \gamma \Delta V_f] + \sum_{t = 1}^T  \sum_{h = 1}^H \sum_{s = 1}^{t - 1} \EE_{S^{t - 1}} \EE_{f \sim p^t} \EE_{\pi_{\exp(f^s, h)}} \big[ D_H^2 \big( \PP_{h, f}(x_{h+1}^s \mid x_h^s, a_h^s), \PP_{h, f^*}(x_{h+1}^s \mid x_h^s, a_h^s)\big) \big] \notag\\
& \qquad \le \sum_{t = 1}^T \EE_{S^{t - 1}}\EE_{f \sim p^t} \Big[ - \sum_{h = 1}^H \sum_{s = 1}^{t - 1} \Delta L_h^s(f)  - \gamma \Delta V_f + \log \frac{p^t(f) }{p^0(f) } \Big],  \notag \\
& \qquad = \sum_{t = 1}^T \EE_{S^{t - 1}} \inf_p \EE_{f \sim p} \Big[ - \sum_{h = 1}^H \sum_{s = 1}^{t - 1} \Delta L_h^s(f)  - \gamma \Delta V_f + \log \frac{p(f) }{p^0(f) } \Big],
\#
where the last inequality uses \eqref{eq:4800} and the fact that KL divergence is nonnegative.
Therefore, it remains to establish an upper bound for the right hand side of~\eqref{eq:482}, which is the purpose of the following lemma.

\begin{lemma} \label{lemma:mdp:prior}
    For any $t \in [T]$, it holds that
    \$
    \EE_{S^{t - 1}} \inf_p \EE_{f \sim p} \Big[ - \sum_{h = 1}^H \sum_{s = 1}^{t - 1} \Delta L_h^s(f)  - \gamma \Delta V_f + \log \frac{p(f) }{p^0(f) } \Big] \le \omega(\gamma + \eta H T , p^0)  .
    \$
\end{lemma}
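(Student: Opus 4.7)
}

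The plan is to upper bound the infimum over $p$ by exhibiting a single explicit choice, namely the prior $p^0$ truncated to the good set $\cH(\epsilon)$, and then optimize over $\epsilon$ to match the definition of $\omega$. Concretely, for each $\epsilon>0$ I would take
\[
p_\epsilon(f) \;=\; \frac{p^0(f)\,\mathbf{1}\{f\in\cH(\epsilon)\}}{p^0(\cH(\epsilon))},
\]
which is data-independent, and estimate the three terms inside the brackets separately under $f\sim p_\epsilon$. For the value gap, $f\in\cH(\epsilon)$ directly gives $-\gamma\,\Delta V_f=\gamma(V^*-V_f)\le\gamma\epsilon$. For the KL term, note that $\log(p_\epsilon(f)/p^0(f))=-\log p^0(\cH(\epsilon))$ for every $f$ in the support of $p_\epsilon$. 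The only nontrivial piece is the loglikelihood term $-\sum_{h,s}\Delta L_h^s(f)$.

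For that term, the key idea is to use Fubini (legal because $p_\epsilon$ does not depend on $S^{t-1}$) to swap $\EE_{S^{t-1}}$ and $\EE_{f\sim p_\epsilon}$, and then compute a conditional expectation for fixed $f$. Recall that in Algorithm~\ref{alg:mdp} the data point $(x_h^s,a_h^s,x_{h+1}^s)$ is obtained by rolling out $\pi_{\exp}(f^s,h)$ in the true model, so conditional on $(x_h^s,a_h^s)$ the next state $x_{h+1}^s$ is drawn from $\PP_{h,f^*}(\cdot\mid x_h^s,a_h^s)$. Hence for any $f$ that is independent of $x_{h+1}^s$ given $(x_h^s,a_h^s)$,
\[
\EE\bigl[-\Delta L_h^s(f)\,\big|\,x_h^s,a_h^s\bigr]
\;=\;\eta\cdot\KL\!\Big(\PP_{h,f^*}(\cdot\mid x_h^s,a_h^s)\,\Big\|\,\PP_{h,f}(\cdot\mid x_h^s,a_h^s)\Big)\;\le\;\eta\,\epsilon,
\]
where the final inequality uses the defining KL condition of $\cH(\epsilon)$ in Definition~\ref{def:omega:mdp}, which holds uniformly in $(x,a,h)$. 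Summing over $h\in[H]$ and $s\in[t-1]\subset[T]$ gives $\EE_{S^{t-1}}\bigl[-\sum_{h,s}\Delta L_h^s(f)\bigr]\le \eta HT\,\epsilon$ for every $f\in\cH(\epsilon)$.

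Putting the three bounds together,
\[
\EE_{S^{t-1}}\EE_{f\sim p_\epsilon}\!\Big[-\sum_{h=1}^H\sum_{s=1}^{t-1}\Delta L_h^s(f)-\gamma\,\Delta V_f+\log\tfrac{p_\epsilon(f)}{p^0(f)}\Big]
\;\le\;\bigl(\gamma+\eta HT\bigr)\epsilon-\log p^0\!\bigl(\cH(\epsilon)\bigr).
\]
Since the infimum over $p$ is no larger than the value at $p_\epsilon$, and $\epsilon>0$ is arbitrary, taking $\inf_{\epsilon>0}$ on the right-hand side yields exactly $\omega(\gamma+\eta HT,\,p^0)$ by Definition~\ref{def:omega:mdp}, completing the proof. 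I do not expect a real obstacle here: the argument is essentially a one-line Gibbs-type bound, and the main care needed is just to check that $\cH(\epsilon)$ is nonempty for all $\epsilon>0$ (which follows from the realizability Assumption~\ref{assu:realizability}, since $f^*\in\cH(\epsilon)$) so that $p_\epsilon$ is well defined, and to verify measurability/Fubini for the data-independent choice of $p_\epsilon$.
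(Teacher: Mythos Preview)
Your proposal is correct and follows essentially the same approach as the paper: pick $p_\epsilon(f)\propto p^0(f)\mathbf{1}\{f\in\cH(\epsilon)\}$, bound each of the three terms using the defining properties of $\cH(\epsilon)$, and then optimize over $\epsilon$ to land on $\omega(\gamma+\eta HT,p^0)$. If anything, your argument is more careful than the paper's written proof in making explicit that the bound on the log-likelihood term $-\sum_{h,s}\Delta L_h^s(f)\le \eta HT\,\epsilon$ holds only after taking $\EE_{S^{t-1}}$ (via the conditional KL identity), whereas the paper's display~\eqref{eq:476} presents this as a pointwise inequality before taking expectation.
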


\begin{proof}
    See Appendix \ref{appendix:pf:mdp:prior} for a detailed proof.
\end{proof}

By Lemma~\ref{lemma:mdp:prior}, we have 
\# \label{eq:483}
\sum_{t = 1}^T \EE_{S^{t - 1}} \inf_p \EE_{f \sim p} \Big[ - \sum_{h = 1}^H \sum_{s = 1}^{t - 1} \Delta L_h^s(f)  - \gamma \Delta V_f + \log \frac{p(f) }{p^0(f) } \Big] \le \omega(\gamma + \eta H T, p^0) \cdot T .
\#
Combining \eqref{eq:481}, \eqref{eq:482}, and \eqref{eq:483}, we have
\$
\EE [\reg(HT)] &\le \frac{\omega(\gamma + \eta H T, p^0) \cdot HT}{\gamma} + \frac{\gamma d H}{4} + 2\min\{d, HT\} \cdot H + \epsilon H^2 T \\
& \le \frac{\omega(H T, p^0) \cdot HT}{\gamma} + \frac{\gamma dH }{4} + 2\min\{d, HT\} \cdot H + \epsilon H^2 T \\
& \le 4 \sqrt{d H^2 T \cdot \omega(HT, p^0) } ,  
\$
where the second inequality is obtained by $\eta = 1/2$ and $\gamma < HT/2$, and the last inequality is implied by $\epsilon = 1/\sqrt{H^2T}$, $\gamma = 2 \sqrt{\omega(HT, p^0) T/d}$, and $T \ge d$. Finally, by rescaling $T$ to be $T/H$, we conclude the proof of Theorem~\ref{thm:mdp}.
\end{proof}

\subsubsection{Proof of Lemma \ref{lemma:mdp:key}} \label{appendix:pf:mdp:key}

\begin{proof}
Recall that we define $\Delta L_h^t(f)  = L_h^t(f)  - L_h^t(f^*)$ in \eqref{eq:def:delta:l}. Together with the definition of $L_h^t(f) $ in Line~\ref{line:mdp:l} of Algorithm~\ref{alg:mdp}, we have
\# \label{eq:470}
&\EE_{S^{t - 1}}\EE_{f \sim p^t} \Big[ - \sum_{h = 1}^H \sum_{s = 1}^{t - 1} \Delta L_h^s(f)  - \gamma \Delta V_f + \log \frac{p^t(f) }{p^0(f) } \Big] \notag\\
& \qquad = \EE_{S^{t - 1}} \EE_{f \sim p^t} [ - \gamma \Delta V_f ] +  \EE_{S^{t- 1}} \EE_{f \sim p^t} \Big[  \sum_{h = 1}^H \sum_{s = 1}^{t - 1} \eta \log \frac{\PP_{h, f^*}(x_{h+1}^s \mid x_h^s, a_h^s)}{\PP_{h, f}(x_{h+1}^s \mid x_h^s, a_h^s)} + \log \frac{p^t(f) }{p^0(f) }\Big] .
\#
Then we bound the last term in \eqref{eq:470}. For ease of presentation, we define
    \$
    \tilde{L}_h^s(f)  = \eta \log \frac{\PP_{h, f}(x_{h+1}^s \mid x_h^s, a_h^s)}{\PP_{h, f^*}(x_{h+1}^s \mid x_h^s, a_h^s)},
    \$
    and 
    \# \label{eq:def:zeta}
    \tilde{\zeta}_h^s(f)  =  \tilde{L}_h^s(f)  - \log \EE_{\pi_{\exp(f^s, h)}} \big[ \exp \big(  \tilde{L}_h^s(f)  \big) \big]. 
    \#
    Our proof relies on the following lemma.
    \begin{lemma} \label{lemma:zhang}
        For any $(t, f) \in [T] \times \cH$, we have
        \$
        \EE_{S^{t- 1}} \exp \Big(\sum_{h = 1}^H \sum_{s = 1}^{t - 1} \tilde{\zeta}_h^s(f)  \Big) = 1.
        \$
    \end{lemma}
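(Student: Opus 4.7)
The plan is to prove the identity by induction on $t$, mirroring the strategy used in Lemma~\ref{lemma:zhang:free}. The base case $t=1$ is immediate since the empty sum gives $\exp(0)=1$. For the inductive step, suppose that
\[
\EE_{S^{t-2}} \exp\Big(\sum_{h=1}^H \sum_{s=1}^{t-2} \tilde{\zeta}_h^s(f)\Big) = 1.
\]
First I would condition on $S^{t-2}$ and on the hypothesis $f^{t-1}$ drawn from $p^{t-1}$, both of which are measurable with respect to $S^{t-2}$ (since $p^{t-1}$ is determined by $S^{t-2}$ via the posterior update). Conditional on $(S^{t-2}, f^{t-1})$, the samples $\{\zeta_h^{t-1}\}_{h=1}^H$ used to form $\tilde{L}_h^{t-1}(f)$ are drawn from $H$ independent executions of the exploration policies $\{\pi_{\exp}(f^{t-1},h)\}_{h\in[H]}$ (see Line~\ref{line:S} of Algorithm~\ref{alg:mdp}), so the innovations at step $t-1$ factor across $h$.

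Next, using the definition of $\tilde{\zeta}_h^s(f)$ in \eqref{eq:def:zeta},
\[
\exp\big(\tilde{\zeta}_h^{t-1}(f)\big)
= \frac{\exp\big(\tilde{L}_h^{t-1}(f)\big)}{\EE_{\pi_{\exp}(f^{t-1},h)} \exp\big(\tilde{L}_h^{t-1}(f)\big)},
\]
so by construction $\EE_{\zeta_h^{t-1} \sim \pi_{\exp}(f^{t-1},h)} \exp(\tilde{\zeta}_h^{t-1}(f)) = 1$ almost surely. Combining this with the conditional independence across $h$ established above yields
\[
\EE\Big[\exp\Big(\sum_{h=1}^H \tilde{\zeta}_h^{t-1}(f)\Big) \,\Big|\, S^{t-2}, f^{t-1}\Big]
= \prod_{h=1}^H \EE_{\pi_{\exp}(f^{t-1},h)}\big[\exp(\tilde{\zeta}_h^{t-1}(f))\big] = 1.
\]
Then taking expectation over $f^{t-1}\sim p^{t-1}$ (which also leaves the right-hand side equal to $1$) and multiplying by the inductive hypothesis gives
\[
\EE_{S^{t-1}}\exp\Big(\sum_{h=1}^H \sum_{s=1}^{t-1} \tilde{\zeta}_h^s(f)\Big)
= \EE_{S^{t-2}} \exp\Big(\sum_{h=1}^H \sum_{s=1}^{t-2} \tilde{\zeta}_h^s(f)\Big) \cdot 1 = 1,
\]
completing the induction.

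The only delicate point is the bookkeeping that ensures the conditioning works out: we need the log-moment generating function $\log \EE_{\pi_{\exp}(f^{s},h)}[\exp(\tilde{L}_h^{s}(f))]$ appearing in $\tilde{\zeta}_h^{s}(f)$ to agree exactly with the conditional expectation of $\exp(\tilde{L}_h^{s}(f))$ given the past, which is why we must verify that $f^{s}$ is $S^{s-1}$-measurable and that the trajectory samples at step $s$ and stage $h$ are drawn independently from $\pi_{\exp}(f^{s},h)$ conditional on $(S^{s-1}, f^{s})$. Beyond that observation, the argument is a textbook likelihood-ratio martingale computation, so no additional obstacles are anticipated.
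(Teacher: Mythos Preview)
Your proposal is correct and follows essentially the same approach as the paper's proof: both argue by induction on $t$, condition on the past data and the sampled hypothesis, use the definition of $\tilde{\zeta}_h^s(f)$ to see that each factor has conditional expectation $1$, exploit the conditional independence across $h$ to factor the product, and then invoke the inductive hypothesis. Your write-up is somewhat more explicit about the measurability and conditioning bookkeeping, but the argument is the same textbook likelihood-ratio martingale computation.
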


    \begin{proof}[Proof of Lemma \ref{lemma:zhang}]
    The proof follows from \citet{zhang2006}. We prove the lemma by induction. Suppose  
    \# \label{eq:471}
    \EE_{S^{t- 1}} \exp \Big(\sum_{h = 1}^H \sum_{s = 1}^{t - 1} \tilde{\zeta}_h^s(f)  \Big) = 1
    \#
    holds. Then we have
    \$
    \EE_{S^t} \exp \Big(\sum_{h = 1}^H \sum_{s = 1}^{t } \tilde{\zeta}_h^s(f)  \Big) &= \EE_{S^{t-1}} \exp \Big(\sum_{h = 1}^H \sum_{s = 1}^{t - 1} \tilde{\zeta}_h^s(f)  \Big) \cdot \EE_{f^t \sim p^t} \prod_{h=1}^H \Big(\EE_{\pi_{\exp(f^t,h)}} \exp \big( \tilde{\zeta}_h^t(f)  \big) \Big)\\
    & = \EE_{S^{t-1}} \exp \Big(\sum_{h = 1}^H \sum_{s = 1}^{t - 1} \tilde{\zeta}_h^s(f)  \Big) = 1,
    \$
    where the second equality uses the fact that $\EE_{\pi_{\exp(f,h)}} \exp ( \tilde{\zeta}_h^t(f)  ) = 1$ for all $h \in [H]$ and the last inequality is obtained by the induction assumption \eqref{eq:471}. Therefore, we conclude the proof of Lemma \ref{lemma:zhang}.
   \end{proof}

    Back to our proof, we have
    \$
     \EE_{f \sim p^t} \Big[ - \sum_{h = 1}^H \sum_{s = 1}^{t - 1} \tilde{\zeta}_h^s(f)  + \log \frac{p^t(f) }{p^0(f) }\Big] & \ge \inf_p  \EE_{f \sim p} \Big[ - \sum_{h = 1}^H \sum_{s = 1}^{t - 1} \tilde{\zeta}_h^s(f)  + \log \frac{p(f) }{p^0(f) }\Big] \\
    &  = - \log \EE_{f \sim p^0} \exp \Big( \sum_{h = 1}^H \sum_{s = 1}^{t - 1} \tilde{\zeta}_h^s(f) \Big),
    \$
    where the last equality uses the fact that the minimum is achieved by $p(f)  \propto p^0(f) \cdot \exp(\sum_{h = 1}^H \sum_{s = 1}^{t - 1} \tilde{\zeta}_h^s(f) )$. Hence, we have
    \# \label{eq:4500}
     \EE_{S^{t-1}} \EE_{f \sim p^t} \Big[ - \sum_{h = 1}^H \sum_{s = 1}^{t - 1} \tilde{\zeta}_h^s(f)  + \log \frac{p^t(f) }{p^0(f) } \Big] & \ge - \EE_{S^{t - 1}}  \log \EE_{f \sim p^0} \exp \Big( \sum_{h = 1}^H \sum_{s = 1}^{t - 1} \tilde{\zeta}_h^s(f) \Big)  \notag \\
    &  \ge - \log \EE_{f \sim p^0} \EE_{S^{t - 1}} \exp \Big( \sum_{h = 1}^H \sum_{s = 1}^{t - 1} \tilde{\zeta}_h^s(f) \Big) = 0,
    \#
    where the second inequality uses Jensen's inequality and the last inequality is obtained by Lemma~\ref{lemma:zhang}. Combining \eqref{eq:4500} and the definition of $\tilde{\zeta}_h^t$ in \eqref{eq:def:zeta},  for any $h \in [H]$, we have 
    \# \label{eq:451}
    &\EE_{S^{t-1}} \EE_{f \sim p^t} \Big[ \eta \sum_{h = 1}^H \sum_{s = 1}^{t - 1} \log \frac{\PP_{h, f^*}(x_{h+1}^s \mid x_h^s, a_h^s)}{\PP_{h, f}(x_{h+1}^s \mid x_h^s, a_h^s)} + \log \frac{p^t(f) }{p^0(f) }\Big] \notag\\
    & \qquad \ge - \sum_{s = 1}^{t - 1} \sum_{h = 1}^H \EE_{S^{t - 1}} \EE_{f \sim p^t} \bigg[ \log \Big\{ \EE_{\pi_{\exp(f^s,h)}}  \exp \Big( \eta \log \frac{\PP_{h, f}(x_{h+1}^s \mid x_h^s, a_h^s)}{\PP_{h, f^*}(x_{h+1}^s \mid x_h^s, a_h^s)} \Big)   \Big\} \bigg]\notag \\
    & \qquad \ge \sum_{s = 1}^{t - 1} \sum_{h = 1}^H \EE_{S^{t - 1}} \EE_{f \sim p^t} \EE_{\pi_{\exp(f^s,h)}} \big[ D_H^2 \big( \PP_{h, f}(x_{h+1}^s \mid x_h^s, a_h^s), \PP_{h, f^*}(x_{h+1}^s \mid x_h^s, a_h^s)\big) \big],
    \#
    where the last inequality uses the fact that $\eta = 1/2$ and the following lemma. 
    \begin{lemma} \label{lemma:transition}
        Suppose $\tilde{\eta} = 1/2$, for any $(t, h) \in [T] \times [H]$ we have
        \$
        & \log \Big\{ \EE_{\pi_{\exp(f^t, h)}} \exp \Big(\tilde{\eta} \log \frac{\PP_{h, f}(x_{h+1}^t \mid x_h^t, a_h^t)}{\PP_{h, f^*}(x_{h+1}^t \mid x_h^t, a_h^t)}\Big) \Big\} \le -  \EE_{\pi_{\exp(f^t, h)}} D_H^2 \big( \PP_{h, f}(x_{h+1}^t \mid x_h^t, a_h^t), \PP_{h, f^*}(x_{h+1}^t \mid x_h^t, a_h^t)\big).
        \$
    \end{lemma}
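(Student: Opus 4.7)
The plan is to exploit the specific choice $\tilde{\eta} = 1/2$, which converts the exponential of a scaled log-likelihood ratio into a square root and thereby connects the quantity inside the $\log$ on the left-hand side to the Hellinger affinity $\int \sqrt{dP \, dQ}$ between the two conditional distributions. From there the bound follows by the elementary inequality $\log(1-x) \le -x$.

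First, I would substitute $\tilde{\eta} = 1/2$ into the left-hand side and rewrite the integrand as
\[
\exp\Big(\tfrac{1}{2} \log \tfrac{\PP_{h,f}(x_{h+1} \mid x_h, a_h)}{\PP_{h,f^*}(x_{h+1} \mid x_h, a_h)}\Big) = \sqrt{\tfrac{\PP_{h,f}(x_{h+1} \mid x_h, a_h)}{\PP_{h,f^*}(x_{h+1} \mid x_h, a_h)}}.
\]
Next, I would unfold the expectation $\EE_{\pi_{\exp(f^t, h)}}$ by the tower rule, separating the draw of the history and current pair $(x_h^t, a_h^t)$ induced by $\pi_{\exp(f^t, h)}$ from the conditional draw $x_{h+1}^t \sim \PP_{h, f^*}(\cdot \mid x_h^t, a_h^t)$ under the true model. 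The inner conditional expectation becomes the Hellinger affinity:
\[
\EE_{x_{h+1} \sim \PP_{h,f^*}(\cdot \mid x_h, a_h)} \sqrt{\tfrac{\PP_{h,f}(x_{h+1} \mid x_h, a_h)}{\PP_{h,f^*}(x_{h+1} \mid x_h, a_h)}} = 1 - D_H^2\big(\PP_{h,f}(\cdot \mid x_h, a_h), \PP_{h,f^*}(\cdot \mid x_h, a_h)\big),
\]
using the standard identity $\int \sqrt{dP \, dQ} = 1 - D_H^2(P, Q)$ that follows directly from the definition of Hellinger divergence given in the notation section.

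Finally, I would take the outer expectation over $(x_h^t, a_h^t) \sim \pi_{\exp(f^t, h)}$, apply $\log$, and use the scalar inequality $\log(1-x) \le -x$ valid for $x < 1$, together with Jensen's inequality if needed to push the $\log$ past the outer expectation. Concretely, the chain reads
\[
\log \EE_{\pi_{\exp(f^t, h)}}\bigl[1 - D_H^2(\cdots)\bigr] \le -\EE_{\pi_{\exp(f^t, h)}}\bigl[D_H^2(\cdots)\bigr],
\]
which is the desired bound. I do not anticipate any substantial obstacle here: the argument is a textbook MLE-to-Hellinger conversion, and the only mildly subtle point is being careful that the expectation $\EE_{\pi_{\exp(f^t, h)}}$ factorizes into a draw of $(x_h, a_h)$ followed by the true conditional transition, which is implicit in the trajectory-generation semantics of the exploration policy.
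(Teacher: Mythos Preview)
Your proposal is correct and follows essentially the same route as the paper: substitute $\tilde{\eta}=1/2$ to turn the exponential into a square root, identify the inner conditional expectation as the Hellinger affinity $1 - D_H^2$, and then apply the scalar inequality $\log x \le x-1$ (equivalently $\log(1-x)\le -x$). The paper applies $\log x \le x-1$ before computing the inner integral, whereas you compute the inner integral first and then apply $\log(1-x)\le -x$; these are the same step, and the Jensen inequality you mention is in fact unnecessary since linearity of the outer expectation already gives $\log\bigl(1-\EE[D_H^2]\bigr)$.
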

    
    \begin{proof}[Proof of Lemma \ref{lemma:transition}]
    By the inequality $\log x \le x - 1$, we have 
    \# \label{eq:11111}
    & \log  \EE_{\pi_{\exp(f^t, h)}} \exp \Big(\tilde{\eta} \log \Big\{ \frac{\PP_{h, f}(x_{h+1}^t \mid x_h^t, a_h^t)}{\PP_{h, f^*}(x_{h+1}^t \mid x_h^t, a_h^t)}\Big) \Big\} \notag\\
    & \qquad \le  \EE_{\pi_{\exp(f^t, h)}} \exp \Big(\tilde{\eta} \log \frac{\PP_{h, f}(x_{h+1}^t \mid x_h^t, a_h^t)}{\PP_{h, f^*}(x_{h+1}^t \mid x_h^t, a_h^t)}\Big) - 1, \notag\\
    & \qquad =   \EE_{(x_h^t,a_h^t) \sim \pi_{\exp(f^t, h)}} \int \sqrt{ \mathrm{d} \PP_{h, f}(x_{h+1}^t \mid x_h^t, a_h^t) \cdot \mathrm{d} \PP_{h, f^*}(x_{h+1}^t \mid x_h^t, a_h^t)}- 1, 
    \#
    where the last equality uses the fact that $\tilde{\eta} = 1/2$. Moreover, for any two distributions over samples $x \sim \cX$, it holds that 
    \$
    D_H^2(P, Q) = \frac{1}{2} \int \big( \sqrt{\mathrm{d}P(x)} - \sqrt{d Q(x)} \big)^2 = 1 - \int \sqrt{dP(x) dQ(x)} .
    \$
    Together with \eqref{eq:11111}, we conclude the proof of Lemma \ref{lemma:transition}.
   \end{proof}

Plugging \eqref{eq:451} into \eqref{eq:470}, we obtain that 
\$
&\EE_{f \sim p^t} \Big( - \sum_{h=1}^H \sum_{s = 1}^{t - 1} \Delta L_h^s(f)  - \gamma \Delta V_f + \log \frac{p^t(f) }{p^0(f) } \Big) \\
& \qquad \ge \EE_{S^{t - 1}} \EE_{f \sim p^t} [ - \gamma \Delta V_f ]  +   \sum_{h = 1}^H \sum_{s = 1}^{t - 1} \EE_{S^{t - 1}} \EE_{f \sim p^t} \EE_{\pi_{\exp(f^s, h)}} \big[ D_H^2 \big( \PP_{h, f}(x_{h+1}^s \mid x_h^s, a_h^s), \PP_{h, f^*}(x_{h+1}^s \mid x_h^s, a_h^s)\big) \big],
\$
which concludes the proof of Lemma \ref{lemma:mdp:key}.
\end{proof}

\subsubsection{Proof of Lemma \ref{lemma:mdp:prior}} \label{appendix:pf:mdp:prior}

\begin{proof}[Proof of Lemma \ref{lemma:mdp:prior}]
    Taking $\tilde{p}(f)  = p^0(f) \cdot  \mathbf{1}(f \in \cH(\epsilon))/p^0(\cH(\epsilon))$ where $\cH(\epsilon)$ is defined in Definition~\ref{def:omega:mdp}, we obtain that
    \# \label{eq:476}
     \inf_p \EE_{f \sim p} \Big( - \sum_{h = 1}^H \sum_{s = 1}^{t - 1} \Delta L_h^s(f)  - \gamma \Delta V_f + \log \frac{p(f) }{p^0(f) }\Big) 
    &  \le  \EE_{f \sim \tilde{p}} \Big[ - \sum_{h = 1}^H \sum_{s = 1}^{t - 1} \Delta L_h^s(f)  - \gamma \Delta V_f + \log \frac{ \tilde{p}(f) }{p^0(f) } \Big] \notag\\
    & \le (\gamma +  \eta H T ) \epsilon  -  \log p^0\big(\cH(\epsilon)\big),
    \#
    where the last inequality is obtained by our choice of $\tilde{p}$ and definition of $\cH(\epsilon)$ in Definition~\ref{def:omega:mdp}.
    Taking expectation over \eqref{eq:476} gives  
    \$
    \EE_{S^{t - 1}} \inf_p \EE_{f \sim p} \Big[ - \sum_{h = 1}^H \sum_{s = 1}^{t - 1} \Delta L_h^s(f)  - \gamma \Delta V_f + \log \frac{p(f) }{p^0(f) } \Big]  &\le (\gamma +  \eta H T ) \epsilon  -  \log p^0\big(\cH(\epsilon)\big) \\
    & \le \omega(\gamma + \eta H T, p^0), 
    \$
    where the last inequality uses the definition of $\omega(\cdot, \cdot)$ in Definition~\ref{def:omega:mdp}.
    Therefore, we conclude the proof of Lemma~\ref{lemma:mdp:prior}.
\end{proof}

\section{Proofs for Partially Observable Interactive Decision Making}

\subsection{Proof for PSR} \label{appendix:pf:psr}


  To facilitate the analysis, we define 
    \# \label{eq:4441}
    \Delta L_h^t(f) = L_h^t(f) - L_h^t(f^*), \quad \Delta V_f = V_f - V^*,
    \#
    where $L_h^t(f) = \eta \log \PP_f(\tau_{h}^t)$, $V_f$ is the optimal value with respect to the model $f$, and $V^*$ is the optimal value with respect to the real model $f^*$.
    Then the posterior defined in \eqref{eq:generic:posteior} is given by
    \# \label{eq:4442}
    p^t(f)  \propto p^0(f) \exp \Big( \gamma \Delta V_f + \sum_{s = 1}^{t - 1} \sum_{h = 0}^{H - 1} \Delta L_h^s(f) \Big) .
    \#
  Recall that we say a PSR has GEC $d_{\GEC}$ if for any $\{f^t, \pi^t = \pi_{f^t}\}_{t \in [T]}$, it holds that
  \# \label{eq:4440}
    & \sum_{t = 1}^T V_{f^t}^{\pi^t} - V^{\pi^t}  \le \Big[d_{\GEC}  \sum_{t = 1}^{T} \sum_{h = 0}^{H - 1}\sum_{s = 1}^{t - 1} D_H^2 \Big( \PP_{f^t}^{\pi_{\exp}(f^s, h)}, \PP_{f^*}^{\pi_{\exp}(f^s, h)} \Big)\Big]^{1/2} + \sqrt{d_{\GEC} H T },
   \#
   where $\pi_{\exp}(f^s, h)$ is the exploration policy defined in Example~\ref{example:policy:psr}.

\begin{proof}[Proof of Theorem~\ref{thm:psr}] 
    Throughout this proof, we use the shorthand $S^t = \{(f^s, \tau_h^s)\}_{0 \le h \le H-1, 1 \le s \le t}$ and $d = d_{\GEC}$ for simplicity. 
    Since we execute exploration policies $\{\pi_{\exp}(f^t,h)\}_{0 \le h \le H -1}$ to collect samples for any $t \in [T]$, we know the total number of interactions is $HT$. We decompose the regret as
    \# \label{eq:4443}
    \frac{\gamma}{H} \cdot \EE[\reg(HT)] & =\gamma \sum_{t = 1}^T \EE_{S^{t-1}} \EE_{f \sim p^t} [V^* - V_{f^*}^{\pi_f}]  \notag \\
    &  = \sum_{t = 1}^T \EE_{S^{t-1}} \EE_{f \sim p^t} [-\gamma \Delta V_f] + \gamma \sum_{t = 1}^T \EE_{S^{t-1}} \EE_{f \sim p^t} [V_f - V_{f^*}^{\pi_f}] \notag \\
    &  \le \sum_{t = 1}^T \EE_{S^{t-1}} \EE_{f \sim p^t} [-\gamma \Delta V_f]  + \sum_{t = 1}^T \sum_{h = 0}^{H - 1} \sum_{s = 1}^{t-1} \EE_{S^{t-1}} \EE_{f \sim p^t} \Big[ D_H^2 \Big(\PP_{f}^{\pi_{\exp}(f^s, h)}, \PP_{f^*}^{\pi_{\exp}(f^s, h)} \Big) \Big] \notag \\
    & \qquad \qquad  + \frac{\gamma^2 d}{4} + \gamma \sqrt{dHT},  
    \#
    where the last inequality follows from the definition of GEC in Definition~\ref{def:gec} (see also \eqref{eq:4440}). Here the exploration policy $\pi_{\exp}(f,h)$ is defined in Example~\ref{example:policy:psr}. We have the following lemma to bound the first two terms in \eqref{eq:4443}.
    \begin{lemma} \label{lemma:psr:key}
        For any $t \in [T]$, it holds that
        \$
        &  \EE_{S^{t-1}} \EE_{f \sim p^t} [-\gamma \Delta V_f] +  \sum_{h = 0}^{H - 1} \sum_{s = 1}^{t-1} \EE_{S^{t-1}} \EE_{f \sim p^t} \Big[ D_H^2 \Big(\PP_{f}^{\pi_{\exp}(f^s, h)}, \PP_{f^*}^{\pi_{\exp}(f^s, h)} \Big) \Big] \\
        & \qquad \le  \EE_{S^{t-1}} \EE_{f \sim p^t} \Big[ - \sum_{h = 0}^{H - 1} \sum_{s = 1}^{t - 1} \Delta L_h^s(f) - \gamma \Delta V_f + \ln \frac{p^t(f)}{p^0(f)} \Big] .
        \$
    \end{lemma}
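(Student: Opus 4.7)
The approach parallels the proof of Lemma~\ref{lemma:mdp:key}, but at the level of whole-trajectory likelihoods rather than per-step transition kernels. Since $-\gamma \Delta V_f$ appears on both sides, the task reduces to showing that
\[
\sum_{h=0}^{H-1}\sum_{s=1}^{t-1} \EE_{S^{t-1}}\EE_{f \sim p^t}\Big[D_H^2\big(\PP_f^{\pi_{\exp}(f^s,h)}, \PP_{f^*}^{\pi_{\exp}(f^s,h)}\big)\Big] \le \EE_{S^{t-1}}\EE_{f\sim p^t}\Big[-\sum_{h,s}\Delta L_h^s(f) + \log\frac{p^t(f)}{p^0(f)}\Big].
\]
Recalling that $\Delta L_h^s(f) = \eta\log\bigl(\PP_f(\tau_h^s)/\PP_{f^*}(\tau_h^s)\bigr)$ with $\eta = 1/2$ and that $\tau_h^s$ is drawn from $\PP_{f^*}^{\pi_{\exp}(f^s,h)}$, I will first define the centered quantity
\[
\tilde{\zeta}_h^s(f) = \Delta L_h^s(f) - \log \EE_{\tau_h^s \sim \pi_{\exp}(f^s,h)}\big[\exp(\Delta L_h^s(f))\big],
\]
which by construction satisfies $\EE_{\tau_h^s}[\exp(\tilde{\zeta}_h^s(f))] = 1$ conditionally on $(S^{s-1}, f^s)$.

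Next, I will mirror Lemma~\ref{lemma:zhang} and prove by induction on $t$ that $\EE_{S^{t-1}}\big[\exp(\sum_{h=0}^{H-1}\sum_{s=1}^{t-1}\tilde{\zeta}_h^s(f))\big]=1$ for every fixed $f$. The induction step uses that at iteration $s$, first $f^s\sim p^s$ is sampled and then the $H$ trajectories $\{\tau_h^s\}_{0\le h\le H-1}$ are collected independently via the respective exploration policies, so the contribution from step $s$ is an unbiased multiplier. Next I invoke the Gibbs variational principle (Lemma~\ref{lem:Gibbs_var}) to get
\[
\EE_{f\sim p^t}\Big[-\!\sum_{h,s}\tilde{\zeta}_h^s(f) + \log\tfrac{p^t(f)}{p^0(f)}\Big] \ge \inf_{p} \EE_{f\sim p}\Big[-\!\sum_{h,s}\tilde{\zeta}_h^s(f) + \log\tfrac{p(f)}{p^0(f)}\Big] = -\log\EE_{f\sim p^0}\Big[\exp\big(\sum_{h,s}\tilde{\zeta}_h^s(f)\big)\Big],
\]
take $\EE_{S^{t-1}}$, and apply Jensen's inequality and the above martingale identity to conclude that the right-hand side is non-negative.

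The final step is to translate back. Reorganizing,
\[
\EE_{S^{t-1}}\EE_{f\sim p^t}\Big[-\!\sum_{h,s}\Delta L_h^s(f) + \log\tfrac{p^t(f)}{p^0(f)}\Big] \ge -\sum_{h,s}\EE_{S^{t-1}}\EE_{f\sim p^t}\Big[\log \EE_{\pi_{\exp}(f^s,h)}\!\big[\exp(\Delta L_h^s(f))\big]\Big].
\]
With $\eta = 1/2$, the inner moment generating function is precisely the Hellinger affinity:
\[
\EE_{\tau_h^s \sim \PP_{f^*}^{\pi_{\exp}(f^s,h)}}\Big[\sqrt{\PP_f(\tau_h^s)/\PP_{f^*}(\tau_h^s)}\Big] = 1 - D_H^2\big(\PP_f^{\pi_{\exp}(f^s,h)},\PP_{f^*}^{\pi_{\exp}(f^s,h)}\big),
\]
since the policy factor in $\PP^\pi(\tau) = \PP(\tau)\pi(\tau)$ cancels inside the square root. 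Using $\log(1-x)\le -x$ yields
\[
\log \EE_{\pi_{\exp}(f^s,h)}\!\big[\exp(\Delta L_h^s(f))\big] \le -D_H^2\big(\PP_f^{\pi_{\exp}(f^s,h)},\PP_{f^*}^{\pi_{\exp}(f^s,h)}\big),
\]
and plugging this in gives the desired bound once the $-\gamma \Delta V_f$ is restored on both sides. The main subtlety to watch is step~2: verifying that the conditional expectation over $\tau_h^s$ is genuinely unit-mean when $f^s$ is itself $S^{s-1}$-measurable, so that the induction argument closes properly and the $H$ parallel trajectories at each episode can be handled as a product of conditionally independent factors.
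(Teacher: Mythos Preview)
Your proposal is correct and follows essentially the same route as the paper's proof: define the centered log-likelihood ratios $\tilde{\zeta}_h^s(f)$, establish the unit-mean martingale identity by induction (as in Lemma~\ref{lemma:zhang}), apply the Gibbs variational principle and Jensen's inequality to obtain non-negativity, and then convert the moment-generating-function term into the Hellinger divergence via $\eta=1/2$ and $\log x \le x-1$. The only cosmetic difference is that the paper introduces the notation $\tilde{L}_h^s(f)$ for what you directly call $\Delta L_h^s(f)$, but these coincide by definition.
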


\begin{proof}[Proof of Lemma \ref{lemma:psr:key}]
This proof involves the exponential model aggregation analysis. See Appendix~\ref{appendix:pf:psr:key} for a detailed proof.
\end{proof}    
    
    By Lemma~\ref{lemma:psr:key}, we have
    \begin{equation}
    \begin{aligned} \label{eq:4444}
    & \sum_{t = 1}^T \EE_{S^{t-1}} \EE_{f \sim p^t} [-\gamma \Delta V_f] + \sum_{t = 1}^T \sum_{h = 0}^{H - 1} \sum_{s = 1}^{t-1} \EE_{S^{t-1}} \EE_{f \sim p^t} \Big[ D_H^2 \Big(\PP_{f}^{\pi_{\exp}(f^s, h)}, \PP_{f^*}^{\pi_{\exp}(f^s, h)} \Big) \Big] \\
        & \qquad \le \sum_{t = 1}^{T} \EE_{S^{t-1}} \EE_{f \sim p^t} \Big[ - \sum_{h = 0}^{H - 1} \sum_{s = 1}^{t - 1} \Delta L_h^s(f) - \gamma \Delta V_f + \ln \frac{p^t(f)}{p^0(f)} \Big] \\
        & \qquad = \sum_{t = 1}^T \EE_{S^{t-1}} \inf_p \EE_{f \sim p} \Big[ - \sum_{h=0}^{H-1} \sum_{s = 1}^{t-1} \Delta L_h^s(f) - \gamma \Delta V_f + \ln \frac{p(f)}{p^0(f)} \Big],
    \end{aligned}
    \end{equation}
    where the last inequality is implied by the form of posterior in \eqref{eq:4442} and the fact that KL-divergence is non-negative. This term can be further bounded by the following lemma.
    \begin{lemma} \label{lemma:psr:prior}
        For any $t \in [T]$, it holds that
        \$
        \EE_{S^{t-1}} \inf_p \EE_{f \sim p} \Big[ - \sum_{h=0}^{H-1} \sum_{s = 1}^{t-1} \Delta L_h^s(f) - \gamma \Delta V_f + \ln \frac{p(f)}{p^0(f)} \Big] \le \omega(\gamma + \eta H T, p^0). 
        \$
    \end{lemma}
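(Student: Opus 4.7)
The plan mirrors the proof of Lemma \ref{lemma:mdp:prior} almost verbatim, with the PSR likelihood playing the role of the MDP likelihood. I would choose a convenient sub-optimal candidate distribution concentrated on the ``good set'' $\cH(\epsilon)$ from Definition \ref{def:omega:psr}, namely $\tilde{p}(f) = p^0(f) \cdot \mathbf{1}\{f \in \cH(\epsilon)\}/p^0(\cH(\epsilon))$. Plugging $\tilde{p}$ into the $\inf_p$ immediately produces an upper bound, so the work is to evaluate each of the three summands on the support $\cH(\epsilon)$.

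For any $f \in \cH(\epsilon)$, the definition of $\cH(\epsilon)$ gives $-\gamma \Delta V_f = \gamma(V^* - V_f) \le \gamma\epsilon$. The relative-entropy-to-prior term contributes exactly $-\log p^0(\cH(\epsilon))$ since $\tilde{p}(f)/p^0(f) = 1/p^0(\cH(\epsilon))$ on the support. For the log-likelihood term, recall that $L_h^s(f) = \eta \log \PP_f(\tau_h^s)$ and $\tau_h^s$ is the length-$H$ trajectory generated by $\pi_{\exp}(f^s, h)$. Since $\tilde{p}$ is data-independent, Fubini lets me swap integrals and obtain
\begin{align*}
\EE_{S^{t-1}} \EE_{f \sim \tilde{p}}[-\Delta L_h^s(f)]
&= \eta \, \EE_{f \sim \tilde{p}} \EE_{f^s}\Big[\mathrm{KL}\Big(\PP_{f^*}^{\pi_{\exp}(f^s,h)} \,\Big\|\, \PP_{f}^{\pi_{\exp}(f^s,h)}\Big)\Big] \le \eta \epsilon,
\end{align*}
where the equality uses that the policy factor in $\PP^\pi = \PP \cdot \pi$ cancels inside $\log(\PP_{f^*}/\PP_f)$, and the inequality uses the second defining condition of $\cH(\epsilon)$ (the sup over policies of the KL is at most $\epsilon$). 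Summing over $h \in \{0,\ldots,H-1\}$ and $s \in [t-1]$ yields a contribution of at most $\eta H(t-1)\epsilon \le \eta H T\,\epsilon$.

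Assembling the three pieces gives, for every $\epsilon>0$,
\[
\EE_{S^{t-1}} \inf_p \EE_{f \sim p}\Big[ - \sum_{h=0}^{H-1} \sum_{s=1}^{t-1} \Delta L_h^s(f) - \gamma \Delta V_f + \ln \frac{p(f)}{p^0(f)} \Big] \le (\gamma + \eta H T)\,\epsilon - \log p^0(\cH(\epsilon)),
\]
and taking the infimum over $\epsilon$ and recognizing the right-hand side as $\omega(\gamma + \eta H T, p^0)$ from Definition \ref{def:omega:psr} finishes the proof. The only spot I would scrutinize is the order-of-integration step in the log-likelihood computation: $f^s$ is itself random, drawn from $p^s$ given $S^{s-1}$, but because $\tilde{p}$ is deterministic the exchange is legal, and the conditional expectation of $-\Delta L_h^s(f)$ given $(f, f^s)$ is exactly the stated KL divergence, which is uniformly bounded by $\eta\epsilon$ on $\cH(\epsilon)$ regardless of which $\pi_{\exp}(f^s, h)$ is realized. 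Everything else is a direct transcription of the model-based MDP argument.
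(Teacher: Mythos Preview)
Your proposal is correct and follows essentially the same approach as the paper: choose the data-independent $\tilde{p}(f) = p^0(f)\,\mathbf{1}\{f\in\cH(\epsilon)\}/p^0(\cH(\epsilon))$, bound the three summands on $\cH(\epsilon)$, and then optimize over $\epsilon$ to recognize $\omega(\gamma+\eta HT,p^0)$. If anything, your treatment of the log-likelihood term is more careful than the paper's terse write-up: the paper asserts the bound $(\gamma+\eta HT)\epsilon - \log p^0(\cH(\epsilon))$ before taking $\EE_{S^{t-1}}$, whereas you correctly pass the outer expectation through (using that $\tilde{p}$ is data-independent) and identify the conditional expectation of $-\Delta L_h^s(f)$ given $f^s$ as $\eta\,\KL(\PP_{f^*}^{\pi_{\exp}(f^s,h)}\,\|\,\PP_f^{\pi_{\exp}(f^s,h)})\le \eta\epsilon$, which is the step that actually requires the KL condition in the definition of $\cH(\epsilon)$.
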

    \begin{proof}[Proof of Lemma \ref{lemma:psr:prior}]
        This lemma is implied by a careful choice of $p$ and the definition of $\omega$ (Definition \ref{def:omega:psr}). See Appendix \ref{appendix:pf:psr:prior} for a detailed proof.
    \end{proof}
    Putting \eqref{eq:4443}, \eqref{eq:4444}, and Lemma~\ref{lemma:psr:prior} together, we obtain
    \$
    \EE[\reg(HT)] & \le \frac{\omega(\gamma + \eta HT, p^0) \cdot HT}{\gamma} +\frac{ \gamma d H}{4}  + \sqrt{dH^3T} \\
    & \le \frac{\omega(HT, p^0) \cdot HT}{\gamma} +\frac{ \gamma d H}{4}  + \sqrt{dH^3T} \\
    & \le 2 \sqrt{d H^2 T (H + \omega(HT, p^0))},
    \$
    where the last inequality uses $\gamma = 2 \sqrt{\omega(HT, p^0) \cdot T/d}$. 
    Therefore, by rescaling $T$ to be $T/H$, we finish the proof of Theorem~\ref{thm:psr}.
\end{proof}
   
  \subsubsection{Proof of Lemma \ref{lemma:psr:key}} \label{appendix:pf:psr:key}
\begin{proof}[Proof of Lemma \ref{lemma:psr:key}]
 By the definition of $\Delta L_h^s(f)$ in \eqref{eq:4441}, we have
    \$
    & \EE_{S^{t-1}} \EE_{f \sim p^t} \Big[ - \sum_{h = 0}^{H - 1} \sum_{s = 1}^{t - 1} \Delta L_h^s(f) - \gamma \Delta V_f + \log \frac{p^t(f)}{p^0(f)} \Big] \\
    & \qquad = \EE_{S^{t-1}} \EE_{f \sim p^t} \Big[  - \eta \sum_{h = 0}^{H - 1} \sum_{s = 1}^{t - 1} \log \frac{\PP_{f}(\tau_h^s)}{\PP_{f^*}(\tau_h^s)} - \gamma \Delta V_f + \log \frac{p^t(f)}{p^0(f)} \Big].
    \$
    To facilitate the following analysis, we define
    \# \label{eq:4553}
    \tilde{L}_h^s(f) = \eta \log \frac{\PP_{f}(\tau_{h}^s)}{\PP_{f^*}(\tau_{h}^s)}, \qquad \tilde{\zeta}_h^s(f) = \tilde{L}_h^s(f) - \log \EE_{\tau_h^s} \big[ \exp\big(\tilde{L}_h^s(f)\big) \big] .
    \#
By the same proof of Lemma~\ref{lemma:zhang}, we have
    \# \label{eq:4554}
    \EE_{S^{t-1}} \exp \Big( \sum_{s = 1}^{t-1} \sum_{h = 0}^{H - 1} \tilde{\zeta}_h^s(f) \Big) = 1, 
    \#
    for any $(t, f) \in [T] \times \cH$. Hence, we have
    \# \label{eq:45541}
  \EE_{f \sim p^t} \Big[  -  \sum_{h = 0}^{H - 1} \sum_{s = 1}^{t - 1} \tilde{\zeta}_h^s(f)  + \log \frac{p^t(f)}{p^0(f)} \Big] 
    & \ge \inf_p \EE_{f \sim p} \Big[  - \sum_{h = 0}^{H - 1} \sum_{s = 1}^{t - 1} \tilde{\zeta}_h^s(f)  + \log \frac{p(f)}{p^0(f)} \Big] \notag \\
    & = - \log \EE_{f \sim p^0} \exp \Big( \sum_{h = 0}^{H-1} \sum_{s = 1}^{t - 1} \tilde{\zeta}_h^s(f) \Big) ,
    \#
    where the first equality uses \eqref{eq:4553} and the last inequality uses the fact that the infimum in \eqref{eq:45541} is achieved by $p(f) \propto p^0(f) \cdot \exp(\sum_{h = 1}^{H - 1}\sum_{s = 1}^{t - 1}\tilde{\zeta}_h^s(f))$. Furthermore, by Jensen's inequality, we have 
    \$
    \EE_{S^{t - 1}} \EE_{f \sim p^t}  \Big[  -  \sum_{h = 0}^{H - 1} \sum_{s = 1}^{t - 1} \tilde{\zeta}_h^s(f)  + \log \frac{p^t(f)}{p^0(f)} \Big] \ge - \log \EE_{f \sim p^0} \EE_{S^{t - 1}} \exp \Big( \sum_{h = 0}^{H-1} \sum_{s = 1}^{t - 1} \tilde{\zeta}_h^s(f) \Big) = 0,
    \$
    where the last inequality follows from \eqref{eq:4554}. Combined with the definition of $\tilde{\zeta}_h^s$ in \eqref{eq:4553}, we obtain
    \$
    &\EE_{S^{t-1}} \EE_{f \sim p^t} \Big[  - \eta \sum_{h = 0}^{H - 1} \sum_{s = 1}^{t - 1} \log \frac{\PP_{f}(\tau_h^s)}{\PP_{f^*}(\tau_h^s)}  + \log \frac{p^t(f)}{p^0(f)} \Big] \\
    & \qquad \ge \sum_{s = 1}^{t -  1} \sum_{h = 0}^{H - 1} \EE_{S^{t- 1}} \EE_{f \sim p^t} \bigg[ \log \Big\{ \EE_{\tau_h^s} \exp\Big(\eta \log \frac{\PP_f(\tau_h^s)}{\PP_{f^*}(\tau_h^s)} \Big) \Big\} \bigg] \\
    & \qquad \ge \sum_{s = 1}^{t -  1} \sum_{h = 0}^{H - 1} \EE_{S^{t- 1}}  \EE_{f \sim p^t} \Big[ D_H^2\Big( \PP_f^{\pi_{\exp}(f^s, h)}(\tau_h^s), \PP_{f^*}^{\pi_{\exp}(f^s, h)}(\tau_h^s) \Big) \Big],
    \$
    where the last inequality can be obtained by a similar proof of Lemma~\ref{lemma:transition} with $\eta = 1/2$. Therefore, we finish the proof of Lemma~\ref{lemma:psr:key}.
\end{proof}

\subsubsection{Proof of Lemma \ref{lemma:psr:prior}} \label{appendix:pf:psr:prior}
\begin{proof}[Proof of Lemma \ref{lemma:psr:prior}]
    This proof is similar to that of Lemma~\ref{lemma:mdp:prior} (Appendix \ref{appendix:pf:mdp:prior}), and we present details here for completeness. Let $\tilde{p}(f)  = p^0(f) \cdot \mathbf{1}(f \in \cH(\epsilon))/p^0(\cH(\epsilon))$ where $\cH(\epsilon)$ is defined in Definition~\ref{def:omega:psr}, we obtain that
    \$
     \inf_p \EE_{f \sim p} \Big( - \sum_{h = 0}^{H - 1} \sum_{s = 1}^{t - 1} \Delta L_h^s(f)  - \gamma \Delta V_f + \log \frac{p(f) }{p^0(f) }\Big) 
    &  \le  \EE_{f \sim \tilde{p}} \Big[ - \sum_{h = 0}^{H - 1} \sum_{s = 1}^{t - 1} \Delta L_h^s(f)  - \gamma \Delta V_f + \log \frac{ \tilde{p}(f) }{p^0(f) } \Big] \notag\\
    & \le (\gamma +  \eta H T ) \epsilon  -  \log p^0\big(\cH(\epsilon)\big),
    \$
    where the last inequality is implied by our choice of $\tilde{p}$ and the definition of $\cH(\epsilon)$ in Definition~\ref{def:omega:mdp}. We further have
    \$
    \EE_{S^{t - 1}} \inf_p \EE_{f \sim p} \Big[ - \sum_{h = 0}^{H - 1} \sum_{s = 1}^{t - 1} \Delta L_h^s(f)  - \gamma \Delta V_f + \log \frac{p(f) }{p^0(f) } \Big]  &\le (\gamma +  \eta H T ) \epsilon  -  \log p^0\big(\cH(\epsilon)\big) \\
    & \le \omega(\gamma + \eta H T, p^0), 
    \$
    where the last inequality uses the definition of $\omega(\cdot, \cdot)$ in Definition~\ref{def:omega:psr}.
    Therefore, we finish the proof of Lemma~\ref{lemma:psr:prior}.
\end{proof}

\subsection{Proof for PO-bilinear Class} \label{appendix:pf:pobilinear}

   Recall that we denote $\cH = \Pi \times \cG$. We first define the following potential function.
\begin{equation}
\Phi_h^t(f) = \eta\sum_{s=1}^{t-1} L^s_h(f), 
\end{equation}
where $L_h^s(f) = (\frac{1}{\nb} \sum_{i = 1}^{\nb} l(f, \zeta_{i,h}^s))^2$ and $l$ is defined in \eqref{eq:def:loss:pobilinear}.
Then we have
$$p^t(f) \propto \exp\Big(-\sum_{h=1}^H  \Phi^t_h (f) + \gamma \Delta V_{f} + \log p^0(f)\Big),$$
 Our proof relies on the following two lemmas.
   
\begin{lemma}[Uniform Concentration] \label{lem:unic:po}
     For all $(t,h,\pi, g) \in [T]\times[H]\times \Pi \times \cG$, with probability at least $1-\delta$, we have
      $$
        \sup_{f \in \cH}\left|\frac{1}{\nb} \sum_{i = 1}^{\nb} l(f, \zeta_{i,h}^s) - \EE_{\pi_{\exp(f^s,h)}}l(f, \zeta_h)\right| \leq \sqrt{\frac{A^2 \cdot \iota}{\nb}},
     $$
where $\iota = \log (|\cG| |\Pi| HT/\delta)$. 
\end{lemma}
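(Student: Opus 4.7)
The plan is to establish this as a standard uniform concentration result via Hoeffding's inequality combined with a union bound over the finite hypothesis class $\cH = \Pi \times \cG$ (as well as over timesteps $(t,h)$).

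First, I would establish boundedness of the loss. Recall from \eqref{eq:def:loss:pobilinear} that
\[
l(f, \zeta_h) = |\cA|\,\pi_{h}(a_h \mid \bar{z}_h) \bigl(r_h + g_{h+1}(\bar{z}_{h+1})\bigr) - g_h(\bar{z}_h).
\]
Since $\pi_h(a_h \mid \bar{z}_h) \in [0,1]$, rewards satisfy $r_h \in [0,1]$ (in fact $\sum r_h \le 1$), and value link functions $g_h \in \cG$ are assumed bounded (on the order of the cumulative reward, i.e., $O(1)$), the range of $l$ is bounded by $cA$ for some absolute constant $c$. This immediately sets up the Hoeffding regime.

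Next, for any fixed $(t,h,\pi,g)$, the $\nb$ samples $\{\zeta_{i,h}^t\}_{i \in [\nb]}$ collected at line 6 of Algorithm~\ref{alg:po:bilinear} are i.i.d.\ conditional on the history $\{f^s, \cD^s\}_{s<t}$ and on $f^t$, since they are generated by independently rolling out $\pi_{\exp}(f^t, h)$. Conditioning on the history, Hoeffding's inequality yields that, for any $\epsilon > 0$,
\[
\Pr\!\left(\bigl|\tfrac{1}{\nb}\sum_{i=1}^{\nb} l(f, \zeta_{i,h}^t) - \EE_{\pi_{\exp}(f^t,h)} l(f, \zeta_h)\bigr| > \epsilon\right) \le 2\exp\!\left(-\tfrac{\nb\,\epsilon^2}{2c^2 A^2}\right).
\]
Taking a union bound over $(t,h,\pi,g) \in [T]\times[H]\times \Pi \times \cG$, which has cardinality $TH|\Pi||\cG|$, and solving for $\epsilon$ by setting the right-hand side equal to $\delta/(TH|\Pi||\cG|)$, I obtain
\[
\epsilon \;\lesssim\; A\sqrt{\frac{\log(|\Pi||\cG|HT/\delta)}{\nb}} \;=\; \sqrt{\frac{A^2 \iota}{\nb}}.
\]
This is precisely the stated bound (up to an absolute constant absorbed into $\iota$).

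There is no serious obstacle: the only thing to verify carefully is that, within a single episode $t$, the rollouts producing $\cD_h^t$ are genuinely i.i.d.\ conditional on the past, and that the random hypothesis $f^t$ does not interfere with the Hoeffding step since the supremum is taken over $\cH$ after the union bound. The finiteness of $\cH$ in the statement lets us avoid covering/bracketing arguments; for the infinite-class extension one would simply replace $\log(|\Pi||\cG|)$ with an $L^\infty$ covering number, as noted in Appendix~\ref{appendix:log:H}.
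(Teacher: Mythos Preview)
Your proposal is correct and matches the paper's own proof, which simply states that the lemma ``is implied by Azuma-Hoeffding inequality and the union bound argument.'' Your observation that the $\nb$ rollouts are i.i.d.\ conditional on the history (so plain Hoeffding suffices) and your explicit boundedness check on $l$ are exactly the details the paper leaves implicit.
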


\begin{proof}
    This lemma is implied by Azuma-Hoeffding inequality and the union bound argument.
\end{proof}

\begin{lemma} \label{lem:upper_bound}
    It holds that
    $$
        \begin{aligned}
                 & \inf_p\EE_{S^{t-1}} \EE_{f\sim p} \Big[ \sum_{h=1}^H \Phi^t_h(f) - \gamma \Delta V_{f} + \log \frac{p(f)}{p^0(f)} \Big]
            \\
             &\qquad \le \gamma \epsilon + 2\eta H (t-1) \epsilon^2 + \frac{2A^2H(t-1)  \eta \cdot \iota}{\nb}  + \log(|\cG| |\Pi|).
        \end{aligned}
    $$
\end{lemma}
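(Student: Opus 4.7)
The plan is to upper bound the infimum over distributions $p$ by exhibiting a specific near-optimal choice. Since $\cH = \Pi \times \cG$ is finite and $p^0$ is uniform, a natural candidate is the Dirac distribution supported on an approximately realizable hypothesis $\tilde{f} = (\tilde{\pi}, \tilde{g}) \in \cH$ satisfying $|V_{\tilde{f}} - V^*| \le \epsilon$ and $|\EE_{\pi_{\exp}(f^s,h)} l(\tilde{f}, \zeta_h)| \le \epsilon$ for all $(s,h)$. Under realizability (Assumption~\ref{assu:realizability} combined with Condition 2 of Definition~\ref{def:po:bilinear}), such an $\tilde{f}$ exists, e.g., $\tilde{f} = (\pi^*, g^{\pi^*})$ gives $\epsilon = 0$ via $W_h(\pi^*, g^{\pi^*}) = 0$; we introduce $\epsilon$ only to accommodate approximate realizability.

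The proof will proceed in three steps. First, with $p = \delta_{\tilde{f}}$, the KL-like term collapses to $\log \frac{p(\tilde{f})}{p^0(\tilde{f})} = \log(|\cG||\Pi|)$, and the value gap contributes $-\gamma \Delta V_{\tilde{f}} \le \gamma \epsilon$ by the first $\epsilon$-approximate condition. Second, for each $(s,h)$, I will invoke Lemma~\ref{lem:unic:po} applied to $\tilde{f}$, which yields $|\nb^{-1}\sum_{i=1}^{\nb} l(\tilde{f}, \zeta_{i,h}^s) - \EE_{\pi_{\exp}(f^s,h)}l(\tilde{f}, \zeta_h)| \le \sqrt{A^2\iota/\nb}$ with high probability; combined with the second $\epsilon$-approximate condition and the elementary inequality $(a+b)^2 \le 2a^2 + 2b^2$, this gives
\[
\Big(\nb^{-1}\sum_{i=1}^{\nb} l(\tilde{f}, \zeta_{i,h}^s)\Big)^2 \le 2\epsilon^2 + \tfrac{2A^2\iota}{\nb}.
\]
Summing over $s \in [t-1]$ and $h \in [H]$ then bounds $\sum_h \Phi_h^t(\tilde{f}) \le 2\eta H(t-1)\epsilon^2 + \tfrac{2A^2 H(t-1)\eta \cdot \iota}{\nb}$.

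Third, to convert the high-probability concentration into the expected-value statement in the lemma, I will choose the failure tolerance $\delta$ in Lemma~\ref{lem:unic:po} small enough (polynomially in $1/T$ and $1/H$) so that, since $|l|$ is uniformly bounded by $A$, the contribution from the complement event is absorbed into the $\iota$ factor (as in $\iota = \log(|\cG||\Pi|HT/\delta)$). Adding the four pieces together recovers exactly the claimed bound $\gamma\epsilon + 2\eta H(t-1)\epsilon^2 + \tfrac{2A^2 H(t-1)\eta\cdot\iota}{\nb} + \log(|\cG||\Pi|)$.

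The main technical subtlety, rather than any deep obstacle, is handling the squared empirical loss in expectation: a naive union bound over $\cH$ gives only a high-probability bound, so one must either (i) calibrate $\delta$ and absorb a $\cO(\delta A^2)$ residual, or (ii) directly exploit the bounded-variance structure of the martingale-like sum $\nb^{-1}\sum_i [l(\tilde{f}, \zeta_{i,h}^s) - \EE l(\tilde{f}, \zeta_h)]$ to control its squared expectation by $A^2/\nb$ (without a union bound over $\cH$, which is unnecessary once $\tilde{f}$ is fixed). Approach (ii) is cleaner and sidesteps the uniformity issue, yielding the desired in-expectation guarantee directly.
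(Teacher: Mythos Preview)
Your proposal is correct and follows essentially the same strategy as the paper: upper bound the infimum by plugging in a near-realizable hypothesis, use Lemma~\ref{lem:unic:po} together with $(a+b)^2 \le 2a^2 + 2b^2$ to control $\Phi_h^t$, and collect the $\log(|\cG||\Pi|)$ from the prior. The paper instead takes $p(f) = p^0(f)\,\mathbf{1}(f \in \cH(\epsilon))/p^0(\cH(\epsilon))$ rather than your Dirac $\delta_{\tilde f}$, but with a uniform prior on a finite class these yield the same $\log(|\cG||\Pi|)$ term; and the paper applies the concentration uniformly over $\cH$ \emph{before} specializing $p$, whereas you fix $\tilde f$ first --- your ordering is slightly cleaner since, as you note in approach~(ii), no union bound over $\cH$ is strictly needed once $\tilde f$ is fixed.
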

\begin{proof}
    For arbitrary $p$ over $\cF$, we can derive that
    $$
        \begin{aligned}
            & \EE_{S^{t-1}} \EE_{f\sim p} \Big[ \sum_{h=1}^H \Phi^t_h(f) - \gamma \Delta V_{f} + \log \frac{p(f)}{p^0(f)} \Big]
            \\
            & \qquad = \EE_{S^{t-1}}\EE_{f\sim p} \Big[-\gamma \Delta V_{f} + \sum_{h=1}^H \sum_{s=1}^{t-1} \eta L_h^s(f) + \log \frac{p(f)}{p^0(f)}\Big]
            \\
            & \qquad \lesssim \EE_{S^{t-1}} \EE_{f\sim p} \Big[-\gamma \Delta V_{f} + \sum_{h=1}^H \sum_{s=1}^{t-1} 2\eta (\EE_{\pi_{\exp(f^s,h)}}l(f, \zeta_h))^2 +  \frac{2A^2H(t-1) \eta \cdot \iota}{\nb} + \log \frac{p(f)}{p^0(f)}\Big],
        \end{aligned}
    $$
    where we omit the failure probability because we can take $\delta = \frac{1}{\text{poly}(K)}$ to make it non-dominating. We first define the set $\cH(\epsilon) = \{f \in \cH: V^* - V_f \le \epsilon; \sup_{h,f, \zeta_h} | l(f, \zeta_h) | \le \epsilon \}$, where $\zeta_h = (\bar{z}_h, a_h, r_h, o_{h+1})$. Then, by realizability assumption, $\cH(\epsilon)$ is not empty. Since $p$ is arbitrary, we can take $p(f) = p^0(f) I(f \in \cH(\epsilon))/p^0(\cH(\epsilon))$ to obtain that 
    $$
        \begin{aligned}
                 & \EE_{S^{t-1}} \EE_{f\sim p} \left[ \sum_{h=1}^H \Phi^t_h(f) - \gamma \Delta V_{f} + \log \frac{p(f)}{p^0(f)} \right]
            \\
            & \qquad \leq  \gamma \epsilon + 2\eta H (t-1) \epsilon^2 + \frac{2A^2 H(t-1)\eta \cdot \iota}{\nb} + \log(|\cG| |\Pi|),
        \end{aligned}
    $$
    where we use $- \log p^0(\cH(\epsilon)) \leq \log(|\cG| |\Pi|)$ for any $\epsilon > 0$, and this concludes the proof of Lemma~\ref{lem:upper_bound}.
\end{proof}

Equipped with these two lemmas, we are ready to present the proof of Theorem~\ref{thm:pobilinear}.

\begin{proof}[Proof of Theorem~\ref{thm:pobilinear}]
We first note that by the definition of our generalized eluder coefficient (cf. Lemma~\ref{lemma:po:bilinear}), we have
$$
\begin{aligned}
    \sum_{t = 1}^T V_{f^t}(x_1) - V^{\pi_{f^t}}(x_1) &\le \Big[d \sum_{h=1}^H \sum_{t=1}^T \Big( \sum_{s=1}^{t-1} \big( \EE_{{\pi}_{\mathrm{exp}}(f^s,h)} l(f^t, \zeta_h) \big)^2 \Big)\Big]^{1/2} + 2\min\{d, HT\} + \epsilon' HT\\
    &\leq  \mu \sum_{h=1}^H \sum_{t=1}^T \Big( \sum_{s=1}^{t-1} \big(\EE_{{\pi}_{\mathrm{exp}}(f^s,h)} l(f^t, \zeta_h) \big)^2 \Big) + \frac{d}{4\mu} + 2\min\{d, HT\} + \epsilon' HT, 
\end{aligned}
$$
where $l$ is defined in \eqref{eq:def:loss:pobilinear}. Choosing $\mu = \eta/(2\gamma)$ gives 
\begin{equation}
 \begin{aligned} \label{eq:9872} 
& \gamma \EE_{S^{t-1}} \EE_{f \sim p^t}   [V^* - V^{\pi_{f}}] \\
&\qquad \leq -\gamma \EE_{S^{t-1}} \EE_{f \sim p^t} \Delta V_{f} + 0.5\eta \sum_{h=1}^H \sum_{s=1}^{t-1} \EE_{S^{t-1}} \EE_{f \sim p^t} \left( \EE_{\pi_{\exp(f^s, h)}} l(f, \zeta^s_{h}) \right)^2\\
& \qquad \qquad + \frac{\gamma^2d}{2\eta} + \gamma H T \epsilon' + 2 \gamma \min\{d, HT\} \\
& \qquad \lesssim \mathbb{E}_{S^{t-1}} \mathbb{E}_{f \sim p^t}\left(\sum_{h=1}^{H} \Phi_h^{t}(f)-\gamma \Delta V_{f}\left(x_1\right)+\log \frac{p^t(f)}{p^0(f)}\right)+ \frac{A^2H(t-1) \eta \cdot \iota}{\nb} \\
& \qquad \qquad + \frac{\gamma^2d}{2\eta} + \gamma H T \epsilon' + 2 \gamma \min\{d, HT\} ,
\end{aligned}
\end{equation}
where the last inequality uses Lemma~\ref{lem:unic:po}. Meanwhile, we have
\begin{equation}
    \begin{aligned} \label{eq:9873}
& \mathbb{E}_{S^{t-1}} \mathbb{E}_{f \sim p^t}\left(\sum_{h=1}^{H} \Phi_h^{t}(f)-\gamma \Delta V_{f}\left(x_1\right)+\log \frac{p^t(f)}{p^0(f)}\right)+ \frac{A^2H(t-1) \eta \cdot \iota}{\nb} \\
& \qquad = \mathbb{E}_{S^{t-1}} \inf _{p} \mathbb{E}_{f \sim p}\left(\sum_{h=1}^{H} \Phi_{h}^{t}(f)-\gamma \Delta V_{f}\left(x_1\right)+\log \frac{p(f)}{p^0(f)}\right)+ \frac{A^2H(t-1) \eta \cdot \iota}{\nb} \\
& \qquad \le \gamma \epsilon + 2\eta H (t-1) \epsilon^2 +  \frac{3A^2H(t-1) \eta \cdot \iota}{\nb} + \log(|\cG| |\Pi|),
    \end{aligned}
\end{equation}
where the last inequality uses Lemma~\ref{lem:upper_bound}. Combining \eqref{eq:9872} and \eqref{eq:9873}, taking summation over $t \in [T]$, and multiplying by $H \nb$, we obtain that
\$
\EE [\reg(K)] &\lesssim K\epsilon + 2\eta H T K \epsilon^2 + \epsilon'HK + 2\min\{d, HT\} \cdot H \nb  \\
& \qquad + \frac{\gamma H \nb d}{2\eta}  + \frac{3 A^2H^2 T^2 \eta \cdot  \iota}{\gamma} + \frac{K}{\gamma}\log(|\cG| |\Pi|).
\$
Then, we choose $\epsilon = \epsilon'= 1/(HK)$, $\nb =(dH/(A^2\iota))^{-1/3}K^{2/3}$, $T = (d/(H^2 A^2\iota))^{1/3}K^{1/3}$, $\gamma = \eta = K^{1/3} \log(|\cG| |\Pi|)$. It follows that 
$$
\EE[\reg(K)] \leq \cO( (dAHK)^{2/3}\cdot \iota^{1/3}),
$$
which concludes the proof of Theorem~\ref{thm:pobilinear}.
\end{proof}

\section{Interpretation of Results} \label{appendix:result}

\subsection{Discussion on $\log |\cH|$} \label{appendix:log:H}

\paragraph{Value-based Hypothesis.}
Theorems~\ref{thm:be}, \ref{thm:bilinear} and \ref{thm:pobilinear} focus on the finite hypothesis class case and the final results depend on $\log|\cH|$. Similar to \citet{jin2021bellman,du2021bilinear,uehara2022provably}, we can extend these results to the infinite hypothesis setting and replace $\log|\cH|$ by log-covering number \citep{jin2021bellman} or generalization gap \citep{du2021bilinear,uehara2022provably}.

\paragraph{Model-based Hypothesis.}
Our results in Theorems~\ref{thm:mdp} and \ref{thm:psr} depend on $\log|\cH|$ if $\cH$ is a finite model class, and they can be easily extended to the case where $\cH$ contains all possible models. Specifically, we can choose $\bar{\cH}$ as an $\varepsilon$-bracket \citep{zhan2022pac} for $\PP_{f}^{\pi}(\tau_H)$ where $\varepsilon$ is a sufficiently small quantity such as $1/T$. Then we can assume $f^* \in \bar{\cH}$ because the misspecification error is small (regret error is a constant). Hence, our regret can only depend on $\log|\bar{\cH}|$ --the logarithmic $\varepsilon$-bracket number. As shown by \citet[][Appendix D]{zhan2022pac}, the logarithmic $\varepsilon$-bracket number scales polynomially with respect to problem parameters, which further implies we can obtain the desired regret bound. We also remark \citet{liu2022partially} adopt a similar optimistic $\varepsilon$-discretization argument in the context of weakly revealing POMDPs.

\subsection{Discussion on $\delta_{\mathrm{PSR}}$} \label{appendix:delta}

Recall the definition of $\delta_{\psr}$ in \eqref{eq:def:delta}:
\$
\delta_{\psr} = \max_{h \in [H]} \min \big\{ \|\KK_h\|_1 \cdot \|\VV_h\|_1 \mid \bar{\DD}_h = \KK_h \VV_h, \KK_h \in \RR^{|\cU_{h+1}| \times \rank(\bar{\DD}_h)}, \VV_h \in \RR^{ \rank(\bar{\DD}_h) \times |\cO|^{h}|\cA|^h } \big\}.
\$

\begin{lemma} \label{lemma:delta:pomdp}
    For weakly revealing POMDPs, latent MDPs, and decodable POMDPs, we have $\delta_{\psr} \le U_A$.
\end{lemma}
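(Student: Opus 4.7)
The plan is to exhibit, for each of the three model classes, an explicit factorization $\bar{\DD}_h = \KK_h \VV_h$ whose inner dimension is at most $\rank(\bar{\DD}_h)$ and for which one can show $\|\KK_h\|_1 \le U_A$ and $\|\VV_h\|_1 \le 1$. The key structural observation that drives all three cases is that every test $t \in \cU_{h+1}$ has the form $t = (\bfo, \bfa)$ with $\bfa \in \cU_{A,h+1}$, so for any fixed underlying ``state'' $s$ the conditional $\sum_{\bfo} \PP(\bfo \mid s, \mathrm{do}(\bfa)) = 1$. Summing over all tests in $\cU_{h+1}$ thus groups them into at most $|\cU_{A,h+1}| \le U_A$ action blocks, each contributing $1$, so any conditional ``test-from-state'' column will have $\ell_1$ norm at most $U_A$.

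For weakly revealing POMDPs, I would take $\KK_h = \MM_{h+1} \in \RR^{|\cU_{h+1}| \times S}$, with entries $[\MM_{h+1}]_{t,s} = \PP(t \mid s)$, and $\VV_h \in \RR^{S \times |\cO|^h|\cA|^h}$ with the $\tau_h$-th column equal to the belief $b(\cdot \mid \tau_h) \in \Delta_{\cS}$. Then $\bar{\DD}_h = \KK_h \VV_h$ by the Markov decomposition $\PP(t \mid \tau_h) = \sum_s \PP(t \mid s) b(s \mid \tau_h)$, the inner dimension is $S \ge \rank(\bar{\DD}_h)$, and $\|\VV_h\|_1 \le 1$ because columns are probability vectors, while $\|\KK_h\|_1 = \max_s \sum_{t \in \cU_{h+1}} \PP(t \mid s) \le U_A$ by the grouping above. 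For the multi-step weakly revealing case one replaces $\MM_{h+1}$ by the corresponding $m$-step emission block but the bound is identical. If the inner dimension $S$ exceeds $\rank(\bar{\DD}_h)$, I would thin out zero singular directions without increasing either $\ell_1$ norm (e.g.\ by deleting redundant columns of $\KK_h$ and the corresponding rows of $\VV_h$), since shrinking the inner dimension to the rank is what the minimization in the definition of $\delta_{\psr}$ actually allows.

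For latent MDPs under the full-rank test assumption, the argument reduces to the POMDP case by the equivalence constructed in Appendix \ref{appendix:latent:mdp}: view $\cM$ as a POMDP with augmented state $(s,m) \in \cS \times [M]$ and observation equal to the first coordinate. The emission-over-tests matrix at step $h+1$ then plays the role of $\MM_{h+1}$, its columns still sum (over tests grouped by action sequence) to at most $U_A$, and the belief over augmented states is still a probability vector, so the same factorization works verbatim. For decodable POMDPs I would instead use the decoder directly: define $\KK_h$ indexed by tests and by the decoded state $\phi^*_h(z_h)$ (or length-$m$ context), with entries $\PP(t \mid s)$, and let $\VV_h$ be the one-hot indicator mapping each history to its decoded context. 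Since each column of $\VV_h$ is a standard basis vector, $\|\VV_h\|_1 = 1$, and $\|\KK_h\|_1 \le U_A$ by the same test-grouping argument.

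The main obstacle is not any single case but the bookkeeping around the inner-dimension constraint in the definition of $\delta_{\psr}$, which demands that the factorization have inner dimension exactly $\rank(\bar{\DD}_h)$. In each class the natural factorization has inner dimension $S$, $SM$, or the number of decoded contexts, which may exceed the PSR rank. The clean fix is to post-compose with a rank-revealing projection: if $\bar{\DD}_h = \KK_h \VV_h$ with $\KK_h$ having linearly dependent columns, replace $\KK_h$ by a column-subset basis and $\VV_h$ by the corresponding coefficient matrix, noting that for stochastic/belief-type $\VV_h$ this can be done while preserving $\|\VV_h\|_1 \le 1$ and $\|\KK_h\|_1 \le U_A$ (e.g.\ using the fact that beliefs lie in a simplex and columns of $\KK_h$ are mixtures of non-negative vectors). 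Verifying this step carefully in each model class is the only delicate part; once it is done, the lemma follows directly from the two norm bounds and the definition of $\delta_{\psr}$.
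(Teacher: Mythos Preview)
Your core argument is exactly the paper's: factorize $\bar{\DD}_h$ through the latent state by setting
\[
\KK_h = [\PP(t_{h+1}\mid s_{h+1})]_{(t_{h+1},s_{h+1})\in \cU_{h+1}\times\cS},\qquad
\VV_h = [\PP(s_{h+1}\mid \tau_h)]_{(s_{h+1},\tau_h)\in\cS\times(\cO\times\cA)^h},
\]
then use your action-sequence grouping observation to get $\|\KK_h\|_1\le U_A$ and the fact that columns of $\VV_h$ are beliefs to get $\|\VV_h\|_1\le 1$. The paper runs this single argument once and lets it cover all three POMDP-type models (latent MDPs via the augmented-state reduction, decodable POMDPs because they are still POMDPs with a hidden state), without writing out the separate decoder-based factorization you sketch for the decodable case.

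On the inner-dimension constraint you flag as ``the main obstacle'': the paper simply ignores it. It presents the $S$-dimensional (resp.\ $SM$-dimensional) factorization and concludes $\delta_{\psr}\le U_A$ with no further comment. Your proposed fix---pass to a column-subset basis of $\KK_h$ and reabsorb the deleted columns into $\VV_h$ while ``preserving $\|\VV_h\|_1\le1$''---is not obviously true as stated (absorbing convex combinations into coefficients can increase the column $\ell_1$ norm), and you do not actually verify it. So on this point you have identified a wrinkle the paper glosses over, but you have not closed it either. In practice it is harmless: $\delta_{\psr}$ enters the final bounds only through the logarithmic factor $\iota$ in Lemma~\ref{lemma:general:regular:psr}, and tracing the proof shows that replacing the inner dimension $d_{\psr,h}$ by $S$ merely swaps $d_{\psr}$ for $S$ inside a logarithm, which changes nothing at the level of the stated results. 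If you want a clean statement that matches the paper, just exhibit the state-space factorization and stop; if you want to be fully rigorous about the constraint in \eqref{eq:def:delta}, you would need a different argument than the one you sketched.
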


\begin{proof}[Proof of Theorem \ref{lemma:delta:pomdp}]
    For any $h \in [H]$, we can factorize $\bar{\DD}_h = [\PP(t_{h+1} \mid \tau_h)]_{(t_{h+1}, \tau_h) \in \cU_{h+1} \times (\cO \times \cA)^h}$ as
    \$
    \bar{\DD}_h = \underbrace{[\PP(t_{h+1} \mid s_{h+1})]_{(t_{h+1}, s_{h+1}) \in \cU_{h+1} \times \cS}}_{\KK_{h}} \times \underbrace{[\PP(s_{h+1} \mid \tau_{h})]_{(s_{h+1}, \tau_h) \in \cS \times (\cO \times \cA)^h}}_{\VV_h}.
    \$
    For any $s_{h+1}$, we have $\| [\PP(t_{h+1} \mid s_{h+1})]_{t_{h+1} \in \cU_{h+1}} \|_1 = \sum_{t_{h+1}} \PP(t_{h+1} \mid s_{h+1}) \le U_A$, which implies that $\|\KK_h\|_1 \le U_A$. Similarly, we can obtain $\|\VV_h\|_1 \le 1$. Hence, we have $\delta_{\psr} \le U_A$, which concludes the proof of Lemma~\ref{lemma:delta:pomdp}.
\end{proof}

\begin{lemma} \label{lemma:delta:linear-pomdp}
    For low-rank POMDPs, we have $\delta_{\psr} \le U_Aq$, where $q=\sup_{h\in[H]}\Vert\int \psi_h(s_{h+1})\ud s_{h+1}\Vert_{\infty}$.
\end{lemma}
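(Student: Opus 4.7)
The plan is to exhibit an explicit rank-$d$ factorization $\bar{\DD}_h = \KK_h \VV_h$ tailored to the low-rank structure and then control $\|\KK_h\|_1$ and $\|\VV_h\|_1$ separately. I would start from the Markov decomposition
\$
\PP(t_{h+1} \mid \tau_h) = \int_{\cS} \PP(t_{h+1} \mid s_{h+1}) \, \PP(s_{h+1} \mid \tau_h) \, \ud s_{h+1},
\$
where $\PP(t_{h+1} \mid s_{h+1})$ is computed under $\mathrm{do}(a_{h+1:h+m-1})$. One more step of belief propagation together with the low-rank assumption $\PP_h(s_{h+1} \mid s_h, a_h) = \psi_h(s_{h+1})^\top \phi_h(s_h, a_h)$ gives $\PP(s_{h+1} \mid \tau_h) = \psi_h(s_{h+1})^\top \beta_h(\tau_h)$, where $\beta_h(\tau_h) := \int \phi_h(s_h, a_h)\, \PP(s_h \mid \tau_{h-1}, o_h)\, \ud s_h \in \RR^d$. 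This yields the bilinear form $\PP(t_{h+1}\mid\tau_h) = \alpha_h(t_{h+1})^\top \beta_h(\tau_h)$ with $\alpha_h(t_{h+1}) := \int \PP(t_{h+1}\mid s_{h+1})\, \psi_h(s_{h+1})\, \ud s_{h+1}$. Taking $\KK_h$ to be the $|\cU_{h+1}| \times d$ matrix with rows $\alpha_h(t_{h+1})^\top$ and $\VV_h$ to be the $d \times |\cO|^h|\cA|^h$ matrix with columns $\beta_h(\tau_h)$ immediately gives $\bar{\DD}_h = \KK_h\VV_h$ with inner dimension $d$.

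Next I would bound each factor. For $\|\VV_h\|_1$, using that $\phi_h(s_h,a_h) \in \Delta_{[d]}$ so $\|\phi_h(s_h,a_h)\|_1 = 1$, each column obeys
\$
\|\beta_h(\tau_h)\|_1 \le \int \|\phi_h(s_h,a_h)\|_1 \, \PP(s_h \mid \tau_{h-1},o_h)\,\ud s_h = 1,
\$
hence $\|\VV_h\|_1 \le 1$. For $\|\KK_h\|_1$, which equals the maximum $\ell_1$ norm over columns indexed by $j \in [d]$, the non-negativity of $\PP(t_{h+1}\mid s_{h+1})$ and $[\psi_h]_j$ lets me drop absolute values and swap the sum and integral:
\$
\|\KK_h\|_1 = \max_{j\in[d]} \int \Big(\sum_{t_{h+1}\in \cU_{h+1}} \PP(t_{h+1}\mid s_{h+1})\Big) [\psi_h(s_{h+1})]_j\,\ud s_{h+1}.
\$
The inner sum is $\sum_{a_{h+1:h+m-1}\in\cU_{A,h+1}}\sum_{o_{h+1:h+m}} \PP(o_{h+1:h+m}\mid s_{h+1},\mathrm{do}(a_{h+1:h+m-1})) = |\cU_{A,h+1}| \le U_A$, so $\|\KK_h\|_1 \le U_A \cdot \|\int \psi_h(s_{h+1})\,\ud s_{h+1}\|_\infty \le U_A q$.

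Combining the two bounds gives $\|\KK_h\|_1 \cdot \|\VV_h\|_1 \le U_A q$ for every $h$, and since the inner dimension is $d = \mathrm{rank}(\bar{\DD}_h)$ by construction, this factorization is admissible in the definition \eqref{eq:def:delta} of $\delta_{\psr}$, so taking the minimum yields $\delta_{\psr} \le U_A q$. The computation is essentially routine; the only subtle point worth double-checking is the belief-propagation step, namely that $a_h$ being part of the history $\tau_h$ allows us to write $\PP(s_h\mid\tau_h) = \PP(s_h\mid\tau_{h-1},o_h)$ (since the policy choosing $a_h$ adds no information about $s_h$ given $(\tau_{h-1},o_h)$) so that the action is absorbed into $\phi_h(s_h,a_h)$ cleanly.
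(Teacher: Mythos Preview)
Your proof is correct and follows essentially the same approach as the paper: both construct the rank-$d$ factorization $\bar{\DD}_h = \KK_h \VV_h$ by routing through the latent state $s_{h+1}$ via the low-rank transition $\PP_h(s_{h+1}\mid s_h,a_h) = \psi_h(s_{h+1})^\top \phi_h(s_h,a_h)$, and then bound one factor by $U_A q$ (using nonnegativity to swap sum and integral, then $\sum_{t_{h+1}\in\cU_{h+1}}\PP(t_{h+1}\mid s_{h+1}) \le U_A$ and $\|\int\psi_h\|_\infty \le q$) and the other by $1$ (using $\phi_h(s_h,a_h)\in\Delta_{[d]}$). The only minor slip is the phrase ``$d = \mathrm{rank}(\bar{\DD}_h)$ by construction'': the factorization only yields $\mathrm{rank}(\bar{\DD}_h)\le d$, with equality coming from the future-sufficiency assumption (the paper asserts $d_{\psr}=d$ separately in Corollary~\ref{cor:lr}), but this does not affect the bound.
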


\begin{proof}[Proof of Theorem \ref{lemma:delta:linear-pomdp}]
    For any $h \in [H]$, we have
    \$
    &\int_{\tau_h}\PP(t_{h+1}\mid \tau_h) f(\tau_h)\ud\tau_h=\int_{\tau_h}\int_{s_h}\int_{s_{h+1}}\PP(t_{h+1},s_{h+1},s_h\mid \tau_h) f(\tau_h)\ud\tau_h\ud s_h\ud s_{h+1}\\
    &\qquad=\int_{\tau_h}\int_{s_h}\int_{s_{h+1}}\PP(t_{h+1}\mid s_{h+1})\PP(s_{h+1}\mid s_h,a_h)\PP(s_h\mid \tau_h) f(\tau_h)\ud\tau_h\ud s_h\ud s_{h+1}.\nonumber\\
    &\qquad =\Bigl(\int_{s_{h+1}}\PP(t_{h+1}\mid s_{h+1})\psi_h(s_{h+1})\ud s_{h+1}\Bigr)^\top \Bigl(\int_{\tau_h}\int_{s_h}\phi_h(s_h,a_h)\PP(s_h\mid \tau_h) f(\tau_h)\ud\tau_h\ud s_h\Bigr).\nonumber
    \$
    Therefore, when we define $\KK_h:L^1((\cO\times\cA)^h)\to \RR^d$ and $\VV_h:\RR^d\to L^1(\cU_{h+1})$ as
    \$
    \left\{ \begin{array}{l}
     (\VV_h \bm{v})(t_{h+1})=\Bigl(\int_{s_{h+1}}\PP(t_{h+1}\mid s_{h+1})\psi_h(s_{h+1})\ud s_{h+1}\Bigr)^\top\bm{v}, \\
     \KK_h f=\int_{\tau_h}\int_{s_h}\phi_h(s_h,a_h)\PP(s_h\mid \tau_h) f(\tau_h)\ud\tau_h\ud s_h,      \end{array} \right. 
    \$
    we have $\bar{\DD}_h=\KK_h\circ \VV_h$. For any $\bm{v}\in\RR^d$, we have
    \#
    \int_{t_{h+1}}\vert(\VV_h \bm{v})(t_{h+1})\vert \ud t_{h+1}&=\int_{t_{h+1}}\biggl\vert\Bigl(\int_{s_{h+1}}\PP(t_{h+1}\mid s_{h+1})\psi_h(s_{h+1})\ud s_{h+1}\Bigr)^\top\bm{v}\biggr\vert \ud t_{h+1}\nonumber\\
    &\leq \int_{t_{h+1}}\int_{s_{h+1}}\PP(t_{h+1}\mid s_{h+1})\vert \psi_h(s_{h+1})^\top\bm{v}\vert \ud s_{h+1} \ud t_{h+1}\nonumber\\
    &\leq U_A \cdot \int_{s_{h+1}}\vert \psi_h(s_{h+1})^\top\bm{v}\vert \ud s_{h+1}.\nonumber
    \#
    Since $\Vert\int \psi_h(s_{h+1})\ud s_{h+1}\Vert_{\infty}\leq q$, we have 
    \#
    \int_{s_{h+1}}\vert \psi_h(s_{h+1})^\top\bm{v}\vert \ud s_{h+1}\leq q \cdot \Vert\bm{v}\Vert_1,\nonumber
    \#
     which implies $\Vert\VV_h\Vert_{1\to 1}\leq U_Aq$. Similarly, we have $\Vert\KK_h\Vert_{1\to 1}\leq 1$. Hence, we have $\delta_{\psr} \le U_Aq$, which concludes the proof of Lemma~\ref{lemma:delta:linear-pomdp}.
\end{proof}

\begin{lemma} \label{lemma:delta:regular}
    For $\alpha$-regular PSR, we have $\delta_{\psr} \le U_A^2/\alpha$.
\end{lemma}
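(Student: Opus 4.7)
The plan is to construct an explicit low-rank factorization of $\bar{\DD}_h$ using the core matrix $\KK_h$ guaranteed by the $\alpha$-regularity condition, and then bound each factor's induced $\ell_1$ matrix norm separately. Since $\alpha$-regularity gives $\|\KK_h^\dagger\|_1 \le 1/\alpha$ and we have the minimizing flexibility in the definition of $\delta_{\psr}$, choosing the natural factorization $\bar{\DD}_h = \KK_h \VV_h$ with $\VV_h := \KK_h^\dagger \bar{\DD}_h$ should suffice.

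First, I would verify this factorization: by construction (see equations \eqref{eq:k11}--\eqref{eq:k22}), $\KK_h$ has full column rank $d_{\psr,h}$, so $\KK_h \KK_h^\dagger \KK_h = \KK_h$ and every column of $\bar{\DD}_h$ lies in the column span of $\KK_h$; hence $\KK_h \VV_h = \KK_h \KK_h^\dagger \bar{\DD}_h = \bar{\DD}_h$.

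Next, I would bound the two factors. For $\|\VV_h\|_1$, using submultiplicativity of the induced $\ell_1$-norm, $\|\VV_h\|_1 \le \|\KK_h^\dagger\|_1 \cdot \|\bar{\DD}_h\|_1 \le \|\bar{\DD}_h\|_1/\alpha$ by Definition~\ref{def:regular:psr}. Then $\|\bar{\DD}_h\|_1$ is the maximum over histories $\tau_h$ of $\sum_{t \in \cU_{h+1}} \PP(t \mid \tau_h)$. Since tests in $\cU_{h+1}$ are indexed by pairs of observation sequences and action sequences in $\cU_{A,h+1}$, and for each fixed action sequence $\mathbf{a} \in \cU_{A,h+1}$ we have $\sum_{\mathbf{o}} \PP((\mathbf{o},\mathbf{a}) \mid \tau_h) \le 1$, we get $\|\bar{\DD}_h\|_1 \le |\cU_{A,h+1}| \le U_A$, so $\|\VV_h\|_1 \le U_A/\alpha$. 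For $\|\KK_h\|_1$, the columns of $\KK_h$ are predictive states $\bar{\bfq}(\tau_h^i) = [\PP(t \mid \tau_h^i)]_{t \in \cU_{h+1}}$, so the same argument yields $\|\KK_h\|_1 \le U_A$.

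Combining these bounds gives $\|\KK_h\|_1 \cdot \|\VV_h\|_1 \le U_A^2/\alpha$, and since $\delta_{\psr}$ takes the minimum over all valid factorizations and then the maximum over $h$, we conclude $\delta_{\psr} \le U_A^2/\alpha$. The proof is essentially a one-liner given the right factorization, and no step appears to be an obstacle; the only subtlety is confirming that $\sum_{t \in \cU_{h+1}} \PP(t \mid \tau_h) \le U_A$, which follows because $\cU_{h+1}$ is a test set whose action sequences form $\cU_{A,h+1}$ and conditional observation probabilities sum to one for each fixed action sequence.
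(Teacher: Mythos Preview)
Your proposal is correct and follows essentially the same approach as the paper's proof: both use the core matrix $\KK_h$ from the $\alpha$-regularity definition, set $\VV_h = \KK_h^\dagger \bar{\DD}_h$, bound $\|\VV_h\|_1 \le \|\KK_h^\dagger\|_1 \cdot \|\bar{\DD}_h\|_1 \le U_A/\alpha$ and $\|\KK_h\|_1 \le U_A$ via the predictive-state column interpretation, and combine. Your verification that the factorization is valid and your justification of $\sum_{t \in \cU_{h+1}} \PP(t \mid \tau_h) \le U_A$ are slightly more explicit than the paper's, but the argument is the same.
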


\begin{proof}[Proof of Lemma \ref{lemma:delta:regular}]
    Suppose $\bar{\DD}_h = \KK_h \VV_h$ with $\KK_h \in \RR^{|\cU_{h+1}| \times \rank(\bar{\DD}_h)}, \VV_h \in \RR^{ \rank(\bar{\DD}_h) \times |\cO|^{h}|\cA|^h }$ and $\|\KK_{h}^\dagger\|_1 \le 1/\alpha$, we have
    \# \label{eq:11112}
    \| \KK_{h}\|_1 \cdot \|\VV_h\|_1 = \| \KK_{h}\|_1 \cdot \| \KK_{h}^{\dagger} \bar{\DD}_h \|_1 \le  \| \KK_{h}\|_1 \cdot \| \KK_{h}^{\dagger}\|_1 \cdot \| \bar{\DD}_h \|_1 \le \frac{\| \KK_{h}\|_1 \cdot \| \bar{\DD}_h \|_1}{\alpha},
    \#
    where the last inequality uses $\|\KK_{h}^\dagger\|_1 \le 1/\alpha$. Furthermore, for any column of $\bar{\DD}_h$ indexed by $\tau_{h}$, we denote it by $[\bar{\DD}_h]_{\tau_{h}}$. Then we have
    \$
    \|[\bar{\DD}_h]_{\tau_{h}}\|_1 = \sum_{u \in \cU_{h+1}} \PP(u \mid \tau_{h}) \le U_A,
    \$
    which implies that $\|\bar{\DD}_h\|_1 \le U_A$. Similarly, we have $\|\KK_h\|_1 \le U_A$. Together with \eqref{eq:11112}, we conclude the proof of Lemma~\ref{lemma:delta:regular}.
\end{proof}

\subsection{Corollaries of Theorem \ref{thm:general:psr}} \label{appendix:cor}

\begin{corollary}[Weakly Revealing POMDP] \label{cor:weak:revealing}
    For $m$-step $\alpha$-weakly revealing POMDPs, the regret of Algorithm~\ref{alg:psr} satisfies
    \$
     \EE[\reg(T)] \le \cO \biggl( \frac{S A^{2m - 1} H \sqrt{S AT \big(H + \omega(HT, p^0) \big) \cdot \iota}}{\alpha^2} \biggr),
    \$
    where $\iota = \cO\Big( \log\Big(1 + \frac{4  S^3 A^{4m - 2}   T }{ \alpha^4}\Big)\Big)$.
\end{corollary}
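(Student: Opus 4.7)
The plan is to derive this regret bound as a direct consequence of the general regret bound for generalized regular PSRs established in Theorem~\ref{thm:general:psr}, by instantiating the problem-dependent parameters $(d_{\psr}, U_A, \alpha, \delta_{\psr})$ to the values appropriate for $m$-step weakly revealing POMDPs. No new analysis of the posterior sampling algorithm is needed; the entire argument is a chain of parameter substitutions into a result already proven.

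First, I would invoke Lemma~\ref{lemma:weak:reveal}, which states that any $m$-step $\alpha$-weakly revealing POMDP is an $(\alpha/\sqrt{S})$-generalized regular PSR. This immediately lets us apply Theorem~\ref{thm:general:psr} with the effective regularity parameter $\alpha' = \alpha/\sqrt{S}$. Next, I would identify the core test set that we use for weakly revealing POMDPs: by the construction in Appendix~\ref{appendix:weakly:revealing}, the core test set is chosen as $\cU_h = (\cO \times \cA)^{\min\{m-1,H-h\}} \times \cO$, which gives $U_A = \max_h |\cU_{A,h}| \le A^{m-1}$. The PSR rank is bounded by the number of latent states, so $d_{\psr} \le S$. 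Finally, Lemma~\ref{lemma:delta:pomdp} yields $\delta_{\psr} \le U_A \le A^{m-1}$.

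With these substitutions in hand, I would plug $(d_{\psr}, U_A, \alpha', \delta_{\psr}) = (S, A^{m-1}, \alpha/\sqrt{S}, A^{m-1})$ into Theorem~\ref{thm:general:psr}. For the regret bound:
\$
\EE[\reg(T)] \le \cO\biggl(\frac{A \cdot U_A^2 \cdot H \sqrt{d_{\psr} \cdot AT(H + \omega(HT, p^0)) \cdot \iota}}{\alpha'^2}\biggr) = \cO\biggl(\frac{S A^{2m-1} H \sqrt{S AT(H+\omega(HT,p^0)) \cdot \iota}}{\alpha^2}\biggr),
\$
where the factor $S$ in the numerator arises from $1/\alpha'^2 = S/\alpha^2$, and the factor $A^{2m-1}$ comes from $A \cdot U_A^2 \le A \cdot A^{2(m-1)}$. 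For the logarithmic factor, substituting gives
\$
\iota = 2\log\biggl(1 + \frac{4 d_{\psr} A^2 U_A^2 \delta_{\psr}^2 T}{\alpha'^4}\biggr) \le 2\log\biggl(1 + \frac{4 S \cdot A^2 \cdot A^{2(m-1)} \cdot A^{2(m-1)} \cdot S^2 \cdot T}{\alpha^4}\biggr) = \cO\biggl(\log\Bigl(1 + \frac{4 S^3 A^{4m-2} T}{\alpha^4}\Bigr)\biggr),
\$
as claimed.

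Since every step is a routine substitution, there is no substantive obstacle; the only care required is bookkeeping of the exponents on $A$ and $S$. In particular, one should verify that the $U_A$ associated with the specific core test set $\cU_h = (\cO \times \cA)^{\min\{m-1,H-h\}} \times \cO$ is indeed $A^{m-1}$ rather than $A^m$ (the last slot contains only an observation, not an action), and that the $(\alpha/\sqrt{S})$-regularity constant from Lemma~\ref{lemma:weak:reveal} is threaded correctly through all four powers of $\alpha$ in the log term. These are the only points where an off-by-$\sqrt{S}$ or off-by-$A$ error might sneak in.
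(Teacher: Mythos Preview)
Your proposal is correct and follows essentially the same route as the paper: invoke Lemma~\ref{lemma:weak:reveal} for the $(\alpha/\sqrt{S})$-regularity, read off $U_A \le A^{m-1}$ from the core test set construction, use $d_{\psr}\le S$ and Lemma~\ref{lemma:delta:pomdp} for $\delta_{\psr}$, then substitute into Theorem~\ref{thm:general:psr}. Your explicit bookkeeping of the exponents is more detailed than the paper's terse proof, but the argument is identical.
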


\begin{proof}[Proof of Corollary \ref{cor:weak:revealing}]
    By Lemma~\ref{lemma:weak:reveal}, we know any $\alpha$-weakly revealing POMDP is an $(\alpha/\sqrt{S})$-generalized regular PSR. Meanwhile, the choice of the core test set in \eqref{eq:6331} implies $U_A \le A^{m-1}$. Combining these two facts with Theorem~\ref{thm:general:psr} and Lemma~\ref{lemma:delta:pomdp}, we conclude the proof of Corollary~\ref{cor:weak:revealing}.
\end{proof}

\begin{corollary}[Latent MDP]\label{cor:lt}
    For a latent MDP $\cM$ with $\vert \cM\vert=M$, when it satisfies Assumption~\ref{asp:lmgr}, the regret of Algorithm~\ref{alg:psr} satisfies
    \$
    \EE[\reg(T)] \le \cO \biggl( \frac{SMAU_A^2 H \sqrt{SAM T \big(H + \omega(HT, p^0) \big) \cdot \iota}}{\alpha^2} \biggr),
    \$
    where $\iota = \cO\Big( \log\Big(1 + \frac{4  S^3M^3A^2U_A^4   T }{ \alpha^4}\Big)\Big)$. Here $U_A=\max_{h\in[H]}\vert \cU_{A,h}\vert $, and $\cU_{A,h}$ is the set of the action sequences in $\cU_h$.
\end{corollary}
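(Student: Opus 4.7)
The strategy is to reduce the latent MDP to the equivalent POMDP constructed in the proof of Lemma~\ref{lemma:latent:mdp} and then invoke Theorem~\ref{thm:general:psr} with the appropriate problem-dependent parameters. Recall from Appendix~\ref{appendix:latent:mdp} that the equivalent POMDP has augmented state space $\bar{\cS} = \cS \times [M]$ of cardinality $SM$, observation space $\cO = \cS$, and action space $\cA$, and that under Assumption~\ref{asp:lmgr} Lemma~\ref{lemma:latent:mdp} shows it is an $(\alpha/\sqrt{SM})$-generalized regular PSR.

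Having placed the model into the generalized regular PSR framework, the bound in Theorem~\ref{thm:general:psr} gives
\[
\EE[\reg(T)] \;\le\; \cO\!\left( \frac{A U_A^2 H \sqrt{d_{\psr}\cdot AT\bigl(H + \omega(HT, p^0)\bigr)\cdot \iota}}{(\alpha')^2}\right),
\]
with $\alpha' = \alpha/\sqrt{SM}$. I will control the three problem-dependent quantities appearing here one by one. First, the PSR rank satisfies $d_{\psr} \le |\bar{\cS}| = SM$ by the remark following Definition~\ref{def:psr:rank}. Second, since the equivalent POMDP falls into the scope of Lemma~\ref{lemma:delta:pomdp}, we obtain $\delta_{\psr} \le U_A$. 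Third, $U_A$ is simply the maximum number of action sequences appearing in the test sets $\{\cU_h\}_{h\in[H]}$, which we keep as a problem parameter.

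Plugging these into Theorem~\ref{thm:general:psr} with $(\alpha')^{-2} = SM/\alpha^2$ and $(\alpha')^{-4} = S^2M^2/\alpha^4$, the regret bound becomes
\[
\EE[\reg(T)] \;\le\; \cO\!\left(\frac{SM \cdot A U_A^2 H \sqrt{SM \cdot A T\bigl(H + \omega(HT, p^0)\bigr)\cdot \iota}}{\alpha^2}\right),
\]
and the logarithmic factor simplifies to
\[
\iota \;=\; 2\log\!\left(1 + \frac{4\, d_{\psr} A^2 U_A^2 \delta_{\psr}^2 T}{(\alpha')^4}\right) \;\le\; 2\log\!\left(1 + \frac{4\, S^3 M^3 A^2 U_A^4 T}{\alpha^4}\right),
\]
which matches the claim. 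The argument is almost entirely a parameter-matching exercise; the only nontrivial step is ensuring that the reduction in Lemma~\ref{lemma:latent:mdp} preserves the relevant action sequences so that the $U_A$ defined on the latent MDP's test set coincides with the one governing Theorem~\ref{thm:general:psr} applied to the equivalent POMDP, which follows because the reduction leaves the action component of every test unchanged.
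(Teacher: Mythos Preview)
The proposal is correct and follows essentially the same approach as the paper: reduce to the equivalent POMDP via Lemma~\ref{lemma:latent:mdp}, then invoke Theorem~\ref{thm:general:psr} with $\alpha' = \alpha/\sqrt{SM}$, $d_{\psr}\le SM$, and $\delta_{\psr}\le U_A$ from Lemma~\ref{lemma:delta:pomdp}. Your closing remark about the action component of the tests being preserved by the reduction is exactly the paper's observation that the augmented test set $\bar{\cU}_{h+1}=\{(s_h,t_h):s_h\in\cS,\,t_h\in\cU_h\}$ has the same action sequences as $\cU_h$, so $\bar U_A\le U_A$.
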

\begin{proof}[Proof of Corollary \ref{cor:lt}]
    By Lemma~\ref{lemma:latent:mdp}, we know any latent MDP is an $(\alpha/\sqrt{SM})$-generalized regular PSR. We showed in Appendix \ref{appendix:latent:mdp} that we can view the latent MDP as a POMDP with $\vert\bar{\cS}\vert=SM$, which implies that $d_{\psr}=SM$. We also show in Appendix \ref{appendix:latent:mdp} that the latent MDP $\cM$ is a generalized regular PSR with the test $\{\bar{\cU}_h\}_{h=1}^H$. When we define $\bar{\cU}_{A,h}$ as the set of the action sequences in $\bar{\cU}_h$, and define $\bar{U}_A=\max_{h\in[H]}\vert \bar{\cU}_{A,h}\vert$, we have $\bar{U}_A\leq U_A$ by the definition of $\{\bar{\cU}_{h+1}\}_{h=1}^H$ in Appendix \ref{appendix:latent:mdp}. Combining these three facts with Theorem~\ref{thm:general:psr} and Lemma~\ref{lemma:delta:pomdp}, we conclude the proof of Corollary~\ref{cor:lt}.
\end{proof}

\begin{corollary}[Decodable POMDP] \label{cor:decodable}
    For $m$-step decodable POMDPs, the regret of Algorithm~\ref{alg:psr} satisfies
    \$
     \EE[\reg(T)] \le \cO \bigl(  A^{2m - 1} H \sqrt{S AT \big(H + \omega(HT, p^0) \big) \cdot \iota} \bigr),
    \$
    where $\iota =  \cO( \log(1 + 4 S  A^{4m - 2}  T ) )$.
\end{corollary}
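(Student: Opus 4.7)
The plan is to derive the corollary as a direct instantiation of Theorem~\ref{thm:general:psr}, after verifying the hypotheses for the decodable POMDP setting. First, I would invoke Lemma~\ref{lemma:decodable}, which states that any $m$-step decodable POMDP is a $1$-generalized regular PSR. Thus the generalized regularity parameter $\alpha$ in Theorem~\ref{thm:general:psr} may be taken to be $1$, eliminating the $1/\alpha^{2}$ and $1/\alpha^{4}$ factors in that theorem's bound. This is by far the cleanest aspect of the reduction: the optimism-based propagated-error controls in Definition~\ref{def:general:regular:psr} hold trivially with constant $1$ for decodable POMDPs because of the existence of the decoder $\phi^{*}$.

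Next I would identify the problem-dependent parameters entering Theorem~\ref{thm:general:psr}. The PSR rank obeys $d_{\psr}\le S$ since any POMDP on state space $\cS$ has PSR rank at most $|\cS|$ (cf.\ the remark after Definition~\ref{def:psr:rank}). The core test set chosen for decodable POMDPs in \eqref{eq:6431} is $\cU_h=(\cO\times\cA)^{\min\{m-1,H-h\}}\times\cO$, so its action-sequence projection satisfies $|\cU_{A,h}|\le A^{m-1}$ for every $h$, giving $U_A\le A^{m-1}$. For $\delta_{\psr}$, I would apply Lemma~\ref{lemma:delta:pomdp}, which yields $\delta_{\psr}\le U_A\le A^{m-1}$ for any POMDP, and in particular for decodable POMDPs.

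Finally, I would substitute these bounds into the regret expression of Theorem~\ref{thm:general:psr}. With $\alpha=1$, $d_{\psr}\le S$, $U_A\le A^{m-1}$, and $\delta_{\psr}\le A^{m-1}$, the prefactor $AU_A^{2}/\alpha^{2}$ becomes $A\cdot A^{2(m-1)}=A^{2m-1}$, the factor $d_{\psr}\cdot A$ inside the square root becomes $SA$, and the logarithmic factor specializes to
\[
\iota \;=\; 2\log\!\Bigl(1+\tfrac{4\,d_{\psr}A^{2}U_A^{2}\delta_{\psr}^{2}T}{\alpha^{4}}\Bigr)\;\le\; 2\log\!\bigl(1+4SA^{4m-2}T\bigr),
\]
which matches the stated $\iota$ in the corollary. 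Assembling these substitutions gives the advertised
\[
\EE[\reg(T)]\;\le\;\cO\!\bigl(A^{2m-1}H\sqrt{SAT(H+\omega(HT,p^{0}))\cdot\iota}\bigr).
\]

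There is essentially no hard step here: the corollary is just a bookkeeping exercise once Lemma~\ref{lemma:decodable}, Lemma~\ref{lemma:delta:pomdp}, and the explicit form of $\cU_h$ for decodable POMDPs are in hand. The only point worth double-checking is the $\delta_{\psr}$ bound—one must verify that the factorization used in the proof of Lemma~\ref{lemma:delta:pomdp} (via belief-state decomposition) applies verbatim to decodable POMDPs, but since decodable POMDPs are a subclass of POMDPs, this is immediate and requires no additional argument.
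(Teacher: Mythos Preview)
Your proposal is correct and follows exactly the same route as the paper: invoke Lemma~\ref{lemma:decodable} to get $\alpha=1$, read off $U_A\le A^{m-1}$ from the core test set \eqref{eq:6431}, bound $d_{\psr}\le S$ and $\delta_{\psr}\le U_A$ via Lemma~\ref{lemma:delta:pomdp}, and substitute into Theorem~\ref{thm:general:psr}. If anything, your write-up is more explicit than the paper's (which appears to have a typo, citing Lemma~\ref{lemma:decodable} twice where the second reference should be Lemma~\ref{lemma:delta:pomdp}).
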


\begin{proof}[Proof of Corollary \ref{cor:decodable}]
    By Lemma \ref{lemma:decodable}, we know any $m$-step decodable POMDP is a $1$-generalized regular PSR. Meanwhile, the choice of the core test set in \eqref{eq:6431} implies that $U_A \le A^{m-1}$. Combining these two facts with Theorem~\ref{thm:general:psr} and Lemma~\ref{lemma:decodable} finishes the proof of Corollary~\ref{cor:decodable}.
\end{proof}

\begin{corollary}[Low-rank POMDP]\label{cor:lr}
    For a low-rank POMDP that satisfies Assumption \ref{asp:fs35}, the regret of Algorithm~\ref{alg:psr} satisfies
    \$
    \EE[\reg(T)] \le \cO \biggl( \frac{A^{2m - 1} H \sqrt{d A T \big(H + \omega(HT, p^0) \big) \cdot \iota}}{\alpha^2} \biggr),
    \$
    where $\iota = \cO\Big( \log\Big(1 + \frac{4  d^2A^{4m - 2} q^2  T }{ \alpha^4}\Big)\Big)$ and $q=\sup_{s\in\cS,h\in[H],d_0\in [d]}\vert[\psi_h(s)]_{d_0}\vert $. 
\end{corollary}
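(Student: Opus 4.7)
The plan is to derive the corollary as a direct consequence of the general regret bound in Theorem~\ref{thm:general:psr}, combined with three problem-specific facts about low-rank POMDPs that satisfy Assumption~\ref{asp:fs35}: the generalized regularity constant is $\alpha$, the PSR rank is bounded by $d$, and the quantity $\delta_{\psr}$ admits an explicit bound in terms of $q$.

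First, I would invoke Lemma~\ref{lemma:lowrank-pomdp}, which shows that any low-rank POMDP satisfying Assumption~\ref{asp:fs35} is an $\alpha$-generalized regular PSR with the core test set $\cU_h = (\cO \times \cA)^{\min\{m-1, H-h\}} \times \cO$ defined in \eqref{eq:6331l}. This choice of core test set immediately yields $U_A \le A^{m-1}$, which I would plug into the $AU_A^2$ factor in the bound of Theorem~\ref{thm:general:psr} to obtain the $A^{2m-1}$ dependence. Next, I would observe that the $d$-dimensional factorization $\PP_h(s_{h+1} \mid s_h, a_h) = \psi_h(s_{h+1})^\top \phi_h(s_h, a_h)$ implies that the one-step system dynamics matrix $\DD_h$ has rank at most $d$, so $d_{\psr} \le d$; substituting this into the $\sqrt{d_{\psr}}$ factor of Theorem~\ref{thm:general:psr} produces the $\sqrt{d}$ dependence.

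Then I would bound the logarithmic factor $\iota$ appearing in Theorem~\ref{thm:general:psr}. By Lemma~\ref{lemma:delta:linear-pomdp}, we have $\delta_{\psr} \le U_A q$, hence $\delta_{\psr}^2 \le A^{2(m-1)} q^2$ using $U_A \le A^{m-1}$. Plugging $d_{\psr} \le d$, $U_A^2 \le A^{2(m-1)}$, and $\delta_{\psr}^2 \le A^{2(m-1)} q^2$ into the expression $\iota = 2\log(1 + 4 d_{\psr} A^2 U_A^2 \delta_{\psr}^2 T/\alpha^4)$ yields a bound of the claimed form (up to replacing the $d$ inside the log by $d^2$, which only loosens the bound by a constant factor inside the logarithm and is therefore absorbed by the $\cO(\cdot)$ notation).

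Combining these substitutions into Theorem~\ref{thm:general:psr} produces the stated regret bound
\$
\EE[\reg(T)] \le \cO\bigg(\frac{A^{2m-1} H \sqrt{d A T (H + \omega(HT, p^0)) \cdot \iota}}{\alpha^2}\bigg).
\$
No step is genuinely difficult: the verification that $d_{\psr} \le d$ and the algebraic substitution are the only nontrivial items, and both are routine given the tools already established. Indeed, the corollary is essentially a collection of substitutions into Theorem~\ref{thm:general:psr}, so the main care is in tracking the polynomial factors in $A$ and $U_A$ cleanly.
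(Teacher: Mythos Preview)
Your proposal is correct and follows essentially the same route as the paper: invoke Lemma~\ref{lemma:lowrank-pomdp} for the $\alpha$-generalized regularity, use the rank-$d$ factorization (the paper cites the decomposition in Lemma~\ref{lemma:delta:linear-pomdp} for $d_{\psr}\le d$, whereas you argue it directly from the low-rank transition kernel; either works), apply Lemma~\ref{lemma:delta:linear-pomdp} for $\delta_{\psr}\le U_A q$, and substitute into Theorem~\ref{thm:general:psr}. Your treatment is in fact more explicit than the paper's own terse proof, and your observation about the $d$ versus $d^2$ inside the logarithm is accurate and harmless.
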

\begin{proof}[Proof of Corollary \ref{cor:lr}]
    By Lemma~\ref{lemma:lowrank-pomdp}, we know any low-rank MDP that satisfies Assumption~\ref{asp:fs35} is an $\alpha$-generalized regular PSR.  
    By the decomposition of the observable operator in Lemma~\ref{lemma:delta:linear-pomdp}, we have 
    $d_{\psr}=d$. Combining these two facts with Theorem~\ref{thm:general:psr} and Lemma~\ref{lemma:delta:linear-pomdp}, we conclude the proof of Corollary~\ref{cor:lr}.
\end{proof}

\begin{corollary}[Regular PSR] \label{cor:regular:psr}
    For $\alpha$-regular PSR, the regret of Algorithm~\ref{alg:psr} satisfies
     \$
     \EE[\reg(T)] \le \cO \biggl( \frac{A U_A^2 H \sqrt{d_{\psr} \cdot A T \big(H + \omega(HT, p^0) \big) \cdot \iota}}{\alpha^4} \biggr),
     \$
      where $\iota = 2 \log\Big(1 + \frac{4 d_{\psr} A^2U_A^2  \delta_{\psr}^2 T }{ \alpha^4}\Big)$.
\end{corollary}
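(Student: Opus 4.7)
The proof will be a direct specialization of two already-established results, requiring essentially no new work beyond bookkeeping. First, I plan to invoke Lemma~\ref{lemma:regular:psr}, which shows that any $\alpha$-regular PSR (in the sense of Definition~\ref{def:regular:psr}) automatically satisfies both regularity conditions in Definition~\ref{def:general:regular:psr} with the \emph{same} parameter $\alpha$, and therefore qualifies as an $\alpha$-generalized regular PSR. Once this reduction is in hand, I can apply Theorem~\ref{thm:general:psr} verbatim with the generalized-regular parameter set to $\alpha$, and the stated regret upper bound follows by direct substitution into the formula
\$
\EE[\reg(T)] \le \cO\biggl(\frac{A U_A^2 H \sqrt{d_{\psr} \cdot AT(H + \omega(HT,p^0)) \cdot \iota}}{\alpha^2}\biggr),
\$
with $\iota = 2\log(1 + 4 d_{\psr} A^2 U_A^2 \delta_{\psr}^2 T / \alpha^4)$. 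The $\delta_{\psr}$ appearing in $\iota$ is the factorization constant from \eqref{eq:def:delta}, which Lemma~\ref{lemma:delta:regular} further bounds by $U_A^2/\alpha$ for regular PSRs, so if desired one can eliminate $\delta_{\psr}$ in favor of problem-native parameters at the cost of only logarithmic factors.

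The heavy lifting has already been carried out in earlier parts of the excerpt. Specifically, Lemma~\ref{lemma:regular:psr} is proven by expanding $\bfx = \KK_{h-1} \KK_{h-1}^\dagger \bfx$ and using $\|\KK_h^\dagger\|_1 \le 1/\alpha$ to control the $\ell_1$-propagation through the observable operators; this is exactly what Definition~\ref{def:general:regular:psr} demands. Theorem~\ref{thm:general:psr}, in turn, is itself a specialization of the more general Theorem~\ref{thm:psr} via the GEC bound in Lemma~\ref{lemma:general:regular:psr}, whose proof is where the genuine technical content lies: the prediction-error decomposition (Lemma~\ref{lemma:decomp:test}), the training-error decomposition (Lemma~\ref{lemma:decomp:train}), and the novel $\ell_2$ eluder argument (Lemma~\ref{lemma:l2:eluder}).

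Consequently, the proof of the corollary is essentially one line: apply Lemma~\ref{lemma:regular:psr} to place the problem within the scope of Theorem~\ref{thm:general:psr}, then invoke the theorem. There is no real obstacle at the corollary level; any subtlety would have to involve reconciling the exponent of $\alpha$ in the final display (e.g.\ whether to keep $\delta_{\psr}$ explicit or substitute the $U_A^2/\alpha$ bound from Lemma~\ref{lemma:delta:regular}, which would shift additional powers of $\alpha$ into the denominator), but this is a matter of how tightly one presents the bound rather than a genuine analytic difficulty.
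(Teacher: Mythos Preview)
Your proposal is correct and matches the paper's own proof, which is literally the one line ``The result is directly implied by Lemma~\ref{lemma:regular:psr} and Theorem~\ref{thm:general:psr}.'' Your remark about the $\alpha$-exponent is well taken: direct substitution into Theorem~\ref{thm:general:psr} with the same parameter $\alpha$ yields $\alpha^2$ in the denominator rather than the $\alpha^4$ printed in the corollary, so the stated bound is either a typographical slip or a harmless loosening (since $\alpha\le 1$ in this setting), and invoking Lemma~\ref{lemma:delta:regular} would only affect the logarithmic factor $\iota$, not the leading power of $\alpha$.
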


\begin{proof}[Proof of Corollary \ref{cor:regular:psr}]
    The result is directly implied by Lemma~\ref{lemma:regular:psr} and Theorem~\ref{thm:general:psr}.
\end{proof}

\subsection{Discussion on Large Observation Spaces} \label{appendix:large:obs}

Except for $\log|\cH|$ (or $\omega(HT, p^0)$), our result in Theorem~\ref{thm:general:psr} is independent of the cardinality of observation space $\cO$, which implies that we can potentially tackle the large observation spaces. For example, the following low-rank assumption on the emission matrix allows our regret bound to be independent of $O = |\cO|$.

\begin{assumption}[Low-rank Emission Matrix] \label{assumption:lowrank:observation}
    For any $(s, o, h) \in \cS \times \cO \times [H]$, there exists a known feature $\psi_h:\cO \rightarrow\Delta_{[d_o]}$ such that
    \$
    \OO_h(o \mid s) = \phi_h(s)^\top \psi_h(o),
    \$
     where $\phi_h:\cS \rightarrow\RR_+^{d_o}$ is unknown. We also assume that $\|\phi_h(s)\|_{\infty} \le R$ for any $(s, h) \in \cS \times [H]$, where $R \ge 1$ is a positive real number.
\end{assumption}

\begin{proposition} \label{cor:obs:free}
     For $m$-step $\alpha$-weakly revealing POMDPs satisfying Assumption~\ref{assumption:lowrank:observation}, the regret of Algorithm~\ref{alg:psr} satisfies
    \$
     \EE[\reg(T)] \le {\cO} \biggl( \frac{S^2 A^{2m} H \sqrt{ (SA + d_o) H T \cdot \iota}}{\alpha^2} \biggr) ,
    \$
    where $\iota = \cO\Big( \log\big(1 + \frac{4  S^3 A^{4m - 2}   T }{ \alpha^4}\big) \cdot \log(SAd_o RHT)\Big)$.
\end{proposition}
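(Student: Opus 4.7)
} The plan is to invoke Corollary \ref{cor:weak:revealing} and show that, under the low-rank emission structure in Assumption \ref{assumption:lowrank:observation}, the term $\omega(HT, p^0)$ can be controlled independently of the observation cardinality $O = |\cO|$ by constructing an $\epsilon$-bracket whose size depends only on $S$, $A$, $H$, and the emission rank $d_o$ (up to logarithmic factors). All other factors in the bound of Corollary \ref{cor:weak:revealing} are already independent of $O$, so this directly yields the stated $O$-free regret.

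First, I would parametrize each POMDP hypothesis by $\{\mu_1, \{\TT_{h,a}\}_{h,a}, \{\phi_h\}_h\}$, where $\mu_1 \in \Delta_\cS$, each $\TT_{h,a}(\cdot \mid s) \in \Delta_\cS$, and $\phi_h(s) \in [0, R]^{d_o}$ (since $\psi_h$ is known, $\phi_h$ determines $\OO_h$ via $\OO_h(o \mid s) = \phi_h(s)^\top \psi_h(o)$). The total number of free parameters is thus $S + S^2 A H + S d_o H$, and covering each coordinate at resolution $\epsilon / \mathrm{poly}(S, A, d_o, R, H, T)$ yields a discretization $\bar{\cH}$ satisfying
\[
  \log |\bar{\cH}| \;=\; \cO\bigl( (S^2 A + S d_o) H \cdot \log(S A d_o R H T) \bigr).
\]
A routine telescoping argument, identical to the one in Appendix \ref{appendix:log:H} and \citet{zhan2022pac}, propagates the coordinate error into an $\cO(1/T)$ misspecification error for $\PP_{f}^\pi(\tau_H)$ per episode, so that taking $f^* \in \bar{\cH}$ up to this negligible constant is legitimate.

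Next, I would choose the uniform prior $p^0$ over $\bar{\cH}$, which immediately gives $\omega(HT, p^0) \le \log |\bar{\cH}| = \tilde{\cO}( (S^2 A + S d_o) H )$. Plugging this bound into Corollary \ref{cor:weak:revealing}, whose prefactor $S A^{2m-1} H / \alpha^2$ and $\sqrt{SAT(H + \omega)}$-dependence are both $O$-free, we obtain
\[
  \EE[\reg(T)] \;\le\; \cO\!\left( \frac{S A^{2m - 1} H \sqrt{SAT \bigl( H + (S^2 A + S d_o) H \bigr) \cdot \iota}}{\alpha^2} \right),
\]
which after simplifying $H + (S^2 A + S d_o) H \lesssim (S^2 A + S d_o) H$ and pulling $\sqrt{S^2 A}$ out of the square root matches the stated bound (the extra power of $A$ in $A^{2m}$ versus the algebraically tight $A^{2m - 1/2}$ is a harmless upper bound using $A \ge 1$). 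The logarithm $\log(S A d_o R H T)$ is absorbed into $\iota$.

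The main obstacle is the bracketing step: one must verify that perturbing $\phi_h(s)$ entrywise by $\epsilon'$ perturbs $\OO_h(\cdot \mid s)$ in $\ell_1$ by at most $\epsilon' \cdot \|\psi_h(\cdot)\|_1 \le \epsilon' \cdot 1$ (since $\psi_h(o) \in \Delta_{[d_o]}$ sums to at most $1$ when integrated over $o$, using that $\OO_h(\cdot \mid s)$ is a probability distribution), so that the per-step perturbation of $\PP_f^\pi$ telescopes to an $H \cdot \epsilon'$ bound on $\TV(\PP_f^\pi, \PP_{f'}^\pi)$ via a standard simulation lemma. Once this reduction is in place, the rest is bookkeeping with Corollary \ref{cor:weak:revealing}. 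The subtle point to be careful with is that the emission perturbation must be measured in the $L^1(\cO)$-norm rather than the entrywise $\ell_\infty$-norm on $\phi$, which is where the factor $R$ (and hence the logarithm $\log R$ in $\iota$) enters the final bound.
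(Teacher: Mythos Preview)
Your proposal is essentially the same as the paper's proof: parametrize each hypothesis by $(\mu_1,\{\TT_{h,a}\},\{\phi_h\})$ (the known $\psi_h$ is dropped), discretize to obtain a finite class $\bar\cH$ with $\log|\bar\cH|=\cO\bigl(H(S^2A+Sd_o)\log(SAd_oRHT)\bigr)$, bound $\omega(HT,p^0)$ by $\log|\bar\cH|$, and plug into Corollary~\ref{cor:weak:revealing}; your algebra reducing the resulting expression to $\cO\bigl(S^2A^{2m}H\sqrt{(SA+d_o)HT\cdot\iota}/\alpha^2\bigr)$ is correct.

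One point of difference worth flagging: the paper uses specifically the \emph{optimistic} $\varepsilon$-discretization of \citet{liu2022partially}, i.e.\ coordinatewise ceiling $\bar f_i=\lceil f_i/\varepsilon\rceil\varepsilon$, which yields the one-sided bracket $\PP_f^\pi(\tau_H)\le\PP_{\bar f}^\pi(\tau_H)$ directly from monotonicity of the product formula; the $\ell_1$-misspecification then reduces to bounding $\sum_{\tau_H}\PP_{\bar f}^\pi(\tau_H)-1$. Your ``main obstacle'' paragraph instead tries to control $\|\OO_h(\cdot\mid s)-\OO_h'(\cdot\mid s)\|_1$ via the claim that $\psi_h$ ``sums to at most $1$ when integrated over $o$'', but this does not follow from $\psi_h(o)\in\Delta_{[d_o]}$: that assumption only gives $\|\psi_h(o)\|_1=1$ for each fixed $o$, not a bound on $\sum_o\psi_h(o)$, so your per-step $\ell_1$ bound as written would pick up a hidden factor of $|\cO|$. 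This is exactly why the paper routes through the optimistic bracket rather than a symmetric covering-plus-telescoping argument. Since you already cite Appendix~\ref{appendix:log:H} and the bracketing machinery, the fix is simply to use the ceiling discretization explicitly.
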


\begin{proof}[Proof of Proposition \ref{cor:obs:free}]
    Let $\Phi_h = [\phi_h(s)]_{s \in \cS} \in \RR^{d_o \times S}$ and $\Psi_h =  [\psi_h(o)]_{o \in \cO} \in \RR^{d_o \times O} $, Assumption~\ref{assumption:lowrank:observation} equivalents to $\OO_h = \Phi_h^\top \Psi_h$, where $\Psi_h$ is known. Similar to \citet{liu2022partially}, we perform the optimistic $\varepsilon$-discretization argument. For $f = (\TT, \Phi = \{\Phi_h\}_{h \in [H]}, \mu_1) \in \RR^d$ with $d = H(S^2A + Sd_o) + S$, its optimistic $\varepsilon$-discretization $\bar{f} \in \RR^d$ is defined as $\bar{f} = [\bar{f}_i]_{i \in [d]} = [\lceil f_i/\varepsilon \rceil \times \varepsilon]_{i \in [d]}$. To simplify the notation, we denote $\bar{\cH} = \{\bar{f}\}_{f \in \cH}$.
For any $f \in \cH \cup \bar{\cH}$, trajectory $\tau_H = (o_{1:H}, a_{1:H})$ and policy $\pi$, we define
\$
\PP_f^{\pi}(\tau_H) &= \sum_{s_{1:H}} \mu_1(s_1) \cdot \phi_1(s_1)^\top \psi_1(o_1) \cdot \pi_1(o_1 \mid s_1) \cdot \TT_{1, a_1} (s_2 \mid s_1) \times \\
& \cdots \\
& \times \phi_{H-1}(s_{H-1})^\top \psi_{H-1}(o_{H-1}) \cdot \pi_{H-1}(a_{H-1} \mid o_{1: H - 1}, a_{1: H - 2}) \cdot \TT_{H - 1, a_{H - 1}}(s_H \mid s_{H - 1}) \\
& \times \phi_{H}(s_{H})^\top \psi_{H}(o_{H}) \cdot \pi_{H}(a_{H} \mid o_{1: H}, a_{1: H - 1}) .
\$
By the definition of $\bar{f}$, we have
\$
\PP_f^\pi(\tau_H) \le \PP_{\bar{f}}^\pi(\tau_H) 
\$
for any policy $\pi$ and length-$H$ trajectory $\tau_H$. Moreover, if we choose $\varepsilon = 1/(C (S + d_o + A) RHT )$ where $C$ is a sufficiently large constant, we have
\$
\big\| \PP_f^\pi - \PP_{\bar{f}}^\pi \big\|_1 \le 1/T.
\$
Note $\bar{\cH} \in [0, R]^d$, its log-cardinality with respect to $\varepsilon = 1/(C (S + d_o + A) RHT )$ is upper bounded by
\# \label{eq:obs:free}
\cO \big( H(S^2A + S d_o) \cdot \log(SAd_o R H T) \big),
\#
which is independent of $O$. 

Given the hypothesis class $\cH$ that contains all possible models, we choose $\bar{\cH}$ as the input of our algorithm. Then the regret bound only depends on $\log|\bar{\cH}|$ up to a constant misspecification error. Combining Corollary~\ref{cor:weak:revealing} and \eqref{eq:obs:free}, we conclude the proof of Proposition~\ref{cor:obs:free}.
\end{proof}


\section{Auxiliary Lemmas}

\begin{lemma} \label{lem:hellinger:1}
    The following inequalities hold for any distributions $(P, Q)$ and random variables $(X, Y)$:
    \begin{itemize}
        \item $\|P - Q\|_1^2 \le 8 D_H^2(P, Q)$
        \item $\EE_{X \sim P_X} [D_H^2(P_{Y |X}, Q_{Y | X}) ] \le 4 D_H^2(P_{X,Y}, Q_{X,Y})$
    \end{itemize}
\end{lemma}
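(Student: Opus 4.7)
\medskip

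\noindent\textbf{Proof proposal.} The plan is to handle the two inequalities separately, using the basic factorization of $|dP - dQ|$ and a chain-rule identity for the squared Hellinger divergence.

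\medskip

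For the first bound, I would start from the pointwise identity
\[
|dP(x) - dQ(x)| = \bigl|\sqrt{dP(x)} - \sqrt{dQ(x)}\bigr| \cdot \bigl(\sqrt{dP(x)} + \sqrt{dQ(x)}\bigr),
\]
integrate, and apply the Cauchy--Schwarz inequality:
\[
\|P-Q\|_1^2 \;\le\; \int \bigl(\sqrt{dP}-\sqrt{dQ}\bigr)^2 \cdot \int \bigl(\sqrt{dP}+\sqrt{dQ}\bigr)^2 .
\]
The first factor equals $2 D_H^2(P,Q)$ by definition, and the second factor is
$2 + 2\int \sqrt{dP\, dQ} = 2 + 2(1 - D_H^2(P,Q)) \le 4$. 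Multiplying gives $\|P-Q\|_1^2 \le 8 D_H^2(P,Q)$. This part is routine and unlikely to be the obstacle.

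\medskip

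For the second bound, I plan to derive the chain-rule decomposition
\[
D_H^2(P_{X,Y}, Q_{X,Y}) \;=\; D_H^2(P_X, Q_X) \;+\; \int \sqrt{dP_X(x)\, dQ_X(x)}\; D_H^2(P_{Y\mid X=x}, Q_{Y\mid X=x})\, dx,
\]
which follows by writing $dP(x,y) = dP_X(x)\,dP_{Y\mid X}(y\mid x)$, expanding the square $(\sqrt{dP(x,y)} - \sqrt{dQ(x,y)})^2$, integrating out $y$, and then using $\int \sqrt{dP_{Y\mid X=x}\,dQ_{Y\mid X=x}} = 1 - D_H^2(P_{Y\mid X=x}, Q_{Y\mid X=x})$.

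\medskip

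The main thing to reconcile is that the target quantity $\EE_{X\sim P_X}[D_H^2(P_{Y\mid X}, Q_{Y\mid X})]$ integrates against $dP_X$, whereas the chain rule yields an integral against the symmetric measure $\sqrt{dP_X\, dQ_X}$. To convert, I will use the pointwise inequality
\[
dP_X(x) \;\le\; 2\sqrt{dP_X(x)\, dQ_X(x)} \;+\; \bigl(\sqrt{dP_X(x)} - \sqrt{dQ_X(x)}\bigr)^2,
\]
which is just the identity $dP_X + dQ_X = 2\sqrt{dP_X dQ_X} + (\sqrt{dP_X}-\sqrt{dQ_X})^2$ combined with $dP_X \le dP_X + dQ_X$. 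Multiplying by the bounded quantity $D_H^2(P_{Y\mid X=x}, Q_{Y\mid X=x}) \le 1$ and integrating gives
\[
\EE_{X\sim P_X}[D_H^2(P_{Y\mid X}, Q_{Y\mid X})] \;\le\; 2\bigl(D_H^2(P_{X,Y}, Q_{X,Y}) - D_H^2(P_X, Q_X)\bigr) + 2 D_H^2(P_X, Q_X) \;=\; 2 D_H^2(P_{X,Y}, Q_{X,Y}),
\]
which is in fact a factor of two stronger than the stated bound (so the constant $4$ is comfortably absorbed).

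\medskip

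The only mild subtlety is the symmetric-vs-asymmetric measure mismatch in the second part; once that is resolved by the $dP_X \le dP_X + dQ_X$ trick, both inequalities follow from direct computation.
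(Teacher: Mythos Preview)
Your proof is correct. The paper does not give its own argument for this lemma; it simply cites Appendix~A.2 of \citet{foster2021statistical}. Your approach---Cauchy--Schwarz on the factorization $|dP-dQ|=|\sqrt{dP}-\sqrt{dQ}|(\sqrt{dP}+\sqrt{dQ})$ for the first part, and the Hellinger chain rule combined with the pointwise bound $dP_X \le dP_X+dQ_X = 2\sqrt{dP_X dQ_X} + (\sqrt{dP_X}-\sqrt{dQ_X})^2$ for the second---is the standard route, and as you observe it actually yields the sharper constant $2$ in the second inequality.
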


\begin{proof}
    See Appendix A.2 of \citet{foster2021statistical} for a detailed proof.
\end{proof}

\begin{lemma}
\label{lem:Gibbs_var}
Let $\nu$ be a probability distribution over $x \in \cX$. Then, $\EE_{x \sim \nu} [f(x) + \log\nu(x)]$ is minimized at $\nu(x) \propto \exp(-f(x))$.
\end{lemma}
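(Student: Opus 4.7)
The statement is the classical Gibbs variational principle (Donsker--Varadhan duality in its simplest form), so the plan is short and clean. The approach I would take is to rewrite the objective in terms of a KL divergence against the candidate minimizer and then invoke non-negativity of the KL divergence.

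First, I would introduce the proposed minimizer explicitly. Define $Z := \sum_{x \in \cX} \exp(-f(x))$ (assuming this normalizer is finite; otherwise the claim is vacuous) and set $\nu^*(x) := \exp(-f(x))/Z$, so that $\log \nu^*(x) = -f(x) - \log Z$, i.e., $f(x) = -\log \nu^*(x) - \log Z$. Substituting this into the objective gives
\$
\EE_{x \sim \nu}\bigl[f(x) + \log \nu(x)\bigr] = \EE_{x \sim \nu}\bigl[\log \nu(x) - \log \nu^*(x)\bigr] - \log Z = \KL(\nu \,\|\, \nu^*) - \log Z.
\$

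Second, I would conclude by Gibbs' inequality: $\KL(\nu \,\|\, \nu^*) \ge 0$ with equality if and only if $\nu = \nu^*$ (almost everywhere). Hence the objective is bounded below by $-\log Z$, and this lower bound is attained precisely at $\nu = \nu^* \propto \exp(-f(\cdot))$, which is the claim.

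There is essentially no obstacle here; the only thing worth flagging is the trivial well-posedness caveat that $Z < \infty$ is needed for the identification of the normalized density, and that when $\cX$ is continuous one replaces sums by integrals and $\nu(x)$ by the density of $\nu$ with respect to a common reference measure, leaving the argument unchanged. Because the proof is a two-line application of the non-negativity of KL, I would keep the write-up as short as the above display plus the one-sentence conclusion.
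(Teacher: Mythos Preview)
Your proposal is correct and matches the paper's own proof, which simply notes that the result is equivalent to the non-negativity of the KL divergence (citing the Gibbs variational principle). Your explicit rewriting of the objective as $\KL(\nu\,\|\,\nu^*) - \log Z$ is exactly the computation underlying that one-line remark.
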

\begin{proof}
This result is equivalent to the fact that KL-divergence is non-negative. We also remark that this is a corollary of Gibbs variational principle whose proof can be found in \citet[][Lemma 4.10]{van2014probability}.
\end{proof}

\begin{lemma}[Elliptical Potential Lemma \citep{dani2008stochastic,rusmevichientong2010linearly,abbasi2011improved}] \label{lemma:elliptical:potential}
    Let $\{x_i\}_{i \in [T]}$ be a sequence of vectors, where $x_i \in \cV$ for some Hilbert space $\cV$. Let $\Lambda_0$ be a positive-definite matrix and $\Lambda_t = \Lambda_0 + \sum_{i = 1}^{t - 1} x_i x_i^\top$. It holds that
    \$
    \sum_{i = 1}^T \min \{1, \|x_i\|_{\Lambda_i^{-1}}^2 \} \le 2 \log \Big( \frac{ \det(\Lambda_{T+1})}{\det(\Lambda_1)} \Big) .
    \$
\end{lemma}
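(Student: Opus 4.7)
The plan is to pass through a log-determinant telescoping identity, which is the standard route for this kind of potential bound. First I would apply the matrix determinant lemma to the rank-one update $\Lambda_{i+1} = \Lambda_i + x_i x_i^\top$, which yields
\[
\det(\Lambda_{i+1}) = \det(\Lambda_i) \cdot \bigl(1 + x_i^\top \Lambda_i^{-1} x_i\bigr) = \det(\Lambda_i) \cdot \bigl(1 + \|x_i\|_{\Lambda_i^{-1}}^2\bigr).
\]
Taking logarithms and rearranging, $\log\bigl(1+\|x_i\|_{\Lambda_i^{-1}}^2\bigr) = \log\det(\Lambda_{i+1}) - \log\det(\Lambda_i)$, so summing over $i \in [T]$ telescopes to $\log(\det(\Lambda_{T+1})/\det(\Lambda_1))$.

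The remaining step is to convert the truncated quantity $\min\{1,\|x_i\|_{\Lambda_i^{-1}}^2\}$ into the logarithmic form. I would invoke the elementary inequality $\min\{1,y\} \leq 2\log(1+y)$ for all $y \geq 0$, which is easy to verify: on $[0,1]$ one has $\log(1+y) \geq y/(1+y) \geq y/2$, and on $[1,\infty)$ one has $2\log(1+y) \geq 2\log 2 > 1 \geq \min\{1,y\}$. Applying this with $y = \|x_i\|_{\Lambda_i^{-1}}^2$ termwise and combining with the telescoping identity yields
\[
\sum_{i=1}^T \min\{1, \|x_i\|_{\Lambda_i^{-1}}^2\} \;\leq\; 2 \sum_{i=1}^T \log\bigl(1 + \|x_i\|_{\Lambda_i^{-1}}^2\bigr) \;=\; 2\log\frac{\det(\Lambda_{T+1})}{\det(\Lambda_1)},
\]
which is exactly the claim.

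I do not expect any real obstacle here; the argument is purely algebraic once the two ingredients (matrix determinant lemma, scalar inequality) are in hand. The only mild subtlety is that the statement allows $\cV$ to be an arbitrary Hilbert space, so $\det(\cdot)$ needs interpretation: I would resolve this by restricting to the finite-dimensional subspace spanned by $x_1,\ldots,x_T$ (together with any finite-rank content of $\Lambda_0$ relevant there), on which $\Lambda_0$ remains positive-definite and the matrix determinant lemma applies verbatim. Alternatively, since the paper uses this lemma only through expressions like $\log\det(I + \tfrac{1}{\lambda}\sum_i x_i x_i^\top)$ in finite dimension (cf.\ Definition~\ref{def:info:gain}), a finite-dimensional statement suffices for all applications.
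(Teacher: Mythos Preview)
Your proposal is correct and is exactly the standard argument: the matrix determinant lemma gives the telescoping identity, and the elementary bound $\min\{1,y\}\le 2\log(1+y)$ converts the truncated terms. The paper does not prove this lemma itself but simply cites Lemma~11 of \citet{abbasi2011improved}, whose proof is precisely the one you have written; your remark about reducing to a finite-dimensional subspace is a reasonable way to handle the Hilbert-space phrasing, though as you note the applications in the paper are all finite-dimensional anyway.
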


\begin{proof}
    See Lemma 11 of \citet{abbasi2011improved} for a detailed proof.
\end{proof}

\end{document}